\newcommand{\y}{x_i}
\newcommand{\x}{x'_i}
\title{The Crossover Process: \\
Learnability and Data Protection from Inference Attacks}
\author{
  Richard Nock\\
{\normalsize Data61, The Australian National University \& The
  University of Sydney}\\
  \texttt{richard.nock@data61.csiro.au}\\
\and
Giorgio Patrini\\
{\normalsize Data61 \& The Australian National University}\\
  \texttt{giorgio.patrini@anu.edu.au}\\
\and
  Finnian Lattimore\\
{\normalsize The Australian National University \& Data61}\\
  \texttt{finnian.lattimore@nicta.com.au}\\
\and
  Tiberio Caetano\\
{\normalsize Ambiata \& The University of Sydney}\\
  \texttt{tiberio.caetano@gmail.com}
}
\date{}
\begin{document}
\pagenumbering{Alph}
\thispagestyle{empty}
\maketitle
\pagenumbering{arabic}

\begin{abstract}

It is usual to consider data protection and learnability as conflicting
objectives. This is not always the case: we show how to jointly
control inference --- seen as the attack --- \textit{and} learnability by a noise-free process that mixes
training examples, the Crossover Process (\cp). 
One key point is that the
\cp~is typically able to alter joint distributions \textit{without} touching on
marginals, nor altering the sufficient
statistic for the class. In other words, it saves (and sometimes
improves) generalization for supervised
learning, but can alter the relationship between covariates --- and
therefore fool measures of nonlinear
independence and causal inference into misleading \textit{ad-hoc}
conclusions. For example, a \cp~can increase / decrease odds ratios, bring fairness or break
fairness, tamper with disparate impact,
strengthen, weaken or reverse causal directions, change
observed statistical measures of dependence. For each of these, we
quantify changes brought by a \cp, as well as its statistical impact
on generalization abilities via a new complexity measure that we call
the Rademacher \cp~complexity.
Experiments on a dozen readily available domains validate the theory.
\end{abstract}

\noindent\textbf{Keywords}: Supervised learning, Privacy, Fairness, Statistical inference, Causality.
\section{Introduction}

We study the problem of sensitive data sharing under two conflicting objective: protection of the data from unwanted inference and guarantees that supervised learning can be effective after the data protection mechanism has been applied. The two goals are inherently in tension, yet not necessarily in contradiction. Any credible solution to the combined issues would have a considerable impact on open data policies for research and commercial enterprises. 

We motivate our goal by an example. A medical laboratory aims to
release to the research community a newly collected dataset about its
genetic, behavioural, habits and infection history of the patients
affected by cancer, with the intent of letting other institutions to
test their predictive models on it so as to improve diagnosis
methodologies. Even assuming that we perfectly anonymize it, this data
is still extremely sensitive. Anyone possessing it could directly make
statistical queries and causal inference between specific patient
traits, or combine it with his/her own private data using powerful causal
inference techniques \citep{pbEV}.

Ideally, researchers should be able to release the data and hide discriminatory and sensitive relationships, such as smoking tendency by ethnicity and gender-prone infections \citep{rhMD},  \textit{while} making sure
that the utility of the dataset for predicting the sickness state remains unaltered.  Can we design a procedure that would transform the data in a form apt to publication, that is, erasing any trace of statistical or causal relationship between particular pairs of attributes \emph{and}, at the same time, leaving prediction performance virtually untouched? Several streams of research may share similarities with this open question thus we start by covering the current background.

Privacy is a growing concern in the
public sphere \citep{bnBD,ecTE,geaBC,mkDS} (and references therein). Two leading mechanisms for the private release of data
are differential privacy and $k$-anonymity
\citep{drTA,ecTE,sAK}. They guarantee \textit{individual level}
protection, \emph{i.e.} identifiability. We depart from this view on
the problem because we are concerned with inference at global level
over the present data,
for which for example differential privacy does not provide any sort
of guarantee.

In fact, as pointed out in \cite{bnBD}, "even when individuals are not \emph{identifiable} they may still be
\emph{reachable} [...] and subject to \textit{consequential
  inferences} and predictions taken on that basis". The reference is
to the possibility of performing \emph{inference attacks} by a
malicious agent willing to uncover \emph{causal relationships}, or even
just measure \textit{statistical independence}, between
sensitive covariates. Even when true causality is sometimes
considered "a research field in its infancy" \citep{geaBC}, it is
hard to exaggerate the recent burst in causal inference techniques
\citep{cmT2,dmzsAP,gbssMS,gftsssAK,hjmpsNC,jmzlzdssIG,lllljsmFO,mpjzsDC},
as well as the threats this may pose on privacy
\citep{bnBD,bnCE,ecTE,ksswIT,mkDS}. 

In simple terms with a quite general example, the attacker
estimates from the data/outputs some $\Pr[U|\mathcal{V}]$, where $U$ is typically a
sensitive attribute and $\mathcal{V}$ is built from one (or a set of) protected
attribute(s) \citep{hpsEO}. Protection against such attacks involve in general
controlling similar estimates or odds ratios. Since the advent of differential privacy \citep{drTA}, these
questions have received a steadily increasing treatment, with a
further surge over the past two years over fairness considerations
\citep{ffmsvCA,hpsEO,kmrIT,mkDS}. 


When it comes
to supervised learning, there is often a single sensitive
attribute $U$ to protect, in general a score or prediction \citep{ffmsvCA,hpsEO,kmrIT}. This
is quite restrictive for our purpose if we consider that the total number of
observation variables is blowing up in hundreds, thousands or more in
mainstream datasets. This is not
to say that previous techniques do not or cannot apply, but there could be at
least a
serious combinatorial overload to treating a lot of sensitive
attributes with techniques fit for one. Finally, data protection is not the sole constraint ---
otherwise, communicating noise would just solve the
problem. Guaranteed protection has to come with provable
utility, \textit{i.e.} learnability \citep{drTA, hpsEO}.
In the context of differential privacy, the trade-off does not play favourably for learning \citep{djwLP}.

In the design a solution to the problem, we keep in mind an additional requirement.
At the age where protection is shifting towards
statistical information --- in constrast with computation, 
\textit{e.g.} for public key encryption ---, a
good protection mechanism that targets specific utility is one that,
knowing all the public part of the protection mechanism\footnote{This is Kerckhoff's
principle, \cite{mkDS}.}, gives the
\textit{least} information about any other sensible content. 
This is not trivial to satisfy.
In fact,
knowing for example that a dataset was protected with a specific technique \textit{for fairness}
(say, odds
ratios = 1, \cite{hpsEO}) leaks information: if the attacker sees that
some attributes that are important for him do not display fairness in
data, then they were probably not treated and he can use this data for
his own analyses \citep{pbEV}. If, however, a ``suspicious'' amount of
fairness is detected, then the attributes were probably treated, and
if those attributes relate smoking and cancer, then it is not hard to
imagine the most likely imbalance in original odds that justified protection (smoking causes
cancer). In sum, we should seek a data protection mechanism able to bring, \textit{or
break}, fairness, and thereby be able to just
\textit{fool} inference into misleading \textit{ad-hoc}
conclusions.

\begin{center}
\fbox{%
    \parbox{0.95\textwidth}{
In short, we carry out protection at the \textit{upstream level}, \textit{i.e.} the
training sample's\footnotemark[2]
 and we target two goals:
the dataset's
utility for the black-box supervised prediction task remains
\textit{within control}\footnotemark[3], but
it is
surgically altered against fined-grained specific inference attacks
among description features. Alteration can work in \textit{all
  directions}: increasing / decreasing odds, being fair or breaking
fairness, tampering with disparate impact,
strengthening, weakening or reversing causal directions, changing
observed statistical measures of dependence, and so on.
}
}\footnotetext[2]{Neither the algorithm \citep{cmsDP}, nor its
output \cite{hpsEO} but on the input data as in \emph{local privacy} \citep{djwLP}.}
\footnotetext[3]{Like \citep{cmsDP}, we investigate the
  \textit{generalization} abilities impacted by data protection, and do not remain within the realm
of the empirical risk.}
\setcounter{footnote}{3}

\end{center}

\noindent We show that this task is within reach \textbf{and it
  involves the same protection process for all}. It can also be very surgical
--- for example, marginals can remain untouched and therefore may not raise
suspicions about protection. Coping with the desired level of protection to statistical
independence and causal inference attacks may require wrangling the
complete data, but this may be done with
a tight \textit{explicit} control of its utility for supervised
learning, and, as we show, it may
even yield \textit{better} models for prediction. Although counterintuitive, this last fact should not come with great surprise considering the success of sophisticated noisification methods, \emph{e.g.} dropout \citep{shkssDA}, to enhance learning.\\

\noindent Our \textbf{main contribution} is the introduction of the \textit{Crossover Process}, \cp. An
analogy may be done with the biological crossover: a population of DNA
strands gets mixed with a crossover, but there is a single zone for chiasma
(\textit{i.e.} contact point) for
the whole population. In the same way as DNA strands exchange genetic
material during recombination, feature values get mixed between
observations during a \cp~, although in a more general way than in genetic recombination.
The key to learning and generalization is that the \cp~may be
done without changing the
sufficient statistic for the class \citep{pnrcAN, pnncLF}, nor touching
class-based marginals. The key
to interfering with measures of (un)fairness, independence and causal calculus is that the
\cp~is able to surgically alter joint distributions. Our contribution
is therefore twofolds: (i) we introduce the \cp~and show how it drives the
generalisation abilities of linear and some non-linear classifiers by the
introduction of a new statistical complexity measure, the Rademacher
\cp~complexity (\rcp). We show that the \rcp~can be very significantly
smaller than the standard empirical Rademacher complexity, thereby being
a lightweight player --- and a tractable knob --- for
generalisation. Then, (ii) on the data protection standpoint, we show
\begin{enumerate}
\item [(a)] how the
components of a \cp~may be chosen to alter odds ratios and measures of
(un)fairness, equality of opportunity, equalized odds or disparate impact \citep{ffmsvCA,hpsEO,kmrIT}, 
\item [(b)] how it can be built to alter
the powerful
Hilbert-Schmidt independence criterion \citep{gftsssAK}, 
\item [(c)] how it
may be devised to blow-up causal estimation errors \citep{cmT2}, and
finally 
\item [(d)] how it
can interfere with identifiable causal queries on a causal graph in
the \textit{do} calculus framework
\citep{pCM,shpitser2006identification}.
\end{enumerate}
Targeting all these different models of dependence \textit{exhaustively} would
require far more than the paper's current size and technical
content. Yet, all of them are important and forgetting one would
reduce \textit{de facto} the scope of the \cp~from the protection standpoint. 
This is why we deliberately chose to make a very specific treatment of
some, in particular for \citep{cmT2}.

\noindent \textbf{Organisation of the paper} --- Section $\S$\ref{sec-dsp} gives
general definitions. $\S$\ref{sec-dsp-pres} presents the
Crossover Process, $\S$\ref{sec-dsp-fair} relates the \cp~to measures
of (un)fairness and $\S$\ref{sec-dsp-learn} presents its relationships with
learnability. $\S$\ref{sec-statind} shows the impact of the \cp~on
measures of independence and $\S$\ref{sec-causal} shows the impact of
the \cp~on causal queries. A last Section discusses and concludes.
An Appendix provides all proofs, additional
results and some extensive experiments performed to assess the theory. A movie\footnote{Available anonymously at
  \texttt{https://youtu.be/4d5Z23cwEyY}},
presented in Subsection \ref{res-movie}, 
shows the effects of the \cp~on a popular domain for causal discovery
\citep{hjmpsNC}. Figure \ref{f-orga} provides a high-level overview of
the papers topics for the main technical Sections.

\begin{figure}[t]
\begin{center}
\includegraphics[trim=170bp 420bp 470bp
100bp,clip,width=0.30\columnwidth]{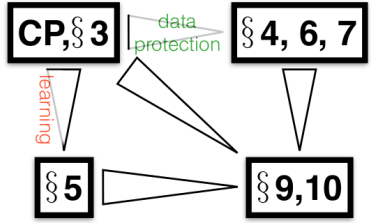}
\end{center}
\caption{Organisation and dependencies of the main Sections of the paper according to
  learning and data protection. Sections are independent
  within each rectangle.}
\label{f-orga}
\end{figure}

\section{General notations and definitions}\label{sec-dsp}

\textbf{Learning setting} --- We let $[m] \defeq \{1, 2, ...,
m\}$ and $\Sigma_m \defeq \{\bm{\sigma}
\in \{-1,1\}^m\}$. ${\mathcal{X}} \subseteq {\mathbb{R}}^d$ is a
domain of observations. 
Examples are couples (observation, label) $\in  {\mathcal{X}}\times \Sigma_1$, sampled i.i.d. according to some unknown but fixed distribution
  ${\mathcal{D}}$. We denote ${\mathcal{F}} \defeq [d]$ the set of
  observation attributes (or features). ${\mathcal{S}} \defeq \{(\bm{x}_i, y_i), i \in
  [m]\} \sim {\mathcal{D}}_m$ is a training sample of
  $|{\mathcal{S}}| = m$ examples. 
For any vector $\ve{z}\in
  {\mathbb{R}}^d$, $z_j$ denotes its coordinate $j$. Finally, notation
  $x
  \sim X$ for $X$ a set denotes uniform sampling in
  $X$, and the mean operator is
  $\ve{\mu}_{{\mathcal{S}}} \defeq \expect_{(\ve{x},y)
    \sim{\mathcal{S}}}[y \cdot \ve{x}]$ \citep{pnrcAN,pnncLF}.

In supervised learning, the task is to learn a
  classifier ${\mathcal{H}} \ni h : {\mathcal{X}} \rightarrow
  {\mathbb{R}}$ from ${\mathcal{S}}$ with good generalisation
  properties, that is, having a small \textit{true risk}
  $\expect_{(\ve{x},y) \sim \mathcal{D}} [L_{0/1}(y, h(\bm{x}))]$, with $L_{0/1}(z,
  z') \defeq 1_{zz' \leq 0}$ the 0/1 loss ($1_.$ is the indicator
  variable). In general, this is achieved by minimising over
  ${\mathcal{S}}$ a $\varphi$-\textit{risk} $\expect_{(\ve{x},y) \sim\mathcal{S}}
  [\varphi(yh(\bm{x}))] = (1/m)\cdot\sum_i \varphi(y_ih(\bm{x}_i))$, where $\varphi(z)\geq 1_{z\leq 0}$ is a
  \textit{surrogate} of the 0/1 loss. In this paper, $\varphi$ is any
  differentiable proper symmetric (PS) loss \citep{nnBD,pnrcAN} (symmetric meaning
  that there is no class-dependent misclassification cost). The
  logistic, square and Matsushita losses are examples of PS
  losses. Set ${\mathcal{H}}$ is a predefined set of classifiers, such
  as linear separators, decision trees, etc. .

\noindent \textbf{Matrix quantities} --- The set of unnormalised
column stochastic
matrices, ${\mathcal{M}}_{n} \subset {\mathbb{R}}^{n \times n}$, is the
superset of column stochastic matrices for which we drop the
non-negativity constraint, thus keeping the sole constraint of unit
per-\textit{column} sums. We let $S_n \subset {\mathcal{M}}_{n}$ denote the symmetric group
    of order $n$. For any $\matrice{a}, \matrice{b}\in
    {\mathbb{R}}^{n\times n}$ and $\matrice{m} \in {\mathcal{M}}_n$,
    we let 
\begin{eqnarray*}
\cipr{\matrice{a}}{\matrice{b}}{\matrice{m}} & \defeq & 
\trace{(\matrice{i}_n - \matrice{m})^\top \matrice{a} (\matrice{i}_n -
  \matrice{m}) \matrice{b}}
\end{eqnarray*}
denote the \textit{centered inner product} of $\matrice{a}$ and $\matrice{b}$
with respect to $\matrice{m}$. It is a generalisation of the centered inner product
used in kernel statistical tests of independence \citep{gbssMS}, for
which $\matrice{m} =
(1/n)\ve{1}\ve{1}^\top$. 

Without loss of generality, we shall assume
that indices in ${\mathcal{S}}$ cover first the positive
class: $(y_i = +1 \wedge y_{i'} = -1) \Rightarrow i<i'$. A key subset
of matrices of ${\mathbb{R}}^{m\times m}$ consists of block matrices whose coordinates on indices corresponding to different
classes in ${\mathcal{S}}$ are zero: block-class matrices.
\begin{definition}
$\matrice{a} \in {\mathbb{R}}^{m\times m}$ is a \textbf{block-class
matrix} iff $(y_i \cdot y_{i'} =
-1) \Rightarrow \matrice{a}_{ii'} = 0, \forall i, i'$.
\end{definition}
An asterisk exponent in a subset of matrices indicates the intersection
of the set with block class matrices, such as for 
${\mathcal{M}}^*_{n} \subset {\mathcal{M}}_{n}$ and $S^*_n \subset
S_n$. Finally, matrix entries are noted with double indices like
$\matrice{m}_{ii'}$; replacing an index by a dot, ``.'', indicates
a sum over the index, like $\matrice{m}_{i.} \defeq \sum_{i'} \matrice{m}_{ii'}$.

\section{The Crossover Process}\label{sec-dsp-pres}

The Crossover process (\cp) 
transforms ${\mathcal{S}}$ in two steps: the split and the shuffle
step. In the split step, a bi-partition of the features set
${\mathcal{F}}$ is computed: ${\mathcal{F}} = {\mathcal{F}}_\re \cup
\mathcal{F}_\tr$. ${\mathcal{F}}_\re$ is the \textit{a}nchor set and
${\mathcal{F}}_\tr$ is the \textit{s}huffle set. 
To perform the shuffle step, we need additional notations. Without loss of generality, we
assume ${\mathcal{F}}_\re \defeq [d_\re]$ and
${\mathcal{F}}_\tr \defeq \{d_\re+j, j\in [d_\tr]\}$, $d_\re > 0, d_\tr > 0, d_\re+ d_\tr =
d$. So, ${\mathcal{F}}_\re$ contains the first $d_\re$ features and
${\mathcal{F}}_\tr$ contains the last $d_\tr$ features. Let
$\matrice{i}_d$ be the identity matrix, and $[\matrice{f}^\re|
\matrice{f}^\tr] = \matrice{i}_d$ a vertical block partition where
$\matrice{f}^\re \in
{\mathbb{R}}^{d\times d_\re}$ ($\matrice{f}^\tr \in
{\mathbb{R}}^{d\times d_\tr}$) has columns
representing the features of ${\mathcal{F}}_\re$ (${\mathcal{F}}_\tr$)
--- we use notation $[.]$ both for integer sets and block
matrices without ambiguity. Finally, we
define the (\textit{row}-wise) observation matrix $\matrice{s} \in
{\mathbb{R}}^{m\times d}$ with $(\matrice{s})_{ij} \defeq x_{ij}$. Let
$\ve{1}_i$ be the $i^{th}$ canonical basis vector.
\begin{definition}\label{deftdp}
For any block partition $[\matrice{f}^\re|
\matrice{f}^\tr] = \matrice{i}_d$ and any \textbf{shuffle matrix} $\matrice{m} \in {\mathcal{M}}_{n}$, the
Crossover process $\process{T}\defeq \cp(\mathcal{S}; \matrice{f}^\re,
\matrice{f}^\tr, \matrice{m})$ returns $m$-sample
${\mathcal{S}}^{\process{T}}$ such that its observation matrix is
$\matrice{s}^{\tinymatrice{m}}\defeq [\matrice{s}\matrice{f}^\re |
\matrice{m}\matrice{s}\matrice{f}^\tr]$, and each example
${\mathcal{S}}^{\process{T}}  \ni (\ve{x}^{\tinymatrice{m}}_i, y_i) \defeq
((\matrice{s}^{\tinymatrice{m}})^\top \ve{1}_i, y_i)$.
\end{definition}
We consider $\matrice{m}$ fixed beforehand. Figure \ref{f-tdp} (top) presents the \cp~on a toy data with
$\matrice{m}$ a permutation matrix (invertible). Figure \ref{f-tdp} (bottom)
presents another example with $\matrice{m}$ block-uniform (non invertible).

\begin{figure}[t]
\begin{center}
\begin{tabular}{|cc|}\hline\hline
\multicolumn{2}{|c|}{\includegraphics[width=0.90\columnwidth]{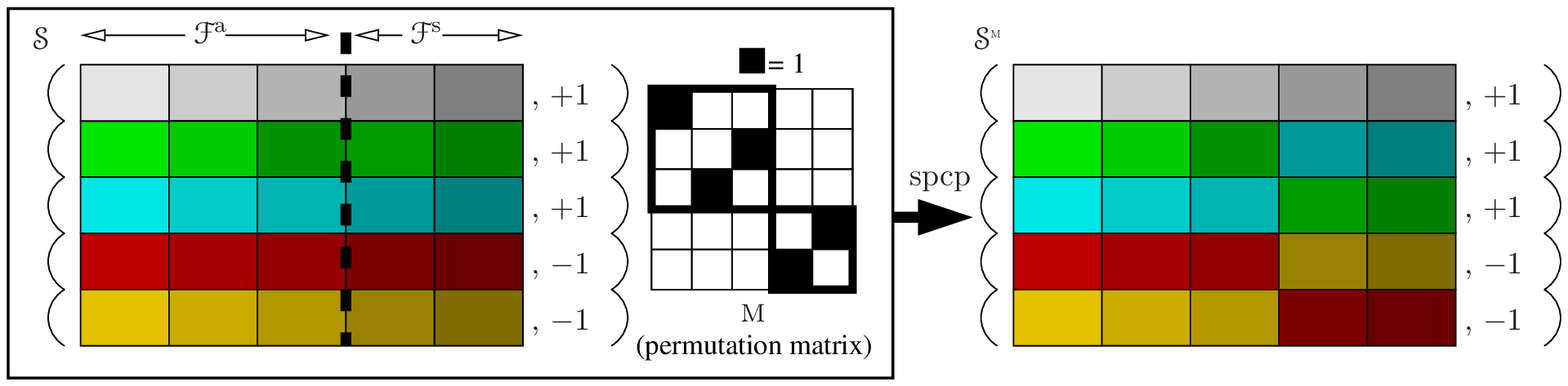}}\\ \hline
\includegraphics[trim=60bp 70bp 55bp
70bp,clip,width=0.45\columnwidth]{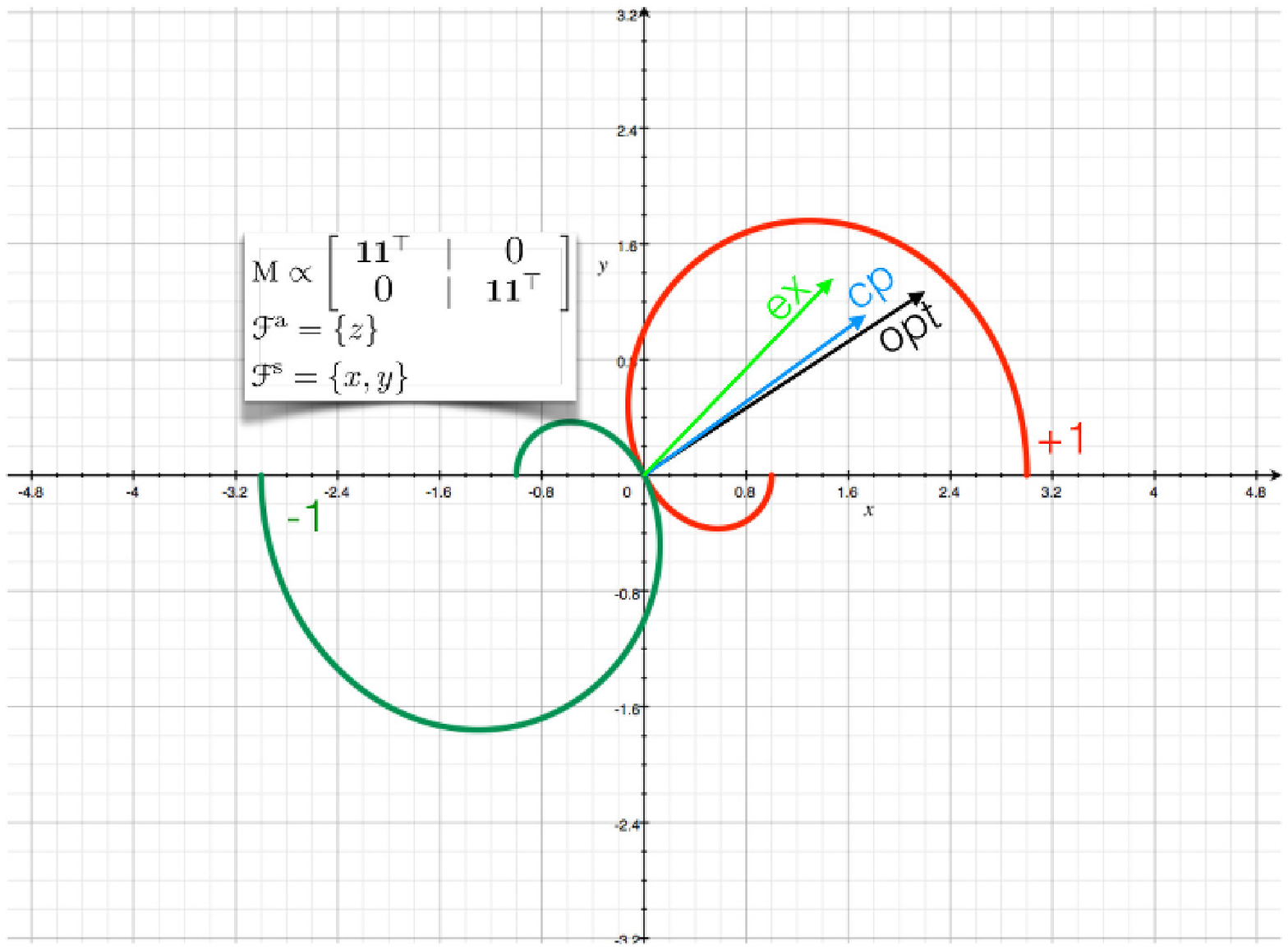} &  \includegraphics[trim=20bp 10bp 20bp
10bp,clip,width=0.40\columnwidth]{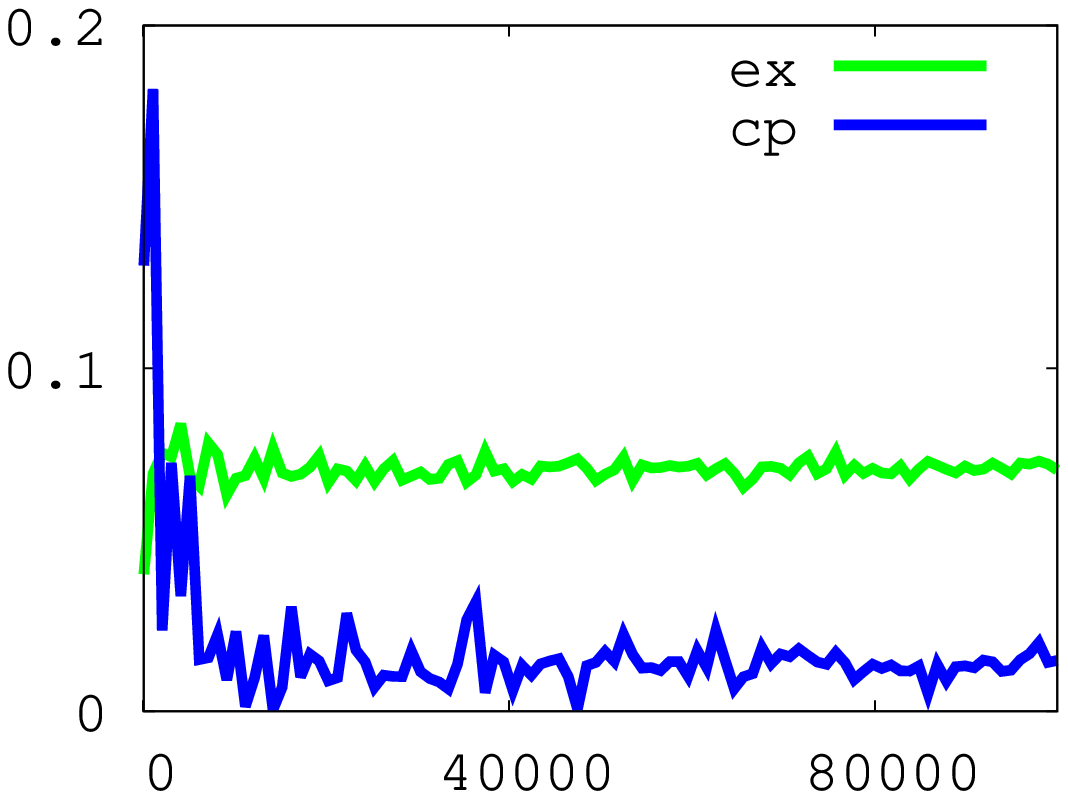}\\ \hline\hline
\end{tabular}
\caption{\textit{Top}: example of \cp~with $\matrice{m}$ a block-class permutation
  matrix (the two blocks are in bold). \textit{Bottom}: toy domain where
  $d=3$, but all examples have zero $z$-coordinate (not shown). The
  \cp~uniformly mixes examples by class. The domain consists of two
  spirals (red for positive, green for negative examples) with
  $\mathcal{D}$ = uniform
  distribution. Arrows
  depict respectively the optimal direction (black), and the
  directions learned by minimizing $\varphi$ = square loss over
  ${\mathcal{S}}$ (light green, "ex") and ${\mathcal{S}}^{\process{T}}$
  (blue, "\cp"). The right plot displays test errors ($y$-scale) on uniform sampling of
  datasets of different sizes ($x$-scale). The effect of the \cp~is to produce in
  ${\mathcal{S}}^{\process{T}}$ \textit{two} distinct examples that \textit{average} the positive / negative
  examples, and yield a better approximation of the optimum.}
\label{f-tdp}
\end{center}
\end{figure}

\section{The Crossover Process and measures of (un)fairness}\label{sec-dsp-fair}


Before drilling into the technical impact on learnability of a \cp, it
is good to make a small incursion in how the \cp~can be used 
a simple model of data protection that has received a surge of
treatment over the last years \citep{mkDS,ffmsvCA,hpsEO,kmrIT}. It is
sometimes related to as fairness, equality of opportunity, equalized
odds or disparate impact. It essentially builds on odds ratios.

Let $x_C$ and $x_A$ be two binary
attributes and $\ve{\pi}$ a predicate defined on other description variables, like
for example $\ve{\pi}\equiv \ve{x}_{\mathcal{V}} = \ve{v}$, where $\ve{x}_{\mathcal{V}} \subseteq
\mathcal{F}\backslash\{x_C, x_A\}$ and $\ve{v}$ is an
instantiation of $\ve{x}_{\mathcal{V}} $. Define the \textit{odds ratio} 
\begin{eqnarray}
\rho(x_C,x_A,\ve{\pi}|\mathcal{S}) & \defeq &
\frac{\Pr_{\mathcal{S}}[x_C = 1 | x_A = 0, \ve{\pi}]}{\Pr_{\mathcal{S}}[x_C
  = 1 | x_A = 1, \ve{\pi}]}\:\:.\label{defodds}
\end{eqnarray}
In this definition, $x_C$ is the sensitive feature, $x_A$ is a protected
attribute \citep{hpsEO} and the eventual additional features in
$\ve{x}_{\mathcal{V}}$ are 
a \textit{private}
subset of attributes. For example, private attributes can contain
additional features on which we want to constrain fairness measures,
like qualification in \cite{hpsEO}. If $\ve{\pi} = 
\top$ (the predicate that is always true), we just write $\rho(x_C,x_A|\mathcal{S})$.
\begin{definition}\label{dodds}
Let $x_A, x_C$ be two binary attributes and $\ve{\pi}$ a predicate defined on other description variables. For any $\rho \in \mathbb{R}_+$, we
say that sample ${\mathcal{S}}$ has $\rho$-odds ratio for the triple
$(x_A,x_C,\ve{\pi})$ iff $\rho(x_C,x_A,\ve{\pi}|\mathcal{S}) = \rho$.
\end{definition}
We can also replace real $\rho$ by a subset $\mathcal{R} \subseteq
\mathbb{R}_+$, in which case we must have $\rho(x_C,x_A,\ve{\pi}|\mathcal{S}) \in \mathcal{R}$.
Here are some examples of how this definition aligns with previous
works. 
If one takes $\ve{\pi} \equiv x_{\hat{C}} = 1$ where
$x_{\hat{C}}$ is a proxy for $x_C$, like an estimate for $x_C$
obtained using a specific procedure, then requiring $\rho = 1$
brings the condition for balance on the positive class from
\cite{kmrIT}; if on the other hand $\ve{\pi} \equiv x_{\hat{C}} = 0$,
then requiring $\rho = 1$
brings the condition for balance on the negative class from
\cite{kmrIT}; if finally $\ve{\pi} = 
\top$, requiring $\rho = 1$ brings the condition for balance within
groups from \cite{kmrIT}. Replacing $\rho = 1$ by $\mathcal{R} \defeq
[1-\epsilon, 1+\epsilon]$ brings the corresponding approximated
fairness conditions of \cite{kmrIT}. Permuting, in the balance for
positive class, the role of $x_C$ and
$x_{\hat{C}}$, still with $\alpha = 1$, brings the condition for equal
opportunity in \cite{hpsEO}, adding a second $1$-odds ratio condition for
$(x_A,x_{\hat{C}},x_{\hat{C}} = 0)$ brings equalized odds in
\cite{hpsEO}. Finally, replacing, $\rho = 1$ by $\mathcal{R} = (0.8,+\infty)$ in
the condition for balance within
groups above yields the no-disparate impact condition of
\cite{ffmsvCA}. 
Let us see now what a simple \cp~can do to alter odds like in
eq. (\ref{defodds}), via the following Definition.

\begin{table}[t]
    \centering
\begin{center}
\begin{tabular}{c||c|c||}
 & $x_C = 0$ & $x_C = 1$\\ \hline \hline
 $x_A = 0$ & $a$ & $b$\\ \hline
 $x_A = 1$ & $c$& $d$\\ \hline \hline
\end{tabular}
\caption{Contingency table (conditioned on $\ve{\pi}$ being true) for two binary attributes $A$ and $C$
  between which a \cp~is going to change the dependency relationships
  and odds ratios ($a+b+c+d \leq m$, see text for details).}\label{t-cont}
\end{center}
\end{table}

\begin{definition}\label{defoddsratios}
We say that a \cp~$\process{T}$ shifts the odds ratio for the triple
$(x_A,x_C,\ve{\pi})$ by $\Delta$ on dataset
${\mathcal{S}}$ iff $\rho(x_C,x_A,\ve{\pi}|\mathcal{S}^{\process{T}})
= \rho(x_C,x_A,\ve{\pi}|\mathcal{S}) + \Delta$.
\end{definition}

\begin{lemma}\label{lemmodds1}
 Suppose Table \ref{t-cont} describes the observed joint distribution
for attributes $x_A$ and $x_C$ in sample ${\mathcal{S}}$, conditioned
on $\ve{\pi}$ being true (hence, $a+b+c+d \leq m$). Let
\begin{eqnarray}
\Delta(i) & \defeq & \frac{b+d}{d-i}\cdot\frac{i}{d}\:\:, \forall i\in \mathbb{Z}\:\:.
\end{eqnarray}
Then, for any sample ${\mathcal{S}}$ and any $i\in \left\{-\min\{b,c\} , -\min\{b,c\}+1, ..., \min\{a,d\}
\right\}$, 
there exists a \cp~$\process{T}_i$ that shifts the odds ratio for the triple
$(x_A,x_C,\ve{\pi})$ by $\Delta(i)$ on
${\mathcal{S}}$.
\end{lemma}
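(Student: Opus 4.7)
The plan is to exhibit, for every admissible $i$, an explicit Crossover Process that realises the prescribed odds--ratio shift by a purely combinatorial relocation of the binary attribute $x_C$ across examples, leaving every other piece of data untouched.

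First I would fix the feature partition with $\mathcal{F}_\tr = \{x_C\}$ and $\mathcal{F}_\re$ containing $x_A$ together with every feature on which $\ve{\pi}$ depends. By Definition~\ref{deftdp} the anchor columns pass through the \cp~unchanged, so both the value of $x_A$ and the truth value of $\ve{\pi}$ are preserved on every example. Consequently the only quantity that the \cp~can move in Table~\ref{t-cont} is the assignment of the 0/1 values of $x_C$ to examples, effected by the shuffle matrix $\matrice{m}$ acting on a single column.

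Next, given $i$ in the stated range, I would choose $\matrice{m} \in S_m$ as a disjoint--transposition permutation tailored to $i$. For $i>0$, pick $i$ row indices from the cell $(x_A=1,x_C=1,\ve{\pi})$ (feasible because $i\leq d$) and $i$ row indices from the cell $(x_A=0,x_C=0,\ve{\pi})$ (feasible because $i \leq a$), and let $\matrice{m}$ be the involution that pairs them up and fixes every other row. For $i<0$, the symmetric construction pairs cells $(x_A=0,x_C=1,\ve{\pi})$ and $(x_A=1,x_C=0,\ve{\pi})$, with the bound $|i|\leq\min\{b,c\}$ guaranteeing enough indices on both sides. Since $S_m \subset {\mathcal{M}}_m$, the resulting $\process{T}_i\defeq\cp(\mathcal{S};\matrice{f}^\re,\matrice{f}^\tr,\matrice{m})$ is a bona fide \cp.

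The rest is bookkeeping. The transposition moves $i$ examples from $(d)$ to $(c)$ and $i$ examples from $(a)$ to $(b)$ (and symmetrically for $i<0$), so the post--\cp~table reads $(a-i,b+i,c+i,d-i)$ and the $x_A$--row marginals $a+b$ and $c+d$ stay fixed. Substituting into the definition of $\rho$ in~(\ref{defodds}) and using the cross--multiplication identity
\begin{eqnarray*}
\frac{b+i}{d-i} - \frac{b}{d} & = & \frac{(b+d)\,i}{d(d-i)}
\end{eqnarray*}
packages the change at the level of the conditional probabilities of $x_C=1$ per $x_A$--stratum, from which $\Delta(i)$ is read off after collecting the surviving marginal factors.

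The one non--mechanical step I foresee is aligning the surviving marginal prefactor with the exact expression of $\Delta(i)$: if the plain disjoint--transposition above leaves a residual ratio built from $a+b$ and $c+d$, the construction of $\matrice{m}$ should be refined, for instance by apportioning the $i$ swaps across several cells, or by composing with a second block--class permutation that simultaneously rebalances the $x_A$--marginals. The stated range $-\min\{b,c\} \leq i \leq \min\{a,d\}$ is exactly what keeps every cell size non--negative throughout such a refinement, so the calibrated $\matrice{m}$ remains a genuine permutation in $S_m$.
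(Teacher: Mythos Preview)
Your construction is essentially the paper's proof, with the harmless swap that you put $x_C$ in the shuffle set whereas the paper puts $x_A$ there; either choice yields the identical post-\cp~contingency table $(a-i,\,b+i,\,c+i,\,d-i)$, and the identity $\tfrac{b+i}{d-i}-\tfrac{b}{d}=\tfrac{(b+d)\,i}{d(d-i)}$ you display is exactly how the paper finishes.

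The last paragraph, however, is a wrong turn. In the paper's computation one simply reads $\rho=b/d$ before and $\rho=(b+i)/(d-i)$ after the \cp, so the identity you already wrote \emph{is} $\Delta(i)$; there is no residual marginal factor to chase, and you should just commit to the calculation. More importantly, the ``refinement'' you sketch---composing with a further permutation to rebalance the $x_A$-row marginals $a+b$ and $c+d$---cannot work: any $\matrice{m}\in S_m$ acting on a single column merely permutes its entries, so every marginal count of the shuffled attribute is invariant, while the anchored attribute's marginals are untouched by construction. No \cp~with a permutation shuffle matrix can alter $a+b$ or $c+d$. Drop the final paragraph and your argument matches the paper's.
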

(proof in Subsection \ref{proof_lemmodds1}) The proof of the Lemma
involves very simple \cp s, for which the shuffle matrix $\matrice{m}$
is a permutation matrix. As a consequence, the Lemma implies that such
a simple
\cp~can produce fairness
($\rho(x_C,x_A,\ve{\pi}|\mathcal{S}^{\process{T}}) = 1$) as long as $b\leq d +
2\min\{a,d\}$ and $d\leq b + 2\min\{b,c\}$, \textit{i.e.} as long as
the joint distribution is not too unbalanced; since $i$ can take on
both positive and negative values, a \cp~can also shifts the odds ratio
in $\mathcal{S}$ to smaller or larger values. If $b=d$, it can
therefore also break fairness with shifts $\Delta(i) = 2i/(d-i)$.

\section{The Crossover Process and learnability}\label{sec-dsp-learn}

\noindent\textbf{Generalization} --- We
now explore the effect of the \cp~on generalisation. We need two assumptions on ${\mathcal{H}}$ and 
$\varphi$. The first is a
weak linearity condition on ${\mathcal{H}}$:
\begin{itemize}
\item [(\textbf{i})]  $\forall h
  \in {\mathcal{H}}$, $\exists$ classifiers $h_\re, h_\tr$ over
  ${\mathcal{F}}_\re, {\mathcal{F}}_\tr$ s. t.
  $h(\bm{x}) = h_\re((\matrice{f}^\re)^\top \ve{x}) + h_\tr((\matrice{f}^\tr)^\top \ve{x})$.
\end{itemize}
$(\matrice{f}^\tr)^\top \ve{x}$ picks the features of
$\ve{x}$ in 
${\mathcal{F}}_\tr$. 
Such an assumption is also made in the feature bagging
model \citep{ssmRW}.
Any linear classifier satisfies (\textbf{i}), but also any linear
combination of arbitrary classifiers, each learnt over one of
${\mathcal{F}}_\re$ and ${\mathcal{F}}_\tr$. We let ${\mathcal{H}}_\tr$ denote the set of all $h_\tr$. The second assumption postulates
that key quantities are bounded \citep{bmRA}:
\begin{itemize}
\item [\textbf{(ii)}] $0\leq \varphi(z) \leq K_\varphi, \forall z$ and $|h_\tr((\matrice{f}^\tr)^\top \ve{x})| \leq
  K_\tr, \forall \ve{x} \in {\mathcal{X}}, \forall h_\tr \in {\mathcal{H}}_\tr$.
\end{itemize}

Let $\rad_{{\mathcal{S}}} ({\mathcal{H}}) \defeq \expect_{\ve{\sigma} \sim
   \Sigma_m} \left[ \sup_{h \in {\mathcal{H}}} \left| (1/m) \cdot \sum_i
{\sigma_i h \left(\ve{x}_i\right)}\right|\right]$ be the empirical
Rademacher complexity of ${\mathcal{H}}$. Additionally, we coin the
Rademacher \cp~complexity, \rcp.
\begin{definition}
The Rademacher \cp~complexity (\rcp) of ${\mathcal{H}}$ with
respect to $\process{T} \defeq \cp(\mathcal{S}; \matrice{f}^\re,
\matrice{f}^\tr, \matrice{m})$ is:
\begin{eqnarray}
\disc_{\process{T}} ({\mathcal{H}})& \defeq & \expect_{\ve{\sigma} \sim
   \Sigma_m} \left[ \sup_{h \in {\mathcal{H}}_\tr} \left|\frac{1}{m} \sum_{i}
{\sigma_{i} \left( h((\matrice{s}\matrice{f}^\tr)^\top \ve{1}_i) - h((\matrice{m}\matrice{s}\matrice{f}^\tr)^\top \ve{1}_i)\right)}\right|\right]\:\:.\label{defradd}
\end{eqnarray}
\end{definition}
Notice that the \rcp~is computed over the
shuffle set of features only, and
$(\matrice{s}\matrice{f}^\tr)^\top \ve{1}_i = (\matrice{f}^\tr)^\top \ve{x}_i$.
The next Theorem expresses a generalisation bound \emph{wrt} the \cp.

\begin{theorem}\label{rader}
Consider any ${\mathcal{H}}$, $\varphi$ and split ${\mathcal{F}} = {\mathcal{F}}_\re \cup
\mathcal{F}_\tr$ such that \textbf{(i)} and
\textbf{(ii)} hold. For any $m$ and any $\updelta>0$, with probability $\geq 1 - \updelta$ over i.i.d. $m$-sample
${\mathcal{S}}$, we have:
\begin{eqnarray}
\expect_{{\mathcal{D}}} \left[L_{0/1}(y, h(\bm{x}))\right] & \leq &
\expect_{{\mathcal{S}}^{\process{T}}} \left[\varphi(y
   h(\bm{x}))\right] + \disc_{\process{T}} ({\mathcal{H}}) + \frac{4}{b_\varphi} \cdot \rad_{{\mathcal{S}}} ({\mathcal{H}}) +  (2 K_\varphi + K_\tr) \cdot
\sqrt{\frac{2}{m}\log \frac{3}{\updelta}}\:\:, \nonumber
\end{eqnarray}
for every classifier $h$ and every $\process{T} \defeq  \cp(\mathcal{S}; \matrice{f}^\re,
\matrice{f}^\tr, \matrice{m})$ such that $\matrice{m} \in
{\mathcal{M}}^*_m$. Here, $b_\varphi > 0$ is a constant depending on $\varphi$.
\end{theorem}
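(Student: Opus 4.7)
The plan is to obtain the stated inequality as the union of three high-probability events, each holding with probability at least $1-\updelta/3$, corresponding to: (a)~surrogating $L_{0/1}$ by $\varphi$; (b)~a standard Rademacher generalisation bound going from $\expect_{\mathcal{D}}$ to $\expect_{\mathcal{S}}$; and (c)~a bridging inequality from $\expect_{\mathcal{S}}$ to $\expect_{\mathcal{S}^{\process{T}}}$ in which the cost is absorbed into $\disc_{\process{T}}(\mathcal{H})$.

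Step (a) is immediate because $\varphi(z) \geq 1_{z \leq 0}$, so $\expect_{\mathcal{D}}[L_{0/1}(y, h(\bm{x}))] \leq \expect_{\mathcal{D}}[\varphi(yh(\bm{x}))]$ pointwise in $h$. For step (b), I would follow the textbook route: symmetrisation bounds $\expect_{\mathcal{D}}[\varphi(yh)] - \expect_{\mathcal{S}}[\varphi(yh)]$ uniformly over $\mathcal{H}$ by $2\,\rad_{\mathcal{S}}(\varphi \circ \mathcal{H})$ in expectation; the Lipschitz property of PS losses (Lipschitz constant at most $2/b_\varphi$) combined with Ledoux--Talagrand contraction strips $\varphi$ and yields $(4/b_\varphi)\rad_{\mathcal{S}}(\mathcal{H})$; two applications of McDiarmid (one to concentrate the expected sup around its value, one to pass from population to empirical Rademacher) each contribute $K_\varphi\sqrt{(2/m)\log(3/\updelta)}$ through the bounded differences $K_\varphi/m$ inherited from assumption \textbf{(ii)}.

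For step (c), add and subtract to write $\expect_{\mathcal{S}}[\varphi(yh)] = \expect_{\mathcal{S}^{\process{T}}}[\varphi(yh)] + \Delta(h)$ with
\[
\Delta(h) \;\defeq\; \frac{1}{m}\sum_{i} \bigl[\varphi(y_i h(\bm{x}_i)) - \varphi(y_i h(\bm{x}_i^{\tinymatrice{m}}))\bigr].
\]
Assumption \textbf{(i)} decomposes $h = h_\re + h_\tr$; by definition of the \cp\ the anchor features are unchanged, so $h_\re((\matrice{f}^\re)^\top \ve{x}_i) = h_\re((\matrice{f}^\re)^\top \ve{x}_i^{\tinymatrice{m}})$ and only $h_\tr$ contributes inside $\Delta(h)$. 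The block-class restriction $\matrice{m} \in \mathcal{M}_m^*$ is crucial here: it ensures mixing is intra-class, so $y_i$ is preserved under the \cp\ and $y_i \in \{\pm 1\}$ can be absorbed into Rademacher signs. Symmetrisation plus Lipschitzness of $\varphi$ then give $\expect_{\mathcal{S}}[\sup_h |\Delta(h)|] \leq \disc_{\process{T}}(\mathcal{H})$ (up to the same $2/b_\varphi$ that has already been accounted for), and a third McDiarmid---with bounded differences of order $K_\tr/m$ coming from $|h_\tr| \leq K_\tr$ in \textbf{(ii)}---upgrades this to the high-probability bound $\sup_h |\Delta(h)| \leq \disc_{\process{T}}(\mathcal{H}) + K_\tr\sqrt{(2/m)\log(3/\updelta)}$.

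A union bound over the three $\updelta/3$-events gives the stated inequality with confidence $\geq 1-\updelta$; the $(2K_\varphi + K_\tr)$ confidence coefficient collects $K_\varphi$ from each of the two McDiarmid steps in (b) and $K_\tr$ from the McDiarmid step in (c). The main obstacle is step (c): $\Delta(h)$ is deterministic on a fixed sample, while $\disc_{\process{T}}(\mathcal{H})$ carries Rademacher variables, and the symmetrisation exchange must be carried out while respecting the dependence structure introduced by $\matrice{m}$. The two assumptions of the theorem are exactly what make this go through---the block-class property of $\matrice{m}$ keeps $y_i$ available to play the role of a random sign, and the linear-decomposition assumption \textbf{(i)} is what confines the whole discrepancy to the $h_\tr$ component so that the \rcp, defined over $\mathcal{H}_\tr$ and not the full $\mathcal{H}$, is what appears in the bound.
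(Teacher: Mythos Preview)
Your overall three-event decomposition matches the paper's, and steps (a) and (b) are essentially identical to what the paper does. The gap is in step (c), and you actually flag it yourself: $\Delta(h)$ is a deterministic function of the sample, and ``symmetrisation plus Lipschitz'' alone cannot manufacture the Rademacher expectation that defines $\disc_{\process{T}}(\mathcal{H})$. Absorbing the fixed labels $y_i$ into ``Rademacher signs'' does not work, because $y_i$ is not independent of $\bm{x}_i$; and a bare Lipschitz step yields
\[
|\Delta(h)| \;\leq\; \frac{1}{b_\varphi m}\sum_i \bigl|h_\tr((\matrice{s}\matrice{f}^\tr)^\top\ve{1}_i) - h_\tr((\matrice{m}\matrice{s}\matrice{f}^\tr)^\top\ve{1}_i)\bigr|,
\]
which is a sum of absolute values, \emph{not} $\disc_{\process{T}}(\mathcal{H})$, and carries an unwanted $1/b_\varphi$ factor that the theorem does not have.

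The paper's route through this obstacle is different and uses an identity specific to proper symmetric losses: for any sample,
\[
\expect_{\mathcal{S}}[\varphi(yh(\bm{x}))] \;=\; \frac{b_\varphi}{2m}\sum_{\sigma\in\Sigma_1}\sum_i \varphi(\sigma h(\bm{x}_i)) \;-\; \frac{1}{2}\,\overline{h}(\mathcal{S}),
\qquad \overline{h}(\mathcal{S}) \defeq \frac{1}{m}\sum_i y_i h(\bm{x}_i).
\]
The Rademacher structure is already present in the first term (it is an average over $\sigma\in\{-1,+1\}$, hence rewritable as $\expect_{\ve{\sigma}\sim\Sigma_m}[\,\cdot\,]$). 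The second term, the $h$-mean-operator, is shown to be \emph{invariant under the \cp}: $\overline{h}(\mathcal{S})=\overline{h}(\mathcal{S}^{\process{T}})$ whenever $\matrice{m}\in\mathcal{M}^*_m$ (this is the lemma that really uses the block-class assumption, together with assumption \textbf{(i)}). Subtracting the two identities therefore kills the mean-operator terms and leaves only the symmetric part; Lipschitz then strips $\varphi$, the $b_\varphi$ in the numerator cancels the $1/b_\varphi$ from the Lipschitz constant (which is why $\disc_{\process{T}}(\mathcal{H})$ appears with coefficient $1$ in the theorem), and assumption \textbf{(i)} isolates $h_\tr$. So the role of the block-class constraint is not ``$y_i$ becomes a Rademacher sign'' but rather ``the sufficient statistic $\overline{h}$ is preserved'', and the role of the PS assumption on $\varphi$ is to supply the decomposition above --- neither of these ingredients appears in your sketch.
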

(proof in Subsection \ref{proof_rader}) Notice that Theorem
\ref{rader} requires that $\matrice{m}$ is a block-class matrix. A key to
the proof is the invariance of the mean operator: $\ve{\mu}_{{\mathcal{S}}}
= \ve{\mu}_{{\mathcal{S}}^{\process{T}}}$. 
Theorem \ref{rader} says that a key to good generalisation is the
control of $\disc_{\process{T}}
({\mathcal{H}})$. We would typically want it to be small compared to
the Rademacher complexity penalty. The rest of this Section shows that (and when) this is
indeed achievable.\\

\noindent\textbf{Upperbounds on $\disc_{\process{T}} ({\mathcal{H}})$}
--- We consider different configurations of ${\mathcal{H}}$ and / or $\process{T}$:
\begin{itemize}
\item [] Setting (A): Classifiers $h^\tr$ and $h^\re$ in \textbf{(i)} above are linear;
\item []  Setting (B): $\matrice{m} \in
S^*_{m}$.
\end{itemize}
The following Lemma establishes a first bound on $\disc_{\process{T}} ({\mathcal{H}}) $.
\begin{lemma}\label{lemcrude}
if $\process{T}$ satisfies the conditions of Theorem \ref{rader}, then $\disc_{\process{T}} ({\mathcal{H}})  \leq 
2\cdot\rad_{{\mathcal{S}}'} ({\mathcal{H}}_\tr)$, for $\matrice{s}' \defeq
(\matrice{i}_m - \matrice{m})\matrice{s}\matrice{f}^\tr$ in Setting
(A), and $\matrice{s}' \defeq \matrice{s}\matrice{f}^\tr$ in Setting
(B). $\matrice{s}'$ is the row-wise observation matrix of $\mathcal{S}'$.
\end{lemma}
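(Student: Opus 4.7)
\textbf{Proof plan for Lemma \ref{lemcrude}.}

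The plan is to treat the two settings separately, reducing each to the standard empirical Rademacher complexity of $\mathcal{H}_\tr$ on an appropriately-defined sample. I will use that, by the definition of $\process{T}$ and $\matrice{s}$,
$(\matrice{s}\matrice{f}^\tr)^\top \ve{1}_i$ is the $i$th row of $\matrice{s}\matrice{f}^\tr$, and $(\matrice{m}\matrice{s}\matrice{f}^\tr)^\top \ve{1}_i$ is the $i$th row of $\matrice{m}\matrice{s}\matrice{f}^\tr$, so all manipulations happen cleanly at the level of observation matrices.

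First, for Setting (A), I invoke linearity of $h_\tr$: writing $h_\tr(\ve{z}) = \ve{w}^\top \ve{z}$, we have $h_\tr(\ve{z}_1) - h_\tr(\ve{z}_2) = h_\tr(\ve{z}_1 - \ve{z}_2)$, so inside the supremum the difference $h_\tr((\matrice{s}\matrice{f}^\tr)^\top \ve{1}_i) - h_\tr((\matrice{m}\matrice{s}\matrice{f}^\tr)^\top \ve{1}_i)$ collapses to a single $h_\tr$ evaluated at the $i$th row of $(\matrice{i}_m - \matrice{m})\matrice{s}\matrice{f}^\tr$. Thus $\disc_{\process{T}}(\mathcal{H})$ equals exactly $\rad_{\mathcal{S}'}(\mathcal{H}_\tr)$ with $\matrice{s}' = (\matrice{i}_m - \matrice{m})\matrice{s}\matrice{f}^\tr$, which is trivially $\leq 2\rad_{\mathcal{S}'}(\mathcal{H}_\tr)$.

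Next, for Setting (B), I apply the triangle inequality inside the supremum: $\sup_{h_\tr} |A_i(h_\tr) - B_i(h_\tr)| \leq \sup_{h_\tr} |A_i(h_\tr)| + \sup_{h_\tr} |B_i(h_\tr)|$ (both suprema taken before expectation), which upper-bounds $\disc_{\process{T}}(\mathcal{H})$ by the sum of two Rademacher-type averages. The first is $\rad_{\mathcal{S}'}(\mathcal{H}_\tr)$ with $\matrice{s}' = \matrice{s}\matrice{f}^\tr$. For the second, since $\matrice{m} \in S^*_m$ is a (block-class) permutation matrix, $\matrice{m}\matrice{s}\matrice{f}^\tr$ is just $\matrice{s}\matrice{f}^\tr$ with its rows permuted by some $\pi$; the change of variable $j = \pi(i)$ together with the fact that $(\sigma_{\pi^{-1}(j)})_j$ has the same distribution as $(\sigma_j)_j$ (the Rademacher vector is iid, hence exchangeable) shows that the second term also equals $\rad_{\mathcal{S}'}(\mathcal{H}_\tr)$ with $\matrice{s}' = \matrice{s}\matrice{f}^\tr$. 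Summing gives the $2\rad_{\mathcal{S}'}(\mathcal{H}_\tr)$ bound.

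The only mildly delicate step is the permutation argument in Setting (B): one needs to check that $\matrice{m}\matrice{s}\matrice{f}^\tr$ indeed permutes the rows (not columns) of $\matrice{s}\matrice{f}^\tr$, and that invariance under permutation of the index set is valid \emph{after} taking the supremum over $h_\tr$ (this is fine because the supremum of a permuted finite sum of terms equals the supremum of the original sum, so the integrand inside $\expect_{\ve{\sigma}}$ is literally a measurable function of a permuted iid Rademacher vector, whose distribution is unchanged). Everything else is routine.
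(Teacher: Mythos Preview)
Your proof is correct and matches the paper's approach. For Setting (B) you reproduce the paper's argument verbatim (triangle inequality plus permutation-invariance of the Rademacher expectation); for Setting (A) your use of linearity actually yields the equality $\disc_{\process{T}}(\mathcal{H}) = \rad_{\mathcal{S}'}(\mathcal{H}_\tr)$, which is sharper than the stated factor-of-$2$ bound but of course implies it.
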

\begin{proof}
(Sketch) Consider for example Setting (B). In this case, recalling
that $(\matrice{s}\matrice{f}^\tr)^\top \ve{1}_i =
(\matrice{f}^\tr)^\top \ve{x}_i$ and letting $\varsigma : [m] \rightarrow [m]$
denote the permutation that $\matrice{m}$ represents, we have because
of the triangle inequality:
\begin{eqnarray}
\disc_{\process{T}} ({\mathcal{H}})& = & \expect_{\ve{\sigma} \sim
   \Sigma_m} \left[ \sup_{h \in {\mathcal{H}}_\tr} \left|\frac{1}{m} \sum_{i}
{\sigma_{i} \left( h((\matrice{f}^\tr)^\top \ve{x}_i) -
    h((\matrice{f}^\tr)^\top
    \ve{x}_{\varsigma (i)})\right)}\right|\right]\label{ineqc}\\
 & \leq & \expect_{\ve{\sigma} \sim
   \Sigma_m} \left[ \sup_{h \in {\mathcal{H}}_\tr} \left|\frac{1}{m} \sum_{i}
\sigma_{i} h((\matrice{f}^\tr)^\top \ve{x}_i)\right|\right] + \expect_{\ve{\sigma} \sim
   \Sigma_m} \left[ \sup_{h \in {\mathcal{H}}_\tr} \left|\frac{1}{m} \sum_{i}
\sigma_{i} h((\matrice{f}^\tr)^\top \ve{x}_{\varsigma (i)})\right|\right] \nonumber\\
 & & = 2\cdot\rad_{{\mathcal{S}}'} ({\mathcal{H}}_\tr)\nonumber\:\:,
\end{eqnarray}
as claimed. The case of Setting (A) follows the same path.
\end{proof}
Lemma \ref{lemcrude} says that $\disc_{\process{T}} ({\mathcal{H}})$
is at most twice a Rademacher complexity over the \textit{shuffle set}. This bound is however
loose since many terms
can cancel in the sum of eq. (\ref{ineqc}), and the inequality
does not take this into account. In particular,

\begin{theorem}\label{thradcompUU}
Under Setting (A), suppose any $h_\tr$ is of the form $h_\tr(\ve{x}) =
\ve{\theta}^\top \ve{x}$ with $\|\ve{\theta}\|_2 \leq
r_\tr$, for some $r_\tr>0$. Let $\matrice{k}^\tr\defeq \matrice{s}\matrice{f}^\tr (\matrice{s}\matrice{f}^\tr)^\top $. Then $\exists u\in (0,1)$ depending
only in ${\mathcal{S}}$ such that for any $\matrice{m} \in {\mathcal{M}}_m$, 
\begin{eqnarray}
\disc_{\process{T}} ({\mathcal{H}}) & \leq & (u r_\tr /m)\cdot
\sqrt{
  \cipr{\matrice{i}_m}{\matrice{k}^\tr}{\matrice{m}}} \:\:.\label{feq1}
 \end{eqnarray}
\end{theorem}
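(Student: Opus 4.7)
The plan is to exploit Setting~(A)'s linearity to turn the supremum inside the definition of $\disc_{\process{T}}(\mathcal{H})$ into the $\ell_2$-norm of a Rademacher-weighted linear combination in $\mathbb{R}^{d_\tr}$, then apply Jensen's inequality and a trace identity to recognize $\cipr{\matrice{i}_m}{\matrice{k}^\tr}{\matrice{m}}$.

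First, I would write $h_\tr(\ve{x}) = \ve{\theta}^\top \ve{x}$ with $\|\ve{\theta}\|_2 \leq r_\tr$, and set $\matrice{s}^\tr \defeq \matrice{s}\matrice{f}^\tr$ and $\matrice{a} \defeq (\matrice{i}_m - \matrice{m})\matrice{s}^\tr \in \mathbb{R}^{m \times d_\tr}$. Since $(\matrice{s}^\tr)^\top \ve{1}_i$ is the $i$-th column of $(\matrice{s}^\tr)^\top$, a short bookkeeping shows that the sum inside the absolute value in~\eqref{defradd} equals $\ve{\theta}^\top \matrice{a}^\top \ve{\sigma}$. Applying Cauchy-Schwarz in $\ve{\theta}$ gives $\sup_{\|\ve{\theta}\|_2\leq r_\tr}|\ve{\theta}^\top \matrice{a}^\top \ve{\sigma}| = r_\tr\,\|\matrice{a}^\top \ve{\sigma}\|_2$, so
\begin{eqnarray*}
\disc_{\process{T}}(\mathcal{H}) & = & \frac{r_\tr}{m}\,\expect_{\ve{\sigma}\sim\Sigma_m}\!\left\|\matrice{a}^\top \ve{\sigma}\right\|_2.
\end{eqnarray*}

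Second, I would apply Jensen's inequality and identify the resulting second moment as a trace. Using the fact that $\ve{\sigma}$ has i.i.d.\ Rademacher entries, so $\expect_{\ve{\sigma}}[\ve{\sigma}\ve{\sigma}^\top]=\matrice{i}_m$, one has $\expect_{\ve{\sigma}}\|\matrice{a}^\top \ve{\sigma}\|_2^{\,2} = \trace(\matrice{a}\matrice{a}^\top)$. Cyclicity of the trace together with $\matrice{k}^\tr = \matrice{s}^\tr(\matrice{s}^\tr)^\top$ then yields
$$\trace(\matrice{a}\matrice{a}^\top) \;=\; \trace\!\bigl((\matrice{i}_m-\matrice{m})^\top(\matrice{i}_m-\matrice{m})\,\matrice{k}^\tr\bigr) \;=\; \cipr{\matrice{i}_m}{\matrice{k}^\tr}{\matrice{m}},$$
by the very definition of the centered inner product. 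Combining with the previous display produces bound~\eqref{feq1} with the crude constant $u=1$.

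The main obstacle, and the reason the statement asserts $u \in (0,1)$ strictly, is extracting a constant that captures Jensen's strict slack yet depends only on $\mathcal{S}$. My approach is to observe that Jensen is tight here iff the random variable $\|\matrice{a}^\top \ve{\sigma}\|_2$ is almost-surely constant in $\ve{\sigma}$, which happens iff the p.s.d.\ matrix $\matrice{a}\matrice{a}^\top = (\matrice{i}_m-\matrice{m})\matrice{k}^\tr(\matrice{i}_m-\matrice{m})^\top$ is diagonal. The scale-invariant ratio $U(\matrice{m}) \defeq \expect_{\ve{\sigma}}\|\matrice{a}^\top \ve{\sigma}\|_2 / \sqrt{\expect_{\ve{\sigma}}\|\matrice{a}^\top \ve{\sigma}\|_2^{\,2}}$ depends only on the normalized $\matrice{a}/\|\matrice{a}\|_F$, which ranges over a compact subset of the unit sphere of the finite-dimensional linear image $\{(\matrice{i}_m-\matrice{m})\matrice{s}^\tr : \matrice{m}\in\mathcal{M}_m\}$. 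By continuity and compactness, $u_{\mathcal{S}} \defeq \sup_{\matrice{m}:\,\matrice{a}\neq \matrice{0}} U(\matrice{m})$ is attained and depends only on $\matrice{k}^\tr$ (hence only on $\mathcal{S}$); under the mild non-degeneracy that no normalized feasible $\matrice{a}$ makes $\matrice{a}\matrice{a}^\top$ diagonal, $u_{\mathcal{S}}<1$. The degenerate case $\matrice{a}=\matrice{0}$ (e.g.\ $\matrice{m}=\matrice{i}_m$) makes both sides of~\eqref{feq1} vanish and is immediate. Verifying that the relevant image of $\mathcal{M}_m$ indeed satisfies this non-degeneracy condition is the delicate step.
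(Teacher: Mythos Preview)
Your first two steps are correct and coincide with the paper's opening moves: the paper also uses the Cauchy--Schwarz identification $\disc_{\process{T}}({\mathcal{H}})=(r_\tr/m)\,\expect_{\ve{\sigma}}\|\matrice{a}^\top\ve{\sigma}\|_2$ and the identity $\sum_i\|\ve{\delta}_i\|_2^2=\cipr{\matrice{i}_m}{\matrice{k}^\tr}{\matrice{m}}$ (its $\mathscr{L}$). Your Jensen step gives the bound with $u=1$ cleanly.

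Where you diverge from the paper is in extracting $u<1$. The paper does \emph{not} use a compactness argument. It writes $\expect_{\ve{\sigma}}\|\matrice{a}^\top\ve{\sigma}\|_2=\sqrt{\mathscr{L}}\cdot\expect_{\ve{\sigma}}\sqrt{1+u(\ve{\sigma})}$, Taylor-expands $\sqrt{1+x}$, computes the Rademacher moments $\expect_{\ve{\sigma}}[u(\ve{\sigma})^n]$ (odd moments vanish), and lower-bounds the even-moment contributions using two data-dependent parameters: a uniformity parameter $\delta$ for the norms $\|\ve{\delta}_i\|_2^2$ and a cosine parameter $\gamma$ for the pairs $\ve{\delta}_i,\ve{\delta}_{i'}$. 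This yields the explicit $u=\tfrac{1}{m}+\kappa(\epsilon)\bigl(1-\tfrac{1}{m}\bigr)$ with $\kappa(\epsilon)=1-((1-\delta)(1-\epsilon)(1-\gamma))^2$. The payoff is an interpretable constant tied to the geometry of the $\ve{\delta}_i$, which the paper then reuses to sharpen the standard Rademacher bound for linear classes.

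Your compactness route is genuinely different and more elementary, but as written it has a gap: you flag the non-degeneracy condition (``no normalized feasible $\matrice{a}$ makes $\matrice{a}\matrice{a}^\top$ diagonal'') as the delicate step and do not verify it. In fact it follows for free from the stochasticity constraint. Since $\ve{1}^\top(\matrice{i}_m-\matrice{m})=0$ for every $\matrice{m}\in\mathcal{M}_m$, every feasible $\matrice{a}$ has columns summing to zero, i.e.\ its rows $r_1,\ldots,r_m$ satisfy $\sum_i r_i=0$. If those rows were pairwise orthogonal, then $0=\langle r_j,\sum_i r_i\rangle=\|r_j\|_2^2$ for each $j$, forcing $\matrice{a}=0$. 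Hence on the unit sphere of the feasible subspace $U(\matrice{a})<1$ pointwise, and by your compactness argument the supremum is attained and strictly below~$1$. With this one observation your proof is complete and, unlike the paper's $(\gamma,\delta)$ formula (which depends on $\matrice{m}$ through the $\ve{\delta}_i$), your $u_{\mathcal{S}}$ is automatically uniform over all $\matrice{m}\in\mathcal{M}_m$, which is precisely what the theorem asserts. The trade-off is that your $u$ is non-constructive, whereas the paper's expansion gives an explicit expression and the geometric insight that drives their improved Rademacher complexity bound.
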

Notice that $\matrice{k}^\tr$ is a Gram matrix in the shuffle feature space.
The proof technique (Subsection
\ref{proof_thm_thradcompUU}) relies on a data-dependent
expression for $u$ which depends on the cosines of angles between the
observations in ${\mathcal{S}}$. It can be used to refine
and improve a popular bound on
the empirical Rademacher complexity of linear classifiers \citep{kstOT}
(we give the proof in Theorem \ref{improveRC} in the Appendix). 
We
now investigate an upperbound on Setting (B) in which classifiers in ${\mathcal{H}}^\tr$
are (rooted) directed acyclic graph (\dagr), like decision
trees, with bounded real valued predictions (say, $K_\tr>0$) at the leaves. 
Each classifier $h_\tr$ defines a partition over ${\mathcal{X}}$. We
let 
${\mathcal{H}}^\tr_+$ be the subset of ${\mathcal{H}}^\tr$ in which
all leaves have in absolute value the
largest magnitude, \textit{i.e.}, $K_\tr$. Remark that we may have $|{\mathcal{H}}^\tr_+| \ll \infty$ while $|{\mathcal{H}}^\tr| = \infty$ in general. 
\begin{theorem}\label{thradcompSettingB}
Under Setting (B), suppose ${\mathcal{H}}^\tr$ is \dagr~and assumption \textbf{(ii)}
is satisfied. Suppose that $\log |{\mathcal{H}}^\tr_+| \geq (4 \varepsilon/3) \cdot 
  m$ for some $\varepsilon > 0$. Then, letting $\oddcycle(\matrice{m})$ denote the set of odd cycles (excluding fixed
points) of $\matrice{m}$, we have:
\begin{eqnarray}
\disc_{\process{T}} ({\mathcal{H}})  &
\leq & K_\tr \cdot \sqrt{ \frac{2}{m} \cdot \log \frac{|{\mathcal{H}}^\tr_+|}{(1+\varepsilon)^{|\oddcycle(\tinymatrice{m})| }} }\:\:.\label{rddp}
\end{eqnarray}

\end{theorem}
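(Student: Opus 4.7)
I would apply a Massart-type finite-class bound to the family of vectors $f^h \defeq (f^h_i)_i$ with $f^h_i \defeq h((\matrice{f}^\tr)^\top \ve{x}_i) - h((\matrice{f}^\tr)^\top \ve{x}_{\varsigma(i)})$, indexed by $h$, after reducing the supremum from ${\mathcal{H}}^\tr$ to ${\mathcal{H}}^\tr_+$. In Setting (B), $\matrice{m} \in S^*_m$ represents a class-preserving permutation $\varsigma$, so by eq.~(\ref{ineqc}), $\disc_{\process{T}}({\mathcal{H}}) = (1/m)\,\expect_{\ve{\sigma} \sim \Sigma_m}\bigl[\sup_{h \in {\mathcal{H}}^\tr}|\sum_i \sigma_i f^h_i|\bigr]$. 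For any fixed DAG topology of $h_\tr$, the map $(\text{leaf values}) \mapsto \sum_i \sigma_i f^h_i$ is affine on the hypercube $[-K_\tr, K_\tr]^{r}$ (with $r$ the number of leaves); hence the supremum is attained at a vertex where every leaf equals $\pm K_\tr$, i.e., at an element of ${\mathcal{H}}^\tr_+$. Negation-closure of ${\mathcal{H}}^\tr_+$ further allows the absolute value to be dropped.

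The combinatorial core is a parity argument on cycles. For $h \in {\mathcal{H}}^\tr_+$, each $f^h_i \in \{-2K_\tr,0,2K_\tr\}$. On each cycle $C$ of $\varsigma$ the sum $\sum_{i \in C} f^h_i$ telescopes to $0$, so the nonzero entries on $C$ --- all equal to $\pm 2K_\tr$ --- must occur in equal numbers and their count is even. When $|C|$ is odd and $\geq 3$ this forces at least one zero on $C$, while fixed points trivially contribute a zero. Summing across cycles yields
\[
n(h) \defeq |\{i : f^h_i \neq 0\}| \leq m - O, \qquad O \defeq |\oddcycle(\matrice{m})|,
\]
and hence $\|f^h\|_2^2 \leq 4K_\tr^2(m-O)$. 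A standard exponential-moment union bound --- using $\cosh(\lambda f^h_i) \leq \exp(\lambda^2 (f^h_i)^2/2)$ coordinate-wise, union over the at most $|{\mathcal{H}}^\tr_+|$ candidate vectors, then optimizing $\lambda$, with constants tightened by treating each cycle as its own sub-Gaussian block --- gives
\[
\disc_{\process{T}}({\mathcal{H}}) \leq K_\tr \sqrt{\frac{2}{m}\cdot\frac{m-O}{m}\,\log|{\mathcal{H}}^\tr_+|}.
\]

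Finally, to extract the $(1+\varepsilon)^{-O}$ factor I would invoke the cardinality hypothesis $\log|{\mathcal{H}}^\tr_+| \geq (4\varepsilon/3)m$. Since $\log(1+\varepsilon) \leq \varepsilon \leq 4\varepsilon/3$, we obtain $(O/m)\log|{\mathcal{H}}^\tr_+| \geq O\log(1+\varepsilon)$, hence
\[
\frac{m-O}{m}\log|{\mathcal{H}}^\tr_+| \leq \log|{\mathcal{H}}^\tr_+| - O\log(1+\varepsilon) = \log\frac{|{\mathcal{H}}^\tr_+|}{(1+\varepsilon)^O},
\]
which substitutes into the previous display and delivers the claim. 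I expect the main obstacle to be the exact constant in the Massart step: a naive scalar application overshoots by an absolute factor that must be reclaimed by cycle-wise decoupling of the sub-Gaussian argument. The parity argument on odd cycles and the use of the $(4\varepsilon/3)m$ hypothesis to convert the multiplicative $(m-O)/m$ into the additive correction $-O\log(1+\varepsilon)$ are the conceptually decisive ingredients, and they fit together cleanly once the Massart constant is tight.
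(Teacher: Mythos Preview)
Your argument is correct and reaches the same bound as the paper, but by a genuinely different route. The paper works on the $\ve{\sigma}$-side: it rewrites $\disc_{\process{T}}$ via Lemma~\ref{thradcomp2} as a sum over $\cut(\ve{\sigma})=\{i:\sigma_i\neq\sigma_{\varsigma^{-1}(i)}\}$, factorizes the moment generating function across the cycles of $\varsigma$, and uses that on every odd cycle $\cut(\ve{\sigma})$ necessarily misses at least one vertex, yielding a multiplicative gain $2/(2+t^2K_\tr^2)$ per odd cycle inside the MGF; the size hypothesis then enters only to guarantee $t^2K_\tr^2/2\geq\varepsilon$ at the optimal $t$. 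You instead work on the $h$-side: the telescoping parity argument shows that for every $h\in\mathcal{H}^\tr_+$ the difference vector $f^h$ has at least one zero on each odd cycle, capping $\|f^h\|_2^2$ at $4K_\tr^2(m-O)$, after which plain Massart applies and the hypothesis is used only at the very end to trade $(m-O)/m$ for $(1+\varepsilon)^{-O}$. Your route is more elementary --- no cycle-by-cycle MGF surgery --- and the intermediate inequality $(m-O)/m\cdot\log|\mathcal{H}^\tr_+|$ is in fact slightly tighter than the paper's final form. One remark on the constant you hedge about: the ``cycle-wise decoupling'' you propose would not actually reclaim the factor of $2$, since the linear constraint $\sum_{i\in C}f^h_i=0$ does not reduce the sub-Gaussian proxy of $\sum_{i\in C}\sigma_i f^h_i$; but this is harmless, because tracing the paper's own derivation through Lemma~\ref{thradcomp2} one finds $\disc_{\process{T}}=2\mu/m$ rather than $\mu/m$, so the paper's proof carries the very same factor and your naive Massart step already matches it.
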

(proof in Subsection \ref{proof_thm_thradcompSettingB}) The
assumption on ${\mathcal{H}}^\tr_+$ is not restrictive and would be
met by decision trees, branching programs, etc. (and 
subsets). Usual bounds on the Rademacher complexity of decision trees would roughly be
the right-hand side of (\ref{rddp}) \textit{without} the denominator
in the $\log$ (see for example \cite[Chapter 5]{sfBF}). Hence, the \rcp~may be significantly smaller
than the Rademacher complexity for more ``involved'' \cp s. The number of
cycles is not the only relevant parameter of the \cp~on which relies
non-trivial bounds on $\disc_{\process{T}} ({\mathcal{H}})$: the
Appendix presents, for the interested reader, a
proof that the number of fixed points is another parameter which can
decrease significantly the \textit{expected} \rcp~ (by a
factor $\sqrt{1 - |\mathrm{fixed}\_\mathrm{points}|/m}$), when
\cp s are picked at random (see Theorem \ref{thexpradcomp} and
discussion in Subsection \ref{proof_thm_thradcompSettingB}).

At last, we notice that Theorem \ref{rader} gives a perhaps counterintuitive rationale for the \cp~that goes beyond
our framework to machine learning at large: \textit{learning over a \cp'ed
${\mathcal{S}}$ may improve generalisation over $\mathcal{D}$ as
well}. By means of words, learning over transformed data may improve
generalisation over the initial domain. Figure \ref{f-tdp} (bottom) gives a toy example for which this
holds. It is also not hard to exhibit domains for which we even have:
\begin{eqnarray}
\min_h \expect_{{\mathcal{S}}^{\process{T}}} \left[\varphi(y
   h(\bm{x}))\right] + \disc_{\process{T}} ({\mathcal{H}}) & < & \min_h \expect_{{\mathcal{S}}} \left[\varphi(y
   h(\bm{x}))\right]\:\:.
\end{eqnarray}
In order not to load the paper's body, we present such an example in the Appendix (Subsection \ref{ex_domain}). Having discussed learning guarantees, we are ready
to dive into more applications of the Crossover Process for data protection. In addition to the
measures of (un)fairness developed in Section \ref{sec-dsp-fair}, we
develop the \cp~in two other
frameworks: Hilbert-Schmidt independence and \emph{do}
calculus.
In the former one, our results exploit the design of the
shuffle matrix $\matrice{m}$ to alter independence; in the latter one, our results exploit
the split step of the \cp~to interfere with causal inference. 

\section{The Crossover Process and statistical independence}\label{sec-statind}

Here, we assume that ${\mathcal{S}}$ is subject to
quantitative tests of independence, that is, assessing ${\mathcal{U}}
\indep {\mathcal{V}}$ for some ${\mathcal{U}}, {\mathcal{V}}\subset
\mathcal{X}$. We compute \cp s such that ${\mathcal{U}} \subseteq {\mathcal{F}}_\re$
and ${\mathcal{V}} \subseteq {\mathcal{F}}_\tr$, so that the
\cp~alters the measure of independence. 
One popular criterion to determine (conditional) (in)dependence
is
Hilbert-Schmidt Independence Criterion \citep{dmzsAP,gbssMS,gftsssAK}.
\begin{definition}\label{defhs}
Let ${\mathcal{U}} \subset [d]$ and ${\mathcal{V}} \subset [d]$ be
non-empty and disjoint. Let $\matrice{k}^u$ and $\matrice{k}^v$ be two kernel functions
over ${\mathcal{U}}$ and ${\mathcal{V}}$ computed using ${\mathcal{S}}$.
The (unnormalised) Hilbert-Schmidt Independence Criterion (\hsic)
between ${\mathcal{U}}$ and ${\mathcal{V}}$ is defined as
$\hsic(\matrice{k}^u,\matrice{k}^v) \defeq
\cipr{\matrice{k}^u}{\matrice{k}^v}{(1/m) \ve{1}\ve{1}^\top}$.
\end{definition}
(We choose not to normalise the \hsic: various exist but
they mainly rely on a multiplicative factor depending on $m$ only, so they do not affect the results to
come.) The choice of $\matrice{m}$ in the \cp~directly influences the value the result of \hsic; therefore,
we can design a search strategy aimed to alter it. 
In the same way as we did for Section \ref{sec-dsp-fair}, we provide a
Definition for the alteration in the \hsic~criterion caused by
a \cp, and then a Theorem that quantifies it precisely for any \cp.
\begin{definition}
We say that a \cp~$\process{T}$ shifts the \hsic~criterion between
${\mathcal{U}}$ and ${\mathcal{V}}$ by $\Delta$ (on dataset
${\mathcal{S}}$) iff $\hsic_{\process{T}} -
\hsic = \Delta$, where $\hsic$ and $\hsic_{\process{T}}$ respectively denote the
\hsic~before and after applying the \cp.
\end{definition}
\begin{theorem}\label{thspectrA}
For any dataset $\mathcal{S}$, let $\tilde{\ve{u}} \defeq (1/m) \sum_i \lambda_i (\ve{1}^\top
\ve{u}_i) \ve{u}_i$, $\tilde{\ve{v}} \defeq (1/m) \sum_i \mu_i (\ve{1}^\top
\ve{v}_i) \ve{v}_i$, where $\{\lambda_i, \ve{u}_i\}_{i\in [d]}$,
$\{\mu_i, \ve{v}_i\}_{i\in [d]}$ are respective eigensystems of
$\matrice{k}^u$ and $\matrice{k}^v$. Then for \textbf{any}
\cp~$\process{T}$, $\process{T}$ shifts the \hsic~criterion between
${\mathcal{U}}$ and ${\mathcal{V}}$ by $\Delta$ (on
${\mathcal{S}}$) with
\begin{eqnarray}
\Delta & = & 2 m \cdot \tilde{\ve{u}}^\top (\matrice{i}_m -
\matrice{m}) \tilde{\ve{v}}\:\:,
\end{eqnarray}
where $\matrice{m}$ is the shuffling matrix of the \cp.
\end{theorem}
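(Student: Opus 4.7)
My plan is to compute $\Delta=\hsic_{\process{T}}-\hsic$ directly from the definitions and reduce it to the claimed bilinear form. The first step is to recognise the two vectors in the statement. Using $\matrice{k}^u=\sum_i \lambda_i \ve{u}_i \ve{u}_i^\top$ one gets $\matrice{k}^u \ve{1}=\sum_i \lambda_i (\ve{u}_i^\top \ve{1})\ve{u}_i = m\,\tilde{\ve{u}}$, and symmetrically $m\,\tilde{\ve{v}}=\matrice{k}^v \ve{1}$. So the claim is equivalent to
\[
\Delta \;=\; \frac{2}{m}\,\ve{1}^\top \matrice{k}^u (\matrice{i}_m-\matrice{m})\matrice{k}^v \ve{1},
\]
and the remainder of the proof is a trace identity.

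Next, I would describe how each kernel is touched by $\process{T}$. Because $\mathcal{U}\subseteq\mathcal{F}_\re$ lives in the anchor set, $\matrice{k}^u$ is untouched. Because $\mathcal{V}\subseteq\mathcal{F}_\tr$ lives in the shuffle set, the $\mathcal{V}$-feature matrix becomes $\matrice{m}\matrice{s}\matrice{f}^\tr$ and $\matrice{k}^v$ is transformed in an $\matrice{m}$-dependent way (for the linear kernel, $\matrice{k}^v\mapsto\matrice{m}\matrice{k}^v\matrice{m}^\top$). Writing $\matrice{h}\defeq\matrice{i}_m-(1/m)\ve{1}\ve{1}^\top$ and using the identity $\hsic=\trace{\matrice{h}\matrice{k}^u\matrice{h}\matrice{k}^v}$, the raw difference becomes the trace of $\matrice{h}\matrice{k}^u\matrice{h}$ against $\matrice{k}^v_{\process{T}}-\matrice{k}^v$.

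The key simplification uses cyclicity of the trace together with the column-stochastic identity $\matrice{m}^\top \ve{1}=\ve{1}$ inherited from $\matrice{m}\in\mathcal{M}_m$, which yields $\matrice{h}\matrice{m}=\matrice{m}-(1/m)\ve{1}\ve{1}^\top$ and its transpose counterpart. Wherever $\matrice{m}$ meets the centering $\matrice{h}$ the product collapses to an affine function of $\matrice{m}$, and the nominally quadratic-in-$\matrice{m}$ contributions cancel; what survives is exactly $(2/m)\,\ve{1}^\top \matrice{k}^u(\matrice{i}_m-\matrice{m})\matrice{k}^v \ve{1}$. The hard part will be the bookkeeping of this collapse: the difference looks a priori quadratic in $\matrice{m}$ (one factor from each side of $\matrice{k}^v$), and the crux is that the column-stochastic structure sandwiched between the centering matrices forces the dependence on $\matrice{m}$ to be purely affine, concentrated in the single factor $\matrice{i}_m-\matrice{m}$ between the averaged kernel vectors $\tilde{\ve{u}}$ and $\tilde{\ve{v}}$.
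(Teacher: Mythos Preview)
Your route differs from the paper's. The paper expands both kernels spectrally, rewrites $\hsic$ as a bilinear form in the eigenvalue vectors, and then subtracts the shuffled version via difference-of-squares factorisations; you instead recognise $m\tilde{\ve{u}}=\matrice{k}^u\ve{1}$ and $m\tilde{\ve{v}}=\matrice{k}^v\ve{1}$ (correct, and a cleaner way to read the statement) and aim for the identity through pure trace manipulations using $\ve{1}^\top\matrice{m}=\ve{1}^\top$.

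There is, however, a genuine gap precisely where you flag ``the hard part''. Carrying your computation through with $\matrice{u}_m=(1/m)\ve{1}\ve{1}^\top$, column-stochasticity gives $\matrice{h}\matrice{m}=\matrice{m}-\matrice{u}_m$ and $\matrice{m}^\top\matrice{h}=\matrice{m}^\top-\matrice{u}_m$, and hence
\[
\Delta=\bigl[\trace{\matrice{m}^\top\matrice{k}^u\matrice{m}\matrice{k}^v}-\trace{\matrice{k}^u\matrice{k}^v}\bigr]+\tfrac{2}{m}\,\ve{1}^\top\matrice{k}^u(\matrice{i}_m-\matrice{m})\matrice{k}^v\ve{1}\,.
\]
The bracketed quadratic piece is \emph{not} killed by $\ve{1}^\top\matrice{m}=\ve{1}^\top$. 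With $m=3$, $\matrice{k}^u=\mathrm{diag}(3,2,1)$, $\matrice{k}^v=\mathrm{diag}(1,2,3)$ and $\matrice{m}$ the cyclic permutation $\ve{e}_j\mapsto\ve{e}_{j+1\ \mathrm{mod}\ 3}$, one finds $\trace{\matrice{m}^\top\matrice{k}^u\matrice{m}\matrice{k}^v}=13\neq 10=\trace{\matrice{k}^u\matrice{k}^v}$; the actual shift is $\Delta=1$ while $2m\,\tilde{\ve{u}}^\top(\matrice{i}_m-\matrice{m})\tilde{\ve{v}}=-2$. So the ``quadratic contributions cancel'' assertion cannot be recovered from the structure you invoke, and no amount of bookkeeping fixes it. For comparison, the paper's spectral derivation reaches the stated formula by writing $\trace{\matrice{k}^u\matrice{k}^v}=\sum_i\lambda_i\mu_i$ and treating that as invariant under $\ve{v}_j\mapsto\matrice{m}\ve{v}_j$; since in fact $\trace{\matrice{k}^u\matrice{k}^v}=\sum_{i,j}\lambda_i\mu_j(\ve{u}_i^\top\ve{v}_j)^2$, this is the same quadratic residual hidden under a different rug.
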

(proof in Subsection \ref{proof_thm_thspectrA}) 
This result shows that altering the \hsic~criterion with a \cp~is also algorithmic friendly:
(a) while storing kernels requires $O(m^2)$ space,
controlling the evolution of the $\hsic$ requires only \textit{linear}-space
information about kernels, and (b) this information can be computed beforehand, and
can be efficiently approximated from low-rank approximations
of the kernels \citep{bSA}. Theorem \ref{thspectrA} also shows that
the sign of the shift $\Delta$ is determined by the eigenspace of the
shuffling matrix (when it can be diagonalized). 

We now go one step further in showing the algorithmic friendliness of
computing a \cp~that shifts \hsic. In
the following Theorem, we \textit{compose} \cp~processes with $T$ different
elementary permutation shuffling matrices. Notice
that since the composition of permutation matrices is a permutation
matrix, when the matrix of the final process $\hsic_{{\process{T}}_T}$
is block class, Theorem
\ref{rader} can be applied \textit{directly} to
$\hsic_{{\process{T}}_T}$, so we get, at each iteration, a quantification
of the protection achieved (shift $\Delta$) and the impact on learnability. We
let $\mathscr{R}^{u,v} \defeq m\left(1 - (\matrice{k}^u_{..} + \matrice{k}^v_{..})/(2m^2)\right)$.
\begin{theorem}\label{thspectrB}
Suppose $\process{T}$ is built by a sequence of $T = \epsilon m$
elementary permutation ($\epsilon > 0$) and the
kernels $\matrice{k}^u$ and $\matrice{k}^v$ have unit
diagonal.
Suppose that the initial $\hsic > \mathscr{R}^{u,v}$ (before applying
the \cp). Then there
exists such a sequence of elementary permutations such that
${\process{T}}_T$ shifts the \hsic~criterion between
${\mathcal{U}}$ and ${\mathcal{V}}$ by $\Delta$ (on
${\mathcal{S}}$) with
\begin{eqnarray*}
\Delta& \leq & -(1-\alpha)\cdot(\hsic - \mathscr{R}^{u,v})\:\:,
\end{eqnarray*}
where $\alpha \defeq \exp(-8\epsilon) < 1$.
\end{theorem}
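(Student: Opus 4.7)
The plan is to iterate elementary permutations, tracking the excess $\phi_t \defeq \hsic_t - \mathscr{R}^{u,v}$, and to prove a per-step multiplicative contraction of the form $\phi_{t+1} \leq (1 - c/m)\,\phi_t$ with $c = 8$; iterating over $T = \epsilon m$ rounds then gives $\phi_T \leq (1-c/m)^{\epsilon m}\,\phi_0 \leq e^{-c\epsilon}\,\phi_0 = \alpha\,\phi_0$, so that $\Delta = \hsic_{{\process{T}}_T} - \hsic = \phi_T - \phi_0 \leq -(1-\alpha)(\hsic - \mathscr{R}^{u,v})$, exactly as claimed. Crucially, each elementary permutation is a block-class matrix, so Theorem \ref{thspectrA} applies verbatim at every round, even though the shuffling state is being updated.

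To establish the contraction, I specialize Theorem \ref{thspectrA} to a transposition of indices $(i,j)$: in that case $\matrice{i}_m - \matrice{m} = (\ve{1}_i - \ve{1}_j)(\ve{1}_i - \ve{1}_j)^\top$, and the shift incurred at step $t+1$ by applying this swap to the current state is
$$\Delta^{(t+1)}_{ij} \;=\; 2m\,(\tilde{\ve{u}}_i - \tilde{\ve{u}}_j)\,(\tilde{\ve{v}}_{t,i} - \tilde{\ve{v}}_{t,j})\:\:.$$
Because the features in ${\mathcal{U}}$ lie in the anchor set ${\mathcal{F}}_\re$, $\tilde{\ve{u}}$ is invariant throughout the process, whereas $\tilde{\ve{v}}_t = P_t \tilde{\ve{v}}_0$ tracks the running composition of permutations (so $\ve{1}^\top\tilde{\ve{v}}_t = k^v_{..}/m$ is conserved). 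Picking the transposition greedily gives $\Delta^{(t+1)} \leq \binom{m}{2}^{-1}\sum_{i<j}\Delta^{(t+1)}_{ij}$, and a routine expansion yields
$$\sum_{i<j}(\tilde{\ve{u}}_i - \tilde{\ve{u}}_j)\,(\tilde{\ve{v}}_{t,i} - \tilde{\ve{v}}_{t,j}) \;=\; m\,\tilde{\ve{u}}^\top\tilde{\ve{v}}_t - (\ve{1}^\top\tilde{\ve{u}})(\ve{1}^\top\tilde{\ve{v}}_t)\:\:.$$

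Substituting $\tilde{\ve{u}} = \matrice{k}^u\ve{1}/m$, $\tilde{\ve{v}}_t = \matrice{k}^v_t\,\ve{1}/m$ (with $\matrice{k}^v_t \defeq P_t \matrice{k}^v P_t^\top$) and using the centered-kernel identity $\hsic_t = \trace{\matrice{k}^u\matrice{k}^v_t} - (2/m)\,\ve{1}^\top \matrice{k}^u \matrice{k}^v_t \ve{1} + (1/m^2)\,\matrice{k}^u_{..}\matrice{k}^v_{..}$ to eliminate the cross term, the average shift rewrites as a linear combination of $\hsic_t$, $\trace{\matrice{k}^u\matrice{k}^v_t}$ and $\matrice{k}^u_{..}\matrice{k}^v_{..}/m^2$. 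The main obstacle is the next step: invoking the unit-diagonal hypothesis (which forces $|\matrice{k}^u_{ij}|,|\matrice{k}^v_{t,ij}| \leq 1$ via PSD Cauchy--Schwarz and pins down $\matrice{k}^u_{..}$, $\matrice{k}^v_{..}$ as permutation invariants) to bound $\trace{\matrice{k}^u \matrice{k}^v_t}$ tightly enough to match the expression $\mathscr{R}^{u,v} = m(1 - (\matrice{k}^u_{..}+\matrice{k}^v_{..})/(2m^2))$. This is where the concrete constant $c = 8$ has to emerge; a secondary subtlety is that $\tilde{\ve{v}}_t$ evolves with $t$, which forces the analysis to be carried out recursively rather than directly on the compound permutation, but the block-class property of each transposition makes this harmless.
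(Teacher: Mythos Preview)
Your high-level plan---average the per-transposition shift, pick the best one, show $\phi_t=\hsic_t-\mathscr{R}^{u,v}$ contracts by a factor $(1-8/(m-1))$, then iterate $T=\epsilon m$ times---is exactly the paper's. The gap is in the per-step shift formula. You invoke Theorem~\ref{thspectrA} to get $\Delta_{ij}=2m(\tilde u_i-\tilde u_j)(\tilde v_{t,i}-\tilde v_{t,j})$, but that spectral expression is incomplete for a single transposition: it captures only the change in the row-sum cross term $(2/m)\sum_i \matrice{k}^u_{i.}\matrice{k}^v_{t,i.}$ of the $\hsic$ expansion and \emph{drops} the change in $\trace{\matrice{k}^u\matrice{k}^v_t}$. (A $3\times 3$ unit-diagonal example already shows the two disagree.) This is precisely why your averaged shift lands on a combination of $\hsic_t$, $\trace{\matrice{k}^u\matrice{k}^v_t}$ and $\matrice{k}^u_{..}\matrice{k}^v_{..}/m^2$ that cannot be reshaped into $-(8/(m-1))\phi_t$: the ``obstacle'' you flag---bounding $\trace{\matrice{k}^u\matrice{k}^v_t}$ via the unit-diagonal hypothesis---is not a missing inequality but a missing term in the formula you started from.

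The paper's own proof of Theorem~\ref{thspectrB} does \emph{not} route through Theorem~\ref{thspectrA}. It computes the single-transposition shift directly from kernel entries (Theorem~\ref{thhsic}): for the swap of $\ell,\ell'$ one gets $\hsic'-\hsic=-2m\,\cov(\ve{\updelta}^u_{\ell\ell'},\ve{\updelta}^v_{\ell\ell'})+\mathscr{R}^{u,v}_{\ell\ell'}$, where $\ve{\updelta}^u_{\ell\ell'}=(\matrice{k}^u_{\ell i}-\matrice{k}^u_{\ell' i})_i$ and $\mathscr{R}^{u,v}_{\ell\ell'}$ is an explicit residual built from diagonal and $(\ell,\ell')$ entries. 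Averaging \emph{this} over all $\binom{m}{2}$ transpositions yields the \emph{exact} identity $\expect[\hsic']=(1-\tfrac{8}{m-1})\hsic+\tfrac{8}{m-1}\mathscr{R}^{u,v}$; no trace bound is ever needed, and the constant $8$ falls out of the exact bookkeeping rather than an estimate. The unit-diagonal hypothesis enters only to reduce $\mathscr{R}^{u,v}$ to the permutation-invariant form $m\bigl(1-(\matrice{k}^u_{..}+\matrice{k}^v_{..})/(2m^2)\bigr)$, so that the same fixed point serves at every round and the contraction iterates cleanly.
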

The proof (Subsection \ref{proof_thm_thspectrB}) states a
more general result, not restricted to unit diagonal kernels. Theorem
\ref{thspectrB} is a worst-case result that shows sufficient
conditions for negative shift, and therefore decrease the
\hsic~criterion. Note that some sequences of permutations
may be much more efficient in decreasing \hsic. If we compare this bound to Theorem 3
in \citep{gftsssAK}, then $\mathscr{R}^{u,v}$ may be \textit{below} the
expectation of the \hsic, so indeed we can obtain $\Delta<0$ and somehow
trick statistical tests in keeping independence after the \cp, while
they would eventually reject it before. 

\textbf{Remark}: it is in fact possible to kill two birds with one
stone, namely trick statistical tests into keeping independence \textit{and}
then incur arbitrarily large errors in estimating causal effects. Our
basis is the Cornia-Mooij (CM) model \citep{cmT2} which, for
space considerations, we defer to the Appendix (Subsection \ref{pres_dcm}). One interesting feature
of this particular causal graph is that it is so simple that it may be found as subgraph
  of real-world domains, thus for which the results we
  give would directly transfer.

\section{The Crossover Process and causality}\label{sec-causal}

We assume here basic knowledge of the causality and \textit{do}-calculus
frameworks \cite{pCM}. We consider the case where the causal directed acyclic graph is known, and the goal is to interfere with the inference of causal effects between covariates. The key challenge for causal inference is the existence of confounding variables that are causes of both the exposure and outcome variables. For example, suppose impact of hormone replacement therapy on women's health was captured by the causal DAG $I \rightarrow T \rightarrow H, I \rightarrow H$, where $I$ represents income, $T$ represents taking the treatment and $H$ is the health outcome of interest. If wealthier women are more likely to see a doctor for treatment and also have generally better health, then $\Pr[H|T]$ will be more positive than the true causal effect $\Pr[H|do(T)]$.   

Adjusting for such nuisance or confounding variables, either by matching \citep{greenwood1945experimental,rubin1973matching} or regression \citep{fisher1935design}, is a central tool in economics and social sciences \citep{morgan2014counterfactuals}. The back-door criterion \citep{pCM} clarifies which variables it is appropriate to condition on in order to achieve unbiased estimates of causal effects. The \cp{} can be designed to interfere with obtaining causal estimates via such adjustments.

Let $G = ({\mathcal{U}\cup \mathcal{F}},\mathcal{A})$ be a causal directed acyclic graph over observable vertices $\mathcal{F}$, latent variables $\mathcal{U}$ and arcs $\mathcal{A}$ \citep{pCM}. We are given a set ${\mathcal{Q}} \defeq
\{(\y, \x), i \in [q]\}$ of $q$ causal queries, each of which
represents the estimation of $\Pr[\y|\mathrm{do}(\x)]$. 

A covariate adjustment (adjustment for short) for a query $(\y,\x)$ is a set ${\mathcal{Z}}_i \subset {\mathcal{F}}$ such that $\x, \y \not\in
{\mathcal{Z}}_i$ and
\begin{eqnarray}
\Pr[\y|\mathrm{do}(\x)] & = & \sum_{z \sim{\mathcal{Z}}_i} \Pr[\y |
\x, z] \Pr[z]\:\:,\label{adjeq}
\end{eqnarray}
An adjustment is not guaranteed to exist. In our example, for the query $(H,T)$, there is no adjustment if $I \in \mathcal{U}$. An adjustment is minimal iff it does not contain any other adjustment as proper subset. Note that ${\mathcal{Z}}_i$ can be the empty set.



In the same way as we did for Sections \ref{sec-dsp-fair} and
\ref{sec-statind}, we provide a
Definition for the alteration in the \textit{do}-calculus caused by
a \cp, and then a Lemma that qualifies it more precisely for any
\cp. It is a weaker result than the former ones because we do not
quantify a shift, but rather just state changes in estimation.
\begin{definition}
We say that a \cp~$\process{T}$ \textit{interferes} with causal
inference via adjustment for a query $(\y, \x)$ if the solution to (\ref{adjeq})
differs between the datasets $\mathcal{S}$ and
${\mathcal{S}}^{\process{T}}$.
\end{definition}
We put no constraint on the magnitude of the change, so interfering with causal queries is essentially a matter of biasing the distributions involved in the right-hand side of (\ref{adjeq}). Let $\Z_i$ denote the set of minimal adjustments for query $(\y, \x)$.
\begin{lemma}\label{lemsplit} Let $\mathcal{V}_i =  \x \cup \y \cup
  \mathcal{Z}_i$. A \cp~interferes with causal inference via
  adjustment for the query $(\y, \x)$ iff $\, \forall {\mathcal{Z}}_i
  \in \Z_i$, $\exists$ variables $v_\re,v_\tr \in \mathcal{V}_i \text{
    such that } v_\re \in {\mathcal{F}}_\re$ and $ v_\tr \in
  {\mathcal{F}}_\tr$, where ${\mathcal{F}}_\re$ and
  ${\mathcal{F}}_\tr$ are the shuffle and anchor set of features of
  the \cp.
\end{lemma}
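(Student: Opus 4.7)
The plan is to reduce the lemma to a statement about the empirical joint distribution of the variables in $\mathcal{V}_i$: since (\ref{adjeq}) depends on $\mathcal{S}$ only through $\Pr_\mathcal{S}[\y \mid \x, z]$ and $\Pr_\mathcal{S}[z]$ for $z \sim \mathcal{Z}_i$, its value is invariant under $\process{T}$ exactly when the joint empirical distribution over $\mathcal{V}_i$ is preserved. I will therefore characterize, for a single $\mathcal{Z}_i$, when this joint is preserved, and then stitch the per-adjustment conclusions together using the $\forall$ quantifier in the claim.

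For the $(\Rightarrow)$ direction I argue by contrapositive: suppose some $\mathcal{Z}_i \in \Z_i$ has $\mathcal{V}_i \subseteq \mathcal{F}_\re$ or $\mathcal{V}_i \subseteq \mathcal{F}_\tr$. By Definition \ref{deftdp}, $\matrice{s}^{\tinymatrice{m}} = [\matrice{s}\matrice{f}^\re \mid \matrice{m}\matrice{s}\matrice{f}^\tr]$. If $\mathcal{V}_i \subseteq \mathcal{F}_\re$ the columns of $\matrice{s}^{\tinymatrice{m}}$ indexed by $\mathcal{V}_i$ are identical row-by-row to those of $\matrice{s}$. If $\mathcal{V}_i \subseteq \mathcal{F}_\tr$, and $\matrice{m}$ is a block-class permutation, those columns are permuted together as a single block, so the multiset of row-restrictions of $\matrice{s}$ to $\mathcal{V}_i$ is unchanged. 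In either subcase the joint empirical distribution over $\mathcal{V}_i$ is preserved and both sides of (\ref{adjeq}) coincide on $\mathcal{S}$ and $\mathcal{S}^{\process{T}}$, so $\process{T}$ does not interfere.

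For the $(\Leftarrow)$ direction, assume every $\mathcal{Z}_i \in \Z_i$ has $\mathcal{V}_i^\re \defeq \mathcal{V}_i \cap \mathcal{F}_\re \neq \emptyset$ and $\mathcal{V}_i^\tr \defeq \mathcal{V}_i \cap \mathcal{F}_\tr \neq \emptyset$. Fix any such $\mathcal{Z}_i$: in $\matrice{s}^{\tinymatrice{m}}$ the columns for $\mathcal{V}_i^\re$ are unpermuted while the columns for $\mathcal{V}_i^\tr$ are permuted by $\matrice{m}$, so the row-wise co-occurrences between anchor and shuffle coordinates of $\mathcal{V}_i$ are broken. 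For any non-trivial $\matrice{m}$ this alters the joint over $\mathcal{V}_i$, hence at least one of the factors $\Pr[\y \mid \x, z]$ or $\Pr[z]$ appearing in (\ref{adjeq}), and therefore the value of the adjustment. Since this holds for every $\mathcal{Z}_i \in \Z_i$, $\process{T}$ interferes with causal inference via adjustment for $(\y, \x)$.

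The main obstacle is the second subcase of $(\Rightarrow)$ when $\matrice{m}$ is allowed to be a general element of $\mathcal{M}^*_m$ rather than a block-class permutation: genuine row mixing in the shuffle block can alter the joint over $\mathcal{V}_i \subseteq \mathcal{F}_\tr$ and spoil the preservation argument. I would resolve this by restricting the statement of the lemma (consistently with the elementary \cp s used for data protection in Sections \ref{sec-dsp-fair}--\ref{sec-statind}) to block-class permutation shuffle matrices, which is the regime relevant to causal-adjustment attacks, and flagging the general case as requiring an additional genericity assumption on $\matrice{m}$.
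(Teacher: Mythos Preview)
Your argument is essentially the paper's own: the paper gives only a one-sentence proof, namely that the shuffle matrix alters cross-split joint distributions while leaving marginals (and hence within-split joints) intact, and you have unpacked both directions of the biconditional along exactly this line. Your flag that the $\mathcal{V}_i \subseteq \mathcal{F}_\tr$ subcase of the $(\Rightarrow)$ direction needs $\matrice{m}$ to be a permutation (rather than a general element of $\mathcal{M}^*_m$) is a valid refinement the paper leaves implicit.
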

The proof is a direct consequence of eq. (\ref{adjeq}) and the fact
that the shuffle matrix $\matrice{m}$ alters joint distributions
between variables that do not belong to the same split set, 
without touching marginals. We can always interfere with a single query $(\y, \x)$ by ensuring $\y$ and $\x$ are in different splits. To simultaneously interfere with the set of queries ${\mathcal{Q}}$, we must first find the set of minimal adjustments $\Z_i, \forall i \in [q]$, then select a split that satisfies Lemma \ref{lemsplit} for every query. This involves heavy combinatorics. Enumerating the adjustments can be done with cubic \textit{delay} per adjustment \citep{tlAC}(the set of minimal adjustments for a given query can grow exponentially with $d$). The second step subsumes the infamous Set Splitting problem \citep{gjCA}, which is $NP$-Complete. In practice, causal graphs must often be constructed by humans so this approach can still be computationally feasible for a small set of queries. Exploring the addition of constraints on the graph (such as sparsity) to develop more efficient algorithms is an interesting avenue for future research.

A causal query is \textit{identifiable} if we can obtain an expression
for it purely in terms of distributions over the observable variables
$\mathcal{F}$. The existence of an adjustment is sufficient but not
necessary for identifiability. In theory, a causal query for which we
have interfered with any adjustments, could still be identified via
another approach. The \textit{do}-calculus \citep{pCM} and Identify Algorithm \citep{shpitser2006identification} provide a complete framework for determining if a query is identifiable and computing an expression for it. In principle, we could utilize the \cp{} to interfere with all routes to identifiability. This would require an algorithm that could enumerate the expressions for a causal query. We are not aware of such an algorithm in the literature. In practice, expressions that are not of the form of \ref{adjeq} are rarely used.

\section{Discussion and conclusion}

This paper introduces the Crossover Process (\cp), a mechanism that
cross-modifies data using a generalisation of stochastic
matrices. This process can be used to cope with data optimisation for
supervised learning, as well as for the problem of handling a
process-level protection on data such as causal
inference attacks on a supervised learning dataset. In this case, the \cp~allows to release
data with spotless low-level description (variable names, observed
values, marginals), substantial
utility (learnability), but disclosing dependences
and causal effects under control, and thus that could
 even be crafted to be conflicting with a ground truth to
 protect\footnote{Note that the initial data may not be lost, as opposed to
 differential privacy: knowing the noise parameters does not
   allow to revert differential privacy protection, while a
   \cp~protection
   is reversible when the shuffling matrix $\matrice{m}$ is invertible.}. We have chosen to focus here on three major components of the
actual trends, namely (un)fairness measures, statistical measures of causal inference and causal
queries. In these directions, there are some very interesting and non-trivial avenues
for future research, like for example the control of combinatorial
blow-up in the worst case for causal queries, ideally as a function of
the causal graph structure. There are also more applications of the \cp~in the field of causal discovery.
Suppose for example that description features denote transactions. Since we modify
joint distributions without touching on marginals, our technique has
direct applications in causal rule mining, with the potential to fool
any level-wise association rule mining algorithms, that is, any spawn of Apriori 
\citep{lllljsmFO}.

The theory we develop fo \cp~introduces a new complexity measure of
the process, the Rademacher
\cp~complexity. We do believe that the \cp~is also a good contender in the pool of methods optimising data for
learning, and it
may provide new metrics, algorithms and tools to devise improved
solutions that fit 
to challenging domains not restricted to optimizing learning or data privacy.

{
\bibliography{bibgen}
}

\section{Proofs}\label{proof_proofs}

We shall use the following notations and shorthands. We shall sometimes replace notation
  $\ve{x}_i$ by $\ve{x}^b_i$ for $b\in \{-,+\}$, indicating explicitly an
  observation from class $b1$. We also let $m_b$ denote the number of examples in class $b1$
  for $b\in \{-,+\}$ ($m = m_+ + m_-$). Furthermore, $[m]_{m'} \defeq \{m'+ i:
  i\in [m]\}$ ($m \in {\mathbb{N}}_*, m' \in {\mathbb{N}}$). Matrix
  $\matrice{u}_m \defeq (1/m) \ve{1}\ve{1}^\top$ denotes the uniform
  Markov chain.

\subsection{Proof of Lemma \ref{lemmodds1}}\label{proof_lemmodds1}

Consider \cp~$\process{T}$ in which the anchor contains $x_C$ and the variables of
$\ve{\pi}$, and the shuffle set contains $x_A$. Other features can be
split arbitrarily. Let the shuffle matrix $\matrice{m} \in S_n$ be any
permutation matrix that transfers $\upomega$ observations for which $x_C
= 1$ and $x_A = 1$ to observations for which $x_C
= 1$ and $x_A = 0$, thereby also moving $\upomega$ observations for which $x_C
= 0$ and $x_A = 0$ to observations for which $x_C
= 0$ and $x_A = 1$. The resulting contingency table is therefore:
\begin{center}
\begin{tabular}{c||c|c||}
 & $x_C = 0$ & $x_C = 1$\\ \hline \hline
 $x_A = 0$ & $a-\upomega$ & $b+\upomega$\\ \hline
 $x_A = 1$ & $c+\upomega$& $d-\upomega$\\ \hline \hline
\end{tabular}
\end{center}
Plus, we get the additional constraints that $\upomega \leq d$ and
$\upomega \leq a$. But we can also transfer $\upomega'$ in the opposite
direction, in which case
the resulting contingency table is therefore:
\begin{center}
\begin{tabular}{c||c|c||}
 & $x_C = 0$ & $x_C = 1$\\ \hline \hline
 $x_A = 0$ & $a+\upomega'$ & $b-\upomega'$\\ \hline
 $x_A = 1$ & $c-\upomega'$& $d+\upomega'$\\ \hline \hline
\end{tabular}
\end{center}
This time, we get the additional constraints that $\upomega' \leq b$ and
$\upomega' \leq c$. The corresponding odds ratio satisfy:
\begin{eqnarray}
\rho(x_C,x_A,\ve{\pi}|\mathcal{S}^{\process{T}}) & = &
\frac{b+\upomega}{d-\upomega} \in \left[\frac{b}{d}, \frac{b+\min\{a,d\}}{d-\min\{a,d\}}\right]\:\:,\\
\rho(x_C,x_A,\ve{\pi}|\mathcal{S}^{\process{T}}) & = &
\frac{b-\upomega'}{d+\upomega'} \in \left[\frac{b-\min\{b,c\}}{d+\min\{b,c\}}, \frac{b}{d}\right]\:\:.
\end{eqnarray}
Therefore, there always exists a \cp~${\process{T}}$ bringing $\rho$-odds for the triple
$(x_A,x_C,\ve{\pi})$ in \textit{modified} sample
${\mathcal{S}}^{\process{T}}$,
for 
\begin{eqnarray}
\rho & = & \frac{b+ i}{d-i}\:\:, \forall i \in \left\{-\min\{b,c\} , -\min\{b,c\}+1, ..., \min\{a,d\} \right\}\:\:.
\end{eqnarray}
There just remain to compute the difference in odds ratios,
\begin{eqnarray}
\Delta(i) & \defeq & \rho(x_C,x_A,\ve{\pi}|\mathcal{S}^{\process{T}})
- \rho(x_C,x_A,\ve{\pi}|\mathcal{S})\nonumber\\
 & = & \frac{b+ i}{d-i} - \frac{b}{d} \nonumber\\
 & = & \frac{(b+d)}{d-i}\cdot\frac{i}{d}\:\:,
\end{eqnarray}
as claimed.

\subsection{Proof of Theorem \ref{rader}}\label{proof_rader}

The first steps of the proof are the same as \citep{bmRA} (Theorems 5,
8). We sketch them. First, for any \cp~$\process{T}$,
\begin{eqnarray}
\expect_{{\mathcal{D}}} \left[L_{0/1}(y, h(\bm{x}))\right] & \leq &
\expect_{{\mathcal{D}}} \left[\varphi(y h(\bm{x}))\right]\nonumber\\
 & \leq & \expect_{{\mathcal{S}}^{\process{T}}} \left[\varphi(y
   h(\bm{x}))\right] + \sup_{h\in {\mathcal{H}}} \left\{ \expect_{{\mathcal{D}}}
   \left[\varphi(y h(\bm{x}))\right] - \expect_{{\mathcal{S}}^{\process{T}}} \left[\varphi(y
   h(\bm{x}))\right]\right\}\:\:.
\end{eqnarray}
Then, since $\varphi(z) \in [0,K_\varphi]$ (assumption \textbf{(ii)}), the use of the independent
bounded differences inequality \citep{mdC} yield for any sample ${\mathcal{S}}$ sampled from ${\mathcal{D}}$, and
any $\varsigma \in S_m$, and any $\updelta_1$, we have with probability $\geq 1 - \updelta_1$:
\begin{eqnarray}
\lefteqn{\sup_{h\in {\mathcal{H}}} \left\{ \expect_{{\mathcal{D}}}
   \left[\varphi(yh(\bm{x}))\right] - \expect_{{\mathcal{S}}^{\process{T}}} \left[\varphi(y
   h(\bm{x}))\right]\right\}}\nonumber\\
 & \leq & \expect_{{\mathcal{S}} \sim {\mathcal{D}}}\left[ \sup_{h\in {\mathcal{H}}} \left\{ \expect_{{\mathcal{D}}}
   \left[\varphi(y h(\bm{x}))\right] - \expect_{{\mathcal{S}}^{\process{T}}} \left[\varphi(y
   h(\bm{x}))\right]\right\}\right] + K_\varphi\cdot \sqrt{\frac{2}{m}
\log\frac{1}{\updelta_1}}\:\:. \label{bdelta1}
\end{eqnarray}
We also have, because of the convexity of $\sup$ (see \citep{bmRA}),
\begin{eqnarray}
\lefteqn{\expect_{{\mathcal{S}} \sim {\mathcal{D}}}\left[ \sup_{h\in {\mathcal{H}}} \left\{ \expect_{{\mathcal{D}}}
   \left[\varphi(y h(\bm{x}))\right] - \expect_{{\mathcal{S}}^{\process{T}}} \left[\varphi(y
   h(\bm{x}))\right]\right\}\right]}\nonumber\\
 & \leq & \expect_{{\mathcal{S}}, {\mathcal{S}}' \sim {\mathcal{D}}}\left[ \sup_{h\in {\mathcal{H}}} \left\{ \expect_{{\mathcal{S}}'}
   \left[\varphi(y h(\bm{x}))\right] - \expect_{{\mathcal{S}}^{\process{T}}} \left[\varphi(y
   h(\bm{x}))\right]\right\}\right] \nonumber\:\:.
\end{eqnarray}
The proof now takes a fork compared to \citep{bmRA}, as we integrate new steps to upperbound the right-hand side. We split the right supremum in two, one which involves different
datasets of size $m$ not being subject to \cp, and one which involves the same
dataset with and without \cp:
\begin{eqnarray}
\lefteqn{\expect_{{\mathcal{S}}, {\mathcal{S}}' \sim {\mathcal{D}}}\left[ \sup_{h\in {\mathcal{H}}} \left\{ \expect_{{\mathcal{S}}'}
   \left[\varphi(y h(\bm{x}))\right] - \expect_{{\mathcal{S}}^{\process{T}}} \left[\varphi(y
   h(\bm{x}))\right]\right\}\right]}\nonumber\\
 & = & \expect_{{\mathcal{S}}, {\mathcal{S}}' \sim {\mathcal{D}}}\left[ \sup_{h\in {\mathcal{H}}} \left\{ \left(\expect_{{\mathcal{S}}'}
   \left[\varphi(y h(\bm{x}))\right] - \expect_{{\mathcal{S}}} \left[\varphi(y
   h(\bm{x}))\right]\right) + \left(\expect_{{\mathcal{S}}} \left[\varphi(y
   h(\bm{x}))\right] - \expect_{{\mathcal{S}}^{\process{T}}} \left[\varphi(y
   h(\bm{x}))\right]\right)\right\}\right]\nonumber\\
 & \leq & \underbrace{\expect_{{\mathcal{S}}, {\mathcal{S}}' \sim {\mathcal{D}}}\left[ \sup_{h\in {\mathcal{H}}} \left\{ \expect_{{\mathcal{S}}'}
   \left[\varphi(y h(\bm{x}))\right] - \expect_{{\mathcal{S}}} \left[\varphi(y
   h(\bm{x}))\right]\right\}\right]}_{\defeq A} \nonumber\\
 & & + \underbrace{\expect_{{\mathcal{S}} \sim {\mathcal{D}}}\left[ \sup_{h\in {\mathcal{H}}} \left\{ \expect_{{\mathcal{S}}} \left[\varphi(y
   h(\bm{x}))\right] - \expect_{{\mathcal{S}}^{\process{T}}} \left[\varphi(y
   h(\bm{x}))\right]\right\}\right]}_{\defeq B}\:\:.\label{ss1}
\end{eqnarray}
We handle $A$ and $B$ separately.\\

\noindent \textbf{Upperbound on $A$}. Handling $A$ is achieved in the
usual way \citep{bmRA}. Following the usual symmetrisation trick \citep{bmRA} (Theorem 8), and
the fact \citep{bmRA} (Theorem 12.4) that $\varphi$ is
$1/b_\varphi$-Lipschitz \citep{nnOT} for some $b_\varphi > 0$, we obtain that with probability $\geq 1
- \delta_1$, we have:
\begin{eqnarray}
\lefteqn{\expect_{{\mathcal{S}}, {\mathcal{S}}' \sim {\mathcal{D}}}\left[ \sup_{h\in {\mathcal{H}}} \left\{ \expect_{{\mathcal{S}}'}
   \left[\varphi(y h(\bm{x}))\right] - \expect_{{\mathcal{S}}} \left[\varphi(y
   h(\bm{x}))\right]\right\}\right]}\nonumber\\
 & \leq & \frac{4}{b_\varphi} \expect_{\ve{\sigma} \sim
   \Sigma_m} \left[ \sup_{h\in {\mathcal{H}}} \left|\frac{1}{m} \sum_i
{\sigma_i h \left(\ve{x}_i\right)}\right|\right] + K_\varphi \cdot
\sqrt{\frac{2}{m}\log \frac{1}{\updelta_1}}\:\:,\label{bdelta1_2}
\end{eqnarray}
$\forall \updelta_1 > 0$.\\

\noindent \textbf{Upperbound on $B$}. This penalty appears when
$\matrice{m}\neq \matrice{i}_m$. The trick is because $\varphi$ is
proper symmetric, there is a simple way to make appear Rademacher
variables and a particular Rademacher complexity, which follows from the fact that \citep{pnrcAN}:
\begin{eqnarray}
\expect_{{\mathcal{S}}} \left[\varphi(y
   h(\bm{x}))\right] & = & \frac{b_\varphi}{2m} \sum_{\sigma \in \Sigma_1} {\sum_i
{\varphi(\sigma h(\bm{x}_i))}}  - \frac{\overline{h}\left({\mathcal{S}}\right)}{2} \:\:,\label{defrel}
\end{eqnarray}
where 
\begin{eqnarray}
\overline{h}\left({\mathcal{S}}\right) & \defeq & \frac{1}{m}\cdot \sum_i
{y_i h(\bm{x}_i)}\label{defhmeano}
\end{eqnarray} 
is the \textit{$h$-mean-operator}, a statistics which can be proven to be
minimally sufficient for classes given $h$ \citep{pnrcAN}. Now, assumption \textbf{(i)} yields the invariance of $\overline{h}\left({\mathcal{S}}\right)$
under permutation operation. This is proved in the following Lemma.
\begin{lemma}(mean-operator consistency of $\process{T}$)\label{lemint1}
Under the conditions of Theorem \ref{rader},
\begin{eqnarray}
\overline{h}\left({\mathcal{S}}\right) & = &
\overline{h}\left({\mathcal{S}}^{\process{T}}\right) \:\:, \forall h,
\forall {\mathcal{S}}, \forall {\process{T}}\:\:.\label{pmeano}
\end{eqnarray}
\end{lemma}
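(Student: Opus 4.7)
The plan is to exploit Assumption \textbf{(i)} to split every $h$ into its anchor and shuffle components, then verify invariance of each contribution to $\overline{h}$ under the Crossover Process. The three ingredients I expect to use are: (a) the anchor features are untouched by $\process{T}$; (b) $\matrice{m}$ is column-stochastic; and (c) $\matrice{m}\in\mathcal{M}^*_m$ is block-class.

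First I would write, via Assumption \textbf{(i)},
\begin{eqnarray*}
\overline{h}(\mathcal{S}^{\process{T}}) & = & \frac{1}{m}\sum_i y_i\, h_\re\!\left((\matrice{f}^\re)^\top \ve{x}_i^{\tinymatrice{m}}\right) + \frac{1}{m}\sum_i y_i\, h_\tr\!\left((\matrice{f}^\tr)^\top \ve{x}_i^{\tinymatrice{m}}\right).
\end{eqnarray*}
Reading off $\matrice{s}^{\tinymatrice{m}} = [\matrice{s}\matrice{f}^\re \,|\, \matrice{m}\matrice{s}\matrice{f}^\tr]$ from Definition \ref{deftdp} shows that $(\matrice{f}^\re)^\top \ve{x}_i^{\tinymatrice{m}} = (\matrice{f}^\re)^\top \ve{x}_i$, so the anchor contribution is trivially equal to its counterpart in $\overline{h}(\mathcal{S})$, row by row.

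The shuffle contribution is where the block-class structure does the work. Here $(\matrice{f}^\tr)^\top \ve{x}_i^{\tinymatrice{m}} = \sum_{i'} \matrice{m}_{ii'} (\matrice{f}^\tr)^\top \ve{x}_{i'}$, so pulling $h_\tr$ through the linear combination (justified in the linear regime that underlies the use of Theorem \ref{rader}) and swapping the order of summation gives
\begin{eqnarray*}
\sum_i y_i\, h_\tr\!\left((\matrice{f}^\tr)^\top \ve{x}_i^{\tinymatrice{m}}\right) & = & \sum_{i'} \left(\sum_i y_i\, \matrice{m}_{ii'}\right) h_\tr\!\left((\matrice{f}^\tr)^\top \ve{x}_{i'}\right).
\end{eqnarray*}
The bracketed coefficient is what I would attack next: since $\matrice{m}\in\mathcal{M}^*_m$, we have $\matrice{m}_{ii'}=0$ whenever $y_i y_{i'} = -1$, so $\sum_i y_i\, \matrice{m}_{ii'} = y_{i'}\sum_i \matrice{m}_{ii'}$; column-stochasticity of $\matrice{m}$ closes the computation with $\sum_i \matrice{m}_{ii'}=1$. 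Substituting this back recovers exactly the shuffle contribution to $\overline{h}(\mathcal{S})$, completing the identity.

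The main obstacle is the commutation of $h_\tr$ with the weighted sum inside its argument: for genuinely non-linear $h_\tr$ the identity $h_\tr(\sum_{i'}\matrice{m}_{ii'}\ve{z}_{i'}) = \sum_{i'}\matrice{m}_{ii'} h_\tr(\ve{z}_{i'})$ fails in general. This is precisely why Theorem \ref{rader} carries the extra Rademacher \cp{} complexity $\disc_{\process{T}}(\mathcal{H})$: it absorbs the residual $h_\tr$-discrepancy that the mean-operator identity cannot kill on its own, while the block-class plus column-stochastic pair above cancels the contribution that a linear $h_\tr$ would produce exactly.
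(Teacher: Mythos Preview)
Your argument for the linear shuffle component (what the paper calls Setting~(A)) is correct and matches the paper's proof of that case essentially verbatim: block-class kills the cross-label entries so $y_i$ factors out as $y_{i'}$, and column-stochasticity finishes.

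The gap is your last paragraph. The lemma is an \emph{exact identity}, and the paper proves it as such in a second case you have not addressed: Setting~(B), where $\matrice{m}\in S^*_m$ is a block-class \emph{permutation} and $h_\tr$ may be genuinely non-linear. There the commutation step you flag as problematic is never needed: writing $\varsigma$ for the permutation, $(\matrice{m}\matrice{s}\matrice{f}^\tr)^\top\ve{1}_i = (\matrice{s}\matrice{f}^\tr)^\top\ve{1}_{\varsigma(i)}$, so the shuffle contribution becomes $\sum_i y_i\, h_\tr((\matrice{f}^\tr)^\top\ve{x}_{\varsigma(i)})$. Block-class gives $y_i = y_{\varsigma(i)}$, and reindexing the sum over the bijection $\varsigma$ recovers the original sum exactly, with no linearity of $h_\tr$ required.

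Your diagnosis of the role of $\disc_{\process{T}}(\mathcal{H})$ is therefore off. The Rademacher \cp~complexity does \emph{not} absorb a failure of the mean-operator identity; on the contrary, the proof of Theorem~\ref{rader} first uses Lemma~\ref{lemint1} to cancel the $\overline{h}$-terms exactly (via the factorization of the $\varphi$-risk in eq.~(\ref{defrel})), and $\disc_{\process{T}}(\mathcal{H})$ then arises from what remains --- the label-free part $\sum_\sigma\sum_i\varphi(\sigma h(\cdot))$ --- after applying the Lipschitz property of $\varphi$. So the lemma must hold exactly for the theorem's proof to go through, and it does: either because $h_\tr$ is linear (Setting~(A)), or because $\matrice{m}$ is a permutation (Setting~(B)). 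Your write-up covers only the first.
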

\begin{proof}
To prove it, we let $\oplus$ denote the vector concatenation operation
over the features of ${\mathcal{F}}_\re$ and ${\mathcal{F}}_\tr$. We
now first write using Assumption \textbf{(i)}:
\begin{eqnarray}
m \cdot \overline{h}\left({\mathcal{S}}\right) & = & \sum_i
{y_i h(\ve{x}_i)} \nonumber\\
& = & \sum_{i}
{y_i \cdot h((\matrice{s}\matrice{f}^\re)^\top \ve{1}_i \oplus (\matrice{s}\matrice{f}^\tr)^\top \ve{1}_i)} \nonumber\\
& = & \sum_{i}
{y_i \cdot h_\re((\matrice{s}\matrice{f}^\re)^\top \ve{1}_i)} + \sum_{i}
{y_i \cdot h_\tr((\matrice{s}\matrice{f}^\tr)^\top \ve{1}_i)} \label{meano1}\:\:,
\end{eqnarray}
and also, for the same reason, 
\begin{eqnarray}
m \cdot \overline{h}\left({\mathcal{S}}^{\process{T}}\right) & = & \sum_{i}
{y_i \cdot h_\re((\matrice{s}\matrice{f}^\re)^\top \ve{1}_i)} + \sum_{i}
{y_i \cdot h_\tr((\matrice{m}\matrice{s}\matrice{f}^\tr)^\top \ve{1}_i)} \label{meano2}\:\:.
\end{eqnarray}
Therefore, we need to prove an equality that depends only upon the
features of ${\mathcal{F}}_\tr$:
\begin{eqnarray}
\sum_{i}
{y_i \cdot h_\tr((\matrice{s}\matrice{f}^\tr)^\top \ve{1}_i)} & = & \sum_{i}
{y_i \cdot h_\tr((\matrice{m}\matrice{s}\matrice{f}^\tr)^\top \ve{1}_i)} \label{meano3}\:\:.
\end{eqnarray}
We have two
cases to consider to prove eq. (\ref{meano3}). Notice that since it is
a block-class matrix, $\matrice{m}$ admits the following block matrix
decomposition:
\begin{eqnarray}
\matrice{m} & = & \left[
\begin{array}{ccc}
\matrice{m}_+ & |  & \matrice{0}\\
\matrice{0} & |  & \matrice{m}_-
\end{array}
\right]\:\:,\label{blockm}
\end{eqnarray}
with $\matrice{m}_b \in {\mathbb{R}}^{m_b\times m_b}$. We distinguish two cases.

\noindent Case 1 --- Setting (A). In this case,
\begin{eqnarray}
 \sum_{i\in [m]}
{y_i \cdot h_\tr((\matrice{m}\matrice{s}\matrice{f}^\tr)^\top
  \ve{1}_i)}  & = & \sum_{i\in [m]}
{y_i \cdot h_\tr\left(\bigoplus_{k\in [d_\tr]_{d_\re}} \sum_{l\in [m]}
    \matrice{m}_{il} \matrice{s}_{lk}\right)}\nonumber\\
 & = & \sum_{i\in [m]}
{y_i \cdot h_\tr\left(\sum_{l\in [m]}
    \matrice{m}_{il} \bigoplus_{k\in [d_\tr]_{d_\re}} \matrice{s}_{lk}\right)}\nonumber\\
& = & \sum_{i\in [m]}
{y_i \cdot h_\tr\left( \sum_{l\in [m]}
    \matrice{m}_{il} (\matrice{s}\matrice{f}^\tr)^\top\ve{1}_l\right)}\nonumber\\
& = & \sum_{i\in [m]}
{y_i \sum_{l\in [m]}
    \matrice{m}_{il}  \cdot
    h_\tr\left((\matrice{s}\matrice{f}^\tr)^\top\ve{1}_l\right)}\label{hlin}\\
& = & \sum_{l\in [m]}
{\left(\sum_{i\in [m]}
    \matrice{m}_{il}\right)  y_l \cdot h_\tr\left((\matrice{s}\matrice{f}^\tr)^\top\ve{1}_l\right)}\label{classcond}\\
& = & \sum_{l\in [m]}
{ y_l \cdot h_\tr\left((\matrice{s}\matrice{f}^\tr)^\top\ve{1}_l\right)}\label{mark}\:\:.
\end{eqnarray}
Here, $\oplus$ denotes the concatenation operator.
Eq. (\ref{hlin}) holds because of Setting (A), eq. (\ref{classcond})
holds because of the class-consistency assumption (eq. (\ref{blockm})), and eq. (\ref{mark}) holds
because $\matrice{m}\in {\mathcal{M}}_m$.\\

\noindent Case 2 --- Setting (B). We have $\matrice{m} \in S^*_{m}$ (and no further
assumption on $h_\tr$). In this case, letting $\varsigma : [m]
\rightarrow [m]$ represent the (block-class) permutation, we
have:
\begin{eqnarray}
\sum_{i\in [m]}
{y_i \cdot h_\tr((\matrice{m}\matrice{s}\matrice{f}^\tr)^\top
  \ve{1}_i)}  & = & \sum_{i\in [m]}
{y_i \cdot h_\tr\left(\bigoplus_{k\in [d_\tr]_{d_\re}} \sum_{l\in [m]}
    \matrice{m}_{il} \matrice{s}_{lk}\right)}\nonumber\\
& = & \sum_{i\in [m]}
{y_i \cdot h_\tr\left(\bigoplus_{k\in [d_\tr]_{d_\re}}
    \matrice{s}_{\varsigma(i) k}\right)}\nonumber\\
 & = & \sum_{i\in [m]}
{y_i \cdot h_\tr\left( (\matrice{s}\matrice{f}^\tr)^\top\ve{1}_{\varsigma(i)}\right)}\nonumber\\
& = & \sum_{i\in [m]}
{y_{\varsigma(i)} \cdot h_\tr\left( (\matrice{s}\matrice{f}^\tr)^\top\ve{1}_{\varsigma(i)}\right)}\label{conds1}\\
& = & \sum_{l\in [m]}
{y_l \cdot h_\tr\left( (\matrice{s}\matrice{f}^\tr)^\top\ve{1}_{l}\right)}\label{conds2}\:\:.
\end{eqnarray}
Eq. (\ref{conds1}) holds because $\matrice{m}$ is a block-class matrix
(eq. (\ref{blockm}),
and eq. (\ref{conds2}) holds because $\varsigma$ is a permutation. This ends the proof of Lemma
\ref{lemint1}
\end{proof}
As a
remark, when $h_\tr(\ve{x}) = \ve{\theta}_\tr^\top \ve{x}_\tr$, with
$\ve{\theta}_\tr \in {\mathbb{R}}^{d_\tr}$, eq. (\ref{pmeano}) shows the
invariance of the mean operator
\begin{eqnarray}
\mu_{{\mathcal{S}}^{\process{T}}}  = \mu_{{\mathcal{S}}} & \defeq & \frac{1}{m}\cdot \sum_i
{y_i \bm{x}_i}\label{defmeano}\:\: (\forall \process{T})\:\:,
\end{eqnarray} 
 as minimal
sufficient statistic for the classes \citep{pnrcAN}. Using eqs. (\ref{defrel}) and
(\ref{pmeano}) yield the first following identity ($\forall
{\mathcal{S}}, \process{T}, h$):
\begin{eqnarray}
\lefteqn{\expect_{{\mathcal{S}}} \left[\varphi(y
   h(\ve{x}))\right] - \expect_{{\mathcal{S}}^{\process{T}}} \left[\varphi(y
   h(\ve{x}))\right]}\nonumber\\
 & = & \expect_{{\mathcal{S}}} \left[\varphi(y
   h(\ve{x}))\right] - \expect_{{\mathcal{S}}^{\process{T}}} \left[\varphi(y
   h(\ve{x}))\right]\nonumber\\
 & = & \frac{b_\varphi}{2m} \left\{
\begin{array}{c}
\sum_{\sigma \in \Sigma_1} {\sum_{i}
{\varphi(\sigma h((\matrice{s}\matrice{f}^\re)^\top \ve{1}_i \oplus
  (\matrice{s}\matrice{f}^\tr)^\top \ve{1}_i))}} \\
- \\
\sum_{\sigma \in \Sigma_1} {\sum_{i}
{\varphi(\sigma h((\matrice{s}\matrice{f}^\re)^\top \ve{1}_i \oplus
  (\matrice{m}\matrice{s}\matrice{f}^\tr)^\top \ve{1}_i))}}
\end{array}\right\}\nonumber\\ 
 & = & \frac{b_\varphi}{2m} \cdot \expect_{\ve{\sigma} \sim
   \Sigma_m}\left[ 
\begin{array}{c}
\sum_{i}
{\varphi(\sigma_i h((\matrice{s}\matrice{f}^\re)^\top \ve{1}_i \oplus
  (\matrice{s}\matrice{f}^\tr)^\top \ve{1}_i))}\\
 - \\
\sum_{i}
{\varphi(\sigma_i h((\matrice{s}\matrice{f}^\re)^\top \ve{1}_i \oplus
  (\matrice{m}\matrice{s}\matrice{f}^\tr)^\top \ve{1}_i))}
\end{array}\right]\label{pbIV}\\ 
 & \leq & \frac{1}{2m} \cdot \expect_{\ve{\sigma} \sim \Sigma_m}\left[ \left|{\sum_{i}
{\sigma_i h((\matrice{s}\matrice{f}^\re)^\top \ve{1}_i \oplus (\matrice{s}\matrice{f}^\tr)^\top \ve{1}_i)} - \sigma_i h((\matrice{s}\matrice{f}^\re)^\top \ve{1}_i \oplus
  (\matrice{m}\matrice{s}\matrice{f}^\tr)^\top
  \ve{1}_i)}\right|\right]\label{pIV}\\ 
 & & =  \frac{1}{2m} \cdot \expect_{\ve{\sigma} \sim \Sigma_m}\left[ \left|\sum_{i}
{\sigma_i 
\left\{
\begin{array}{c}
(h_\re((\matrice{s}\matrice{f}^\re)^\top \ve{1}_i) -
h_\re((\matrice{s}\matrice{f}^\re)^\top \ve{1}_i)) \\
+ \\
 (h_\tr((\matrice{s}\matrice{f}^\tr)^\top
  \ve{1}_i) - h_\tr((\matrice{m}\matrice{s}\matrice{f}^\tr)^\top
  \ve{1}_i))
\end{array}
\right\}}\right|\right]\label{pbV}\\ 
 &  =  & \frac{1}{2m} \cdot \expect_{\ve{\sigma} \sim \Sigma_m}\left[ \left|\sum_{i}
{\sigma_i (h_\tr((\matrice{s}\matrice{f}^\tr)^\top
  \ve{1}_i) - h_\tr((\matrice{m}\matrice{s}\matrice{f}^\tr)^\top
  \ve{1}_i))}\right|\right]\:\:. \label{pbVb}
\end{eqnarray}
Eq. (\ref{pbIV}) holds because $\ve{\sigma}$ is
Rademacher. 
Ineq. (\ref{pIV}) holds because $F_\varphi$ is
$(1/b_\varphi)$-Lipschitz \citep{nnOT}. Eq. (\ref{pbV}) holds because
of assumption \textbf{(i)}. We thus get the
following upperbound for $B$ in ineq. (\ref{ss1}):
\begin{eqnarray}
\lefteqn{\expect_{{\mathcal{S}} \sim {\mathcal{D}}}\left[ \sup_{h\in {\mathcal{H}}} \left\{ \expect_{{\mathcal{S}}} \left[\varphi(y
   h(\bm{x}))\right] - \expect_{{\mathcal{S}}^{\process{T}}} \left[\varphi(y
   h(\bm{x}))\right]\right\}\right]}\nonumber\\
 & \leq & \expect_{{\mathcal{S}} \sim
{\mathcal{D}}}\left[ \sup_{h_\tr}  \expect_{\ve{\sigma} \sim
   \Sigma_m}\left[ \left|\frac{1}{m} \sum_{i}
{\sigma_i (h_\tr((\matrice{s}\matrice{f}^\tr)^\top
  \ve{1}_i) - h_\tr((\matrice{m}\matrice{s}\matrice{f}^\tr)^\top
  \ve{1}_i))}\right|\right]\right]
\nonumber\\
 & \leq & \expect_{{\mathcal{S}} \sim
{\mathcal{D}}, \ve{\sigma} \sim
   \Sigma_m}\left[ \sup_{h_\tr} \left|\frac{1}{m} \sum_{i}
{\sigma_i (h_\tr((\matrice{s}\matrice{f}^\tr)^\top
  \ve{1}_i) - h_\tr((\matrice{m}\matrice{s}\matrice{f}^\tr)^\top
  \ve{1}_i))}\right|\right]\:\:,\label{leq1}
\end{eqnarray}
where ineq. (\ref{leq1}) holds because of the convexity of
$\mathrm{sup}$. Using assumption \textbf{(ii)} ($h_\tr(.) \in [0, K_\tr]$), another use of the independent
bounded differences inequality \citep{mdC} yield with probability $\geq 1 - \updelta_2$:
\begin{eqnarray}
\lefteqn{\expect_{{\mathcal{S}} \sim
{\mathcal{D}}, \ve{\sigma} \sim
   \Sigma_m}\left[ \sup_{h_\tr} \left|\frac{1}{m} \sum_{i}
{\sigma_i (h_\tr((\matrice{s}\matrice{f}^\tr)^\top
  \ve{1}_i) - h_\tr((\matrice{m}\matrice{s}\matrice{f}^\tr)^\top
  \ve{1}_i))}\right|\right]}\nonumber\\
 & \leq & \expect_{\ve{\sigma} \sim
   \Sigma_m} \left[ \sup_{h_\tr} \left|\frac{1}{m} \sum_{i}
{\sigma_i (h_\tr((\matrice{s}\matrice{f}^\tr)^\top
  \ve{1}_i) - h_\tr((\matrice{m}\matrice{s}\matrice{f}^\tr)^\top
  \ve{1}_i))}\right|\right] + K_\tr \cdot \sqrt{\frac{2}{m}
\log\frac{1}{\updelta_2}}\:\:. \label{bdelta2}
\end{eqnarray}
We now put altogether ineqs. (\ref{bdelta1}), (\ref{bdelta1_2}) and
(\ref{bdelta2}) and obtain that with probability $\geq 1 -
(2\delta_1 + \delta_2)$, we shall have
\begin{eqnarray}
\expect_{{\mathcal{D}}} \left[L_{0/1}(y, h(\bm{x}))\right] & \leq &
\expect_{{\mathcal{S}}^{\process{T}}} \left[\varphi(y
   h(\bm{x}))\right] 
\nonumber\\
 & & + \frac{4}{b_\varphi} \cdot \expect_{\ve{\sigma} \sim
   \Sigma_m} \left[ \sup_{h\in {\mathcal{H}}} \left|\frac{1}{m} \sum_i
{\sigma_i h \left(\ve{x}_i\right)}\right|\right] \nonumber\\
 & & + \expect_{\ve{\sigma} \sim
   \Sigma_m} \left[ \sup_{h_\tr} \left|\frac{1}{m} \sum_{i}
{\sigma_i (h_\tr((\matrice{s}\matrice{f}^\tr)^\top
  \ve{1}_i) - h_\tr((\matrice{m}\matrice{s}\matrice{f}^\tr)^\top
  \ve{1}_i))}\right|\right] \nonumber\\
 & & + 2 K_\varphi \cdot
\sqrt{\frac{2}{m}\log \frac{1}{\updelta_1}} + K_\tr \cdot \sqrt{\frac{2}{m}
\log\frac{1}{\updelta_2}}\:\:.
\end{eqnarray}
To simplify this expression, we fix $\updelta_1 =
\updelta_2 = \updelta/3$ and get with probability $\geq 1 - \delta$,
\begin{eqnarray}
\expect_{{\mathcal{D}}} \left[L_{0/1}(y, h(\bm{x}))\right] & \leq &
\expect_{{\mathcal{S}}^{\process{T}}} \left[\varphi(y
   h(\bm{x}))\right] 
\nonumber\\
 & &  + \expect_{\ve{\sigma} \sim
   \Sigma_m} \left[ \sup_{h_\tr} \left|\frac{1}{m} \sum_{i}
{\sigma_i (h_\tr((\matrice{s}\matrice{f}^\tr)^\top
  \ve{1}_i) - h_\tr((\matrice{m}\matrice{s}\matrice{f}^\tr)^\top
  \ve{1}_i))}\right|\right] \nonumber\\
 & & + \frac{4}{b_\varphi} \cdot \expect_{\ve{\sigma} \sim
   \Sigma_m} \left[ \sup_{h\in {\mathcal{H}}} \left|\frac{1}{m} \sum_i
{\sigma_i h \left(\ve{x}_i\right)}\right|\right]   + (2 K_\varphi + K_\tr) \cdot
\sqrt{\frac{2}{m}\log \frac{3}{\updelta}}\nonumber\\
 & & = \expect_{{\mathcal{S}}^{\process{T}}} \left[\varphi(y
   h(\bm{x}))\right]  +
\disc_{\process{T}} ({\mathcal{H}}) + \frac{4}{b_\varphi} \cdot \expect_{\ve{\sigma} \sim
   \Sigma_m} \left[ \sup_{h\in {\mathcal{H}}} \left|\frac{1}{m} \sum_i
{\sigma_i h \left(\ve{x}_i\right)}\right|\right] \nonumber\\
 & & + (2 K_\varphi + K_\tr) \cdot
\sqrt{\frac{2}{m}\log \frac{3}{\updelta}}\nonumber\:\:,\nonumber
\end{eqnarray}
from which we obtain the statement of Theorem \ref{rader}.

\subsection{Example of domain and \cp~for which (min risk over
  ${\mathcal{S}}^{\process{T}}$ + Rademacher \cp~complexity) is strictly smaller
  than (min risk over ${\mathcal{S}}$)}\label{ex_domain}

We exhibit a toy domain which shows that
\begin{eqnarray}
\min_h \expect_{{\mathcal{S}}^{\process{T}}} \left[\varphi(y
   h(\bm{x}))\right] + \disc_{\process{T}} ({\mathcal{H}}) & < & \min_h \expect_{{\mathcal{S}}} \left[\varphi(y
   h(\bm{x}))\right]\:\:,
\end{eqnarray}
for $\varphi$ = square loss.
Let $\mathcal{X} = {\mathbb{R}}^2$, with $\mathcal{S}$ consisting of
$2$ copies of observation $(0,0)$ (positive), $2$ copies of
observation $(1,1)$ (positive), and $1$ copy of observation $(-1,-1)$
(negative). We enumerate the examples in ${\mathcal{S}}$ in this order.
The $\cp$ satisfies $\matrice{f}^\re \defeq \{x\}$,
$\matrice{f}^\tr \defeq \{y\}$ and
\begin{eqnarray}
\matrice{m} & \defeq & \left[
\begin{array}{ccccc}
0 & 0 & 1 & 0 & 0\\
0 & 0 & 0 & 1 & 0\\
1 & 0 & 0 & 0 & 0\\
0 & 1 & 0 & 0 & 0\\
0 & 0 & 0 & 0 & 1
\end{array}
\right]\:\:,\nonumber
\end{eqnarray}
so that ${\mathcal{S}}^{\process{T}}$ consists of two copies of observation
$(1,0)$ (positive), two copies of $(0,1)$ (positive) and one copy of
$(-1,-1)$ (the same observation as in $\mathcal{S}$). Let $h \defeq \ve{\theta}$, with its coordinates denoted
$x$ and $y$. The square loss $L$ over
${\mathcal{S}}$ equals:
\begin{eqnarray*}
L & = & \frac{1}{5}\cdot \left( 2 + 2(1-x-y)^2 + (1-x-y)^2  \right)\:\:,
\end{eqnarray*}
which is minimized for $x=y=1/2$ and yields $L = 2/5$. The square loss $L^{\process{T}}$ over
${\mathcal{S}}^{\process{T}}$ equals:
\begin{eqnarray}
L^{\process{T}} & = & \frac{1}{5}\cdot \left( 2(1-x)^2 + 2(1-y)^2 + (1-x-y)^2  \right)\:\:,
\end{eqnarray}
which is minimized for $x=y=3/4$ and yields $L^{\process{T}} = 1/10$. Assuming all
linear separators have $\ell_\infty$ norm bounded by $3/4$ (which allows
to have both solutions above), the \rcp~is
\begin{eqnarray}
\disc_{\process{T}} ({\mathcal{H}})& \defeq & \expect_{\ve{\sigma} \sim
   \Sigma_m} \left[ \sup_{h \in {\mathcal{H}}_\tr} \left|\frac{1}{m} \sum_{i}
{\sigma_{i} \left( h((\matrice{s}\matrice{f}^\tr)^\top \ve{1}_i) -
    h((\matrice{m}\matrice{s}\matrice{f}^\tr)^\top
    \ve{1}_i)\right)}\right|\right]\nonumber\\
 & = & \frac{1}{5} \cdot \frac{3}{4} \cdot \frac{1}{16}\cdot\sum_{\ve{\sigma} \in
   \Sigma_4}\left| -\sigma_1 -\sigma_2 +\sigma_3 +\sigma_4\right|\nonumber\\
 & = & \frac{3}{20} \cdot \frac{1}{16}\cdot\sum_{\ve{\sigma} \in
   \Sigma_4}\left| \ve{1}^\top \ve{\sigma}\right|\nonumber\\
 & = & \frac{3}{20} \cdot \frac{1}{16}\cdot (4\cdot 2 + 2\cdot 8 + 0
 \cdot 6)\label{rade1}\\
 & = & \frac{3}{20} \cdot \frac{24}{16} = \frac{9}{40}\:\:.
\end{eqnarray}
We then check that
\begin{eqnarray}
L^{\process{T}} + \disc_{\process{T}} ({\mathcal{H}}) = \frac{13}{40}
& < & \frac{2}{5} = L\:\:,
\end{eqnarray}
as claimed.

\subsection{Proof of Theorem \ref{thradcompUU}}\label{proof_thm_thradcompUU}

The proof of Theorem \ref{thradcompUU} follows from the proof of a more general Theorem that we prove here. We say that
$(\matrice{m}, \matrice{k}^\tr)$ satisfies the 
$(\gamma, \delta)$-correlation assumption for some $0<\delta, \gamma
\leq 1$ iff the following two assumptions hold:
\begin{itemize}
\item [\textbf{(a)}] $((\matrice{i}_m-\matrice{m})\matrice{k}^\tr
  (\matrice{i}_m-\matrice{m})^\top)_{ii} \geq (1-\delta) \cdot (1/m) \cdot
  \trace{(\matrice{i}_m-\matrice{m})\matrice{k}^\tr (\matrice{i}_m-\matrice{m})^\top }$, $\forall i\in [m]$;
\item [\textbf{(b)}] $|((\matrice{i}_m-\matrice{m})\matrice{k}^\tr (\matrice{i}_m-\matrice{m})^\top)_{ii'} /
  \sqrt{((\matrice{i}_m-\matrice{m})\matrice{k}^\tr (\matrice{i}_m-\matrice{m})^\top)_{ii} ((\matrice{i}_m-\matrice{m})\matrice{k}^\tr (\matrice{i}_m-\matrice{m})^\top)_{i'i'}}| \geq 1 - \gamma$,
  $\forall i, i' \in [m]$.
\end{itemize}
Regardless of ${\mathcal{S}}$, there always exist $0<\delta, \gamma
\leq 1$ for which this holds, but the bound may be quantitatively
better when
at least one is small.
\begin{theorem}\label{thradcomp_2}
Using notations of Theorem \ref{thradcompUU}, there exists $\epsilon > 0$
such that for any
${\mathcal{S}}$ and $\process{T}$ for which $(\matrice{m}, \matrice{k}^\tr)$ satisfies the 
$(\gamma, \delta)$-correlation assumption, we have
\begin{eqnarray}
\disc_{\process{T}} ({\mathcal{H}}) & \leq & u \cdot \frac{r_\tr}{\sqrt{m}} \cdot
\sqrt{\frac{1}{m} \cdot \cipr{\matrice{i}_m}{\matrice{k}^\tr}{\matrice{m}}}\matrice{i}_m \:\:.\label{feq112}
 \end{eqnarray}
with
\begin{eqnarray}
u & \defeq & \frac{1}{m} + \kappa(\epsilon) \left(1 - \frac{1}{m}\right)\:\:,\label{defuuu}
\end{eqnarray}
and $\kappa(\epsilon) = 1 - ((1-\delta)(1-\epsilon)(1-\gamma))^2 \in (0,1)$.
\end{theorem}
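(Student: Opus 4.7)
The plan is to first linearize in the parameter vector, reducing $\disc_{\process{T}}({\mathcal{H}})$ to the expected norm of a Rademacher image of the centered shuffled data, and then sharpen the trivial Jensen bound on that expectation using assumptions \textbf{(a)} and \textbf{(b)}. Since every $h_\tr$ is of the form $\ve{\theta}^\top\ve{x}$ with $\|\ve{\theta}\|_2\leq r_\tr$, the sum inside the supremum defining $\disc_{\process{T}}({\mathcal{H}})$ equals $\ve{\theta}^\top\matrice{z}^\top(\matrice{i}_m-\matrice{m})^\top\ve{\sigma}/m$ with $\matrice{z}\defeq\matrice{s}\matrice{f}^\tr$. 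A Cauchy--Schwarz on $\ve{\theta}$, combined with $\matrice{k}^\tr = \matrice{z}\matrice{z}^\top$, yields
\begin{eqnarray*}
\disc_{\process{T}}({\mathcal{H}}) & \leq & \frac{r_\tr}{m}\cdot\expect_{\ve{\sigma}\sim\Sigma_m}\left[\sqrt{\ve{\sigma}^\top\matrice{b}\ve{\sigma}}\right]\:\:,
\end{eqnarray*}
with $\matrice{b}\defeq(\matrice{i}_m-\matrice{m})\matrice{k}^\tr(\matrice{i}_m-\matrice{m})^\top$. A standard Rademacher computation gives $\expect_{\ve{\sigma}}[\ve{\sigma}^\top\matrice{b}\ve{\sigma}] = \trace{\matrice{b}} = \cipr{\matrice{i}_m}{\matrice{k}^\tr}{\matrice{m}}$, so a plain Jensen in the square root already delivers the target bound with $u=1$.

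The shrinkage $u<1$ is then extracted from a careful expansion of the quadratic form. Writing $\ve{\sigma}^\top\matrice{b}\ve{\sigma} = \trace{\matrice{b}} + \sum_{i\neq i'} b_{ii'}\sigma_i\sigma_{i'}$ separates a deterministic diagonal piece (source of the $1/m$ summand in $u$) from a mean-zero Rademacher piece (source of the $\kappa(\epsilon)(1-1/m)$ summand). Assumption \textbf{(a)} forces $b_{ii}\geq(1-\delta)\trace{\matrice{b}}/m$ uniformly, and \textbf{(b)} forces $|b_{ii'}|/\sqrt{b_{ii}b_{i'i'}}\geq 1-\gamma$, so every off-diagonal entry is essentially $\pm(1-\delta)(1-\gamma)\trace{\matrice{b}}/m$ in magnitude. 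Since all pairwise cosines have magnitude bounded away from zero, their sign pattern admits a rank-one factorization $\mathrm{sign}(b_{ii'})=\epsilon_i\epsilon_{i'}$ for some $\ve{\epsilon}\in\{-1,+1\}^m$, which collapses the off-diagonal sum to a multiple of $(\ve{\epsilon}^\top\ve{\sigma})^2-m$. A Hoeffding-style control of $(\ve{\epsilon}^\top\ve{\sigma})^2/m$ at scale $1-\epsilon$ contracts this piece by $((1-\delta)(1-\epsilon)(1-\gamma))^2$, whose complement is precisely $\kappa(\epsilon)$; reassembling inside the outer square root delivers $u = 1/m + \kappa(\epsilon)(1-1/m)$ as announced.

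The main obstacle lies in this second step. The sign factorization $\mathrm{sign}(b_{ii'})=\epsilon_i\epsilon_{i'}$ is only strictly valid in the limit $\gamma=\delta=0$; propagating its perturbation through the squaring without inflating the leading constant is what forces the free parameter $\epsilon$ into the bound. One has to balance the Hoeffding deviation of $|\ve{\epsilon}^\top\ve{\sigma}|/\sqrt{m}$ at that scale against the combinatorial slack from the imperfect factorization, and verify that the three shrinkage sources $(1-\delta)$, $(1-\gamma)$, $(1-\epsilon)$ combine \emph{multiplicatively} into the single squared constant inside $\kappa(\epsilon)$ rather than additively (which would be weaker). This multiplicative combination, together with the fact that $\epsilon$ is tuned \emph{after} the other constants are fixed by $(\matrice{m},\matrice{k}^\tr)$, is what justifies the ``there exists $\epsilon>0$'' phrasing of the theorem.
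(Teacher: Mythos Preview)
Your first step---linearizing in $\ve{\theta}$ via Cauchy--Schwarz to reduce $\disc_{\process{T}}({\mathcal{H}})$ to $(r_\tr/m)\cdot\expect_{\ve{\sigma}}[\sqrt{\ve{\sigma}^\top\matrice{b}\ve{\sigma}}]$ with $\matrice{b}=(\matrice{i}_m-\matrice{m})\matrice{k}^\tr(\matrice{i}_m-\matrice{m})^\top$---is correct and matches the paper exactly (the paper writes it as $(r_\tr/m)\sqrt{\sum_i\|\ve{\delta}_i\|_2^2}\cdot\expect_{\ve{\sigma}}[\sqrt{1+u(\ve{\sigma})}]$ with $\ve{\delta}_i$ the rows of $(\matrice{i}_m-\matrice{m})\matrice{s}\matrice{f}^\tr$ and $u(\ve{\sigma})$ the normalized off-diagonal piece).

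The second step, however, has a genuine gap. Your claim that ``all pairwise cosines have magnitude bounded away from zero'' forces a rank-one sign pattern $\mathrm{sign}(b_{ii'})=\epsilon_i\epsilon_{i'}$ is false for the range of $\gamma$ allowed by the theorem. Take three unit vectors at mutual angle $120^\circ$ in the plane: every pairwise cosine equals $-1/2$, so assumption \textbf{(b)} holds with $\gamma=1/2$, yet the sign pattern $(-,-,-)$ on the three off-diagonal pairs is \emph{not} rank-one (the product of the three signs would have to be $+1$). So the collapse to $(\ve{\epsilon}^\top\ve{\sigma})^2-m$ simply does not happen, and there is no small perturbation to propagate. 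Your acknowledgement that the factorization is ``only strictly valid in the limit $\gamma=\delta=0$'' does not rescue this: for moderate $\gamma$ the sign structure can be qualitatively wrong, not merely perturbed. The subsequent ``Hoeffding-style control'' paragraph is then built on sand, and in any case is too vague to verify that the three factors $(1-\delta),(1-\gamma),(1-\epsilon)$ combine multiplicatively and squared. You also misread the quantifiers: in the paper $\epsilon$ is a \emph{universal} constant (in fact $\epsilon_*=1-1/(2\sqrt{2})$), chosen before $\mathcal{S}$ and $\process{T}$, not tuned after $(\gamma,\delta)$ are fixed.

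What the paper actually does after your first step is entirely different and avoids any sign factorization. It Taylor-expands $\sqrt{1+u(\ve{\sigma})}$ and computes $\expect_{\ve{\sigma}}[u(\ve{\sigma})^n]$ explicitly: odd moments vanish, and the even moments are sums of products of \emph{squared} inner products $(\ve{\delta}_{i_k}^\top\ve{\delta}_{i'_k})^{2n_k}$, hence nonnegative term by term. Assumptions \textbf{(a)} and \textbf{(b)} then lower-bound each such term by $((1-\delta)(1-\gamma))^{2n}$ times a purely combinatorial count $|\mathcal{U}(n)|\geq m(m-1)$. The constant $\epsilon$ enters through a separate elementary lemma, $\prod_{k=1}^{2n-1}(2k-1)/(2n)!\geq(2(1-\epsilon))^{2n}$, which controls the Taylor coefficients; summing the resulting geometric series gives exactly $u=1/m+\kappa(\epsilon)(1-1/m)$. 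The mechanism is thus moment-by-moment positivity plus a series bound, not a rank-one collapse plus concentration.
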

Furthermore, 
\begin{eqnarray}
\frac{1}{m} \cdot \cipr{\matrice{i}_m}{\matrice{k}^\tr}{\matrice{m}}  & = &  2\cdot \sum_{j\in [d_\tr]} \var(\matrice{s}\matrice{f}^\tr\ve{1}_j)(1-\rho(\matrice{s}\matrice{f}^\tr\ve{1}_j,
\matrice{m}\matrice{s}\matrice{f}^\tr\ve{1}_j)) \label{simplinn}\:\:.
\end{eqnarray}
\begin{proof}
We observe that
$\forall \bm{\sigma} \in
\Sigma_m$,
\begin{eqnarray}
  \lefteqn{\arg \sup_{\ve{\theta} \in {\mathbb{R}}^{d_\tr} :\|\ve{\theta}\|_2 \leq r_\tr} 
     \left|\frac{1}{m} \sum_{i}
{\sigma_{i} \ve{\theta}^\top\left(((\matrice{i}_m - \matrice{m})\matrice{s}\matrice{f}^\tr)^\top\ve{1}_i\right)}\right|}\nonumber\\
  & = & \frac{r_\tr}{\left\| \sum_{i}
{\sigma_{i} ((\matrice{i}_m - \matrice{m})\matrice{s}\matrice{f}^\tr)^\top\ve{1}_i}\right\|_2}  \sum_{i}
{\sigma_{i} ((\matrice{i}_m - \matrice{m})\matrice{s}\matrice{f}^\tr)^\top\ve{1}_i}\:\:,\nonumber
\end{eqnarray}
and so:
\begin{eqnarray}
\disc_{\process{T}} ({\mathcal{H}}) & = & \expect_{\ve{\sigma} \sim
   \Sigma_m} \left[ \sup_{h \in {\mathcal{H}}_\tr} \left|\frac{1}{m} \sum_{i}
{\sigma_{i} \left( h((\matrice{s}\matrice{f}^\tr)^\top \ve{1}_i) - h((\matrice{m}\matrice{s}\matrice{f}^\tr)^\top \ve{1}_i)\right)}\right|\right]\\
 & = &
\frac{r_\tr}{m}\cdot \expect_{\ve{\sigma} \sim
   \Sigma_m} \left[ \left\|\sum_{i}
{\sigma_{i} ((\matrice{i}_m - \matrice{m})\matrice{s}\matrice{f}^\tr)^\top\ve{1}_i}\right\|_2\right] \nonumber\\
& = & \frac{r_\tr}{m }\cdot \sqrt{\sum_{i}
{\left\|((\matrice{i}_m - \matrice{m})\matrice{s}\matrice{f}^\tr)^\top\ve{1}_i\right\|_2^2}}  \nonumber\\
 & & \cdot \expect_{\ve{\sigma} \sim
   \Sigma_m} \left[ \sqrt{1 + \frac{\sum_{i\neq i'} {\sigma_{i}
         \sigma_{i'} \ve{1}^\top_i (\matrice{i}_{m} -
\matrice{m}) \matrice{s}\matrice{f}^\tr (\matrice{s}\matrice{f}^\tr)^\top 
(\matrice{i}_{m} -
\matrice{m})^\top\ve{1}_{i'}}}{\sum_{i}
{\left\|((\matrice{i}_m - \matrice{m})\matrice{s}\matrice{f}^\tr)^\top\ve{1}_i\right\|_2^2}}}\right] \nonumber\\
 & \defeq  & \frac{r_\tr}{m}\cdot\sqrt{\sum_{i}
{\left\|((\matrice{i}_m - \matrice{m})\matrice{s}\matrice{f}^\tr)^\top\ve{1}_i\right\|_2^2}}  \cdot \expect_{\ve{\sigma} \sim
   \Sigma_m} \left[\sqrt{1 + u(\ve{\sigma})}\right]\:\:,\label{eqq1}
\end{eqnarray}
with
\begin{eqnarray}
u(\ve{\sigma}) & \defeq & \frac{\sum_{i\neq i'} {\sigma_{i}
         \sigma_{i'} \ve{1}^\top_i (\matrice{i}_{m} -
\matrice{m}) \matrice{s}\matrice{f}^\tr (\matrice{s}\matrice{f}^\tr)^\top 
(\matrice{i}_{m} -
\matrice{m})^\top\ve{1}_{i'}}}{\sum_{i}
{\left\|((\matrice{i}_m -
    \matrice{m})\matrice{s}\matrice{f}^\tr)^\top\ve{1}_i\right\|_2^2}}\:\:. \label{defuuX}
\end{eqnarray}
Let us call for short $\ve{\delta}_{i} \defeq ((\matrice{i}_m -
    \matrice{m})\matrice{s}\matrice{f}^\tr)^\top\ve{1}_i$, so that eq. (\ref{defuuX}) can be simplified
to $u(\ve{\sigma}) = (\sum_i \|\ve{\delta}_{i}\|_2^2)^{-1}\sum_{i\neq i'} \sigma_i \sigma_{i'}
\ve{\delta}^\top_{i} \ve{\delta}_{i'}$. $\forall n\in {\mathbb{N}}_*$, we have 
\begin{eqnarray}
\lefteqn{\expect_{\ve{\sigma} \sim
   \Sigma_m} \left[ u^n(\ve{\sigma})\right]}\nonumber\\
 & = & \frac{1}{\left(\sum_i \|\ve{\delta}_{i}\|_2^2\right)^n} \cdot \expect_{\ve{\sigma} \sim
   \Sigma_m} \left[ \sum_{i_1 \neq i'_1} \sum_{i_2 \neq i'_2} \cdots \sum_{i_n \neq i'_n} \prod_{k=1}^{n} \sigma_{i_k} \sigma_{i'_k}\ve{\delta}_{i_k}^\top \ve{\delta}_{i'_k}\right] \nonumber\\
 & = & \frac{1}{\left(\sum_i \|\ve{\delta}_{i}\|_2^2\right)^n} \cdot \sum_{i_1 \neq i'_1} \sum_{i_2 \neq i'_2} \cdots \sum_{i_n \neq i'_n} \expect_{\ve{\sigma} \sim
   \Sigma_m} \left[\prod_{k=1}^{n} \sigma_{i_k} \sigma_{i'_k}\ve{\delta}_{i_k}^\top \ve{\delta}_{i'_k}\right] \nonumber\\
 & = & \frac{1}{\left(\sum_i \|\ve{\delta}_{i}\|_2^2\right)^n} \cdot \sum_{i_1
\neq i'_1} \sum_{i_2 \neq i'_2} \cdots \sum_{i_n \neq i'_n} \prod_{(i,
i') \in \{(i_k, i'_k)\}_{k=1}^n} \expect_{\ve{\sigma} \sim
   \Sigma_m} \left[(\sigma_{i} \sigma_{i'})^{n(i,i')}\right] \left(\ve{\delta}_{i}^\top \ve{\delta}_{i'}\right)^{n(i,i')}\:\:,
\end{eqnarray}
with $n(i,i') \defeq |\{k : (i,i') = (i_k,i'_k)\}|$ satisfying $\sum
n(i,i') = n$. Whenever $n(i,i')$ is odd, $\expect_{\ve{\sigma} \sim
   \Sigma_m} \left[(\sigma_{i} \sigma_{i'})^{n(i,i')}\right] = 0$
 (because $\ve{\sigma}$ is Rademacher), and
 it is 1 otherwise. We get, if $n$ is even:
\begin{eqnarray}
\lefteqn{\expect_{\ve{\sigma} \sim
   \Sigma_m} \left[ u^n(\ve{\sigma})\right]}\nonumber\\
 & = & \frac{1}{\left(\sum_i \|\ve{\delta}_{i}\|_2^2\right)^n} \cdot
\underbrace{\sum_{0<\ell\leq n} \sum_{
\begin{array}{c}
\{n_k\}_{k=1}^{\ell}
\subset {\mathbb{N}_*} \\
\mbox{s.t. } 2 \sum n_k = n
\end{array}} \sum_{
\begin{array}{c}
\{(i_k, i'_k)\}_{k=1}^\ell \\
\mbox{s.t. } i_k
\neq i'_k, \forall k
\end{array}} \prod_{k=1}^{\ell} \left(\ve{\delta}_{i_k}^\top \ve{\delta}_{i'_k}\right)^{2n_k}}_{\defeq \zeta(n)}\:\:,\label{beqU}
\end{eqnarray}
and $\expect_{\ve{\sigma} \sim
   \Sigma_m} \left[ u^n(\ve{\sigma})\right] = 0$ if $n$ is odd. Since 
\begin{eqnarray}
\sqrt{1+x} & = & 1 + \sum_{n\in {\mathbb{N}}_*} {\frac{1}{2^n n!}
  \cdot \prod_{k=0}^{n-1} (1-2k) x^n}\:\:,
\end{eqnarray}
we get after combining with eqs (\ref{eqq1}) and (\ref{beqU}) and
using the definition of $\zeta(.)$ in eq. (\ref{beqU}):
\begin{eqnarray}
\disc_{\process{T}} ({\mathcal{H}})  & = & \frac{r_\tr}{m}\cdot \sqrt{\sum_i \|\ve{\delta}_{i}\|_2^2} \cdot \left( 1 -
\sum_{n\in {\mathbb{N}}_*} {\frac{\prod_{k=1}^{2n-1} (2k-1) \cdot \zeta(2n)}{ (2n)!\left(2\sum_i
{\left\|\ve{\delta}_{i}\right\|_2^2}\right)^{2n}}}\right)\nonumber\\
 & = & \frac{r_\tr}{m}\cdot \sqrt{\sum_i \|\ve{\delta}_{i}\|_2^2} \cdot \left( 1 -
\sum_{n\in {\mathbb{N}}_*} {\frac{\prod_{k=1}^{2n-1} (2k-1)}{ (2m)^{2n}(2n)!}}
\cdot \tilde{\zeta}(2n)\right)\label{eqdr}\:\:,
\end{eqnarray}
with:
\begin{eqnarray}
\lefteqn{\tilde{\zeta}(2n)}\nonumber\\
 & \defeq & \sum_{0<\ell\leq 2n} \sum_{
\begin{array}{c}
\{n_k\}_{k=1}^{\ell}
\subset {\mathbb{N}_*} \\
\mbox{s.t. } \sum n_k = n
\end{array}} \sum_{
\begin{array}{c}
\{(i_k, i'_k)\}_{k=1}^\ell \\
\mbox{s.t. } i_k
\neq i'_k, \forall k
\end{array}} \prod_{k=1}^{\ell} \left(
\begin{array}{c}
\frac{\left\|\ve{\delta}_{i_k}\right\|_2^2}{\frac{1}{m} \cdot \sum_p
{\left\|\ve{\delta}_{p}\right\|_2^2}} \\
\cdot 
\frac{\left\|\ve{\delta}_{i'_k}\right\|_2^2}{\frac{1}{m} \cdot \sum_p
{\left\|\ve{\delta}_{p}\right\|_2^2}} 
 \cdot
\gamma_{i_k, i'_k}\\
\end{array}\right)^{2n_k}\:\:,
\end{eqnarray}
and $\gamma_{i,i'} \defeq
\cos(\ve{\delta}_{i},\ve{\delta}_{i'})$. Remark that eq. (\ref{eqdr})
is an equality.
We now use assumption \textbf{(a)} and obtain
\begin{eqnarray}
\tilde{\zeta}(2n) & \geq & (1-\delta)^{2n} \sum_{0<\ell\leq 2n} \sum_{
\begin{array}{c}
\{n_k\}_{k=1}^{\ell}
\subset {\mathbb{N}_*} \\
\mbox{s.t. } \sum n_k = n
\end{array}} \sum_{
\begin{array}{c}
\{(i_k, i'_k)\}_{k=1}^\ell \\
\mbox{s.t. } i_k
\neq i'_k, \forall k
\end{array}} \prod_{k=1}^{\ell} \left(
\gamma_{i_k, i'_k}\right)^{2n_k}\:\:.\label{ppp1}
\end{eqnarray}
Denote for short ${\mathcal{U}}(n)$ the set of eligible triples $(n_k,
i_k, i'_k)$ in the summation. We get because of assumption
\textbf{(b)} $\tilde{\zeta}(2n) \geq |{\mathcal{U}}(n)|((1-\delta)
(1-\gamma))^{2n}$, and so, using the shorthand 
\begin{eqnarray}
\mathscr{L} & \defeq & \frac{1}{m} \cdot \sum_i
{\left\|\ve{\delta}_{i}\right\|_2^2}\:\:,\label{defell}
\end{eqnarray} 
we obtain our first upperbound,
\begin{eqnarray}
\disc_{\process{T}} ({\mathcal{H}}) & \leq & \frac{r_\tr}{\sqrt{m}}\cdot \sqrt{\mathscr{L}} \cdot  \left( 1 -
\sum_{n\in {\mathbb{N}}_*} {|{\mathcal{U}}(n)|\left(\frac{(1-\delta)
(1-\gamma)}{m}\right)^{2n} \cdot \frac{\prod_{k=1}^{2n-1} (2k-1)}{
    2^{2n}(2n)!}}\right)\label{lleq2}\:\:.
\end{eqnarray}
\begin{lemma}
There exists a constant
$\epsilon > 0$ such that $\forall n \in {\mathbb{N}}_*$,
\begin{eqnarray}
\frac{\prod_{k=1}^{2n-1} (2k-1)}{
    (2n)!} & \geq & (2(1-\epsilon))^{2n}\:\:.
\end{eqnarray}
\end{lemma}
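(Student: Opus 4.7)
The plan is to prove the inequality by induction on $n$, after calibrating $\epsilon$ so that the base case $n=1$ is exactly tight. At $n=1$ the left-hand side equals $\prod_{k=1}^{1}(2k-1)/2! = 1/2$ while the right-hand side is $4(1-\epsilon)^2$; matching these forces $\epsilon = 1 - 1/(2\sqrt{2})$, which makes $(2(1-\epsilon))^{2n} = (1/2)^n$. I would fix this value of $\epsilon$ (any slightly larger one would also do).

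The inductive step is handled by the telescoping ratio
\begin{eqnarray*}
\frac{\prod_{k=1}^{2n+1}(2k-1)/(2n+2)!}{\prod_{k=1}^{2n-1}(2k-1)/(2n)!} & = & \frac{(4n-1)(4n+1)}{(2n+1)(2n+2)}\:\:,
\end{eqnarray*}
so it suffices to verify that $(4n-1)(4n+1)/((2n+1)(2n+2)) \geq 1/2$ for every $n \geq 1$, since dividing $\text{LHS}(n+1) / \text{LHS}(n)$ by $\text{RHS}(n+1)/\text{RHS}(n) = 1/2$ must exceed $1$. Clearing denominators reduces this to the quadratic inequality $28n^2 - 6n - 4 \geq 0$, whose real roots are $1/2$ and $-2/7$; as the leading coefficient is positive, the inequality holds on $[1,\infty)$, and the induction closes.

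The main obstacle is really only the calibration of $\epsilon$. If one simply tried $\epsilon = 1/2$ the base case already fails (LHS at $n=1$ equals $1/2$, below $\text{RHS}(1) = 1$), since the left-hand side is smallest at small $n$ and only then grows roughly like $4^n/n^{3/2}$ (this can be seen by the identity $\prod_{k=1}^{2n-1}(2k-1) = (4n-2)!/(2^{2n-1}(2n-1)!)$ combined with Stirling, though nothing so elaborate is needed here). Once $\epsilon$ is chosen to absorb the $n=1$ gap, the induction is essentially automatic, because the telescoping ratio rapidly approaches $4$ as $n \to \infty$, far dominating the $1/2$ required per step.
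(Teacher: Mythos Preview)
Your proof is correct and follows essentially the same strategy as the paper: fix $\epsilon = 1 - 1/(2\sqrt{2})$ so that the base case $n=1$ is tight, and then close the induction by showing the telescoping ratio of the left-hand side dominates the per-step factor $(2(1-\epsilon))^2 = 1/2$ of the right-hand side. Your computation of the telescoping ratio as $\dfrac{(4n-1)(4n+1)}{(2n+1)(2n+2)}$ is in fact the correct one (the paper writes $\dfrac{(2n-1)(2n+1)}{(n+1)(n+2)}$, which appears to be a slip), and your quadratic check $28n^2 - 6n - 4 \geq 0$ for $n\geq 1$ is a cleaner verification than the paper's monotonicity argument.
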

\begin{proof}
We proceed by induction, letting $g_\epsilon(n) \defeq
(2(1-\epsilon))^{2n}$ and
\begin{eqnarray}
f(n) & \defeq & \frac{\prod_{k=1}^{2n-1} (2k-1)}{
    (2n)!}\:\:,
\end{eqnarray}
and for $\epsilon = \epsilon_* \defeq 1 - 1/(2\sqrt{2})$. We remark
that $g_{\epsilon_*}(1) = f(1)$. 
Furthermore,
\begin{eqnarray}
 f(n+1) = \frac{\prod_{k=1}^{2(n+1)-1} (2k-1)}{
    (2(n+1))!} & = & \frac{(2n-1)(2n+1)}{(n+1)(n+2)} \cdot  \frac{\prod_{k=1}^{2n-1} (2k-1)}{
    (2n)!} \nonumber\\
 & \geq & \frac{(2n-1)(2n+1)}{(n+1)(n+2)} \cdot (2(1-\epsilon))^{2n}\label{useind}\\
 & & = \frac{(2n-1)(2n+1)}{(n+1)(n+2)(2(1-\epsilon))^2} \cdot g_\epsilon(n+1)\nonumber\:\:,
\end{eqnarray}
where ineq. (\ref{useind}) uses the induction hypothesis. We prove the
Lemma once we prove that the factor on the right is at least 1, that
is, for $\epsilon = \epsilon_*$, we need to prove
\begin{eqnarray}
\frac{(2n-1)(2n+1)}{(n+1)(n+2)} & \geq & \frac{1}{2}\:\:,
\end{eqnarray}
which is indeed the case since the left function is strictly
increasing over ${\mathbb{N}}$ and equals the right-hand side for $n=1$.
\end{proof}
So we get from ineq. (\ref{lleq2}):
\begin{eqnarray}
\disc_{\process{T}} ({\mathcal{H}}) & \leq & \frac{r_\tr}{\sqrt{m}}\cdot \sqrt{\mathscr{L}} \cdot  \left( 1 -
\sum_{n\in {\mathbb{N}}_*} {|{\mathcal{U}}(n)|\left(\frac{(1-\delta)
(1-\gamma)(1-\epsilon)}{m}\right)^{2n}}\right) \nonumber\\
 & \leq & \frac{r_\tr}{\sqrt{m}}\cdot \sqrt{\mathscr{L}} \cdot  \left( 1 -
u_*(m) \sum_{n\in {\mathbb{N}}_*} {\left(\frac{(1-\delta)
(1-\gamma)(1-\epsilon)}{m}\right)^{2n}}\right)\:\:,
\end{eqnarray}
where $u_*(m)$ satisfies $u_*(m) \leq \min_n |{\mathcal{U}}(n)|$. We
finally get:
\begin{eqnarray}
\disc_{\process{T}} ({\mathcal{H}}) & \leq & \frac{r_\tr}{\sqrt{m}}\cdot \sqrt{\mathscr{L}} \cdot  \left( 1 -
(1-\kappa(\epsilon)) \cdot \frac{u_*(m)}{m^2-(1-\kappa(\epsilon))}\right)\nonumber\\
& \leq & \frac{r_\tr}{\sqrt{m}}\cdot \sqrt{\mathscr{L}} \cdot  \left( 1 -
(1-\kappa(\epsilon)) \cdot \frac{u_*(m)}{m^2}\right)\nonumber\\
 & \leq & \frac{r_\tr}{\sqrt{m}}\cdot \sqrt{\mathscr{L}} \cdot
 \left( \frac{1}{m} + \kappa(\epsilon) \left(1 - \frac{1}{m}\right)\right) \:\:,\label{eqkap}
\end{eqnarray}
with $\kappa(\epsilon) = 1 - ((1-\delta)(1-\epsilon)(1-\gamma))^2 >
0$, since $u_*(m) \geq m(m-1)$ (obtained for $\ell = 2n$ in
eq. (\ref{ppp1})). We finish the proof by remarking that $\mathscr{L}$ in
eq. (\ref{defell}) satisfies
\begin{eqnarray}
\mathscr{L} & = & \frac{1}{m} \cdot \sum_i
{\ve{1}^\top_i (\matrice{i}_{m} -
\matrice{m}) \matrice{s}\matrice{f}^\tr (\matrice{s}\matrice{f}^\tr)^\top 
(\matrice{i}_{m} -
\matrice{m})^\top\ve{1}_{i}}\nonumber\\
 & = & \frac{1}{m} \cdot \trace{(\matrice{i}_{m} -
\matrice{m}) \matrice{s}\matrice{f}^\tr (\matrice{s}\matrice{f}^\tr)^\top 
(\matrice{i}_{m} -
\matrice{m})^\top}\nonumber\\
 & = & \frac{1}{m} \cdot \trace{(\matrice{i}_{m} -
\matrice{m}) \matrice{k}^\tr 
(\matrice{i}_{m} -
\matrice{m})^\top}\nonumber\\
 & = & \frac{1}{m} \cdot \trace{(\matrice{i}_{m} -
\matrice{m})^\top \matrice{i}_m (\matrice{i}_{m} -
\matrice{m}) \matrice{k}^\tr }\nonumber\\
 & = & \frac{1}{m} \cdot \cipr{\matrice{i}_m}{\matrice{k}^\tr}{\matrice{m}}\:\:.\label{lll1z}
\end{eqnarray} 
Ineq. (\ref{eqkap}) and eq. (\ref{lll1z}) allow to conclude the proof
of Theorem \ref{thradcomp_2} with
\begin{eqnarray}
u & \defeq & \frac{1}{m} + \kappa(\epsilon) \left(1 - \frac{1}{m}\right)\:\:,\label{defuuXX}
\end{eqnarray}
which, since $\kappa(\epsilon) < 1$, satisfies indeed $u \in (0,1)$.
This achieves the main part of the proof of Theorem
\ref{thradcomp_2}. To prove eq. (\ref{simplinn}), we just have to
write (letting $\varsigma
: [m] \rightarrow [m]$ represent the corresponding permutation),
\begin{eqnarray}
\lefteqn{\frac{1}{m} \cdot \cipr{\matrice{i}_m}{\matrice{k}^\tr}{\matrice{m}}}\nonumber\\
 &
= & \frac{1}{m} \sum_i
{\left\|\ve{x}_{i}^\tr -
  \ve{x}^\tr_{\varsigma
    (i)}\right\|_2^2} \nonumber\\
 & = & \sum_{j\in [d_\tr]} \frac{1}{m} \cdot \sum_i
{(x_{ij}^\tr -
  x^\tr_{\varsigma
    (i)j})^2} \nonumber\\
 & = & 2\cdot \sum_{j\in [d_\tr]} \left\{\frac{1}{m} \cdot \sum_i
{(x_{ij}^\tr)^2 - \frac{1}{m} \cdot \sum_i
  x_{ij}^\tr x^\tr_{\varsigma
    (i)j}} \right\} \label{diff1}\\
 & = & 2\cdot \sum_{j\in [d_\tr]} \left\{\frac{1}{m} \cdot \sum_i
{(x_{ij}^\tr)^2} - \left( \frac{1}{m} \cdot \sum_i
{x_{ij}^\tr}\right)^2 + \left( \frac{1}{m} \cdot \sum_i
{x_{ij}^\tr}\right)^2 - \frac{1}{m} \cdot \sum_i
  {x_{ij}^\tr x^\tr_{\varsigma
    (i)j}} \right\}\nonumber\\
 & = & 2\cdot \sum_{j\in [d_\tr]} \left\{ \var(\matrice{s}\matrice{f}^\tr\ve{1}_j) - \cov(\matrice{s}\matrice{f}^\tr\ve{1}_j,\matrice{m}\matrice{s}\matrice{f}^\tr\ve{1}_j)\right\}\nonumber\\
 & = & 2\cdot \sum_{j\in [d_\tr]} \var(\matrice{s}\matrice{f}^\tr\ve{1}_j) \left\{ 1 -
   \frac{\cov(\matrice{s}\matrice{f}^\tr\ve{1}_j,\matrice{m}\matrice{s}\matrice{f}^\tr\ve{1}_j)}{\sqrt{\var(\matrice{s}\matrice{f}^\tr\ve{1}_j)}
     \sqrt{\var(\matrice{m}\matrice{s}\matrice{f}^\tr\ve{1}_j)} }\right\}\label{eqqq0}\\ 
 & = & 2\cdot \sum_{j\in [d_\tr]} \var(\matrice{s}\matrice{f}^\tr\ve{1}_j)(1-\rho(\matrice{s}\matrice{f}^\tr\ve{1}_j,
\matrice{m}\matrice{s}\matrice{f}^\tr\ve{1}_j)) \nonumber\:\:,
\end{eqnarray}
where eq. (\ref{eqqq0}) follows from the fact that $\var(\matrice{s}\matrice{f}^\tr\ve{1}_j) =
\var(\matrice{m}\matrice{s}\matrice{f}^\tr\ve{1}_j)$.
\end{proof}

\noindent \textbf{Remark}: it is worthwhile remarking that the proof
of Theorem \ref{thradcompUU} can also be applied to upperbound the
empirical Rademacher complexity of linear functions, without
modifications, except for the handling of $\mathscr{L}$. In this case, the
proof improves the upperbound known \citep{kstOT} (Theorem 1) by factor
$u$ in eq. (\ref{defuuXX}). This is due to the fact that the proof in \citep{kstOT} takes into
account only the maximum norm in the observations of ${\mathcal{S}}$,
and not the angles between the observations. We now state the corresponding Theorem.
\begin{theorem}\label{improveRC}
Following \citep{gftsssAK}, we let $\ii{r}{m}$ denote the
set of $r$-tuples drawn without replacement drawn from $[m]$. Suppose
${\mathcal{S}}$ satisfies the following for some $\delta, \gamma >0$
and $r_x > 0$:
\begin{itemize}
\item [\textbf{(a)}] $\|\ve{x}_{i}\|_2^2 \geq (1-\delta) \cdot (1/m)\sum_{i'}
{\left\|\ve{x}_{i'}\right\|_2^2}$, $\forall i \in [m]$;
\item [\textbf{(b)}] $\expect_{(i,i') \sim
    \ii{2}{m}}[|\cos(\ve{x}_{i}, \ve{x}_{i'})|] \geq 1 - \gamma$;
\item [\textbf{(c)}] $\|\ve{x}_{i}\|_2\leq r_x$, $\forall i \in [m]$.
\end{itemize}
Then, assuming that ${\mathcal{H}}$ contains linear classifiers of the
form $\ve{\theta}^\top \ve{x}$ with $\|\ve{\theta}\|_2\leq r_\theta$,
there exists $\epsilon>0$ such that the
empirical Rademacher complexity of ${\mathcal{H}}$ satisfies:
\begin{eqnarray}
\rad_{{\mathcal{S}}} ({\mathcal{H}}) & \leq & \left(\frac{1}{m} + \kappa(\epsilon) \left(1 - \frac{1}{m}\right)\right) \cdot \frac{r_x r_\theta}{\sqrt{m}}\:\:,\label{feq113}
 \end{eqnarray}
with $\kappa(\epsilon) = 1 - ((1-\delta)(1-\epsilon)(1-\gamma))^2 \in (0,1)$.
\end{theorem}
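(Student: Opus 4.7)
The plan is to mirror the proof of Theorem \ref{thradcomp_2}, specialising to the ``no-shuffle'' case by formally setting $\matrice{m} = \matrice{0}$ (so that $\matrice{i}_m - \matrice{m} = \matrice{i}_m$ and $\matrice{k}^\tr$ is replaced by the Gram matrix of the observations themselves), and then adjusting the treatment of $\mathscr{L}$ via assumption (\textbf{c}). This is precisely the route flagged in the remark preceding the theorem statement.

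First I would rewrite the empirical Rademacher complexity of linear classifiers in the standard closed form: since the supremum of a linear form over $\{\ve{\theta}:\|\ve{\theta}\|_2\leq r_\theta\}$ is attained at $\ve{\theta} = r_\theta \ve{v}/\|\ve{v}\|_2$ with $\ve{v}=\sum_i \sigma_i \ve{x}_i$, we get
\begin{eqnarray*}
\rad_{\mathcal{S}}(\mathcal{H}) & = & \frac{r_\theta}{m}\cdot\expect_{\ve{\sigma}\sim\Sigma_m}\left[\left\|\sum_i \sigma_i \ve{x}_i\right\|_2\right].
\end{eqnarray*}
Next I would factor $\sqrt{\sum_i \|\ve{x}_i\|_2^2}$ out of the norm and introduce
$u(\ve{\sigma}) \defeq (\sum_i \|\ve{x}_i\|_2^2)^{-1} \sum_{i\neq i'} \sigma_i \sigma_{i'} \ve{x}_i^\top \ve{x}_{i'}$, exactly as in the proof of Theorem \ref{thradcomp_2} but with $\ve{\delta}_i$ replaced by $\ve{x}_i$.

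Then I would invoke the series expansion $\sqrt{1+x} = 1+\sum_{n\geq 1}\frac{\prod_{k=0}^{n-1}(1-2k)}{2^n n!}\, x^n$ and take expectations. All odd moments of $u(\ve{\sigma})$ vanish by independence of the Rademacher variables, and the even moments reduce, as in the earlier proof, to sums $\tilde{\zeta}(2n)$ of products of powers of cosines between pairs $(\ve{x}_{i_k},\ve{x}_{i'_k})$ weighted by normalised squared norms. Assumption (\textbf{a}) lets us factor out $(1-\delta)^{2n}$ from each such product. The step that handles $\mathscr{L}$ differently here is simply to use assumption (\textbf{c}) to bound $\sqrt{(1/m)\sum_i \|\ve{x}_i\|_2^2}\leq r_x$ at the end, in place of the $\trace$-based simplification of Theorem \ref{thradcomp_2}. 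The same algebraic Lemma used earlier (giving $\prod_{k=1}^{2n-1}(2k-1)/(2n)! \geq (2(1-\epsilon_*))^{2n}$ for $\epsilon_* = 1 - 1/(2\sqrt{2})$) yields the closed-form geometric series collapse and, with $u_*(m)\geq m(m-1)$, produces the factor $1/m + \kappa(\epsilon)(1-1/m)$.

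The one genuine adjustment, and what I expect to be the only delicate step, is that assumption (\textbf{b}) here is an \emph{average} lower bound on $|\cos(\ve{x}_i,\ve{x}_{i'})|$ rather than the uniform bound used in Theorem \ref{thradcomp_2}(\textbf{b}). The products $\prod_k (\cos)^{2n_k}$ inside $\tilde{\zeta}(2n)$ must therefore be bounded from below via a power-mean / Jensen argument: after summing over the combinatorial tuples in $\mathcal{U}(n)$, the sum of products dominates a product of sums of cosine-squared terms, each of which is in turn bounded below by $(1-\gamma)^{2n_k}$ via convexity of $t\mapsto t^{n_k}$ applied to the uniform law on pairs $(i,i')\in \ii{2}{m}$. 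Apart from this bookkeeping, the rest is a direct specialisation, and assembling the pieces yields
\begin{eqnarray*}
\rad_{\mathcal{S}}(\mathcal{H}) & \leq & \left(\frac{1}{m} + \kappa(\epsilon)\left(1-\frac{1}{m}\right)\right)\cdot\frac{r_x r_\theta}{\sqrt{m}},
\end{eqnarray*}
with $\kappa(\epsilon) = 1-((1-\delta)(1-\epsilon)(1-\gamma))^2 \in (0,1)$, which is the claimed improvement over the classical $r_x r_\theta/\sqrt{m}$ bound of \citep{kstOT}.
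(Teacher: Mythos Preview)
Your proposal is exactly the route the paper takes: the paper's ``proof'' of Theorem~\ref{improveRC} is literally the remark that the argument of Theorem~\ref{thradcomp_2} carries over verbatim with $\ve{\delta}_i$ replaced by $\ve{x}_i$, the only change being that $\mathscr{L}=(1/m)\sum_i\|\ve{x}_i\|_2^2$ is bounded by $r_x^2$ via assumption (\textbf{c}) rather than expressed through the centered inner product. You have in fact gone further than the paper by noticing that assumption (\textbf{b}) here is an \emph{average} cosine bound, whereas the $(\gamma,\delta)$-correlation assumption in Theorem~\ref{thradcomp_2} is pointwise; the paper glosses over this.

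One point to tighten: your phrase ``the sum of products dominates a product of sums of cosine-squared terms'' is the wrong way round --- for non-negative summands, a restricted sum of products is \emph{bounded above} by the product of full sums, not below. The clean fix is simpler than what you sketch: in the expression for $\tilde{\zeta}(2n)$ (analogue of ineq.~(\ref{ppp1})), all summands are non-negative (even powers of cosines), so you may discard everything except the $\ell=1$ term, which is $(1-\delta)^{2n}\sum_{(i,i')\in\ii{2}{m}}\gamma_{i,i'}^{2n}$. Jensen's inequality applied to $t\mapsto t^{2n}$ and assumption (\textbf{b}) then give this single sum $\geq m(m-1)(1-\gamma)^{2n}$, which is precisely the $u_*(m)\geq m(m-1)$ bound needed downstream. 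With that correction, your derivation goes through and matches the paper's intended argument.
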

\citep{kstOT}'s proof relies on \textbf{(c)}. Since \textbf{(a)} and
\textbf{(b)} can always be satisfied for some $\delta, \gamma >0$,
ineq. (\ref{feq112}) holds under their setting as well; however, it
becomes better than theirs as both $\delta, \gamma$ are small, so in
particular in
the case where observations start to be heavily correlated and be of
approximately the same norm.
Indeed, in this case, $\sum_i \sigma_i \ve{x}_i$ will often have
\textit{small} magnitude, because $\Sigma_m \ni \ve{\sigma} \sim \{-1,1\}$ and thus
many vectors will approximately cancel through the sum in many draws
of $\ve{\sigma}$.

\subsection{Proof of Theorem \ref{thradcompSettingB}}\label{proof_thm_thradcompSettingB}

\begin{figure}[t]
\begin{center}
\includegraphics[width=0.4\columnwidth]{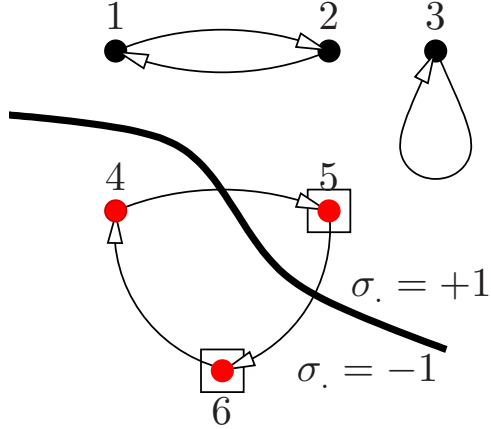}
\caption{A permutation $\varsigma$ defines an oriented graph whose
  vertices are examples and permutation $\varsigma$ defines arcs (here,
  $\varsigma(1) = 2$ for example; black dots are positive examples,
  red dots are negative examples). Each term in the $\sup$ of
  eq. (\ref{defsup2}) is a
  weighted cut (one weighted cut for each $\ve{\sigma} \in \{-1,1\}^m$): the squares depict the examples for which $w(.) \neq
  0$ in Lemma \ref{thradcomp2} for the $\ve{\sigma}$ displayed. In
  this example of $\ve{\sigma}$, only two examples out of the six
  would bring a non-zero weight $w(.)$.}
\label{f-setting2}
\end{center}
\end{figure}

We first start by a Lemma which shows that indeed
$\disc_{\process{T}} ({\mathcal{H}}) $
can be significantly smaller than a Rademacher complexity.
\begin{lemma}\label{thradcomp2}
Suppose setting (B) holds in Theorem \ref{rader}. Then
\begin{eqnarray}
\disc_{\process{T}} ({\mathcal{H}}) &
=&  2\cdot \expect_{\ve{\sigma} \sim
   \Sigma_m} \left[ \sup_{h_\tr \in {\mathcal{H}}_\tr} \left|\frac{1}{m} \sum_{i}
{w (i) h_\tr((\matrice{s}\matrice{f}^\tr)^\top \ve{1}_i)}\right|\right]\:\:, \label{defsup2}
\end{eqnarray}
where $w(i) \defeq (1/2) \cdot (\sigma_{i} -
\sigma_{\varsigma^{-1}(i)}) \in \{-1,0,1\}$.
\end{lemma}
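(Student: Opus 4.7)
The plan is to rewrite the defining expression of $\disc_{\process{T}}({\mathcal{H}})$ under Setting (B) by reindexing the second summand using the permutation $\varsigma$ that $\matrice{m}$ represents, and then to recognize that the resulting coefficient of each $h_\tr((\matrice{s}\matrice{f}^\tr)^\top \ve{1}_i)$ is precisely $2 w(i)$.

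First, I would observe, as already used in the proof of Lemma~\ref{lemint1} (Case 2) in Subsection~\ref{proof_rader}, that when $\matrice{m} \in S_m^*$ represents a (block-class) permutation $\varsigma : [m] \rightarrow [m]$, we have the pointwise identity $(\matrice{m}\matrice{s}\matrice{f}^\tr)^\top \ve{1}_i = (\matrice{s}\matrice{f}^\tr)^\top \ve{1}_{\varsigma(i)}$. Writing $g_i \defeq h_\tr((\matrice{s}\matrice{f}^\tr)^\top \ve{1}_i)$ for short, this lets the inner quantity appearing in Definition of $\disc_{\process{T}}({\mathcal{H}})$ be rewritten as
\begin{eqnarray*}
\sum_i \sigma_i\bigl( h_\tr((\matrice{s}\matrice{f}^\tr)^\top \ve{1}_i) - h_\tr((\matrice{m}\matrice{s}\matrice{f}^\tr)^\top \ve{1}_i) \bigr) &=& \sum_i \sigma_i g_i \;-\; \sum_i \sigma_i g_{\varsigma(i)}\:\:.
\end{eqnarray*}

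Next, I would reindex the second sum: since $\varsigma$ is a bijection on $[m]$, the substitution $j = \varsigma(i)$ (equivalently $i = \varsigma^{-1}(j)$) gives $\sum_i \sigma_i g_{\varsigma(i)} = \sum_j \sigma_{\varsigma^{-1}(j)} g_j$. Combining the two sums over a common index yields
\begin{eqnarray*}
\sum_i \sigma_i g_i - \sum_i \sigma_{\varsigma^{-1}(i)} g_i &=& \sum_i (\sigma_i - \sigma_{\varsigma^{-1}(i)}) g_i \;=\; 2\sum_i w(i) g_i\:\:,
\end{eqnarray*}
with $w(i) \defeq (1/2)(\sigma_i - \sigma_{\varsigma^{-1}(i)})$. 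Because $\sigma_i, \sigma_{\varsigma^{-1}(i)} \in \{-1,+1\}$, the difference lies in $\{-2,0,+2\}$, so $w(i) \in \{-1,0,+1\}$ as stated. Substituting back into the Definition of $\disc_{\process{T}}({\mathcal{H}})$ and pulling the constant factor $2$ out of the absolute value and the $\sup$ yields exactly eq.~(\ref{defsup2}).

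There is no real obstacle here: the whole argument is a two-line reindexing that exploits the special permutation structure of $\matrice{m}$ under Setting (B); the only point requiring care is that $\varsigma$ being block-class guarantees $\varsigma^{-1}$ is well-defined on all of $[m]$, so the reindexing is licit. The payoff, visible in Figure~\ref{f-setting2} and exploited in Theorem~\ref{thradcompSettingB}, is that many of the $w(i)$'s vanish (exactly when the Rademacher signs at the two endpoints of an arc agree), which is what enables the subsequent bound counting odd cycles to beat the naive Rademacher complexity.
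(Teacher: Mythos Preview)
Your proof is correct and is precisely the straightforward reindexing argument the paper has in mind (the paper itself says only ``The proof of this Lemma is straightforward'' and gives no further detail). One small inaccuracy in your closing remark: the invertibility of $\varsigma$ does not rely on the block-class property --- any $\matrice{m}\in S_m$ already represents a bijection on $[m]$ --- so the reindexing step is licit for any permutation, block-class or not; the block-class assumption matters elsewhere (mean-operator invariance in Theorem~\ref{rader}), not here.
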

The proof of this Lemma is straightforward. What is interesting is,
since $w(.)$ can take on zero values, to what extent
$\disc_{\process{T}} ({\mathcal{H}})$
can be smaller than the corresponding Rademacher complexity in which
$w(.)$ would be replaced by $\ve{\sigma} \in
\{-1,1\}^m$, and what drives this reduction. Figure \ref{f-setting2}
displays qualitatively this intuition on a simple example. We now
investigate a quantitative derivation of the reduction for \dagr~classifiers.

We let $\cut(\ve{\sigma}) \defeq \{i : \sigma_i \neq
\sigma_{\varsigma^{-1}(i)}\}$. We simplify notations in the proof and drop notation $b$
so that notation $h_\tr^i \defeq h_\tr((\matrice{s}\matrice{f}^\tr)^\top \ve{1}_i)$ where $i\in [m]$. We let
\begin{eqnarray}
\mu \defeq \expect_{\ve{\sigma} \sim\Sigma_m}\left[\sup_{h_\tr \in {\mathcal{H}}^\tr}
\sum_{i \in \cut(\ve{\sigma})} \sigma_i h_\tr^i\right]\:\:,
\end{eqnarray}
which is a generalisation of $m \cdot \disc_{\process{T}}
({\mathcal{H}})$ to any permutation $\varsigma \in S_{m}$, and not just a
block-class permutation in $S^*_{m}$ as assumed in Setting (B). We assume basic knowledge of Massart's
finite class Lemma's proof. Using Jensen's inequality, we arrive, after the same chain of
derivations, for any $t>0$, to:
\begin{eqnarray}
\exp(t \mu) & \leq & \frac{1}{2^m}
\sum_{\ve{\sigma} \in \{-1,1\}^m} \sup_{h_\tr \in {\mathcal{H}}^\tr}\exp\left(
   t \cdot \sum_{i \in
    \cut(\ve{\sigma})} \sigma_i h_\tr^i \right)\label{pdf1}\:\:.
\end{eqnarray}
The proof (of Massart's Lemma) now involves replacing the $\sup$ by a sum. Remark that when
$h$ is \dagr, the $\sup$ implies that each
$h_\tr^j$ is in fact $\in \{\pm K_\tr\}$. So let us use
${\mathcal{H}}^\tr_+ \subseteq {\mathcal{H}}^\tr$, the set of classifiers whose
output is in $\{\pm K_\tr\}$. We get:
\begin{eqnarray}
\exp(t \mu) & \leq & \sum_{h_\tr \in {\mathcal{H}}^\tr_+} \frac{1}{2^m}
\sum_{\ve{\sigma} \in \{-1,1\}^m} \exp\left( t \cdot \sum_{i \in
    \cut(\ve{\sigma})} \sigma_i h_\tr^i \right)\:\:.\label{eqqq1}
\end{eqnarray}
To identify better permutations, we name in this proof $\varsigma\in
S^*_m$ the permutation represented by $\matrice{m}$, so that we also
have
\begin{eqnarray}
\oddcycle(\matrice{m}) & \defeq & \oddcycle(\varsigma)\:\:.\nonumber
\end{eqnarray}
Since each coordinate of $\ve{\sigma}$ is chosen uniformly at random,
cycles in a permutation are disjoint 
and $\exp(a+b) = \exp(a)\exp(b)$,
the inner sigma in ineq. (\ref{eqqq1}) factors over the cycles of
permutation $\varsigma$:
\begin{eqnarray}
\lefteqn{\sum_{h_\tr \in {\mathcal{H}}^\tr_+} \frac{1}{2^m}
\sum_{\ve{\sigma} \in \{-1,1\}^m} \exp\left( t \cdot \sum_{i \in
    \cut(\ve{\sigma})} \sigma_i h_\tr^i \right)}\nonumber\\
 & = & \frac{1}{2^{m}} \sum_{h_\tr \in
  {\mathcal{H}}^\tr_+} \prod_{U \in \cycle(\varsigma)} \left( \sum_{\ve{\sigma} \in \{-1,1\}^{|U|}} \exp\left( t \cdot \sum_{i \in
    \cut(\ve{\sigma}) \cap U} \sigma_i h^{U}_{i} \right) \right)\:\:, \label{pqr1}
\end{eqnarray}
where $h^U$ indicates coordinates of $h$ in $U$ and $\cycle(\varsigma)$ is the
set of cycles \textit{without 1-cycles} (\textit{i.e.} fixed points)
--- we have let $m_\varsigma$ denote the number of fixed points of $\varsigma$. We have used the fact
that cycles define a partition of $[m]$. 

Now, whenever $U$ contains an
\textit{odd} number of indexes, whatever $\ve{\sigma}$, the sum over $\cut(\ve{\sigma}) \cap U$ cannot cover
the sum over $U$: there always remains at least one vertex which does
not belong to the sum (In Figure \ref{f-setting2}, the red dot cycle
displays this fact on an example). Let us denote $i_{\ve{\sigma}} \in [|U|]$ this
vertex. Since $\exp(x) + \exp(-x) \geq 2 + x^2$ and $h_\tr \in
{\mathcal{H}}^\tr_+$, we get:
\begin{eqnarray}
\lefteqn{\sum_{\ve{\sigma} \in \{-1,1\}^{|U|}} \exp\left( t \cdot \sum_{i \in
    \cut(\ve{\sigma}) \cap U} \sigma_i h^{U}_{i} \right)}\nonumber\\
 & \leq &
\frac{1}{2 + t^2 K_\tr^2}\cdot \sum_{\ve{\sigma} \in
  \{-1,1\}^{|U|}} \left\{\exp\left( t \cdot \sum_{i \in
    \cut(\ve{\sigma}) \cap U } \sigma_i h^{U}_{i}
\right) \cdot \left(\exp(t h_{i_{\ve{\sigma}}}^U) + \exp(-t h_{i_{\ve{\sigma}}}^U)\right)\right\}\label{mult11}\\
& \leq &
\frac{2}{2 + t^2 K_\tr^2}\cdot \sum_{\ve{\sigma} \in \{-1,1\}^{|U|}}
\exp\left( t \cdot \sum_{i \in U} \sigma_i h^{U}_{i} \right)\:\:, \label{eqffin}
\end{eqnarray}
since the multiplication in ineq. (\ref{mult11}) duplicates part of
the terms in (\ref{eqffin}).
We now plug this bound in eq. (\ref{eqqq1} --- \ref{pqr1}) and finish the derivation
following Massart's finite class Lemma:
\begin{eqnarray}
\exp(t \mu) & \leq &  \left( \frac{2}{2 +  t^2  K_\tr^2}\right)^{|\oddcycle(\varsigma)|}
\cdot \frac{1}{2^m} \sum_{h_\tr \in
  {\mathcal{H}}^\tr_+} \sum_{\ve{\sigma} \in
  \{-1,1\}^{m}} \exp\left(  t \cdot \sum_{i} \sigma_i h_{i} \right)\nonumber\\
 & = & \left( \frac{2}{2 +  t^2 K_\tr^2}\right)^{|\oddcycle(\varsigma)|}
\cdot \sum_{h_\tr \in
  {\mathcal{H}}^\tr_+} \prod_i \left( \frac{\exp(t h_\tr^i) + \exp(-t
    h_\tr^i)}{2} \right) \nonumber\\
 & \leq & \left( \frac{2}{2 +  t^2 K_\tr^2}\right)^{|\oddcycle(\varsigma)|} \cdot
 |{\mathcal{H}}^\tr_+| \exp \left( \frac{t^2}{2} \cdot \sum_i
   \left(\max_{h\in {\mathcal{H}}^\tr_+}
   h^i_\tr\right)^2\right) \label{eqqqin}\\
 & \leq & \left( \frac{2}{2 +  t^2 K_\tr^2}\right)^{|\oddcycle(\varsigma)|} \cdot
 |{\mathcal{H}}^\tr_+| \exp \left( \frac{t^2 m K_\tr^2}{2}\right) \label{lasqt}\:\:.\nonumber
\end{eqnarray}
Ineq. (\ref{eqqqin}) holds because $(\exp(x)+\exp(-x))/2\leq \exp(x^2/2)$.
Taking logs and rearranging yields:
\begin{eqnarray}
\mu & \leq & \frac{1}{t} \cdot \log \frac{|{\mathcal{H}}^\tr_+| }{\left(1+\frac{t^2K_\tr^2}{2}\right)^{|\oddcycle(\varsigma)|} }+ \frac{t m K_\tr^2}{2}\:\:.\label{eq1112}
\end{eqnarray}
Now, suppose that we can choose $t$ such that
\begin{eqnarray}
\frac{t^2K_\tr^2}{2} & \geq & \varepsilon\:\:,\label{eqvarepsilon}
\end{eqnarray}
for some $\varepsilon > 0$. In this case, ineq. (\ref{eq1112}) implies
\begin{eqnarray}
\mu & \leq & \frac{1}{t} \cdot \log \frac{|{\mathcal{H}}^\tr_+| }{\left(1+\varepsilon\right)^{|\oddcycle(\varsigma)|} }+ \frac{t m K_\tr^2}{2}\:\:,\label{eq1113}
\end{eqnarray}
which is of the form $\mu \leq A/t + Bt$ with $B > 0$. Taking $t =
\sqrt{A/B}$ yields, using the fact that each classifier in
${\mathcal{H}}^\tr_+$ :
\begin{eqnarray}
\mu & \leq & K_\tr \cdot \sqrt{ 2 m \log \frac{|{\mathcal{H}}^\tr_+|}{(1+\varepsilon)^{|\oddcycle(\varsigma)| }}}\:\:.\label{latmu}
\end{eqnarray}
Dividing by $m$ gives the statement of Theorem \ref{thradcompSettingB}. We need however to check that ineq. (\ref{eqvarepsilon})
holds, which, since $t =
\sqrt{A/B}$, yields that we must have, after simplification:
\begin{eqnarray}
\log |{\mathcal{H}}^\tr_+| & \geq & \varepsilon m +
|\oddcycle(\varsigma)| \log (1+\varepsilon)\:\:.
\end{eqnarray}
Since we have excluded fixed points, $|\oddcycle(\varsigma)|\leq m/3$,
and since $\log(1+x)\leq x$, a sufficient condition is $\log
|{\mathcal{H}}^\tr_+| \geq 4\varepsilon m/3$, which is the
Theorem's assumption.\\

We now show a bound on the expected \rcp~when  $\matrice{m}$ in
$\process{T}$ is picked uniformly at random, with or without the class
consistency requirement, as a function of the non-fixed points in the
permutations. Following \citep{gftsssAK}, we let $\ii{r}{m}$ denote the
set of $r$-tuples drawn without replacement drawn from $[m]$. We also
let $\ii{r}{m,b}$ denote the
set of $r$-tuples drawn without replacement drawn from $[m]\cap\{i :
y_i = b1\}$, for $b \in \{-,+\}$.

\begin{theorem}\label{thexpradcomp}
Under the joint Settings of Theorem \ref{thradcompUU} and Setting (B), let $S^k_m$ denotes the
set of permutations with exactly $k$ non-fixed points, and $S^{kb}_m$
its subset of block-class permutations with non-fixed points in class $b\in
\{-,+\}$. Then for $S \in\{S^k_m, S^{k-}_m, S^{k+}_m\}$ the following holds
over the uniform sampling of permutations:
\begin{eqnarray}
\expect_{\matrice{m} \sim S}
[\disc_{\process{T}} ({\mathcal{H}})] & \leq & u\cdot \frac{r_\tr}{\sqrt{m}} \cdot
\sqrt{
  \frac{k}{m}\cdot \mathscr{Q}}\:\:,\label{pprc}
\end{eqnarray}
where $\mathscr{Q} \defeq \expect_{(i,i')\sim \ii{2}{m}}
[\|\ve{x}^\tr_i-\ve{x}^\tr_{i'}\|_2^2]$ if $S =
S^k_m$, and $\mathscr{Q} \defeq \expect_{(i,i')\sim \ii{2}{m,b} }
[\|\ve{x}^\tr_i-\ve{x}^\tr_{i'}\|_2^2]$ if $S = S^{kb}_m$ ($b\in \{-,+\}$).
\end{theorem}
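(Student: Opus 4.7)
The plan is to start from the deterministic bound in Theorem~\ref{thradcompUU}, which gives
$\disc_{\process{T}}(\mathcal{H}) \leq (u r_\tr/m)\sqrt{\cipr{\matrice{i}_m}{\matrice{k}^\tr}{\matrice{m}}}$
for every $\matrice{m}\in\mathcal{M}_m$, with $u$ a constant depending only on $\mathcal{S}$. Since this bound holds pointwise in $\matrice{m}$, I would take expectations over $\matrice{m}\sim S$ on both sides and apply Jensen's inequality to the concave square root to pass the expectation inside, obtaining
\begin{eqnarray*}
\expect_{\matrice{m}\sim S}[\disc_{\process{T}}(\mathcal{H})] & \leq & \frac{u r_\tr}{m}\sqrt{\expect_{\matrice{m}\sim S}\left[\cipr{\matrice{i}_m}{\matrice{k}^\tr}{\matrice{m}}\right]}\:\:.
\end{eqnarray*}
Hence the whole task reduces to evaluating the expectation of the centered inner product when $\matrice{m}$ is a uniformly random permutation from $S^k_m$, $S^{k-}_m$ or $S^{k+}_m$.

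Next I would use the identity already derived in the proof of Theorem~\ref{thradcomp_2} (the chain leading to eq.~(\ref{simplinn})), specialized to the case where $\matrice{m}$ is the permutation matrix of some $\varsigma\in S_m$:
$\cipr{\matrice{i}_m}{\matrice{k}^\tr}{\matrice{m}} = \sum_{i\in[m]} \|\ve{x}^\tr_i - \ve{x}^\tr_{\varsigma(i)}\|_2^2$.
Fixed points of $\varsigma$ contribute zero, so only the $k$ non-fixed indices matter. At this point the expectation becomes a purely combinatorial quantity that I compute by a double-counting symmetry argument: by invariance of uniform sampling on $S^k_m$ under relabeling, every ordered pair $(i,j)$ with $i\neq j$ is equally likely to appear as some $(i,\varsigma(i))$ in the $k$ non-fixed slots. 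The total mass of ordered pairs $(i,\varsigma(i))$, $\varsigma(i)\neq i$, over all of $S^k_m$ is $k\cdot|S^k_m|$, distributed uniformly across the $m(m-1)$ off-diagonal pairs, so each such pair has weight $k/(m(m-1))$. Linearity of expectation then yields
\begin{eqnarray*}
\expect_{\varsigma\sim S^k_m}\!\left[\sum_i \|\ve{x}^\tr_i - \ve{x}^\tr_{\varsigma(i)}\|_2^2\right]
& = & \frac{k}{m(m-1)}\sum_{i\neq j}\|\ve{x}^\tr_i - \ve{x}^\tr_j\|_2^2 \;=\; k\cdot \mathscr{Q}\:\:.
\end{eqnarray*}

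For the block-class variants $S^{k-}_m$ and $S^{k+}_m$, the same symmetry argument applies verbatim but the support of the non-fixed indices is restricted to class $b\in\{-,+\}$ (the opposite class being entirely fixed). Consequently the $k$ non-fixed ordered pairs are now uniformly spread over the $m_b(m_b-1)$ off-diagonal class-$b$ pairs, and the expectation equals $k\cdot \expect_{(i,i')\sim\ii{2}{m,b}}[\|\ve{x}^\tr_i-\ve{x}^\tr_{i'}\|_2^2]= k\mathscr{Q}$ with the appropriate definition of $\mathscr{Q}$. Plugging $\expect[\cipr{\matrice{i}_m}{\matrice{k}^\tr}{\matrice{m}}] = k\mathscr{Q}$ back into the Jensen bound gives
$\expect[\disc_{\process{T}}(\mathcal{H})] \leq (u r_\tr/m)\sqrt{k\mathscr{Q}} = u(r_\tr/\sqrt{m})\sqrt{(k/m)\mathscr{Q}}$, which is precisely eq.~(\ref{pprc}).

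The main conceptual hurdle is the symmetry step: I need to verify that the uniform distribution on $S^k_m$ really does induce a uniform distribution on non-fixed ordered pairs. This is clean for $S^k_m$ (full symmetric action) and for $S^{kb}_m$ once I note that block-class permutations with all $k$ non-fixed points in class $b$ correspond bijectively to elements of $S^k_{m_b}$ acting on the class-$b$ indices, with the remaining coordinates frozen as identity — so the same counting recipe applies with $m$ replaced by $m_b$. The cleanest way to present this may be to prove one generic counting lemma and instantiate it three times, rather than writing three parallel arguments.
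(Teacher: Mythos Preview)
Your proof is correct and follows the same architecture as the paper: apply Theorem~\ref{thradcompUU} pointwise, push the expectation inside the square root via Jensen, and then evaluate $\expect_{\matrice{m}\sim S}[\cipr{\matrice{i}_m}{\matrice{k}^\tr}{\matrice{m}}]$. The only difference is in how that expectation is computed: the paper conditions on the support of non-fixed points (computing $\expect[\matrice{m}]$ on that block and expanding the trace, then averaging over all $k$-subsets), whereas you go straight to the answer by the symmetry/double-counting argument that $\Pr_{\varsigma\sim S^k_m}[\varsigma(i)=j]=k/(m(m-1))$ for every $i\neq j$. Your route is shorter and sidesteps the need to identify the expected permutation matrix on the non-fixed block; both yield $\expect[\cipr{\matrice{i}_m}{\matrice{k}^\tr}{\matrice{m}}]=k\mathscr{Q}$ and hence the stated bound.
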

\begin{proof}
We make the proof for $S = S^k_m$. The two other cases follow in the
same way. 
We have from Theorem
\ref{thradcompUU}, because of Jensen inequality:
\begin{eqnarray}
\expect_{\matrice{m} \sim S^{k}_m} \left[\disc_{\process{T}} ({\mathcal{H}})\right] & \leq & u \cdot \frac{r_\tr}{\sqrt{m}} \cdot
\expect_{\matrice{m} \sim S^{k}_m} \left[\sqrt{\frac{1}{m} \cdot
\cipr{\matrice{i}_m}{\matrice{k}^\tr}{\matrice{m}}}
\right]\nonumber\\
 & \leq & u \cdot \frac{r_\tr}{\sqrt{m}} \cdot
\sqrt{\frac{1}{m} \cdot
  \expect_{\matrice{m} \sim S^{k}_m} \left[\cipr{\matrice{i}_m}{\matrice{k}^\tr}{\matrice{m}} \right]}
\:\:.
 \end{eqnarray}
We now decompose the expectation inside and first condition on the set of
permutations whose set of non fixed points are the same set of $k$
examples, say for $i \in [k]$. Let us call $S^{k*}_m$ this subset
of $S^{k}_m$. In this case, we obtain:
\begin{eqnarray}
\lefteqn{\expect_{\matrice{m} \sim S^{k*}_m} \left[\cipr{\matrice{i}_m}{\matrice{k}^\tr}{\matrice{m}}\right]}\nonumber\\
 & = & \expect_{\matrice{m} \sim S^{k*}_m} \left[\trace{(\matrice{i}_m -
  \matrice{m}) \matrice{k}^\tr (\matrice{i}_m - \matrice{m})^\top
}\right]\nonumber\\
 & = & \trace{\matrice{k}^\tr} +  \expect_{\matrice{m} \sim S^{k*}_m}
 \left[\trace{\matrice{m} \matrice{k}^\tr \matrice{m}^\top }\right] -
 2\cdot \expect_{\matrice{m} \sim S^{k*}_m}
 \left[\trace{\matrice{m} \matrice{k}^\tr}\right]\label{prop11X}\\
& = & 2 \cdot \left(\trace{\matrice{k}^\tr} - \trace{\expect_{\matrice{m} \sim S^{k*}_m}
 \left[\matrice{m}\right] \matrice{k}^\tr}\right)\nonumber\\
 & = & 2 \cdot \left(\sum_{i\in [m]} \matrice{k}^\tr_{ii} - \sum_{i\in [k]}
   \frac{1}{k} \cdot \sum_{i'\in [k]} {\matrice{k}^\tr_{ii'}} -
   \sum_{i\in [m]\backslash [k]} \matrice{k}^\tr_{ii}\right)\label{prop13}\\
 & = & 2 \cdot \left(\frac{k-1}{k} \cdot \sum_{i\in [k]}
   \matrice{k}^\tr_{ii} - \frac{1}{k} \cdot \sum_{(i,i') \in \ii{2}{k}}
   {\matrice{k}^\tr_{ii'}}\right)\nonumber \\
 & = & \frac{2}{k} \cdot \left(\sum_{(i,i') \in \ii{2}{k}}
   {\frac{\matrice{k}^\tr_{ii}+\matrice{k}^\tr_{i'i'}}{2} - \matrice{k}^\tr_{ii'}}\right)\nonumber \\
 & = & \frac{1}{k} \cdot \sum_{(i,i') \in \ii{2}{k}}
   {\|\ve{x}^\tr_i-\ve{x}^\tr_{i'}\|_2^2}\label{prop14}\:\:.
\end{eqnarray}
In eq. (\ref{prop11X}) we use the fact that $\matrice{k}^\tr$ is
symmetric. Eq. (\ref{prop13}) uses the fact that
\begin{eqnarray}
\expect_{\matrice{m} \sim S^{k*}_m}
 \left[\matrice{m}\right] & = & 
\left[\begin{array}{ccc}
\matrice{u}_k & | & \matrice{0}\\
\matrice{0} & | & \matrice{i}_{m-k}
\end{array}
\right]\:\:,\nonumber
\end{eqnarray}
where we recall that $\matrice{u}_k \defeq \frac{1}{k} \cdot
\ve{1}\ve{1}^\top$ (main file, Definition \ref{defhs}).
There remains to average eq. (\ref{prop14}) over the set of all
permutations whose set of fixed points is a different $(m-k)$-subset of
$[m]$ and the statement of Theorem \ref{thexpradcomp} is proven for $S
= S^k_m$.
\end{proof}
The key
point in the bound is factor $k/m$, which implies that when
permutations have lots of fixed points, say $(1-\Omega(1))m$, then the
\rcp~may just vanish (as $m$ increases) wrt the
Rademacher complexity, whose dependency on $m$ is $\Omega(1/\sqrt{m})$
\citep{kstOT}.

\subsection{Proof of Theorem \ref{thspectrA}}\label{proof_thm_thspectrA}

The proof stems from the following Theorem, which just assumes that
$\matrice{k}^u$ and $\matrice{k}^v$ can be diagonalized (hence, it is
applies to a more general setting than kernel functions).
\begin{theorem}\label{thspectr1}
Let $\matrice{k}^u$ and $\matrice{k}^v$ be two diagonalisable matrices with respective eigendecomposition $\{\lambda_i,
\ve{u}_i\}_{i\in [d]}$ and $\{\mu_i,
\ve{v}_i\}_{i\in [d]}$, with eigenvalues eventually duplicated up to
their algebraic multiplicity. Letting $\overline{a}\defeq (1/m)
\ve{1}^\top\ve{a}$ denote the average coordinate in $\ve{a}$, the
difference in Hilbert-Schmidt Independence Criterion with respect to shuffling
$\matrice{m}$ satisfies:
\begin{eqnarray}
\hsic(\matrice{k}^u, \matrice{k}^v) - \hsic(\matrice{k}^u,
\matrice{m}\matrice{k}^v \matrice{m}^\top) & = & - 2 m \cdot \left(\sum_{i}\lambda_i\overline{u}_i
 \ve{u}_i\right)^\top(\matrice{i}_m - \matrice{m}) \left(\sum_{i} \mu_i \overline{v}_i\ve{v}_i\right)\:\:.
\end{eqnarray}
Hence, if $\matrice{m}\in S^{\mbox{\tiny{e}}}_m$ permutes $\ell$ and
$\ell'$ in $[m]$, then $\hsic(\matrice{k}^u,
\matrice{m}\matrice{k}^v \matrice{m}^\top) > \hsic(\matrice{k}^u,
\matrice{k}^v)$ iff:
\begin{eqnarray}
\left( \sum_i \lambda_i\overline{u}_i
 (u_{i\ell}-u_{i\ell'})\right) \left( \sum_i \mu_i\overline{v}_i
 (v_{i\ell}-v_{i\ell'})\right) & > & 0 \:\:.\label{ppp3}
\end{eqnarray}
\end{theorem}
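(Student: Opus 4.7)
The plan is to compute the difference directly from $\hsic(\matrice{k}^u,\matrice{k}^v) = \trace{\matrice{h}\matrice{k}^u\matrice{h}\matrice{k}^v}$, where $\matrice{h} \defeq \matrice{i}_m - \matrice{u}_m$ with $\matrice{u}_m = (1/m)\ve{1}\ve{1}^\top$, and to route every algebraic manipulation through the single structural property of $\matrice{m}\in\mathcal{M}_m$ available here: column-stochasticity $\ve{1}^\top\matrice{m} = \ve{1}^\top$ (equivalently $\matrice{m}^\top\ve{1} = \ve{1}$). This yields $\matrice{u}_m\matrice{m} = \matrice{u}_m$ and $\matrice{m}^\top\matrice{u}_m = \matrice{u}_m$, and hence the two identities $\matrice{h}\matrice{m} = \matrice{m} - \matrice{u}_m$ and $\matrice{m}^\top\matrice{h} = \matrice{m}^\top - \matrice{u}_m$; these are the only algebraic handle on $\matrice{m}$, which is in general neither symmetric nor orthogonal.

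Trace cyclicity rewrites $\hsic(\matrice{k}^u,\matrice{m}\matrice{k}^v\matrice{m}^\top) = \trace{\matrice{m}^\top\matrice{h}\matrice{k}^u\matrice{h}\matrice{m}\matrice{k}^v}$, which by the two identities above collapses to $\trace{(\matrice{m}^\top - \matrice{u}_m)\matrice{k}^u(\matrice{m} - \matrice{u}_m)\matrice{k}^v}$; likewise $\hsic(\matrice{k}^u,\matrice{k}^v) = \trace{(\matrice{i}_m - \matrice{u}_m)\matrice{k}^u(\matrice{i}_m - \matrice{u}_m)\matrice{k}^v}$. Subtracting makes the shared $\trace{\matrice{u}_m\matrice{k}^u\matrice{u}_m\matrice{k}^v}$ piece drop out, and the surviving cross terms rearrange into $-[\trace{\matrice{u}_m\matrice{k}^u(\matrice{i}_m - \matrice{m})\matrice{k}^v} + \trace{(\matrice{i}_m - \matrice{m}^\top)\matrice{k}^u\matrice{u}_m\matrice{k}^v}]$, the two traces being equal after one cyclic rotation and by symmetry of the kernels, which accounts for the factor $2$. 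Now plug in the spectral decomposition $\matrice{k}^u = \sum_i\lambda_i\ve{u}_i\ve{u}_i^\top$: the product $\matrice{u}_m\matrice{k}^u = (1/m)\ve{1}\ve{1}^\top\matrice{k}^u$ reduces to $\ve{1}\tilde{\ve{u}}^\top$ with right factor exactly the vector $\tilde{\ve{u}} = \sum_i\lambda_i\overline{u}_i\ve{u}_i$ defined in the statement, and analogously $\matrice{k}^v\ve{1} = m\tilde{\ve{v}}$. One more round of cyclicity collapses the trace to the scalar $m\,\tilde{\ve{u}}^\top(\matrice{i}_m - \matrice{m})\tilde{\ve{v}}$, delivering the claimed $-2m\,\tilde{\ve{u}}^\top(\matrice{i}_m - \matrice{m})\tilde{\ve{v}}$.

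For the "hence" consequence when $\matrice{m}\in S^{\mbox{\tiny{e}}}_m$ is the transposition of $\ell,\ell'$, the vector $(\matrice{i}_m - \matrice{m})\tilde{\ve{v}}$ is supported only on coordinates $\ell,\ell'$ and equals $(\tilde{v}_\ell - \tilde{v}_{\ell'})(\ve{1}_\ell - \ve{1}_{\ell'})$, so the bilinear form factors as $(\tilde{u}_\ell - \tilde{u}_{\ell'})(\tilde{v}_\ell - \tilde{v}_{\ell'})$; unpacking $\tilde{u}_\ell = \sum_i\lambda_i\overline{u}_i u_{i\ell}$ turns this into the stated sign condition. The hard part will be the subtraction in the middle paragraph: the naive expansion produces a residue of the form $\trace{(\matrice{m}^\top\matrice{k}^u\matrice{m} - \matrice{k}^u)\matrice{k}^v}$ which is quadratic in $\matrice{m}$ and, lacking symmetry or orthogonality of $\matrice{m}$, has no obvious reason to vanish. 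Making this residue fold cleanly into the bilinear form in $\tilde{\ve{u}},\tilde{\ve{v}}$ using only the column-sum identity is the delicate point and the main technical effort.
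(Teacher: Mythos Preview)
Your trace-level approach is clean and in fact more direct than the paper's, which first expands $\hsic$ through the eigendecompositions and then subtracts. Both routes arrive at the same place: after using only column-stochasticity ($\matrice{m}^\top\ve{1}=\ve{1}$, hence $\matrice{u}_m\matrice{m}=\matrice{m}^\top\matrice{u}_m=\matrice{u}_m$), the $\trace{\matrice{u}_m\matrice{k}^u\matrice{u}_m\,\cdot}$ piece drops out and the two surviving cross terms collapse to $-2m\,\tilde{\ve{u}}^\top(\matrice{i}_m-\matrice{m})\tilde{\ve{v}}$. You are also right that this leaves a residual
\[
\trace{\matrice{k}^u\matrice{k}^v}-\trace{\matrice{k}^u\,\matrice{m}\matrice{k}^v\matrice{m}^\top}
\;=\;\trace{(\matrice{k}^u-\matrice{m}^\top\matrice{k}^u\matrice{m})\matrice{k}^v},
\]
quadratic in $\matrice{m}$, which column-stochasticity alone does not kill.

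Here is the point you should not try to push through: that residual does \emph{not} vanish in general, so the identity as stated is not correct for arbitrary $\matrice{m}\in\mathcal{M}_m$ (nor even for permutations). A two-line check: take $m=2$, $\matrice{k}^u=\mathrm{diag}(1,2)$, $\matrice{k}^v=\mathrm{diag}(1,3)$, and $\matrice{m}$ the swap of the two indices. Then $\matrice{h}\matrice{k}^u\matrice{h}$ and $\matrice{h}\matrice{k}^v\matrice{h}$ are both proportional to $\bigl(\begin{smallmatrix}1&-1\\-1&1\end{smallmatrix}\bigr)$, which is swap-invariant, so $\hsic(\matrice{k}^u,\matrice{k}^v)=\hsic(\matrice{k}^u,\matrice{m}\matrice{k}^v\matrice{m}^\top)=3$ and the left-hand side of the theorem is $0$. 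On the other hand $\tilde{\ve{u}}=(1/2,1)^\top$, $\tilde{\ve{v}}=(1/2,3/2)^\top$, and $-2m\,\tilde{\ve{u}}^\top(\matrice{i}_m-\matrice{m})\tilde{\ve{v}}=-2$, so the right-hand side is $-2$. The gap is exactly the residual: $\trace{\matrice{k}^u\matrice{k}^v}-\trace{\matrice{k}^u\matrice{m}\matrice{k}^v\matrice{m}^\top}=7-5=2$.

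The paper's own derivation hides this because it asserts the ``folklore'' identity $\trace{\matrice{k}^u\matrice{k}^v}=\sum_i\lambda_i\mu_i$, which is false unless $\matrice{k}^u$ and $\matrice{k}^v$ share an eigenbasis (the correct expression is $\sum_{i,j}\lambda_i\mu_j(\ve{u}_i^\top\ve{v}_j)^2$). That slip introduces a spurious $\matrice{m}$-independent term which, when the paper subtracts $\hsic_m$ from $\hsic$, cancels the genuine residual by accident. So your instinct that ``making this residue fold cleanly into the bilinear form using only the column-sum identity'' would be delicate is exactly right: it cannot be done, and the formula needs the additional term $\trace{\matrice{k}^u(\matrice{k}^v-\matrice{m}\matrice{k}^v\matrice{m}^\top)}$ on the right-hand side. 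Your argument up to that point is correct; the paper's is not.
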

\begin{proof}
Being symmetric, $\matrice{k}^u$ and $\matrice{k}^v$ can be
diagonalized as $\matrice{k}^u = \sum_{i} \lambda_i
\ve{u}_i\ve{u}_i^\top$ and $\matrice{k}^v = \sum_{i} \mu_i
\ve{v}_i\ve{v}_i^\top$. We use definition $\matrice{u}_m \defeq (1/m)\ve{1}\ve{1}^\top$ for short. The
following is folklore or can be can be checked after analytic
derivations that we omit:
\begin{eqnarray}
\trace{\matrice{u}_m \matrice{k}^u
  \matrice{k}^v} & = & m \left(\sum_i{\lambda_i \overline{u}_i
    \ve{u}_i}\right)\left(\sum_i{\mu_i \overline{v}_i \ve{v}_i}\right)
\nonumber \\
 & = & \trace{\matrice{k}^u \matrice{u}_m 
  \matrice{k}^v}\nonumber\\
\trace{\matrice{u}_m \matrice{k}^u\matrice{u}_m \matrice{k}^v} & = & m^2 \left(\sum_i
\lambda_i (\overline{u}_i)^2\right) \left(\sum_i
\mu_i (\overline{v}_i)^2\right) \nonumber\\
\trace{\matrice{k}^u \matrice{k}^v} & = & \sum_i \lambda_i \mu_i\:\:.
\end{eqnarray}
We thus get
\begin{eqnarray}
 \hsic(\matrice{k}^u, \matrice{k}^v) & = & \cipr{\matrice{k}^u}{\matrice{k}^v}{\matrice{u}_m}\nonumber\\
 & = & \trace{(\matrice{i}_m - \matrice{u}_m) \matrice{k}^u (\matrice{i}_m -
  \matrice{u}_m) \matrice{k}^v} \nonumber\\
 & = & \trace{\matrice{k}^u
  \matrice{k}^v} - \trace{\matrice{u}_m \matrice{k}^u
  \matrice{k}^v}- \trace{\matrice{k}^u \matrice{u}_m 
  \matrice{k}^v} + \trace{\matrice{u}_m \matrice{k}^u\matrice{u}_m \matrice{k}^v}\nonumber\\
 & = & \sum_i \lambda_i \mu_i - 2 m \cdot
\left(\sum_i{\lambda_i \overline{u}_i
    \ve{u}_i}\right)^\top \left(\sum_i{\mu_i \overline{v}_i
    \ve{v}_i}\right) \nonumber\\
 & &+ m^2 \cdot \left(\sum_i
\lambda_i (\overline{u}_i)^2\right) \cdot\left(\sum_i
\mu_i (\overline{v}_i)^2\right)\nonumber\\ 
 & = & m^2 \left( \frac{1}{m^2}\cdot \sum_i \lambda_i \mu_i - \frac{2}{m}\sum_{i,j}\lambda_i \overline{u}_i
    \ve{u}^\top_i \mu_j \overline{v}_j
    \ve{v}_j + \sum_{i,j}\lambda_i \mu_j  (\overline{u}_i)^2
    (\overline{v}_j)^2\right) \nonumber\\
 & = & m^2\left( \frac{1}{m^2}\cdot \sum_i \lambda_i \mu_i -
   \frac{1}{m^2}\cdot \sum_{i,j} \lambda_i \mu_j(\ve{u}_i^\top\ve{v}_j)^2  +
   \sum_{i,j}\lambda_i\mu_j\left(\frac{1}{m}\ve{u}_i^\top\ve{v}_j - \overline{u}_i\overline{v}_j\right)^2\right) \nonumber\\
 & = & m^2\left( \frac{1}{m^2}\cdot \ve{\lambda}^\top \left(
     \matrice{i} - \matrice{c}\right) \ve{\mu} + \ve{\lambda}^\top
   \matrice{j} \ve{\mu}\right)\label{lll2e}\\
 & = & \ve{\lambda}^\top \left(
     \left(\matrice{i} - \matrice{c}\right) + m^2 \cdot
     \matrice{j}\right) \ve{\mu}\:\:.\label{lp1p}
\end{eqnarray}
We have used here the square cosine matrix $\matrice{c}$ with
$\matrice{c}_{ij} \defeq \cos^2(\ve{u}_i, \ve{u}_j)$, and the square
correlation matrix $\matrice{j}$ with
$\matrice{j}_{ij} \defeq ((1/m)\ve{u}_i^\top\ve{v}_j - \overline{u}_i\overline{v}_j)^2$.
Now, suppose we perform \cp~$\process{T}$ with shuffling matrix
$\matrice{m}$. $\matrice{k}^v$ and its eigendecomposition become after
shuffling 
\begin{eqnarray}
\matrice{m}\matrice{k}^v \matrice{m}^\top & = & \sum_{i} \mu_i
(\matrice{m}\ve{v}_i)(\matrice{m}\ve{v}_i)^\top\:\:.
\end{eqnarray}
Remark that shuffling affects the order in the coordinate of
\textit{all} eigenvectors. So the difference between the two Hilbert-Schmidt Independence
Criteria (before - after shuffling) is:
\begin{eqnarray}
\lefteqn{\hsic(\matrice{k}^u, \matrice{k}^v) - \hsic(\matrice{k}^u,
\matrice{m}\matrice{k}^v \matrice{m}^\top)}\nonumber\\
 & = & 
   \sum_{i,j} \lambda_i
   \mu_j((\ve{u}_i^\top
   \matrice{m}\ve{v}_j)^2  - (\ve{u}_i^\top\ve{v}_j)^2)  \nonumber\\ 
& & - \sum_{i,j}\lambda_i\mu_j\left\{\left(\ve{u}_i^\top
    \matrice{m}\ve{v}_j -
    m\overline{u}_i\overline{v}_j\right)^2-\left(\ve{u}_i^\top\ve{v}_j
    - m\overline{u}_i\overline{v}_j\right)^2\right\}\nonumber\\
 & = & 
   \sum_{i,j} \lambda_i
   \mu_j \cdot \ve{u}_i^\top(\matrice{m}-\matrice{i}_m)\ve{v}_j \cdot \ve{u}_i^\top(\matrice{m}+\matrice{i}_m)\ve{v}_j  \nonumber\\ 
& & - \sum_{i,j}\lambda_i\mu_j\left\{
  \ve{u}_i^\top(\matrice{m}-\matrice{i}_m)\ve{v}_j \cdot
  (\ve{u}_i^\top(\matrice{m}+\matrice{i}_m)\ve{v}_j - 2 m\overline{u}_i\overline{v}_j) \right\}\nonumber\\
 & = & 2 m \cdot \sum_{i,j}(\lambda_i\overline{u}_i)
 \ve{u}_i^\top(\matrice{m}-\matrice{i}_m) (\mu_j \overline{v}_j) \nonumber\\
 & = & 2 m \cdot \left(\sum_{i}\lambda_i\overline{u}_i
 \ve{u}_i\right)^\top(\matrice{m}-\matrice{i}_m) \left(\sum_{i} \mu_i \overline{v}_i\ve{v}_i\right)\:\:.
\end{eqnarray}
We now remark that whenever $\matrice{m}\in S^{\mbox{\tiny{e}}}_m$, if
it permutes $\ell$ and $\ell'$ in $[m]$, then
$\ve{a}(\matrice{m}-\matrice{i}_m) \ve{b} = a_\ell(b_{\ell'} -
b_{\ell}) + a_{\ell'}(b_{\ell} -
b_{\ell'}) = - (a_{\ell} -
a_{\ell'}) (b_{\ell} -
b_{\ell'})$, so we get:
\begin{eqnarray}
\lefteqn{\hsic(\matrice{k}^u, \matrice{k}^v) - \hsic(\matrice{k}^u,
\matrice{m}\matrice{k}^v \matrice{m}^\top)}\nonumber\\
 & = & - 2m \left( \sum_i \lambda_i\overline{u}_i
 (u_{i\ell}-u_{i\ell'})\right) \left( \sum_i \mu_i\overline{v}_i
 (v_{i\ell}-v_{i\ell'})\right)\:\:,
\end{eqnarray}
and we get ineq. (\ref{ppp3}).
\end{proof}
This ends the proof of Theorem \ref{thspectrA}.

\subsection{Proof of Theorem \ref{thspectrB}}\label{proof_thm_thspectrB}

The Theorem is a direct consequence of the following Theorem.
\begin{theorem}\label{thhsic}
Let $\matrice{k}^u$ and $\matrice{k}^v$ be two kernel functions over
${\mathcal{S}}$. Then for any elementary permutation $\matrice{m} \in
S^{\mbox{\tiny{e}}}_m$ that permutes $\ell$ and $\ell'$ in $[m]$,
\begin{eqnarray}
\hsic(\matrice{k}^u, \matrice{m}\matrice{k}^v \matrice{m}^\top) -
\hsic(\matrice{k}^u, \matrice{k}^v) & = & - 2m\cdot \cov(\ve{\updelta}_{\ell\ell'}^u,
\ve{\updelta}_{\ell\ell'}^v)  + \mathscr{R}^{u,v}_{\ell\ell'}
\:\:,\label{fff1}
\end{eqnarray}
with 
\begin{eqnarray}
\mathscr{R}^{u,v}_{\ell\ell'} & \defeq & (\matrice{k}^u_{\ell \ell} - \matrice{k}^u_{\ell' \ell} )(\matrice{k}^v_{\ell \ell} -
 \matrice{k}^v_{\ell' \ell}) +  (\matrice{k}^u_{\ell' \ell'} - \matrice{k}^u_{\ell' \ell} )(\matrice{k}^v_{\ell' \ell'} -
 \matrice{k}^v_{\ell' \ell}) \:\:.
\end{eqnarray} 
Furthermore, the uniform
sampling of elementary permutations in $S^{\mbox{\tiny{e}}}_m$ satisfies:
\begin{eqnarray}
\expect_{\matrice{m} \sim S^{\mbox{\tiny{e}}}_m}
  \left[\hsic(\matrice{k}^u, \matrice{m}\matrice{k}^v
    \matrice{m}^\top) \right] & = & \left(1 - \frac{8}{m-1}\right)
  \cdot \hsic(\matrice{k}^u, \matrice{k}^v) + \frac{8}{m-1}\cdot
  \mathscr{R}^{u,v}\:\:,\label{fff2}
\end{eqnarray}
with
\begin{eqnarray}
\mathscr{R}^{u,v} & \defeq & \sum_{i}\matrice{k}^u_{i i}
\matrice{k}^v_{i i}  - \frac{1}{m}\cdot \left(\frac{\sum_{i}\matrice{k}^u_{i i}
\matrice{k}^v_{. i} +\sum_{i}\matrice{k}^u_{. i}
\matrice{k}^v_{i i}}{2}\right)\:\:.\label{defrrXX}
\end{eqnarray}
Here, when replacing an index notation by a point,
``.'', we denote a sum over all possible values of this index.
\end{theorem}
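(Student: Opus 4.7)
The plan is to exploit the rank-one structure of an elementary transposition and then average over pairs. Writing $\matrice{s}\defeq \ve{1}_\ell - \ve{1}_{\ell'}$, the elementary permutation swapping $\ell$ and $\ell'$ satisfies $\matrice{m} = \matrice{i}_m - \matrice{s}\matrice{s}^\top$, so the perturbation to $\matrice{k}^v$ is the rank-at-most-three update
\begin{eqnarray*}
\matrice{m}\matrice{k}^v\matrice{m}^\top - \matrice{k}^v & = & -\matrice{s}\ve{c}^\top - \ve{c}\matrice{s}^\top + (\matrice{s}^\top\matrice{k}^v\matrice{s})\matrice{s}\matrice{s}^\top,
\end{eqnarray*}
with $\ve{c}\defeq \matrice{k}^v\matrice{s}$. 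Since $\hsic(\matrice{k}^u, \matrice{k}^v) = \trace{\matrice{h}\matrice{k}^u\matrice{h}\matrice{k}^v}$ with $\matrice{h}\defeq \matrice{i}_m - \matrice{u}_m$, the HSIC difference is the trace of $\matrice{h}\matrice{k}^u\matrice{h}$ against this update. The key algebraic simplification is that $\matrice{h}\matrice{s} = \matrice{s}$, because $\ve{1}^\top\matrice{s}=0$; this kills one of the two centerings and leaves
\begin{eqnarray*}
\hsic(\matrice{k}^u, \matrice{m}\matrice{k}^v\matrice{m}^\top) - \hsic(\matrice{k}^u, \matrice{k}^v) & = & -2\,\ve{c}^\top\matrice{h}(\matrice{k}^u\matrice{s}) + (\matrice{s}^\top\matrice{k}^u\matrice{s})(\matrice{s}^\top\matrice{k}^v\matrice{s}).
\end{eqnarray*}

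Defining $\ve{\updelta}^u_{\ell\ell'}\defeq \matrice{k}^u\matrice{s}$ (whose $i$-th coordinate is $\matrice{k}^u_{i\ell}-\matrice{k}^u_{i\ell'}$) and similarly $\ve{\updelta}^v_{\ell\ell'}\defeq \ve{c}$, the inner product $\ve{c}^\top\matrice{h}(\matrice{k}^u\matrice{s})$ equals $m$ times the sample covariance of $\ve{\updelta}^u_{\ell\ell'}$ and $\ve{\updelta}^v_{\ell\ell'}$ measured on the indices outside $\{\ell,\ell'\}$; isolating the two boundary indices $i\in\{\ell,\ell'\}$ and combining them with the quadratic term $(\matrice{s}^\top\matrice{k}^u\matrice{s})(\matrice{s}^\top\matrice{k}^v\matrice{s})$ yields, after cancellations that exploit the kernel symmetries $\matrice{k}^{\bullet}_{\ell'\ell}=\matrice{k}^{\bullet}_{\ell\ell'}$, exactly the expression for $\mathscr{R}^{u,v}_{\ell\ell'}$ appearing in eq.~(\ref{fff1}).

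For eq.~(\ref{fff2}), I would average the pointwise identity over $\{\ell,\ell'\}\in \binom{[m]}{2}$, equivalently over ordered pairs divided by $m(m-1)$. Expanding both $\cov(\ve{\updelta}^u_{\ell\ell'},\ve{\updelta}^v_{\ell\ell'})$ and $\mathscr{R}^{u,v}_{\ell\ell'}$ into sums of products $\matrice{k}^u_{ij}\matrice{k}^v_{i'j'}$, every bilinear sum $\sum_{\ell\neq\ell'}(\cdot)$ reduces by elementary index-counting to one of five canonical quantities: $\trace{\matrice{k}^u\matrice{k}^v}$, $\sum_i\matrice{k}^u_{ii}\matrice{k}^v_{ii}$, $\sum_i\matrice{k}^u_{.i}\matrice{k}^v_{i.}$, $\sum_i\matrice{k}^u_{ii}\matrice{k}^v_{.i}$ (plus its $u\leftrightarrow v$ twin), and $\matrice{k}^u_{..}\matrice{k}^v_{..}$. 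Using the standard identity $\hsic(\matrice{k}^u,\matrice{k}^v) = \trace{\matrice{k}^u\matrice{k}^v} - (2/m)\sum_i \matrice{k}^u_{.i}\matrice{k}^v_{i.} + (1/m^2)\matrice{k}^u_{..}\matrice{k}^v_{..}$ (obtained by expanding $\matrice{h}=\matrice{i}_m-\matrice{u}_m$), the remaining combination collapses into the convex form $(1-8/(m-1))\hsic(\matrice{k}^u,\matrice{k}^v) + (8/(m-1))\mathscr{R}^{u,v}$ with $\mathscr{R}^{u,v}$ as in eq.~(\ref{defrrXX}).

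The hardest part is the bookkeeping in this second step: roughly ten distinct bilinear combinations of $\matrice{k}^u$ and $\matrice{k}^v$ contribute to the averaged difference, and the exact coefficient $8/(m-1)$ materializes only after a precise cancellation between the covariance-average and the $\mathscr{R}_{\ell\ell'}$-average. The discipline is to sort every term in the expansion by whether its indices range freely over $[m]$, are pinned to $\{\ell,\ell'\}$, or are summed outside $\{\ell,\ell'\}$; once that classification is in place, regrouping the survivors against the canonical form of $\hsic$ above reduces to a mechanical check.
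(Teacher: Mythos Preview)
Your overall strategy is genuinely different from the paper's and is cleaner: the paper works entry-by-entry from the expansion $\hsic=\sum_{i,i'}\matrice{k}^u_{ii'}\matrice{k}^v_{ii'}-(2/m)\sum_i\matrice{k}^u_{i.}\matrice{k}^v_{i.}+(1/m^2)\matrice{k}^u_{..}\matrice{k}^v_{..}$ and tracks which entries of $\matrice{k}^v$ move under the swap, whereas you use the rank-one identity $\matrice{m}=\matrice{i}_m-\matrice{s}\matrice{s}^\top$ together with $\matrice{h}\matrice{s}=\matrice{s}$. Your displayed identity
\[
\hsic(\matrice{k}^u,\matrice{m}\matrice{k}^v\matrice{m}^\top)-\hsic(\matrice{k}^u,\matrice{k}^v)
= -2\,(\ve{\updelta}^v_{\ell\ell'})^\top\matrice{h}\,\ve{\updelta}^u_{\ell\ell'}
+(\matrice{s}^\top\matrice{k}^u\matrice{s})(\matrice{s}^\top\matrice{k}^v\matrice{s})
\]
is correct, and the first term is \emph{exactly} $-2m\,\cov(\ve{\updelta}^u_{\ell\ell'},\ve{\updelta}^v_{\ell\ell'})$ over \emph{all} $m$ indices --- not ``on the indices outside $\{\ell,\ell'\}$'' as you wrote. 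So the rank-one route already delivers the $-2m\,\cov$ piece of eq.~(\ref{fff1}) with no splitting needed. For eq.~(\ref{fff2}), your plan (average over ordered pairs, reduce everything to the five canonical bilinear sums, then rewrite in terms of $\hsic$) is exactly what the paper does; the only difference is packaging.

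There is, however, a real gap in the step where you claim the residual collapses to the stated $\mathscr{R}^{u,v}_{\ell\ell'}$. Write $a=\matrice{k}^u_{\ell\ell}-\matrice{k}^u_{\ell'\ell}$, $b=\matrice{k}^u_{\ell'\ell'}-\matrice{k}^u_{\ell'\ell}$, $c=\matrice{k}^v_{\ell\ell}-\matrice{k}^v_{\ell'\ell}$, $d=\matrice{k}^v_{\ell'\ell'}-\matrice{k}^v_{\ell'\ell}$. Your quadratic term equals $(\matrice{s}^\top\matrice{k}^u\matrice{s})(\matrice{s}^\top\matrice{k}^v\matrice{s})=(a+b)(c+d)$, while the theorem's $\mathscr{R}^{u,v}_{\ell\ell'}=ac+bd$; the difference $ad+bc$ does not vanish, and no ``boundary isolation'' can produce it, because $-2m\,\cov$ already uses all $m$ coordinates. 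A two-point check ($m=2$) makes this concrete: any swap of two points leaves $\hsic$ unchanged, and indeed $-2m\,\cov+(a+b)(c+d)=0$ identically, whereas $-2m\,\cov+ac+bd=-(ad+bc)\neq 0$ in general. In other words, your rank-one computation is right, but it does not reproduce the stated residual; the discrepancy traces back to the paper's own first displayed line for the difference, which omits the diagonal contributions $i=i'=\ell$ and $i=i'=\ell'$ from the change in $\sum_{i,i'}\matrice{k}^u_{ii'}\matrice{k}^v_{ii'}$. So the ``cancellations'' you invoke do not exist; if you carry your argument through honestly you obtain the residual $(\matrice{s}^\top\matrice{k}^u\matrice{s})(\matrice{s}^\top\matrice{k}^v\matrice{s})$, and the averaging step (and hence the constant in front of $\mathscr{R}^{u,v}$) has to be redone accordingly.
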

\begin{proof}
We first decompose $\hsic(\matrice{k}^u, \matrice{k}^v)$:
\begin{eqnarray}
\hsic(\matrice{k}^u, \matrice{k}^v) & = & \sum_{i,i'}
\matrice{k}^u_{ii'}\matrice{k}^v_{ii'} - \frac{2}{m}\cdot \sum_i
\matrice{k}^u_{i.}\matrice{k}^v_{i.} +
\frac{1}{m^2}\matrice{k}^u_{..}\matrice{k}^u_{..}\:\:.
\end{eqnarray}
for any $(\ell, \ell')\in \ii{2}{m}$. For any $\matrice{m} \in S^{\mbox{\tiny{e}}}_m$ denoting an elementary
permutation $\varsigma$ of the features in ${\mathcal{F}}^\tr$ such that
${\mathcal{V}} \subseteq {\mathcal{F}}^\tr$ and $\varsigma(\ell) =
\ell'$, $\varsigma(\ell') = \ell$, we obtain:
\begin{eqnarray}
\lefteqn{\hsic(\matrice{k}^u, \matrice{m}\matrice{k}^v \matrice{m}^\top) -
\hsic(\matrice{k}^u, \matrice{k}^v)}\nonumber\\
 & = & 2\cdot \left( \sum_{i \neq
    \ell, \ell'}
\matrice{k}^u_{\ell i}\matrice{k}^v_{\ell' i} +\sum_{i \neq
    \ell, \ell'}
\matrice{k}^u_{\ell' i}\matrice{k}^v_{\ell i} - \sum_{i \neq
    \ell, \ell'}
\matrice{k}^u_{\ell i}\matrice{k}^v_{\ell i} - \sum_{i \neq
    \ell, \ell'}
\matrice{k}^u_{\ell' i}\matrice{k}^v_{\ell' i}\right)\nonumber\\
 & & -\frac{2}{m} \cdot \matrice{k}^u_{\ell .}\matrice{k}^v_{\ell'
   .}-\frac{2}{m} \cdot \matrice{k}^u_{\ell' .}\matrice{k}^v_{\ell
   .}+\frac{2}{m} \cdot \matrice{k}^u_{\ell .}\matrice{k}^v_{\ell
   .}+\frac{2}{m} \cdot \matrice{k}^u_{\ell' .}\matrice{k}^v_{\ell'
   .}\nonumber\\
& = & -2 \left(\sum_{i}
(\matrice{k}^u_{\ell i} - \matrice{k}^u_{\ell' i} )(\matrice{k}^v_{\ell i} - \matrice{k}^v_{\ell'
  i}) - \frac{1}{m}\cdot (\matrice{k}^u_{\ell .}-\matrice{k}^u_{\ell'
  .})(\matrice{k}^v_{\ell .}-\matrice{k}^v_{\ell' .})\right)
\nonumber\\
 & &+  (\matrice{k}^u_{\ell \ell} - \matrice{k}^u_{\ell' \ell} )(\matrice{k}^v_{\ell \ell} -
 \matrice{k}^v_{\ell' \ell}) +  (\matrice{k}^u_{\ell' \ell'} - \matrice{k}^u_{\ell' \ell} )(\matrice{k}^v_{\ell' \ell'} -
 \matrice{k}^v_{\ell' \ell}) \nonumber\\
 & = & - 2m\cdot \cov(\ve{\updelta}_{\ell\ell'}^u,
\ve{\updelta}_{\ell\ell'}^v) + \mathscr{R}^{u,v}_{\ell\ell'}\:\:,\nonumber
\end{eqnarray}
which is eq. (\ref{fff1}). We also have:
\begin{eqnarray}
\expect_{\matrice{m} \sim S^{\mbox{\tiny{e}}}_m} \left[\sum_{i}
(\matrice{k}^u_{\ell i} - \matrice{k}^u_{\ell' i} )(\matrice{k}^v_{\ell i} - \matrice{k}^v_{\ell'
  i})\right] & = & \frac{4}{m}\cdot \sum_{i,i'}
\matrice{k}^u_{ii'}\matrice{k}^v_{ii'} - \frac{4}{m(m-1)}\cdot \sum_i
\matrice{k}^u_{i.}\matrice{k}^v_{i.} \nonumber\\
 & & + \frac{4}{m(m-1)}\cdot \sum_{i,i'}
\matrice{k}^u_{ii'}\matrice{k}^v_{ii'} \nonumber\\
 & = & \frac{4}{m-1}\cdot \sum_{i,i'}
\matrice{k}^u_{ii'}\matrice{k}^v_{ii'} - \frac{4}{m(m-1)}\cdot \sum_i
\matrice{k}^u_{i.}\matrice{k}^v_{i.} \:\:,\nonumber
\end{eqnarray}
\begin{eqnarray}
\expect_{\matrice{m} \sim S^{\mbox{\tiny{e}}}_m} \left[(\matrice{k}^u_{\ell .}-\matrice{k}^u_{\ell'
  .})(\matrice{k}^v_{\ell .}-\matrice{k}^v_{\ell' .})\right] & = &
\frac{4}{m} \cdot \sum_i \matrice{k}^u_{i .}\matrice{k}^v_{i .}-
\frac{4}{m(m-1)}\cdot \sum_{i \in [m]} \matrice{k}^u_{i .}
\sum_{i' \in [m] \backslash\{i\}}  \matrice{k}^v_{i'
  .}\nonumber\\
 & = & \frac{4}{m} \cdot \sum_i \matrice{k}^u_{i
   .}\matrice{k}^v_{i .} -
\frac{4}{m(m-1)}\cdot \sum_{i} \matrice{k}^u_{i
  .}\cdot\left(\matrice{k}^v_{..} - \matrice{k}^v_{i
  .}\right)\nonumber\\
 & = & \frac{4}{m} \cdot \sum_i \matrice{k}^u_{i
   .}\matrice{k}^v_{i .} -
\frac{4}{m(m-1)}\matrice{k}^u_{..}\matrice{k}^v_{..} + \frac{4}{m(m-1)} \cdot \sum_i \matrice{k}^u_{i
   .}\matrice{k}^v_{i .} \nonumber\\
 & = & \frac{4}{m-1} \cdot \sum_i \matrice{k}^u_{i
   .}\matrice{k}^v_{i .} -
\frac{4}{m(m-1)}\matrice{k}^u_{..}\matrice{k}^v_{..}\:\:,\nonumber
\end{eqnarray}
and finally
\begin{eqnarray}
\expect_{\matrice{m} \sim S^{\mbox{\tiny{e}}}_m}
\left[\mathscr{R}^{u,v}_{\ell\ell'}\right] & = & \frac{8}{m-1}\cdot \left[\sum_{i}\matrice{k}^u_{i i}
\matrice{k}^v_{i i}  - \frac{1}{m}\cdot \left(\frac{\sum_{i}\matrice{k}^u_{i i}
\matrice{k}^v_{. i} +\sum_{i}\matrice{k}^u_{. i}
\matrice{k}^v_{i i}}{2} \right)\right]\nonumber\\
 & \defeq & \frac{8}{m-1}\cdot \mathscr{R}^{u,v}\:\:,\label{equ}
\end{eqnarray}
since
\begin{eqnarray}
\lefteqn{\expect_{\matrice{m} \sim S^{\mbox{\tiny{e}}}_m} \left[ (\matrice{k}^u_{\ell \ell} - \matrice{k}^u_{\ell' \ell} )(\matrice{k}^v_{\ell \ell} -
 \matrice{k}^v_{\ell' \ell})\right]}\nonumber\\
 & = &
\frac{4}{m(m-1)}\sum_{i}\sum_{i'\neq i}\matrice{k}^u_{i i}
\matrice{k}^v_{i i} - \frac{2}{m(m-1)}\sum_{i}\sum_{i'\neq i}\matrice{k}^u_{i i}
\matrice{k}^v_{i' i}- \frac{2}{m(m-1)}\sum_{i}\sum_{i'\neq i}\matrice{k}^u_{i' i}
\matrice{k}^v_{i i}\nonumber\\
 & = &
\frac{4}{m}\sum_{i}\matrice{k}^u_{i i}
\matrice{k}^v_{i i} - \frac{2}{m(m-1)}\sum_{i}\matrice{k}^u_{i
  i}(\matrice{k}^v_{. i} - \matrice{k}^v_{i i}) -
\frac{2}{m(m-1)}\sum_{i}(\matrice{k}^u_{i.} - \matrice{k}^u_{ii})
\matrice{k}^v_{i i}\nonumber\\
 & = &
\frac{4}{m}\sum_{i}\matrice{k}^u_{i i}
\matrice{k}^v_{i i} + \frac{4}{m(m-1)}\sum_{i}\matrice{k}^u_{i i}
\matrice{k}^v_{i i} - \frac{2}{m(m-1)}\sum_{i}\matrice{k}^u_{i
  i}\matrice{k}^v_{. i}- \frac{2}{m(m-1)}\sum_{i}\matrice{k}^u_{i
  .}\matrice{k}^v_{i i}\nonumber\\
 & = &
\frac{4}{m-1}\sum_{i}\matrice{k}^u_{i i}
\matrice{k}^v_{i i} - \frac{2}{m(m-1)}\sum_{i}\matrice{k}^u_{i
  i}\matrice{k}^v_{. i}- \frac{2}{m(m-1)}\sum_{i}\matrice{k}^u_{i
  .}\matrice{k}^v_{i i}\nonumber\\
 & = &\expect_{\matrice{m} \sim S^{\mbox{\tiny{e}}}_m} \left[ (\matrice{k}^u_{\ell' \ell'} - \matrice{k}^u_{\ell' \ell} )(\matrice{k}^v_{\ell' \ell'} -
 \matrice{k}^v_{\ell' \ell}) \right]\:\:.\nonumber
\end{eqnarray}
So we obtain
\begin{eqnarray}
\lefteqn{\expect_{\matrice{m} \sim S^{\mbox{\tiny{e}}}_m} \left[\hsic(\matrice{k}^u, \matrice{m}\matrice{k}^v \matrice{m}^\top) -
\hsic(\matrice{k}^u, \matrice{k}^v) \right]}\nonumber\\
 & = &  -\frac{8}{m-1}\cdot \sum_{i,i'}
\matrice{k}^u_{ii'}\matrice{k}^v_{ii'} + \frac{8}{m(m-1)}\cdot \sum_i
\matrice{k}^u_{i.}\matrice{k}^v_{i.} \nonumber\\
 & & + \frac{8}{m(m-1)} \cdot \sum_i \matrice{k}^u_{i
   .}\matrice{k}^v_{i .} -
\frac{8}{m^2(m-1)}\matrice{k}^u_{..}\matrice{k}^v_{..} + \frac{8}{m-1}\cdot \mathscr{R}^{u,v}\nonumber\\
 & = & -\frac{8}{m-1} \cdot \hsic(\matrice{k}^u, \matrice{k}^v)  + \frac{8}{m-1}\cdot \mathscr{R}^{u,v}\:\:.\nonumber
\end{eqnarray}
This ends the proof of Theorem \ref{thhsic}.
\end{proof}
When kernel functions have unit diagonal (such as for the Gaussian
kernel), eq. (\ref{defrrXX}) simplifies to:
\begin{eqnarray}
\mathscr{R}^{u,v} & \defeq & m\left(1 - \frac{1}{m^2}\cdot
  \left(\frac{\matrice{k}^u_{..} + \matrice{k}^v_{..}}{2}\right)\right)\:\:.\label{defrr}
\end{eqnarray}
Hence, provided we perform $T \defeq \epsilon m$ elementary
permutations, there exists a sequence of such permutations such that the
composition $\matrice{m}_* \defeq \matrice{m}_T \matrice{m}_{T-1}
\cdots \matrice{m}_1$ satisfies:
\begin{eqnarray}
\hsic(\matrice{k}^u, \matrice{m}_*\matrice{k}^v \matrice{m}_*^\top) &
\leq & \left(1 - \frac{8}{m-1}\right)^{\epsilon m}
  \cdot \hsic(\matrice{k}^u, \matrice{k}^v) + \left[1 - \left(1 -
      \frac{8}{m-1}\right)^{\epsilon m}\right]\cdot \mathscr{R}^{u,v} \nonumber\\
 & \leq & \alpha(\epsilon)\cdot \hsic(\matrice{k}^u,
 \matrice{k}^v) + \left(1 - \alpha(\epsilon)\right) \cdot \mathscr{R}^{u,v} \:\:,
\end{eqnarray}
with
\begin{eqnarray}
\alpha(\epsilon) & \defeq & \exp\left(-8 \epsilon\right)\:\:,
\end{eqnarray}
as long as $\hsic(\matrice{k}^u, \matrice{k}^v)\geq
\mathscr{R}^{u,v}$. We have used the fact that
\begin{eqnarray}
\left(1 - \frac{8}{m-1}\right)^{\epsilon m} & \leq & \exp\left( -
  \frac{8\epsilon m}{m-1}\right)\nonumber\\
 & \leq & \exp(-8\epsilon)\:\:.
\end{eqnarray}
This achieves the proof of Theorem \ref{thspectrB}.

\subsection{The Cornia-Mooij model and results}\label{pres_dcm}

\begin{figure}[t]
\begin{center}
\includegraphics[width=0.5\columnwidth]{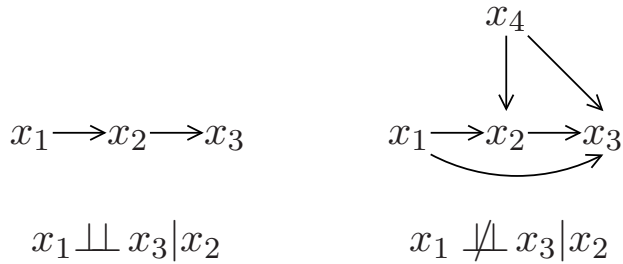}
\caption{The Cornia-Mooij model \citep{cmT2}. Left: belief, right: true model.}
\label{f-settingCM}
\end{center}
\end{figure}

We now show how to trick statistical tests into keeping independence \textit{and}
then incur arbitrarily large errors in estimating causal effects,
\textit{via} the use of \cp s. 
The
model we refer to is the Cornia-Mooij (CM) model \citep{cmT2},
shown in Figure \ref{f-settingCM}. In
the CM model, there are $d=3$
observation variables, and a true model which relies on a weak
conditional dependence $x_1 \notindep x_3 | x_2$. \citep{cmT2} show that \textit{if} one keeps the
independence assumption $H_0$ that $x_1 \indep x_3 | x_2$, this can lead to very high causal
estimation errors, as measured by
  $|\expect[x_3|x_2]-\expect[x_3|\mathrm{do}(x_2)]|/|x_2|$
  \citep{cmT2}. We show that the \cp~is precisely able to trick
  statistics into keeping $H_0$. 

There is a hidden confounder $x_4$, which is assumed
to be independent from $x_1$. The true model makes the following
statistical dependence assumptions:
\begin{itemize}
\item $x_1 \notindep x_2$,
\item $x_2 \notindep x_3$,
\item and the most important one, which we scramble through the \cp,
  $x_1 \notindep x_3 | x_2$.
\end{itemize}
We chose this simple model because (a) it belongs to the few worst-case
models for causality analysis, and (b) it shows, in
addition to jamming (non)linear correlations, how \cp~can
also jam partial correlations. 
In the CM model, there are $d=3$
observation variables, and a true model which relies on a weak
conditional dependence $x_1 \notindep x_3 | x_2$. \citep{cmT2} show that \textit{if} one keeps the
independence assumption $H_0$ that $x_1 \indep x_3 | x_2$, this can lead to very high causal
estimation errors\footnote{As measured by
  $|\expect[x_3|x_2]-\expect[x_3|\mathrm{do}(x_2)]|/|x_2|$
  \citep{cmT2}.}. We show that it is possible, through a \cp, to trick statistics into \textit{keeping} $H_0$ \textit{as well}. In the
following, ${\mathcal{F}}_\tr =
\{x_3\}$. It is shown in \citep{cmT2} that
  $x_1 \indep x_3 | x_2$ iff the partial
    correlation  $\rho_{(13) \cdot2}\defeq (\rho_{13} - \rho_{12}\rho_{23})/\sqrt{(1 -
    \rho_{12}^2)(1 - \rho_{23}^2)}$ vanishes.
Assuming $\rho_{(13) \cdot2}$ is large enough in the dataset we
have (so that we would reject $H_0$ from observing ${\mathcal{S}}$), we show how to reduce it
through a sequence of \cp s, using a similar strategy as in Theorem
\ref{thspectrB}, the main difference being that we rely on 
block-class permutations. For any $\varsigma  \in S_m$, notation $3^\varsigma$ indicates column variable 3 shuffled. We
assume $\rho_{(13) \cdot2} > 0$ (the same
analysis can be done if $\rho_{(13) \cdot2} < 0$). 

\begin{theorem}\label{thmcm1}
Suppose that there exists $\epsilon> 0$ such that $\rho^2_{12} \leq
1-\epsilon$ and $\rho^2_{23^{\varsigma}} \leq
1-\epsilon$ for any $\varsigma
\in S^*_m$. Then there exists $T>0$
and a
sequence of $T$ elementary permutations in $S^*_m$ such that
$\rho_{(13^\varsigma) \cdot2}$ is strictly decreasing in the sequence
and meets at the end $\rho_{(13^\varsigma) \cdot2} \leq
\mathscr{R}$ with 
\begin{eqnarray*}
\mathscr{R} & \defeq & (1-\epsilon)^{-1}\cdot p_+(1-p_+) \cdot\left(\tilde{\mu}_1 -
  \rho_{12} \cdot \tilde{\mu}_2\right)\cdot\tilde{\mu}_3\:\:,
\end{eqnarray*}
and $\tilde{\mu}_{j}\defeq (1/(2\sqrt{v_j})) \cdot \sum_{y'} y'
\expect_{(\ve{x}, y) \sim {\mathcal{S}}}[x_j | y=y']$.
\end{theorem}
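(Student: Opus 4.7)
The plan is to argue by greedy descent with elementary block-class transpositions, showing that at each step one can either strictly decrease $\rho_{(13^\varsigma) \cdot 2}$ or else one has already reached a configuration whose partial correlation is bounded by $\mathscr{R}$. First I record the invariants of the orbit: since only column $3$ is permuted and $\varsigma\in S^*_m$ acts within classes, the quantity $\rho_{12}$ is fixed, the marginal mean and variance of $x_3$ are fixed (same multiset), and each conditional mean $\expect[x_3\mid y=y']$ is fixed as well, since block-class permutations preserve the class partition of the shuffled column. Consequently $v_1, v_2, v_3$ and all three $\tilde{\mu}_j$'s are constant along the sequence. Setting $a_i \defeq x_{i1}/\sqrt{v_1} - \rho_{12}\, x_{i2}/\sqrt{v_2}$ and $b_i \defeq x_{i3}/\sqrt{v_3}$, the partial-correlation numerator factors as
\[
N(\varsigma) \defeq \rho_{13^\varsigma} - \rho_{12}\rho_{23^\varsigma} \;=\; \frac{1}{m}\sum_i a_i\, b_{\varsigma(i)} \;-\; \expect[a]\,\expect[b],
\]
so the whole $\varsigma$-dependence of the numerator sits in a single block-class assignment cost.

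Second, I would compute the increment induced by an elementary in-class swap $\matrice{m}^{\ell\ell'}$ (with $y_\ell=y_{\ell'}$): it shifts $N$ by $-(1/m)(a_\ell - a_{\ell'})(b_{\varsigma(\ell)} - b_{\varsigma(\ell')})$. Whenever some in-class pair satisfies $(a_\ell - a_{\ell'})(b_{\varsigma(\ell)} - b_{\varsigma(\ell')}) > 0$, applying that swap strictly decreases $N$. The hypothesis $\rho_{12}^2, \rho^2_{23^\varsigma} \leq 1-\epsilon$ bounds the denominator $\sqrt{(1-\rho_{12}^2)(1-\rho_{23^\varsigma}^2)}$ away from $0$ uniformly over the orbit, so strict decrease of $N$ lifts to strict decrease of $\rho_{(13^\varsigma)\cdot 2}$. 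A greedy (bubble-sort style) procedure that repeatedly picks such a pair must terminate, since $N$ takes only finitely many values on $\mathcal{S}$.

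Third, at termination no in-class pair satisfies the positivity condition, i.e.\ within each class $b\in\{+,-\}$ the tuples $(a_i)_{i:y_i=b}$ and $(b_{\varsigma(i)})_{i:y_i=b}$ are anti-sorted; in particular, the per-class assignment cost is at its minimum over in-class permutations and is therefore at most its average over such permutations, namely $n_b\,\bar a_b\,\bar b_b$. Summing over classes, dividing by $m$, and subtracting $\expect[a]\expect[b]$ via the identity $p_+\bar u_+\bar v_+ + p_-\bar u_-\bar v_- - \bar u\bar v = p_+p_-(\bar u_+-\bar u_-)(\bar v_+-\bar v_-)$ yields
\[
N(\varsigma^*) \;\leq\; p_+(1-p_+)\,(\bar a_+-\bar a_-)(\bar b_+-\bar b_-) \;\propto\; p_+(1-p_+)(\tilde{\mu}_1 - \rho_{12}\tilde{\mu}_2)\,\tilde{\mu}_3,
\]
after plugging in $\bar a_+ - \bar a_- = 2(\tilde{\mu}_1 - \rho_{12}\tilde{\mu}_2)$ and $\bar b_+ - \bar b_- = 2\tilde{\mu}_3$. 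Dividing by the uniformly bounded denominator produces the stated $\mathscr{R}$ up to the $(1-\epsilon)^{-1}$ factor absorbing numeric constants and the denominator lower bound. The main obstacle I foresee is coupling the monotone descent of $N$ with the non-monotone behaviour of the denominator: one does not want an intermediate swap to inflate the denominator enough to undo the numerator's progress. The cleanest way around this is exactly the uniform lower bound given by the hypothesis on $\rho^2_{23^\varsigma}$, which lets the strict monotonicity argument be carried out purely at the numerator level.
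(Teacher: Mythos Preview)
Your proposal is correct and follows essentially the same route as the paper: both reduce to the numerator $N(\varsigma)=\rho_{13^\varsigma}-\rho_{12}\rho_{23^\varsigma}$ via the transformed column $a_i=x_{i1}/\sqrt{v_1}-\rho_{12}x_{i2}/\sqrt{v_2}$, compute the in-class swap increment as $-(1/m)(a_\ell-a_{\ell'})(b_{\varsigma(\ell)}-b_{\varsigma(\ell')})$, run a greedy descent, and bound the terminal value through the class-covariance decomposition $c = p c^+ + (1-p)c^- + p(1-p)(\mu^+ -\mu^-)^2$-type identity together with the $(1-\epsilon)$ denominator floor. The only cosmetic difference is at termination: the paper argues by averaging over all in-class swaps to force each within-class covariance $\leq 0$ by contradiction, whereas you invoke the rearrangement inequality (no beneficial swap $\Rightarrow$ anti-sorted $\Rightarrow$ minimum $\leq$ average); your closing caveat that only $N$, not $\rho_{(13^\varsigma)\cdot 2}$ itself, is shown to be strictly monotone applies equally to the paper's argument.
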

\begin{proof}

The Theorem is a direct consequence of the following Lemma.
\begin{lemma}\label{lemperm}
Let $c_{jk}$ denote the covariance between columns $j$ and $k$, $\mu^b_l \defeq (1/m_b)\sum_{i : y_i = b} x_{il}$ and
$p \defeq m_+/m$. Suppose that $j\in {\mathcal{F}}_\re$ and $k\in
{\mathcal{F}}_\tr$. Then as long as 
\begin{eqnarray}
c_{jk} & > & p(1-p) \cdot (\mu^+_j - \mu^-_j) \cdot (\mu^+_k - \mu^-_k)\:\:,
\end{eqnarray}
there always exist $\varsigma \in S^*_m$ such that
$\rho_{jk^\varsigma} < \rho_{jk} $, where $k^\varsigma$ denote column
variable $k$ shuffled according to $\varsigma$ in the corresponding \cp. 
\end{lemma}
\begin{proof}
Let us denote for short $\ve{c}_j \in {\mathbb{R}}^m$ the $j^{th}$ feature
column. We have because of the fact that $\mu_{k^\varsigma}=\mu_{k}$
and $v_{k^\varsigma}=v_{k}$ ($v_.$ being the variance):
\begin{eqnarray}
\rho_{jk^{\varsigma}} - \rho_{jk} & = & 
\frac{1}{m\sqrt{v_jv_k}}\cdot\left(\ve{c}_j^\top
\left(\matrice{m}_{\varsigma} - \matrice{i}_m\right) \ve{c}_k\right)\:\:,
\end{eqnarray}
where $\matrice{m}_{\varsigma}$ is the shuffling matrix of permutation $\varsigma$.
Matrix $\matrice{m}_{\varsigma} - \matrice{i}_{m}$ has only
four non-zero coordinates: in $(\ell,\ell)$ and
$(\ell',\ell')$ (both $-1$), and in $(\ell,\ell')$ and
$(\ell',\ell)$ (both $1$), so we get:
\begin{eqnarray}
\rho_{jk^{\varsigma}} - \rho_{jk} & = & 
\frac{1}{m\sqrt{v_jv_k}}\cdot\left(x_{\ell j}(x_{\ell'k}-x_{\ell k})+x_{\ell'j}(x_{\ell k}-x_{\ell'k})\right)\nonumber\\
 & = &
 - \frac{1}{m\sqrt{v_jv_k}}\cdot\left((x_{\ell j} - x_{\ell'j})(x_{\ell k}-x_{\ell'k})\right)\:\:.
\end{eqnarray}
Hence, $\rho_{jk^{\varsigma}} < \rho_{jk}$
iff the sign of $x_{\ell j} - x_{\ell'j}$ is the same as the sign of
$x_{\ell k}-x_{\ell'k}$. Let $\uppi^b(\ell)$ the predicate $\rho_{jk^{\varsigma}} \geq
\rho_{jk}$, for any elementary permutation $\varsigma \in S^*_m$ that changes $\ell$ to index
$\ell'$ of the same class $b$ ($\varsigma(\ell) = \ell'$, $\varsigma(\ell')=\ell$). If $\uppi^b(\ell)$ is true, then, averaging
over all such permutations, we obtain:
\begin{eqnarray}
 0 & \geq & -
 \frac{1}{m\sqrt{v_jv_k}}\cdot\frac{1}{m_b}\sum_{\ell'}\left((x^b_{\ell j}
   - x^b_{\ell'j})(x^b_{\ell k}-x^b_{\ell'k})\right) \nonumber\\
 & & =-
 \frac{1}{m\sqrt{v_jv_k}}\cdot\left(x^b_{\ell j}x^b_{\ell k}
 - x^b_{\ell j} \mu^b_k- x^b_{\ell k}
 \mu^b_j + \mu^b_{jk}\right) \nonumber\\
 & & =-
 \frac{1}{m\sqrt{v_jv_k}}\cdot\left(c^b_{jk} + (x^b_{\ell j} - \mu^b_j) (x^b_{\ell k}
   - \mu^b_k) \right)\nonumber\:\:,
\end{eqnarray}
\textit{i.e.} we have:
\begin{eqnarray}
(x^b_{\ell j} - \mu^b_j) (x^b_{\ell k}
   - \mu^b_k) & \geq & - c^b_{jk}\label{eqdef1}\:\:.
\end{eqnarray}
Assume now that $\uppi^b(\ell)$ holds over any $\ell \in [m_b]$.
As long as $c^b_{jk}$ is strictly positive, we thus obtain, averaging
ineq. (\ref{eqdef1}) over all $\ell\in [m_b]$,
\begin{eqnarray}
c^b_{jk}  & \defeq & \frac{1}{m_b}\sum_{\ell}(x^b_{\ell j} - \mu^b_j) (x^b_{\ell k}
   - \mu^b_k) \nonumber\\
 & \leq & - c^b_{jk} \nonumber\\
 & < & 0\:\:,\label{eqdef2}
\end{eqnarray}
a contradiction. Hence, as long as $c^b_{jk} > 0$, there must exist
$(\ell,\ell') \in \ii{2}{m_b}$ such that the elementary permutation
$\varsigma(\ell) = \ell'$, $\varsigma(\ell')=\ell$ satisfies
\begin{eqnarray}
c^b_{jk^{\varsigma}} & < & 
c^b_{jk}\:\:,\label{conccor}
\end{eqnarray} 
and this holds for $b\in \{-,+\}$. Now remark that
\begin{eqnarray}
c_{jk} & = & p\mu^+_{jk} + (1-p) \mu^-_{jk} - (p\mu_j^++(1-p)\mu_j^-)(p\mu_k^++(1-p)\mu_k^-)\nonumber\\
 & = & pc_{jk}^++(1-p)c_{jk}^-+ p(1-p)(\mu_j^+-\mu_j^-)(\mu_k^+-\mu_k^-)\:\:,\label{prop11XX}
\end{eqnarray}
and so as long as whichever $c_{jk}^+ > 0$ or $c_{jk}^- > 0$, we can
always find an elementary permutation that decreases the one
chosen. When no more elementary permutations achieve that, $c_{jk}
\leq p(1-p)(\mu_j^+-\mu_j^-)(\mu_k^+-\mu_k^-)$, which yields the
statement of the Lemma.
\end{proof}
To prove the Theorem, remark that
\begin{eqnarray}
\rho_{13^\varsigma} - \rho_{12}\rho_{23^\varsigma} & = & \frac{1}{\sqrt{v_1v_3}}\left(\frac{1}{m}\cdot\sum_i
\left(x_{i1} - \frac{c_{12}}{v_2}\cdot
  x_{i2}\right)x_{\varsigma(i)3}\right) -
\left(\frac{\mu_1\mu_3}{\sqrt{v_1v_3}} -
  c_{12}\frac{\mu_2\mu_3}{v_2\sqrt{v_1v_3}}\right) \nonumber\\
 & = & \frac{1}{\sqrt{v_1v_3}}\left(\frac{1}{m}\cdot\sum_i
\left(x_{i1} - \frac{c_{12}}{v_2}\cdot
  x_{i2}\right)x_{\varsigma(i)3} - \left(\mu_1 - \frac{c_{12}\mu_2}{v_2}\right)\mu_3\right)
\end{eqnarray}
We apply Lemma \ref{lemperm} to linearly transformed column $\ve{c}' \defeq \ve{c}_1 -
\frac{c_{12}}{v_2}\ve{c}_2$ and column $\ve{c}_3$ and obtain that as long as
\begin{eqnarray}
\rho_{13} - \rho_{12}\rho_{23} & > &
\frac{p(1-p)}{\sqrt{v_1v_3}}\left((\mu^+_1 - \mu^-_1) -
  \frac{c_{12}}{v_2} (\mu^+_2 - \mu^-_2)\right)(\mu^+_3 - \mu^-_3)\:\:,
\end{eqnarray}
there always exist a block-class elementary permutation $\varsigma$ that is going
to make $\rho_{13^\varsigma} - \rho_{12}\rho_{23^\varsigma}<\rho_{13}
- \rho_{12}\rho_{23}$. When no such permutation exist anymore, we
have, letting $\varsigma_T$ denote the composition of all elementary
permutations performed so far and $\varsigma_* \defeq \arg \max_{\varsigma \in S^*_m}
    \rho^2_{23^\varsigma}$,
\begin{eqnarray}
\frac{\rho_{13} - \rho_{12}\rho_{23}
}{\sqrt{1-\rho^2_{12}}\sqrt{1-
    \rho^2_{23^{\varsigma_T}}}} & \leq & \frac{\rho_{13} - \rho_{12}\rho_{23}
}{\sqrt{1-\rho^2_{12}}\sqrt{1-\rho^2_{23^{\varsigma_*}}}}\nonumber\\
 & & = \frac{p(1-p)}{\sqrt{v_1v_2 - c^2_{12}}\sqrt{v_2v_3 -
     c^2_{23^{\varsigma_*}}}}\nonumber\\
 & & \cdot \left(v_2 (\mu^+_1 - \mu^-_1) -
  c_{12} (\mu^+_2 - \mu^-_2)\right)(\mu^+_3 - \mu^-_3) \nonumber\\
 & & = \frac{p(1-p)}{1-\epsilon}\left(\frac{\mu^+_1 - \mu^-_1}{\sqrt{v_1}} -
  \rho_{12} \cdot \frac{\mu^+_2 - \mu^-_2}{\sqrt{v_2}}\right)\cdot\frac{\mu^+_3 - \mu^-_3}{\sqrt{v}_3}
\end{eqnarray}
as long as $c^2_{12} \leq (1-\epsilon)v_1v_2$ and
$c^2_{23^{\varsigma_*}} \leq (1-\epsilon)v_2v_3$. We just have to use
the fact that
\begin{eqnarray}
\tilde{\mu}_j & = & \frac{\mu^+_j - \mu^-_j}{\sqrt{v_j}}
\end{eqnarray}
using the main file notation to conclude (End of the proof of Theorem \ref{thmcm1}).
\end{proof}
Remark that the proof also
shows that the conditions to blow up the type-II error (Corollary
2.1 in \citep{cmT2}) are not affected by the \cp. 
To see that it is
possible to still make the Type II error blow, up, in the CM model,
the Type II error can be made at least $K/v_2$ \citep{cmT2} where the
coefficient $K$ does not depend on $\varsigma$ (\citep{cmT2}, Corollary
2.1). Since $v_2$
is also not altered by the permutations, the Type
II error can still be blown up following \citep{cmT2}'s
construction.

We now show that the iterative process is actually not necessary
if one has enough data: sampling $\varsigma \sim
S^*_m$ jams $\rho_{(13^\varsigma) \cdot2}$ up to bounds
competitive with Theorem \ref{thmcm1} with high probability. Such good
concentration
results also hold for \hsic~\citep{ssgbbFS}.

\begin{theorem}\label{thmcm2}
For any $\updelta>0$, provided $m = \Omega((1/\updelta) \log (1/\updelta))$, the uniform
sampling of $\varsigma$ in $S^*_m$ satisfies 
\begin{eqnarray*}
\pr_{\varsigma \sim S^*_m} [\rho_{(13^\varsigma)
  \cdot2} & \leq & \mathscr{R} + \updelta] \geq 1 - \updelta\:\:,
\end{eqnarray*} 
where $\mathscr{R}$ is defined
in Theorem \ref{thmcm1}. 
\end{theorem}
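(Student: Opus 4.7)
The plan is to reduce the claim to a concentration-of-measure statement for the two correlations $\rho_{13^\varsigma}$ and $\rho_{23^\varsigma}$, since under a shuffle of column $3$ the quantities $\rho_{12}$, $v_1$, $v_2$, $v_3$, $\mu_1$, $\mu_3$ are left unchanged. Hence $\rho_{(13^\varsigma)\cdot 2}$ equals a fixed smooth function $\phi(\rho_{13^\varsigma},\rho_{23^\varsigma})$ of just two random variables, where $\phi(a,b)\defeq (a-\rho_{12}b)/\sqrt{(1-\rho_{12}^2)(1-b^2)}$.

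First I would compute $\bar\rho_{1j}\defeq \expect_{\varsigma \sim S^*_m}[\rho_{1j^\varsigma}]$ for $j=2,3$ in closed form. Because $\varsigma$ acts as an independent uniformly random permutation inside each class, for $i$ with $y_i=b$ we have $\expect_\varsigma[x_{\varsigma(i)3}]=\mu^b_3$, which by linearity gives
\begin{eqnarray*}
\expect_\varsigma\!\left[\frac{1}{m}\sum_i x_{i1}x_{\varsigma(i)3}\right] = p_+\mu^+_1\mu^+_3+(1-p_+)\mu^-_1\mu^-_3,
\end{eqnarray*}
so that after subtracting $\mu_1\mu_3$ and dividing by $\sqrt{v_1 v_3}$ one obtains $\bar\rho_{13}=p_+(1-p_+)(\mu^+_1-\mu^-_1)(\mu^+_3-\mu^-_3)/\sqrt{v_1v_3}=4p_+(1-p_+)\tilde\mu_1\tilde\mu_3$, using $\tilde\mu_j=(\mu^+_j-\mu^-_j)/(2\sqrt{v_j})$. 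The same computation yields $\bar\rho_{23}=4p_+(1-p_+)\tilde\mu_2\tilde\mu_3$. Substituting into $\phi$ recovers (up to the absolute constant absorbed into $\mathscr{R}$) the expression $\phi(\bar\rho_{13},\bar\rho_{23})\leq \mathscr{R}$, with the $(1-\epsilon)^{-1}$ factor coming from the assumption $\rho_{12}^2\leq 1-\epsilon$ exactly as in the proof of Theorem~\ref{thmcm1}.

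Next I would establish that $\rho_{13^\varsigma}$ and $\rho_{23^\varsigma}$ concentrate around these means. Each is, up to an additive constant, of the form $(1/m)\sum_i a_i b_{\varsigma(i)}$ with bounded $a_i,b_j$, and transposing two images of $\varsigma$ changes the value by $O(1/m)$. Standard concentration for random-permutation statistics (viewing $S^*_m$ as a product of symmetric groups on each class and applying bounded-differences/McDiarmid for sampling without replacement, as in \citep{ssgbbFS}) yields a sub-exponential tail $\Pr_\varsigma[|\rho_{1j^\varsigma}-\bar\rho_{1j}|>t]\leq 2\exp(-c\,m\,t)$ for an absolute constant $c>0$. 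Letting $L_\epsilon$ denote the Lipschitz constant of $\phi$ on the region $b^2\leq 1-\epsilon$ (bounded in terms of $1/\epsilon$), the inequality $|\phi(\rho_{13^\varsigma},\rho_{23^\varsigma})-\phi(\bar\rho_{13},\bar\rho_{23})|\leq L_\epsilon(|\rho_{13^\varsigma}-\bar\rho_{13}|+|\rho_{23^\varsigma}-\bar\rho_{23}|)$ combined with a union bound and the choice $t=\updelta/(2L_\epsilon)$ forces $m=\Omega((1/\updelta)\log(1/\updelta))$ and delivers the claim.

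The main obstacle is pinning down the exact concentration rate: a naive McDiarmid/Hoeffding application only gives $m t^2$ in the exponent and hence $m=\Omega((1/\updelta^2)\log(1/\updelta))$. To get the advertised $(1/\updelta)\log(1/\updelta)$ scaling one must exploit the $O(1/m)$ second moment of centred permutation statistics (a Bernstein-type refinement) and control $L_\epsilon$ uniformly on the $(1-\epsilon)$-bounded domain, exactly as in the analogous concentration analyses of $\hsic$ in \citep{ssgbbFS} which we shall port to the block-class permutation setting used here.
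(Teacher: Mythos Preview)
Your proposal is essentially correct and follows the same route as the paper: factor a block-class permutation as two independent within-class permutations, compute the mean of the relevant bilinear permutation statistic (which gives back the $p_+(1-p_+)(\mu_j^+-\mu_j^-)(\mu_k^+-\mu_k^-)$ term and hence $\mathscr{R}$), and then use a Bernstein-type concentration bound for permutation statistics to get the $(1/\updelta)\log(1/\updelta)$ sample requirement. The paper packages this as a standalone lemma on $c_{jk^\varsigma}$ and invokes Proposition~1.1 of \citep{cSM} for the Bernstein tail, rather than \citep{ssgbbFS}.

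The one genuine structural difference is how the concentration is transferred to the partial correlation. You concentrate $\rho_{13^\varsigma}$ and $\rho_{23^\varsigma}$ separately and then push through the Lipschitz constant $L_\epsilon$ of $\phi(a,b)=(a-\rho_{12}b)/\sqrt{(1-\rho_{12}^2)(1-b^2)}$ on the region $b^2\le 1-\epsilon$. The paper instead applies its concentration lemma once, directly to the pair $(\ve{c}_1-(c_{12}/v_2)\ve{c}_2,\ve{c}_3)$, so that the numerator $\rho_{13^\varsigma}-\rho_{12}\rho_{23^\varsigma}$ is handled in a single shot and the $(1-\epsilon)^{-1}$ factor enters only at the very end when dividing by the denominator. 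Both work; the paper's route avoids having to track $L_\epsilon$ explicitly, while yours is a cleaner ``mean plus Lipschitz perturbation'' template. Your final paragraph correctly isolates the only nontrivial point: plain bounded-differences would give $m=\Omega((1/\updelta^2)\log(1/\updelta))$, and one needs the Bernstein refinement (the $2t$ in the denominator of the \citep{cSM} bound) to reach the stated rate.
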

\begin{proof}
We detail first the sampling process of
$\varsigma$. It relies on the fundamental property that a permutation uniquely
factors as a product of disjoint cycles, and so a block-class permutation $\varsigma$ factors uniquely as two
permutations $\varsigma_+$ and $\varsigma_-$, each of
which acts in one of the two classes. Therefore, sampling uniformly
each of $\varsigma_+$ and $\varsigma_-$ results in an uniform sampling
of a block-class $\varsigma$.

The Theorem stems from the following Lemma, whose notations follow
Lemma \ref{lemperm}.
\begin{lemma}\label{lemperm2}
For any $q>0$, as long as
\begin{eqnarray}
m & = & \Omega\left(\frac{1}{q}\log \frac{1}{q}\right)\:\:,
\end{eqnarray}
there is probability $\geq 1 - q$ that a randomly chosen block-class
permutation $\varsigma$ shall bring
\begin{eqnarray}
c_{jk^\varsigma} & \in &
\left[p(1-p)(\mu_j^+-\mu_j^-)(\mu_k^+-\mu_k^-)-q, p(1-p)(\mu_j^+-\mu_j^-)(\mu_k^+-\mu_k^-)+q\right]\:\:.
\end{eqnarray}
\end{lemma}
\begin{proof}
Let $S^{*b}_m \subset S^*_m$ denote the set of block-class
permutations whose set of fixed points contains all examples from
class $\neq b1$, $\forall b\in \{-,+\}$. We have 
\begin{eqnarray}
\expect_{\varsigma \sim S^{*b}_m}\left[\sum_{l : y_l = b}
  {x^b_{lj}x^b_{\varsigma(l)k}}\right] = m_b \mu^b_j \mu^b_k\:\:,\nonumber
\end{eqnarray}
and since $(1/m_b) \sum_{l : y_l = b}
  {x^b_{lj}x^b_{\varsigma(l)k}} - \mu^b_j \mu^b_k =
  c^b_{jk^\varsigma}$, if we sample uniformly at random $\varsigma
  \sim S^{*b}_m$, then we get from \citep{cSM} (Proposition 1.1):
\begin{eqnarray}
\pr_{\varsigma \sim S^{*b}_m} [|c^b_{jk^\varsigma}|\geq t] & = & \pr_{\varsigma \sim S^{*b}_m} \left[\left|\sum_{l : y_l = b}
  {x^b_{lj}x^b_{\varsigma(l)k}} - m_b \mu^b_j \mu^b_k\right|\geq m_b t\right]\nonumber\\
 & \leq & 2\exp\left(-\frac{m_b  t^2}{4 \mu^b_j \mu^b_k + 2t}\right)\:\:.
\end{eqnarray}
We want the right hand side to be no more than some $\delta_b$;
equivalently, we want
\begin{eqnarray}
t^2 - \left(\frac{2}{m_b}\log\frac{2}{\delta_b}\right)t -  \frac{2}{m_b}\log\frac{2}{\delta_b}& \geq & 0\:\:,
\end{eqnarray}
which holds provided
\begin{eqnarray}
t & \geq & \frac{2(1+o(1))}{m_b} \log \frac{2}{\delta_b}\:\:,
\end{eqnarray}
where the little-oh is measured wrt $m_b$. Since a block-class
permutation factors as two fully determined permutations from
$S^{*+}_m$ and $S^{*-}_m$, if we fix $\delta_+ = \delta_- = \delta/2$,
we get that if we sample uniformly at random these two permutations
$\varsigma_+ \sim S^{*+}_m$ and $\varsigma_- \sim S^{*-}_m$, then we shall have simultaneously
\begin{eqnarray}
|c^b_{jk^\varsigma}| & \leq & \frac{2(1+o(1))}{m_b} \log
\frac{4}{\delta} \:\:, \forall b\in \{-,+\}\:\:,
\end{eqnarray}
which implies for the factored permutation $\varsigma$,
\begin{eqnarray}
|c_{jk^\varsigma} - p(1-p)(\mu_j^+-\mu_j^-)(\mu_k^+-\mu_k^-)| & \leq &
\sum_b \frac{m_b}{m}\cdot \frac{2(1+o(1))}{m_b} \log
\frac{4}{\delta} \nonumber\\
 & & = \frac{2(1+o(1))}{m} \log
\frac{4}{\delta} 
\end{eqnarray}
from eq. (\ref{prop11XX}). Hence, if 
\begin{eqnarray}
m & = & \Omega\left(\frac{1}{\updelta} \log\frac{1}{\updelta}\right)\:\:,
\end{eqnarray}
there will be probability $\geq 1 - \updelta$ that $c_{jk^\varsigma}$ is
within additive $\updelta$ from $p(1-p)(\mu_j^+-\mu_j^-)(\mu_k^+-\mu_k^-)$.
\end{proof}
We get that with probability $\geq 1-\updelta$, a randomly chosen block-class
permutation $\varsigma$ shall make
\begin{eqnarray}
\frac{\rho_{13^\varsigma} - \rho_{12}\rho_{23^\varsigma}
}{\sqrt{1-\rho^2_{12}}\sqrt{1-
    \rho^2_{23^{\varsigma}}}} & \leq & \frac{p(1-p)}{1-\epsilon}\left(\frac{\mu^+_1 - \mu^-_1}{\sqrt{v_1}} -
  \rho_{12} \cdot \frac{\mu^+_2 -
    \mu^-_2}{\sqrt{v_2}}\right)\cdot\frac{\mu^+_3 -
  \mu^-_3}{\sqrt{v}_3} + \updelta\:\:,
\end{eqnarray}
and the Theorem is proven (End of the proof of Theorem \ref{thmcm2}).
\end{proof}

\section{Experiments}\label{expes_expes}

\subsection{Domains and setup}\label{exp_domains}

Domain characteristics are described in Table \ref{t-doms}. In particular, the process for train/test split is there given in detail for each dataset. Some domains deserve more comments.

Similarly to \citep{hjmpsNC}, we consider only two features of the
\emph{abalone} datasets; those are \emph{rings} (the age) and
\emph{length}, which are provably causally linked --\emph{age} causes
\emph{length}--, and hence correlated. We predict the attribute
\emph{diameter} (reasonably caused by \emph{age} as well); to turn
this into a binary classification problem, we classify if the
\emph{diameter} is above or below the average one. (We also exclude
abalone examples which have missing \emph{sex} attribute.) For the
experiments, we train with $200$ examples and held out $567$, both
picked at random. The \emph{digoxin} domain \citep{dmzsAP} is already
defined by only two features, \emph{digoxin} and \emph{urine}, which
are conditionally independent given \emph{creatine}. From those, we
predict if the level of \emph{creatine} is above or below
average. Domains Liver disorder, Auto+MPG, Arrhythmia and Diabete are
part of the benchmark of domains of \citep{mpjzsDC}.

The \emph{synthetic} dataset is generated by the function
$\texttt{datasets.make\_classification}$ of the \emph{scikit-learn} python library \citep{scikit-learn}, with $6$ features, $3$ informative for the class prediction, and $3$ more that are linear combinations of the formers. The rational of this toy domain is to craft two feature subspaces highly correlated.

\begin{sidewaystable}[t]
{
    \centering
\begin{center}
\begin{tabular}{c|c|c|c|l}\hline \hline
 name & $m^{(*)}$ & $d$ & source & notes\\ \hline \hline
digoxin & 35 & 2 & \citep{dmzsAP} & features are cond. independent given label\\ \hline
glass & 146 & 9 & UCI &\\ \hline
abalone-2D & 200, 567  & 2 & UCI, \citep{hjmpsNC} & subsample, \{rings, length\} predict diameter\\ \hline
synthetic & 200 & 13 & scikit-learn &\\ \hline
heart & 270 & 13 & UCI &\\ \hline
liver disorders &  345 & 7 & UCI, \citep{mpjzsDC} & predict mcv $<$
$30^{th}$ percentile, task \texttt{pair0034}
\\ \hline 
ionosphere &  351 & 34 & UCI &\\ \hline
\multirow{2}{*}{auto+mpg} &  \multirow{2}{*}{398} & \multirow{2}{*}{8}
& \multirow{2}{*}{UCI, \citep{mpjzsDC}} & predict mpg $<$ mean, task
\texttt{pair0016}\\
 & & & & (feature vectors with missing values removed)
\\ \hline
arrhythmia & 452 & 279 & UCI, \citep{mpjzsDC} &  task
\texttt{pair0023}, missing values replaced by 0\\ \hline
breastw & 683 & 10 & UCI &\\ \hline
australian & 690 & 14 & UCI &\\ \hline
\multirow{2}{*}{diabete(-2)} & \multirow{2}{*}{768} & \multirow{2}{*}{8} & \multirow{2}{*}{UCI} & Pima domain (-2 = task
\texttt{pair0038}, \citep{mpjzsDC},\\
 & & & &  half of the dataset used for training)\\ \hline\hline
\end{tabular}
\caption{Domains considered. $~^{(*)}$ When only one number appears in the column, 1/5 of $m$ was hold out at random for test; when two numbers are present, the first is training set size, while the second is test size, that is fixed by the dataset description.}\label{t-doms}
\end{center}
}
\end{sidewaystable}

All training sets are standardized, and the same transformation is then applied to the respective test sets. The partition of the feature space is defined by the first split $F = \left \lfloor{d/2}\right \rfloor $, and its complement; features are taken in the order defined by the datasets.

Unless stated differently, models are trained with $L_2$
regularisation by \emph{scikit-learn}'s \\
$\texttt{linear\_model.LogisticRegression}$. The hyper-parameter $\lambda$ is optimized by 5-folds cross validation on the grid $\{10^{-5}, 10^{-4}, \dots, 10^{4} \}$.

\subsection{Explanation of the movie}\label{res-movie}

\begin{figure}[h]
    \centering
\begin{center}
\includegraphics[width=0.7\columnwidth]{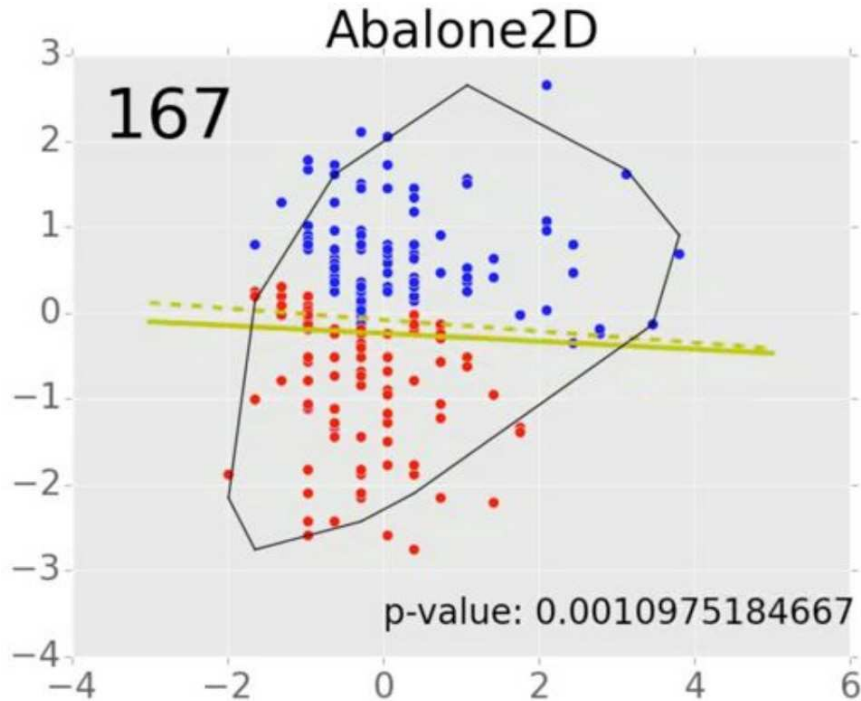}
\caption{Crop of the movie (see text for explanation).}\label{f-res-movie}
\end{center}
\end{figure}

Along with this draft comes a movie displaying the impact on the
$p$-value of \perm, in the context of the decrease of the \hsic. The
movie shows 200 iterations of \perm~on the Abalone2D domain, along with the modification of
the point cloud (classes are red / blue). The $p$-value is
indicated. Notice that it begins at value 0 up to 13 digits before
\perm~starts. The two lines indicate the classifier learnt over the
current data (plain gold line) and compare with the initial classifier
learnt over the data before running \perm~(dashed gold line). The big
number (167 in Figure \ref{f-res-movie}) is the iteration
number. Finally, the polygon displayed is the convex envelope of the
initial data.

\subsection{Main experiments}\label{sec-exp}

\begin{algorithm}[t]
\caption{Crossover Learning (${\mathcal{S}}, T,
  \ve{\theta}_0, \matrice{m}_0, \matrice{f}^\re, \matrice{f}^\tr;
  \mathscr{G}_1 ,[\mathscr{G}_2]$)}\label{algoMETA}
\begin{algorithmic}
\STATE  \textbf{Input} Sample ${\mathcal{S}}$, iterations $T$, classifier $\ve{\theta}_0$,
initial \cp~$\process{T}$ (matrices $\matrice{m}_0\in {S^*_m}$,
$[\matrice{f}^\re| \matrice{f}^\tr] = \matrice{i}_m$);
\STATE  Step 1 : \textbf{for} $t=1, 2, ..., T$
\STATE  \hspace{1.1cm} Step 1.1 : $\matrice{m} \leftarrow \arg\min_{\matrice{m}' \in
  S^{\mbox{\tiny{e}}*}_m} \mathscr{G}_1(\matrice{m}' \circ
\matrice{m}_{t-1} [| \ve{\theta}_{t-1}])$; 
\STATE  \hspace{2.9cm} // finds update of shuffle matrix
\STATE  \hspace{1.1cm} Step 1.2 : $\matrice{m}_t \leftarrow
\matrice{m} \circ \matrice{m}_{t-1}$; 
\STATE  \hspace{2.9cm} // updates shuffle matrix
\STATE  \hspace{1.1cm} [~Step 1.3 : $\ve{\theta}_t \leftarrow
\arg\min_{\ve{\theta} \in {\mathbb{R}^d}} \mathscr{G}_2(\ve{\theta}|
\matrice{m}_t)$;~]
\STATE  \hspace{2.9cm} // (optionally) updates classifier
\STATE \textbf{Return} classifier $\ve{\theta}_T$ and / or \cp'ed dataset
${\mathcal{S}}^{\process{T}(\matrice{m}_T)}$ 
\end{algorithmic}
\end{algorithm}

Our applications use the same meta-level algorithm (Algorithm
\ref{algoMETA}) which operates in Setting (A) $\cap$ Setting (B) ($\matrice{m}$
in $S_m^*$, linear classifiers) (Section \ref{sec-dsp-learn}), iteratively composing block-class elementary
permutations. Here, $S^{\mbox{\tiny{e}}*}_m \subset S^*_m$ is
  the set of block-class elementary permutations. The iteration step minimises a criterion
  $\mathscr{G}_1$ over $S^{\mbox{\tiny{e}}*}_m$ and potentially, after
  the update of the \cp~matrix, a criterion $\mathscr{G}_2$ over
  ${\mathcal{H}}$. The optimization of $\mathscr{G}_1$ is performed by a simple greedy search
  in the space of $S^{\mbox{\tiny{e}}*}_m$.
  The experimental setup (a dozen readily available domains) and 
  results are provided \textit{in extenso} in Section \ref{res-s}; Table
  \ref{t-applis} summarises them. 
  The split step and the choice of 
${\mathcal{F}}_\re$ are highly domain and task dependent: to keep
experiments of reasonable length, unless otherwise stated, we put in ${\mathcal{F}}_\re$ the first half of
  features. In Abalone2D and Digoxin, this
 jams a particular ground truth (see below). Also, $\varphi$=logistic loss. 

\subsubsection{Disrupting dependence and causality}\label{sec-dis}

\noindent\textbf{General experiments} --- We run Algorithm \ref{algoMETA}
without step 1.3, and let $\mathscr{G}_1$ be \hsic. 
As a proof of concept, we show that we can destroy the significance of statistical tests for independence, commonly as base for causal inference; in particular, we measure the change in the $p$-value computed on top of \hsic~as in \citep{gftsssAK}. We use two Gaussian kernels for $\matrice{k}^u$ and
$\matrice{k}^v$, each computed over its full subset of features
(Subsection \ref{exp_domains}) --- hence, we do not seek to alter
specifically the dependence between two features, but between the two
sets of features defined by the anchor and shuffle sets. On
most domains (Table \ref{t-applis}, top row), the $p$-value of the
independence test starts close to zero at the beginning of the \cp,
which implies that in general both anchor and shuffle sets are
(predictably) dependent. In general, we achieve a good control of the \rcp~and manage in several cases to
decrease the true error as well through the process. 

\noindent\textbf{Specific dependences} --- The general experiments
revealed that we manage
to blow-up the $p$-value, even for domains for
which the ground truth clearly \textit{implies the alternative hypothesis $H_1$}. 
To dive into this phenomenon, we have considered two sets
of experiments on which \hsic~is computed over two specific
features that are known to have a causal relationship. To disrupt the
dependence, we thus put one of the features in the anchor set and one
in the shuffle set of features (\textit{i.e.}, after we have split the
feature set in two, if both features belong to the same set, we switch
one with a randomly chosen feature of the other set).  

In the first set of experiments, we consider datasets with $d=2$ features,
so that this surgical disruption embeds kernels measured over the
complete set of features. Experiments are reported in Table
\ref{t-applis} for domains Abalone2D and Digoxin (Subsection \ref{exp_domains}).
In Abalone2D for example, a gold standard for
dependence \citep{hjmpsNC}, the final $p$-values is more than
\textit{ten billion} times the initial value. For Digoxin domain, another popular domain
\citep{dmzsAP} with ground truth, $p$ is very small at the beginning
(which corresponds to the ground truth $D \notindep U$; $D$ = digoxin
clearance, $U$ = urin flow). After shuffling, we obtain $p>0.4$, which easily
brings $D \indep U$, while ground truth is $D \indep U | C$ ($C$ = creatinine
clearance). A rather surprising fact is that in both cases, the effect on test error is 
minimal, considering that Abalone2D and Digoxin have $d=2$ attributes
only.

In the second set of experiments, we consider several domains
with a larger $d$, between 7 and 279. These domains belong to the
benchmarks of \citep{mpjzsDC} in which specific pairs are known to have
specific causal relationships, referred to as "causal tasks", indicated in Subsection
\ref{exp_domains}. We have targeted one causal task for each
domain. Table \ref{res-2features} summarizes the results obtained. In
all domains, the $p$-value is blown up at almost no expense in test
error. Quite remarkably, the initial value is indeed $p=0$ (up to
\textit{sixteen} digits), while we manage at the end of the process to get $p$
that exceeds
1\textperthousand, which would be quite sufficient to raise doubts
about the causal relationships for sensitive domains. For example,
in \texttt{pair0016}, the final $p>2$\textperthousand~might lead us to
keep the independence assumption between horsepower and acceleration. In Arrhythmia, we would keep the
(obviously fake) independence between age and weight. In the Liver disorder domain, our
experiment has the following interesting consequence. Causal task
\texttt{pair0034} is the causality relationship between alcohol
consumption and the measure of alkaline phosphatase (ALP, \citep{mpjzsDC}). It
is known that ALP elevation may be caused by heavy alcohol
consumption. By keeping the independence assumption for such a value
of $p$, one may just discourage specific blood tests related to
alcohol consumption if they were to be designed from this domain. 
 Again, the
variation of the test error is minimal (if any) for all these domains.

\subsubsection{Data optimisation for efficient learning}

In the previous subsection, we showed how a \cp~may be carried out to
target directly the disruption of causal relationships. In this
subsection, we analyse how (and when) it can be devised to improve the
test performances of classifiers.
We perform Algorithm \ref{algoMETA} with $\matrice{m}_0 = \matrice{i}_m$,
$\mathscr{G}_1(\matrice{m}
| \ve{\theta}) = \expect_{{\mathcal{S}}^{\process{T}(\matrice{m})}} \left[\varphi(y
   \ve{\theta}^\top\bm{x})\right] $ (Theorem \ref{rader}) and
 $\mathscr{G}_2(\ve{\theta} | \matrice{m}) =
 \expect_{{\mathcal{S}}^{\process{T}}} \left[\varphi(y
   \ve{\theta}^\top\bm{x})\right] + \lambda\|\ve{\theta}\|_2^2$ where
 $\lambda$ is learnt through cross-validation. The
 algorithm returns classifier $\ve{\theta}_T$. The bottom row in Table
 \ref{t-applis}, and Subsection \ref{res-s} shows how we almost
 always find some permutations that reduce the test error compared to the
 initial data, even when a
 specific data optimisation should care for a risk of over-fitting,
 which seems to occur for problems with a very small number of
 features (see Ionosphere and Abalone2D, Subsection \ref{res-s}).
 It appears also that when the (bound on the) \rcp~flattens, it
 may indicate a regime where substantial reductions can be obtained on test error
 as witnessed by \textit{e.g.} Synthetic, Heart, Glass (see also
 BreastWisc in Table \ref{res-breast}). We also remarked that the
 regime where the (bound on the) \rcp~flattens can be associated with
 a peak or decrease of the number of odd cycles as seen in Table
 \ref{t-applis-oc}, which, interestingly, can be used to provide
 upperbounds on the \rcp~as well (Theorem
 \ref{thradcompSettingB}). Finally, the results on Glass display that
 some domains (predictably) make it possible to kill to birds in one shots at
 little effort: in this case, the \cp~both reduces the test error and
 increases the $p$-value for \hsic~computed as in the general
 experiments of Subsection \ref{sec-dis}.

\begin{table}[t]
    \centering
\begin{center}
{\scriptsize
\begin{tabular}{c}
\multicolumn{1}{c}{\hspace{-0.25cm}\begin{tabular}{>{\centering\arraybackslash} m{0.1cm}ccc||c}\hline\hline
\vspace{-2cm}\begin{sideways}{\footnotesize \hspace{-0.11cm}  \hsic~reduction}\end{sideways} & \hspace{-0.11cm} \includegraphics[height=2.83cm]{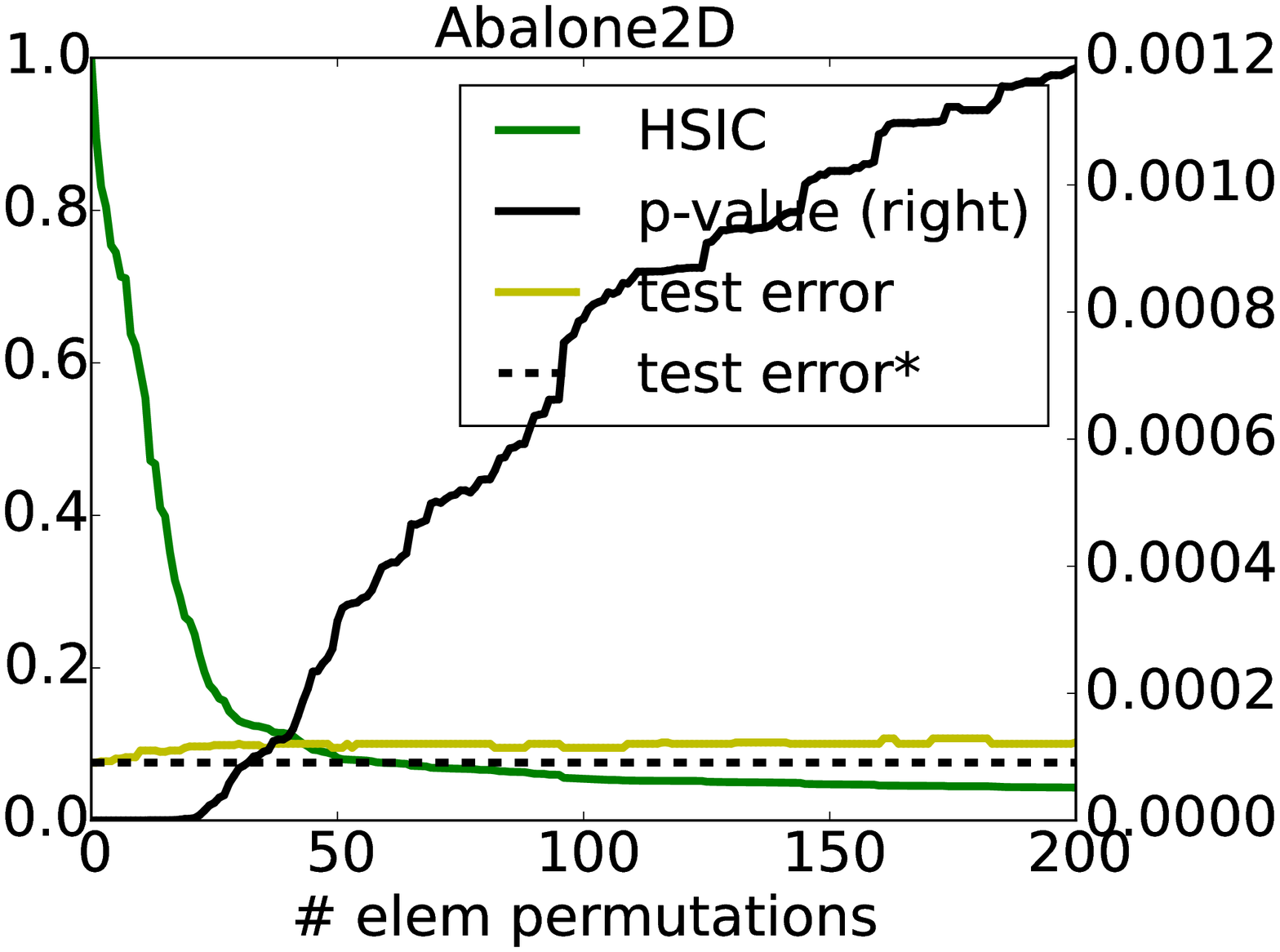}\hspace{-0.11cm} 
& \hspace{-0.11cm} \includegraphics[height=2.83cm]{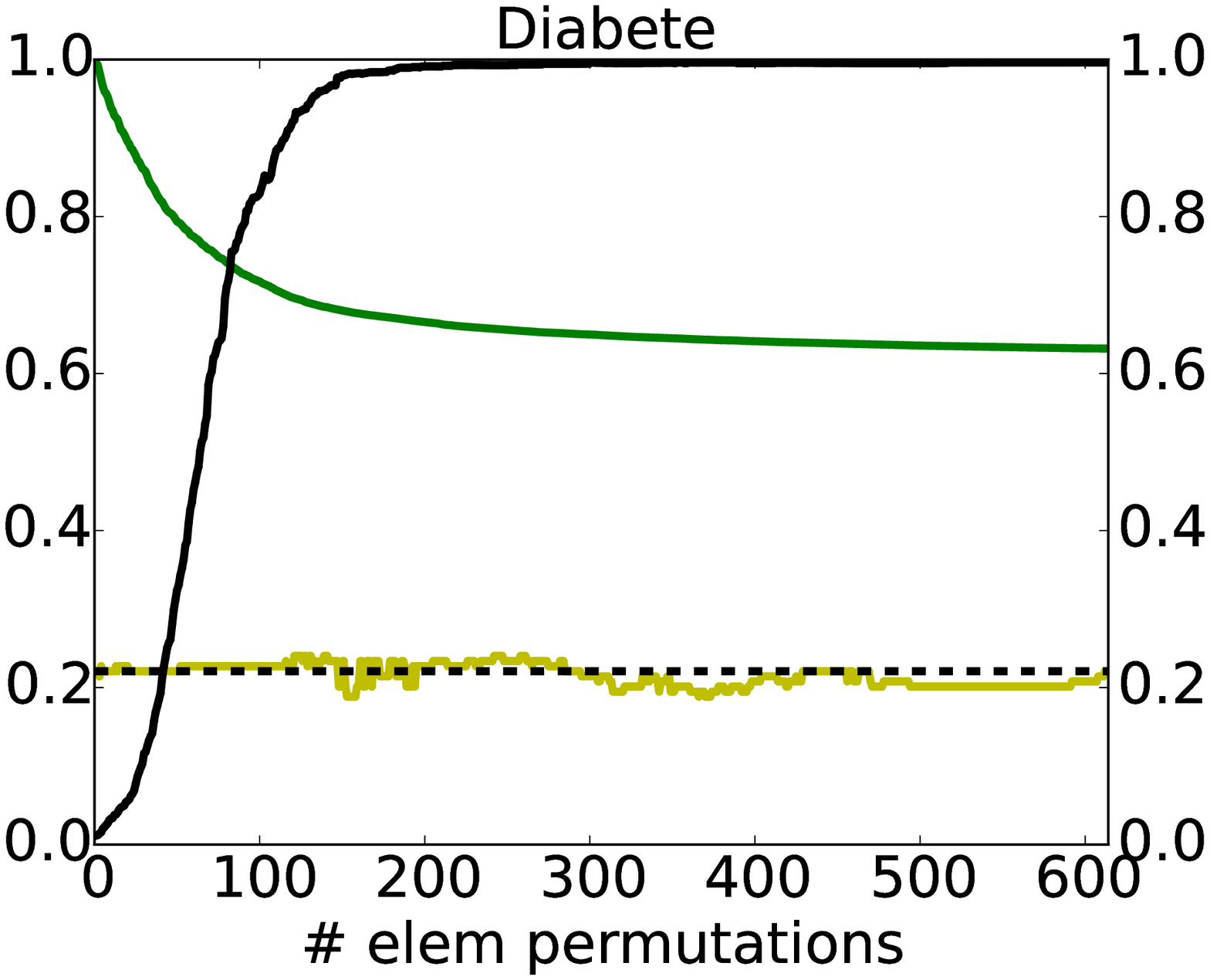}\hspace{-0.11cm} 
&
\hspace{-0.11cm} \includegraphics[height=2.83cm]{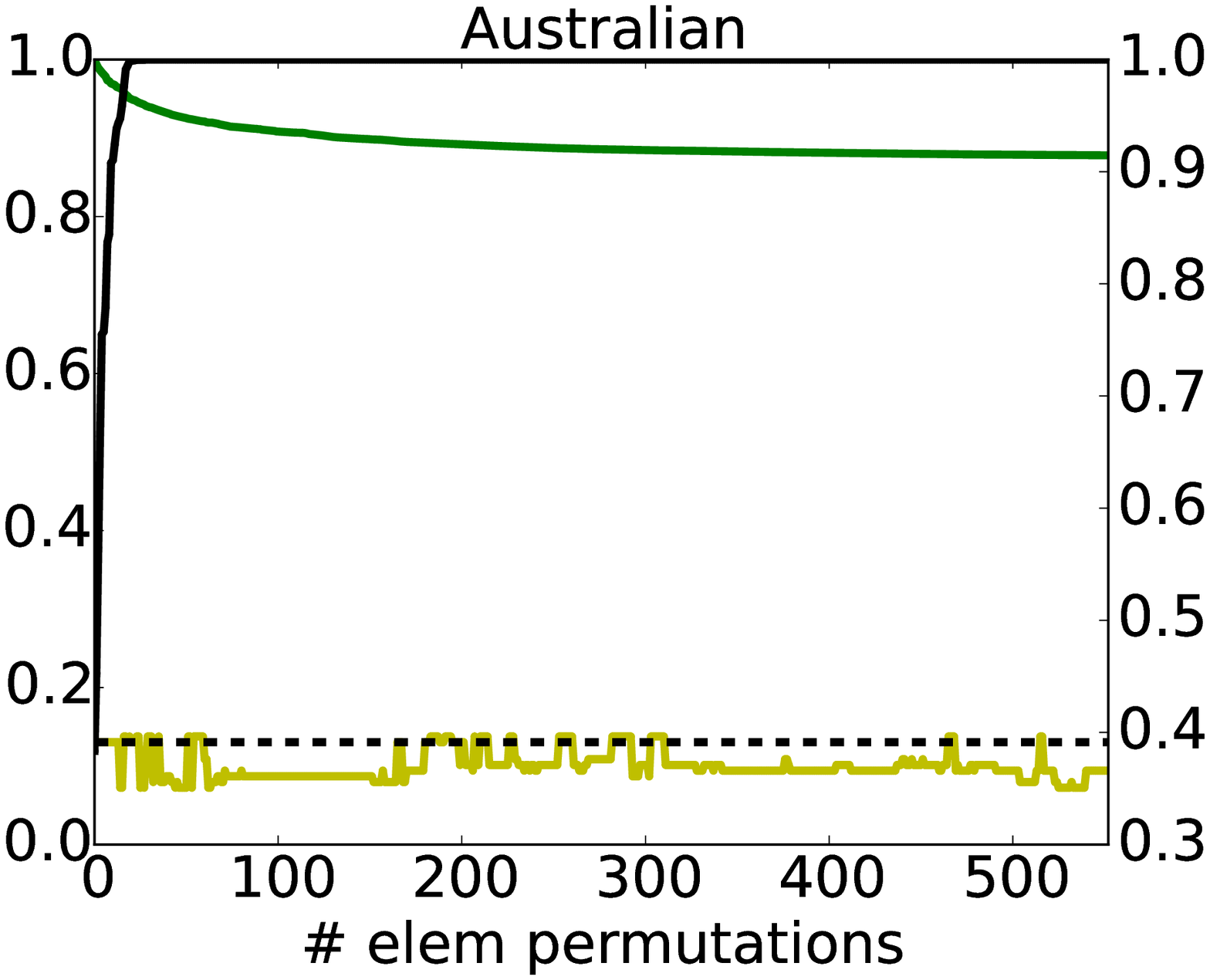}\hspace{-0.11cm}
& \hspace{-0.11cm} \includegraphics[height=2.83cm]{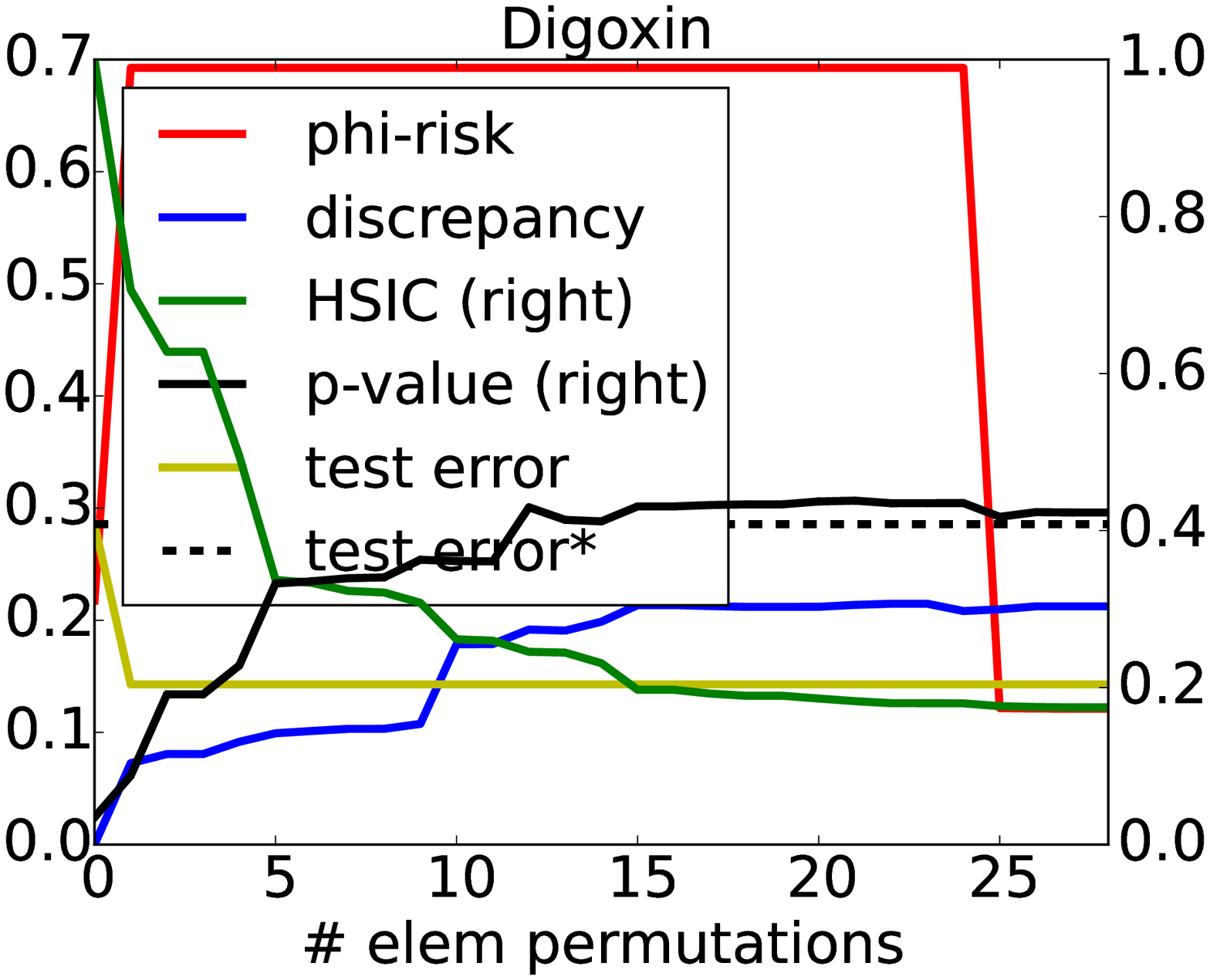}\hspace{-0.11cm}\\ \hline\hline
\vspace{-2cm}\begin{sideways}{\footnotesize \hspace{-0.11cm}  data optimisation}\end{sideways} & \hspace{-0.11cm} \includegraphics[height=2.83cm]{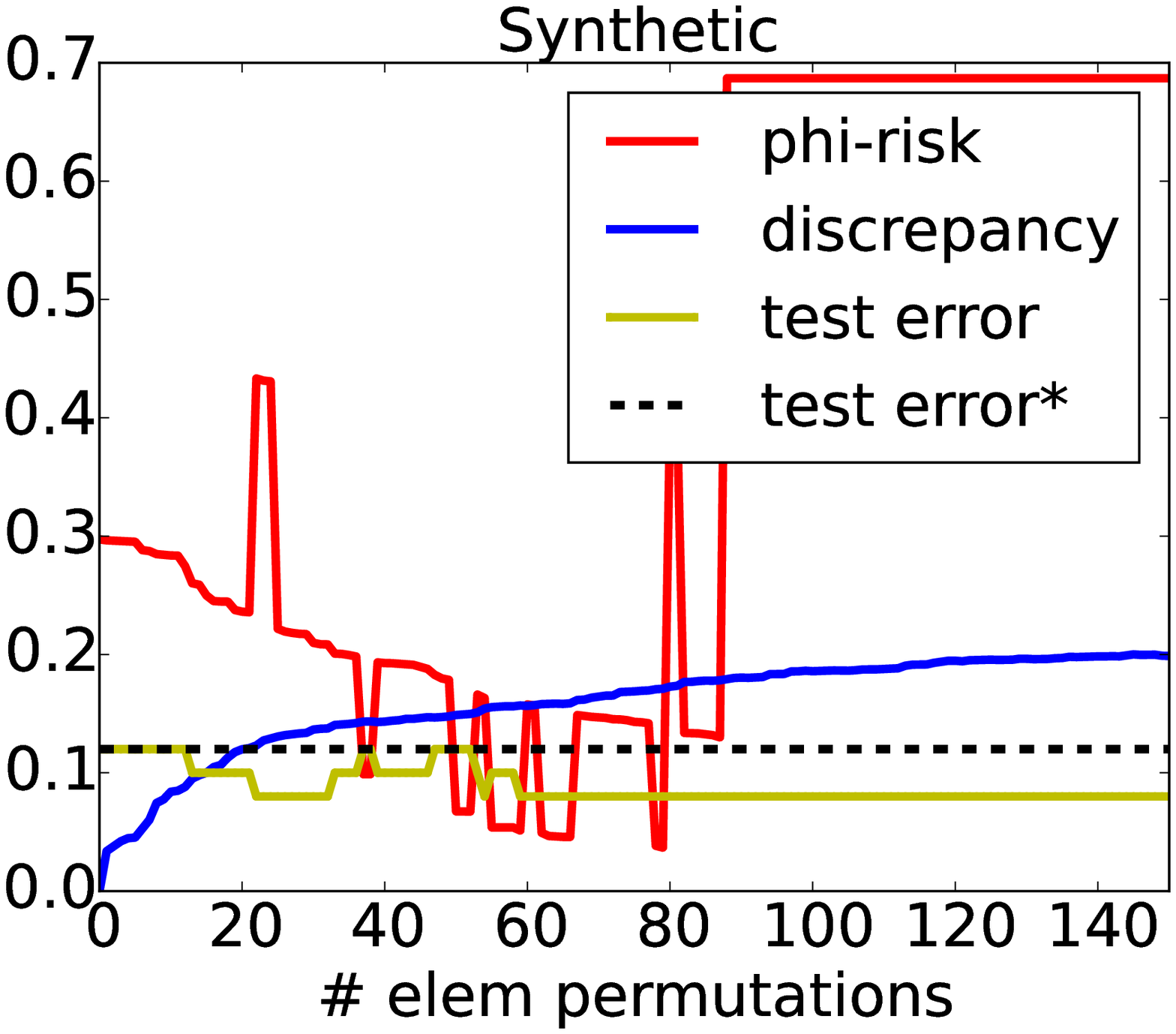}\hspace{-0.11cm} 
& \hspace{-0.11cm} \includegraphics[height=2.83cm]{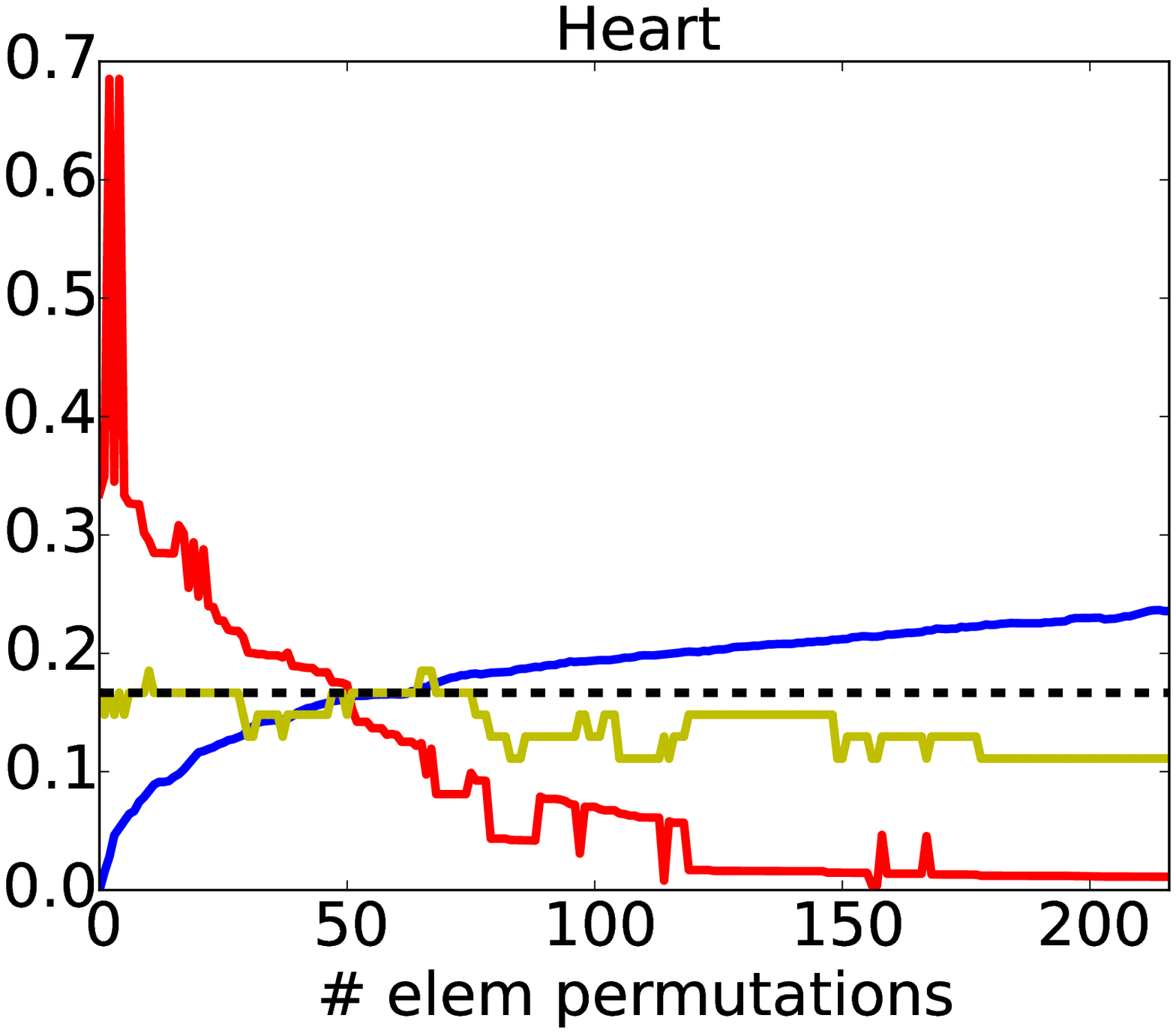}\hspace{-0.11cm} 
&
\hspace{-0.11cm} \includegraphics[height=2.83cm]{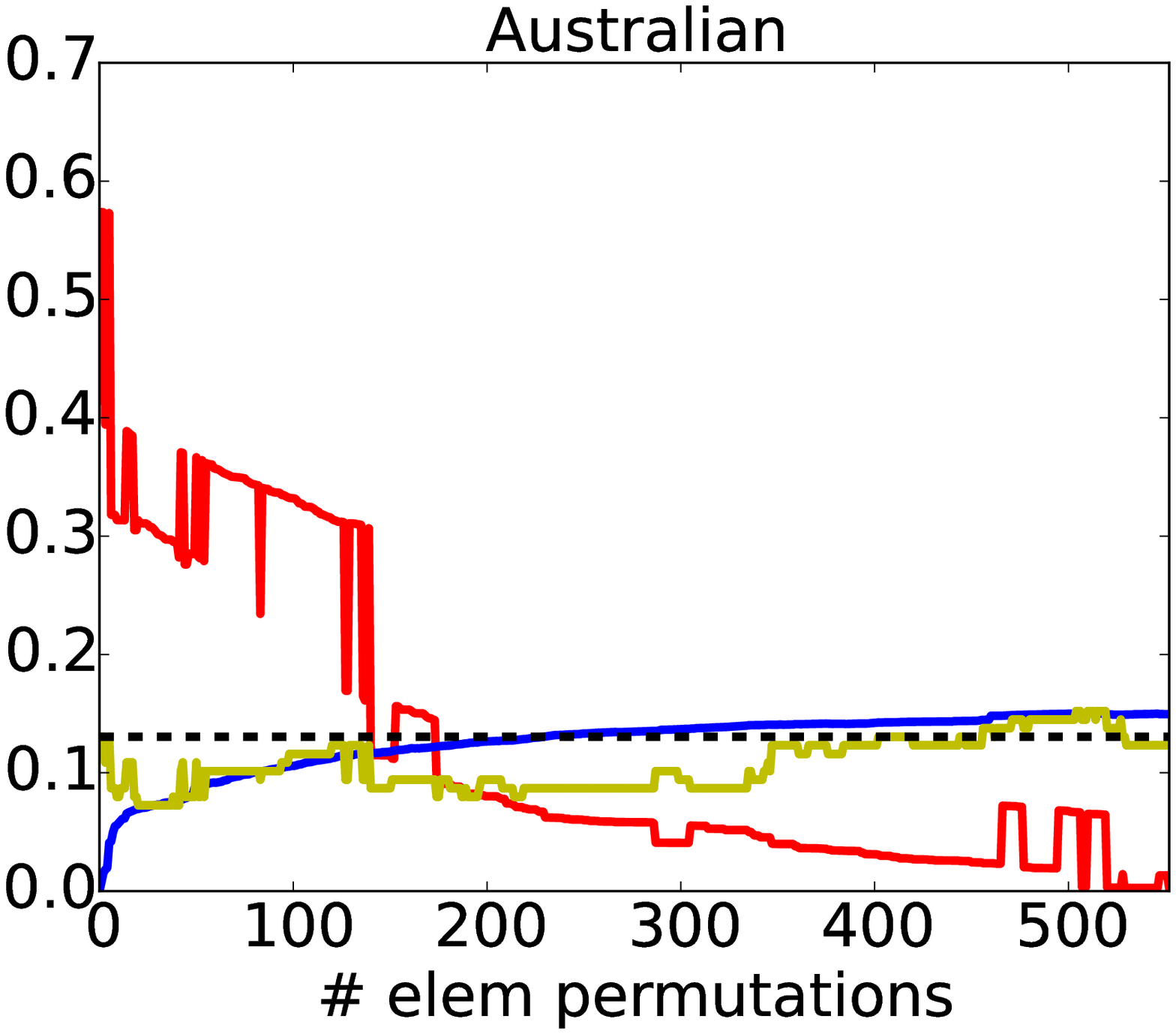}\hspace{-0.11cm} 
&
\hspace{-0.11cm} \includegraphics[height=2.83cm]{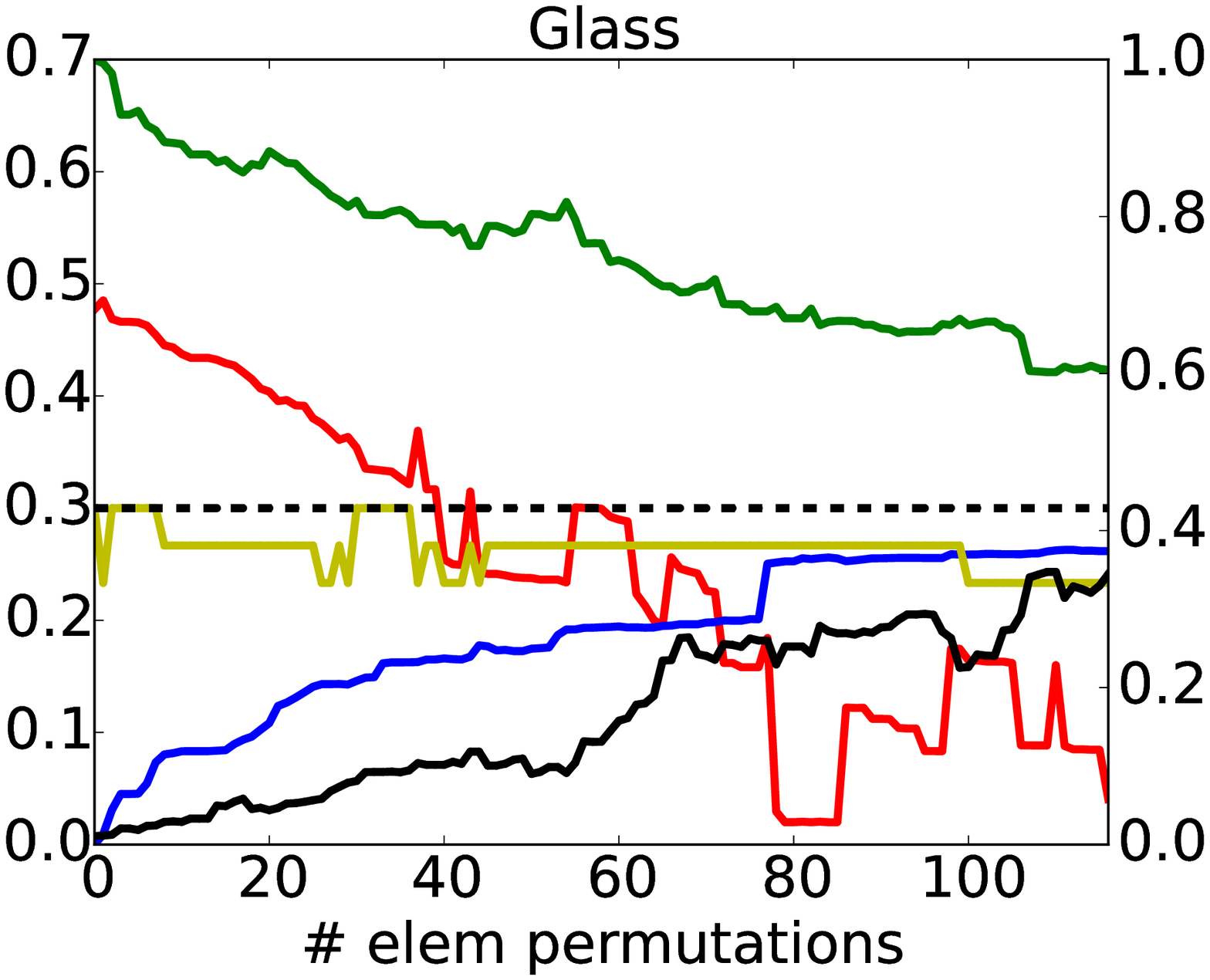}\hspace{-0.11cm} \\ \hline\hline
\end{tabular}
}
\end{tabular}
}
\caption{Experiments performed with \cp. Top row: reduction in \hsic~task;
  bottom row: data optimisation task. References to domain names are provided in
  Subsection \ref{exp_domains}. ``test-error*'' is test error
  over initial, non shuffled data. ``discrepancy'' is an upperbound on the
  \rcp~provided by Theorem \ref{thradcompUU}. The rightmost column aggregates the
  information of all curves for the task in the row, for two
  different domains.}\label{t-applis}
\end{center}
\end{table}

\begin{table}[t]
    \centering
\begin{center}
\begin{tabular}{cc}\hline\hline
\includegraphics[height=5.50cm]{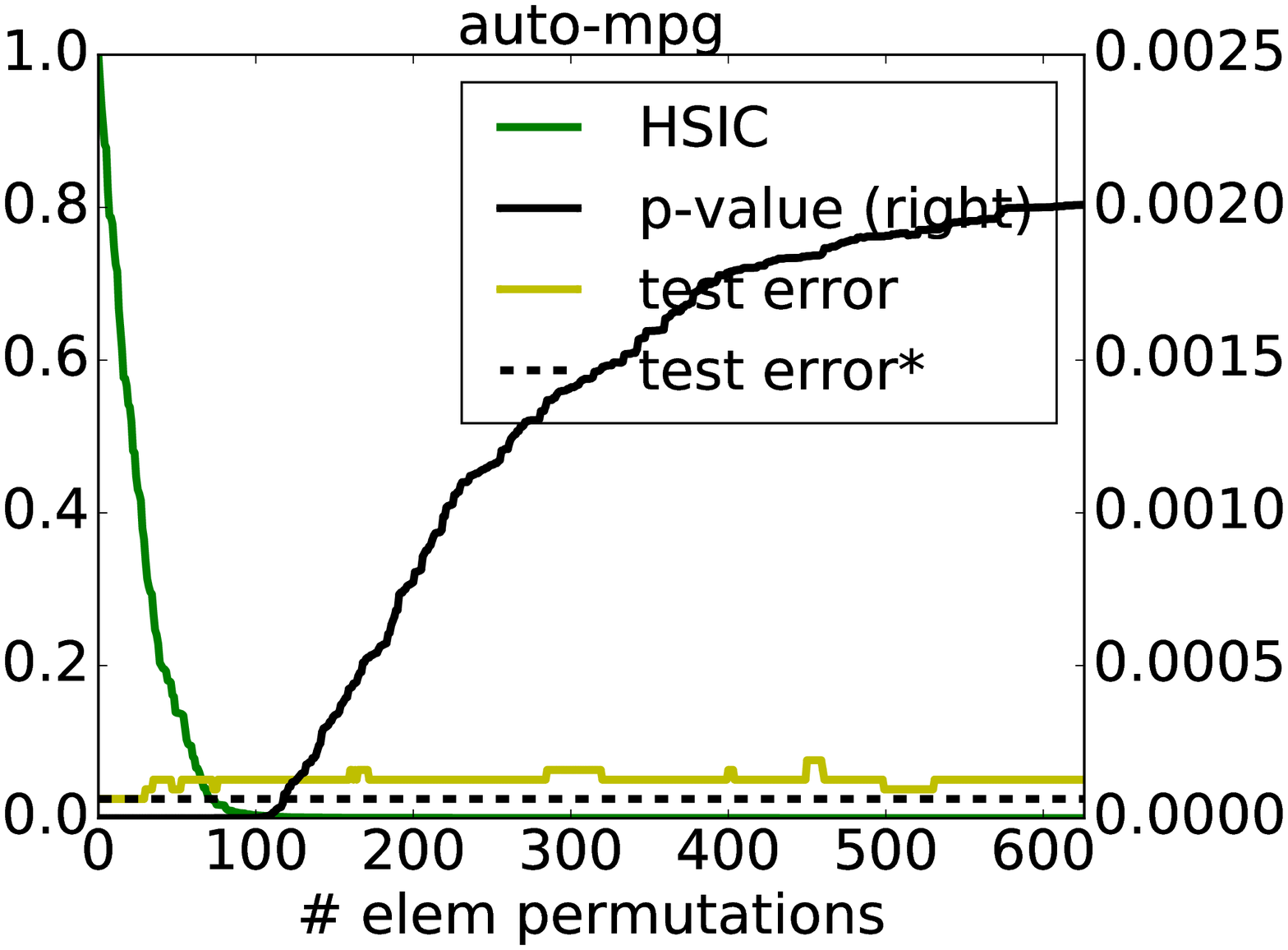}
& \includegraphics[height=5.50cm]{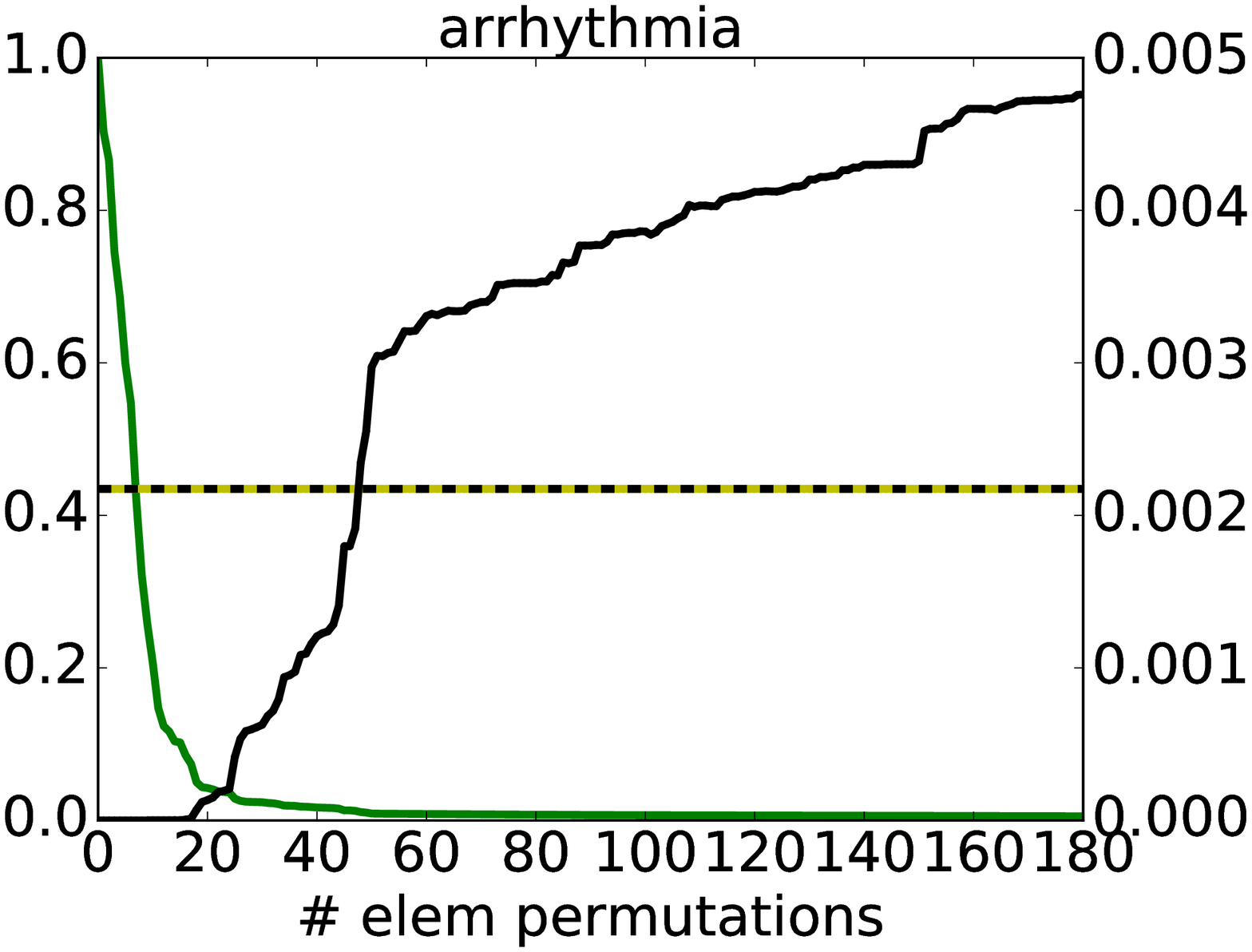}
\\
\texttt{pair0016} ($p_{\mbox{\tiny init}=0_{16}}$) & \texttt{pair0023} ($p_{\mbox{\tiny init}=0_{16}}$)\\\hline
\includegraphics[height=5.50cm]{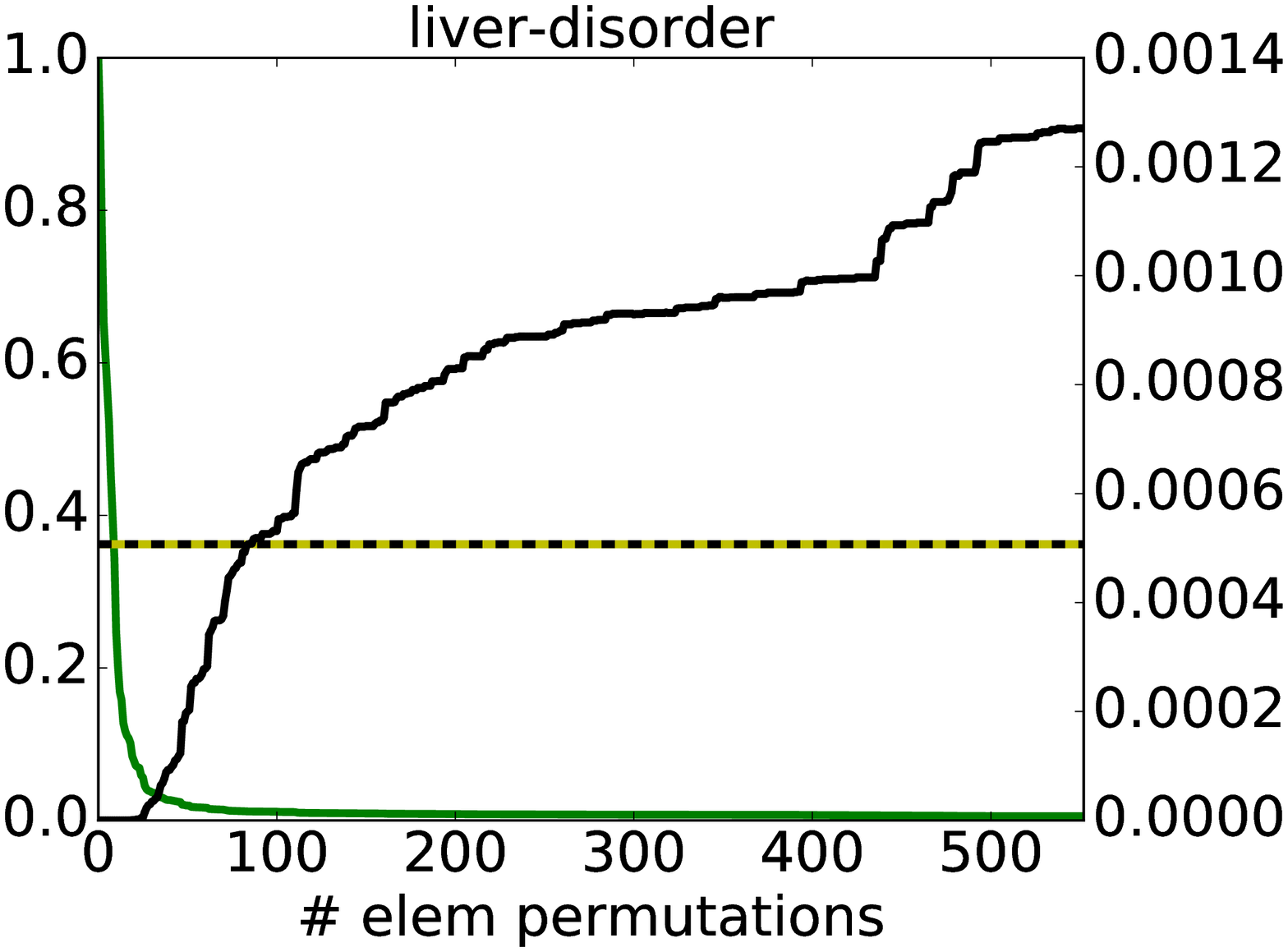}&
\includegraphics[height=5.50cm]{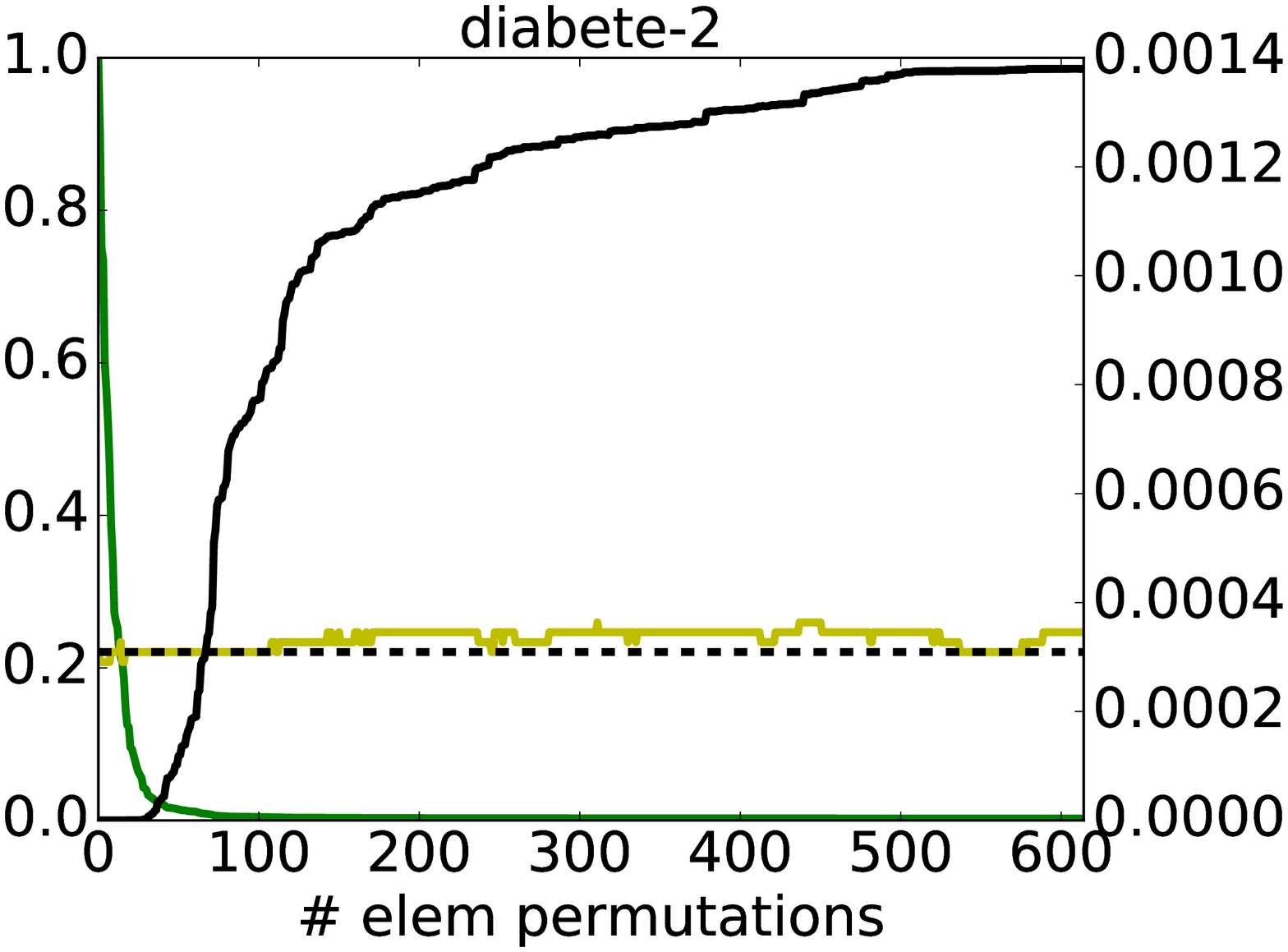}\\
\texttt{pair0034} ($p_{\mbox{\tiny init}=0_{16}}$) & \texttt{pair0038}
($p_{\mbox{\tiny init}=0_{16}}$)\\ \hline\hline
\end{tabular}
\caption{\hsic~reduction on specific causal tasks \citep{mpjzsDC}
  (referred to as \texttt{pair00XX}); datasets indicated on pictures;
  ``$p_{\mbox{\tiny init}}$'' is the initial $p$-value, $0_N$ indicating
  zero up to $N^{th}$ digit (see Table \ref{t-applis} for additional
  notations, and text for details).}\label{res-2features}
\end{center}
\end{table}

\newpage 
\subsection{Complete experimental results}\label{res-s}

\begin{table}[h]
    \centering
\begin{center}
\begin{tabular}{cccc}\hline\hline
\includegraphics[height=4.50cm]{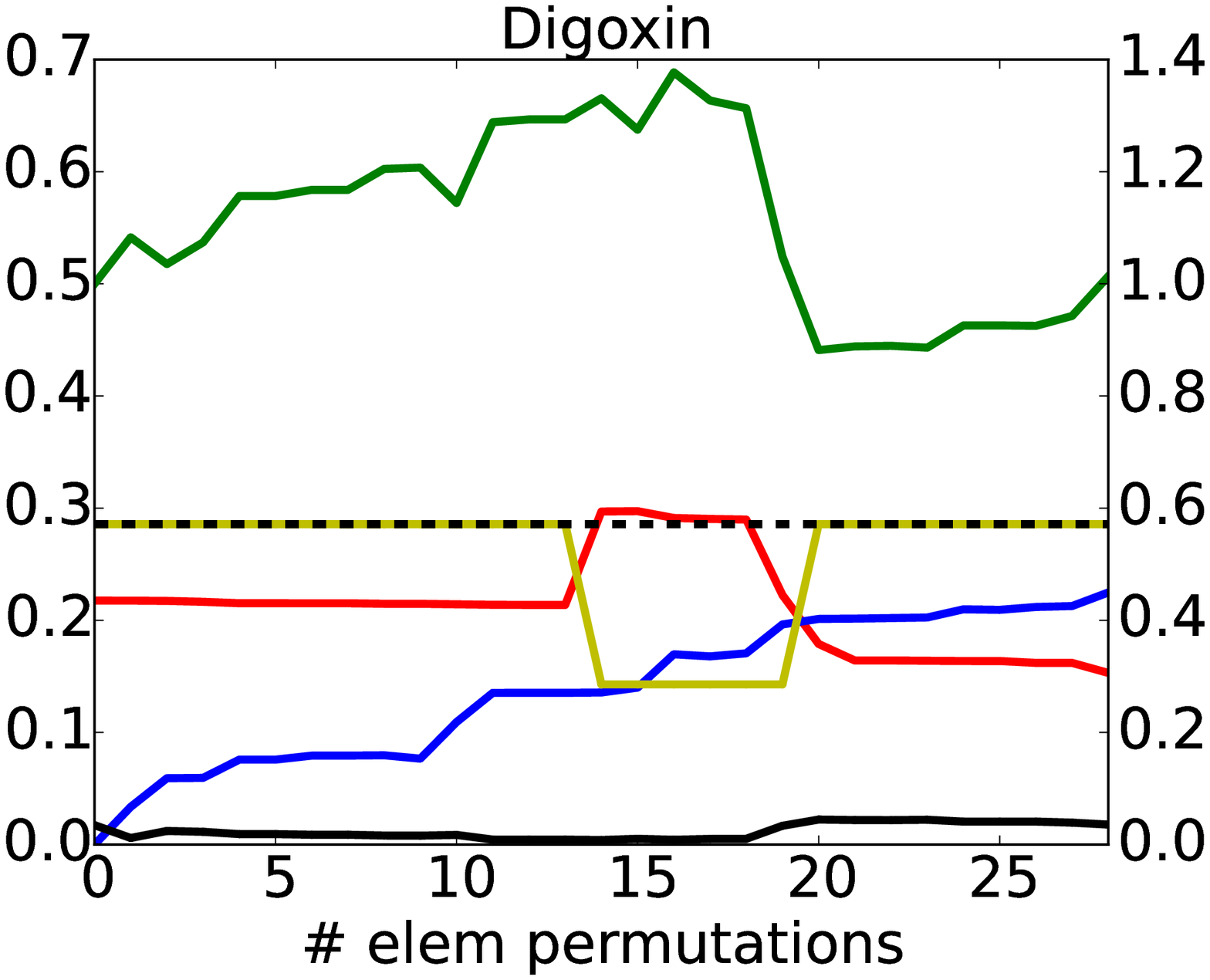}
& \includegraphics[height=4.50cm]{Archive/hsic/complete/hsic_digoxin.eps}\\
Data optimisation & \hsic~reduction\\ \hline\hline
\end{tabular}
\caption{Results on domain Digoxin. Left: Data optimisation;
  right: \hsic~reduction. 
Color codes are the same on
  all plots. See text  for details.}\label{res-digoxin}
\end{center}
\end{table}

\begin{table}[t]
    \centering
\begin{center}
\begin{tabular}{cc}\hline\hline
\includegraphics[height=4.50cm]{Archive/do/complete/do_glass_nolegend.eps}
& \includegraphics[height=4.50cm]{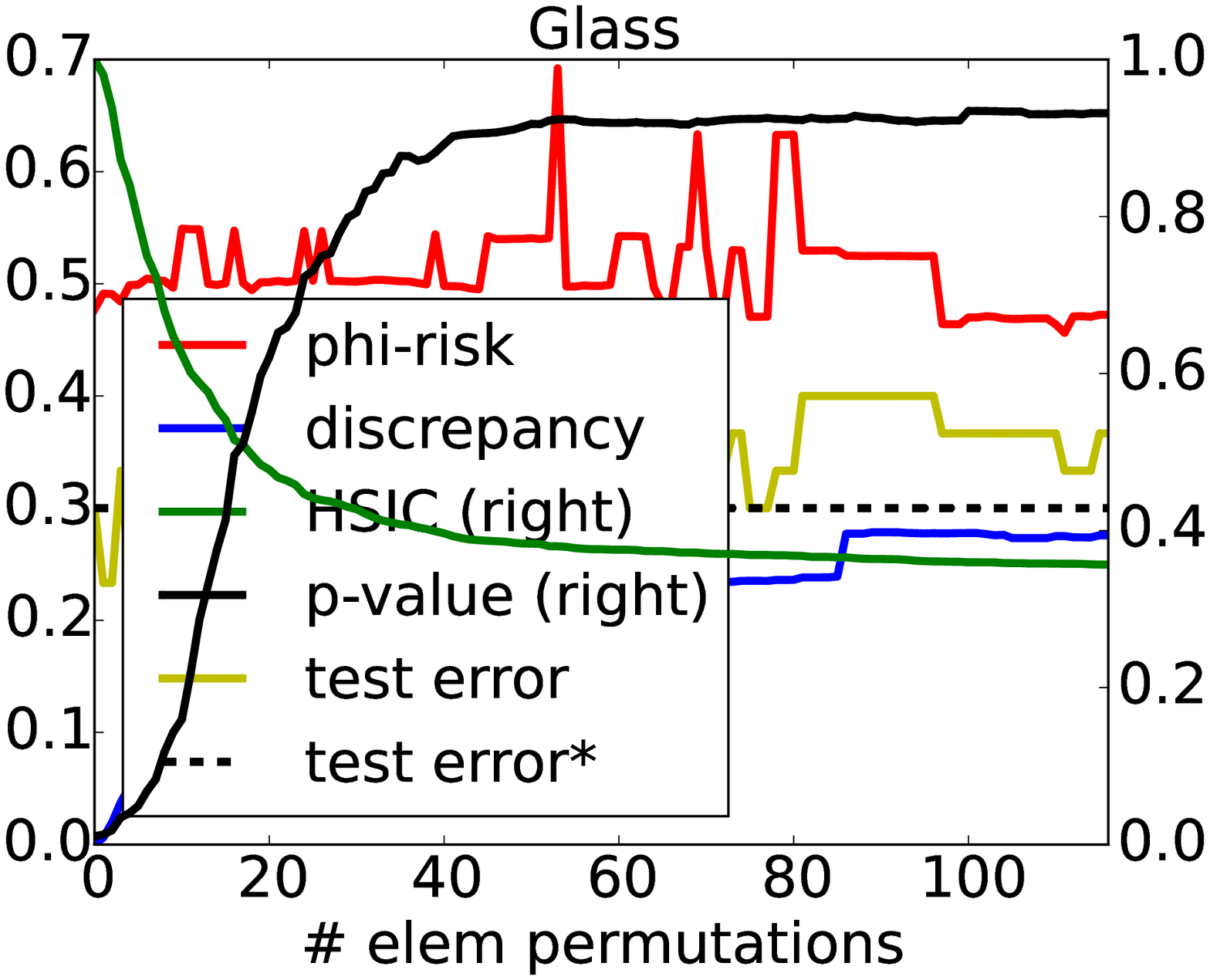}\\
Data optimisation & \hsic~reduction\\ \hline\hline
\end{tabular}
\caption{Results on domain Glass. Left: Data optimisation;
  right: \hsic~reduction. 
Color codes are the same on
  all plots. Color codes are the same on
  all plots. See text  for details.}\label{res-glass}
\end{center}
\end{table}

\begin{table}[t]
    \centering
\begin{center}
\begin{tabular}{cc}\hline\hline
\includegraphics[height=4.50cm]{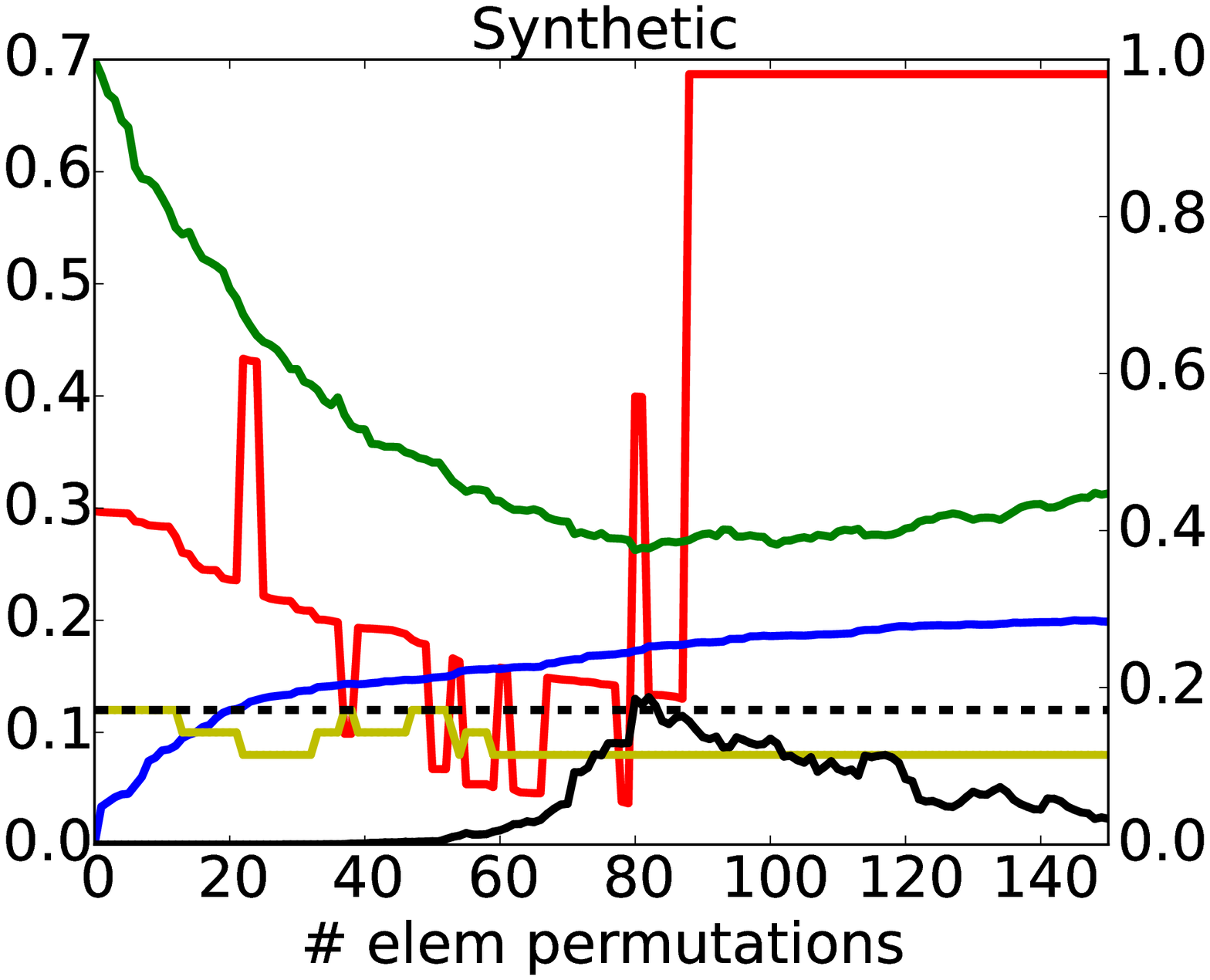}
& \includegraphics[height=4.50cm]{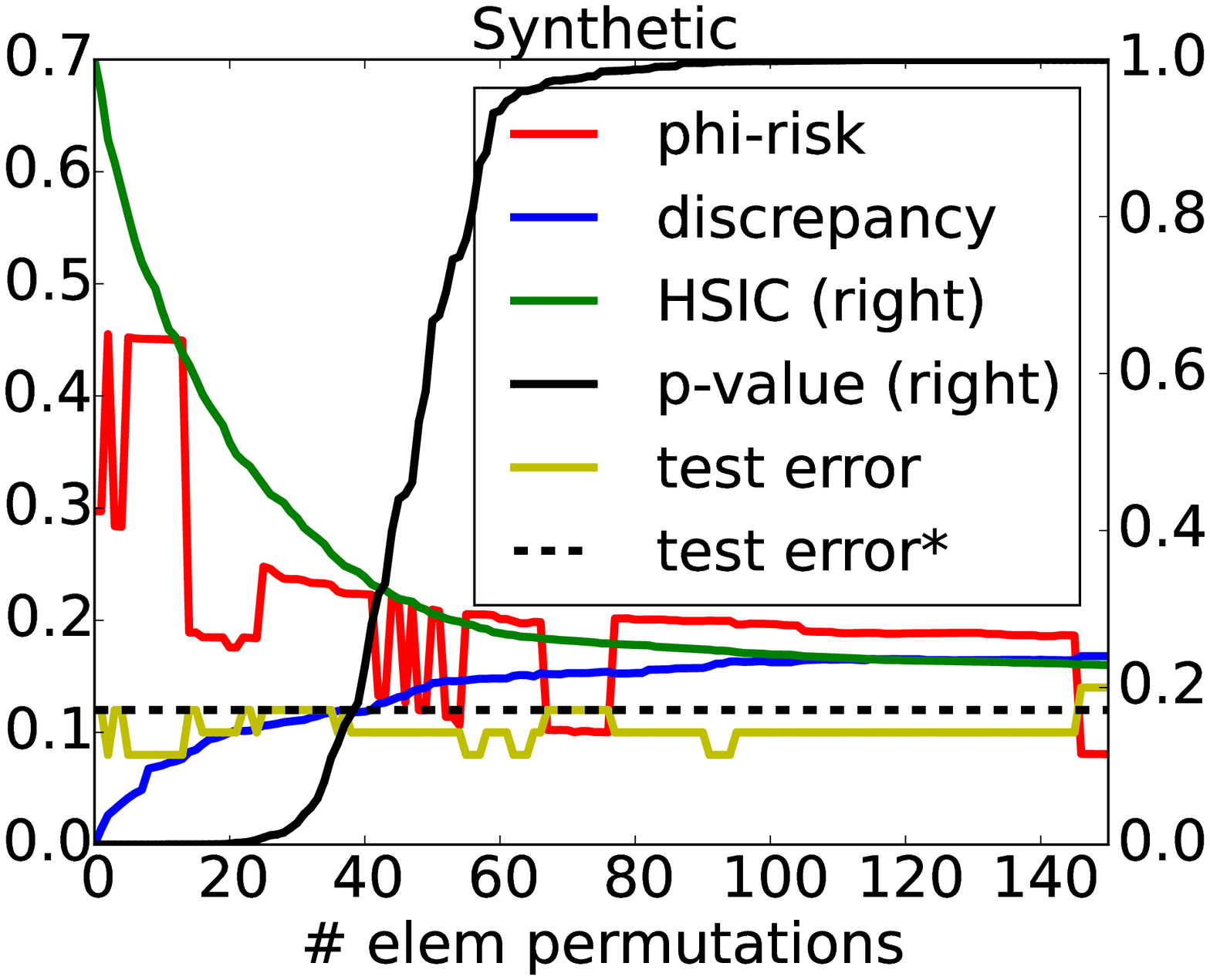}\\
Data optimisation & \hsic~reduction\\ \hline\hline
\end{tabular}
\caption{Results on domain Synthetic. Left: Data optimisation;
  right: \hsic~reduction. 
Color codes are the same on
  all plots. Color codes are the same on
  all plots. See text  for details.}\label{res-synth}
\end{center}
\end{table}

\begin{table}[t]
    \centering
\begin{center}
\begin{tabular}{cc}\hline\hline
\includegraphics[height=4.50cm]{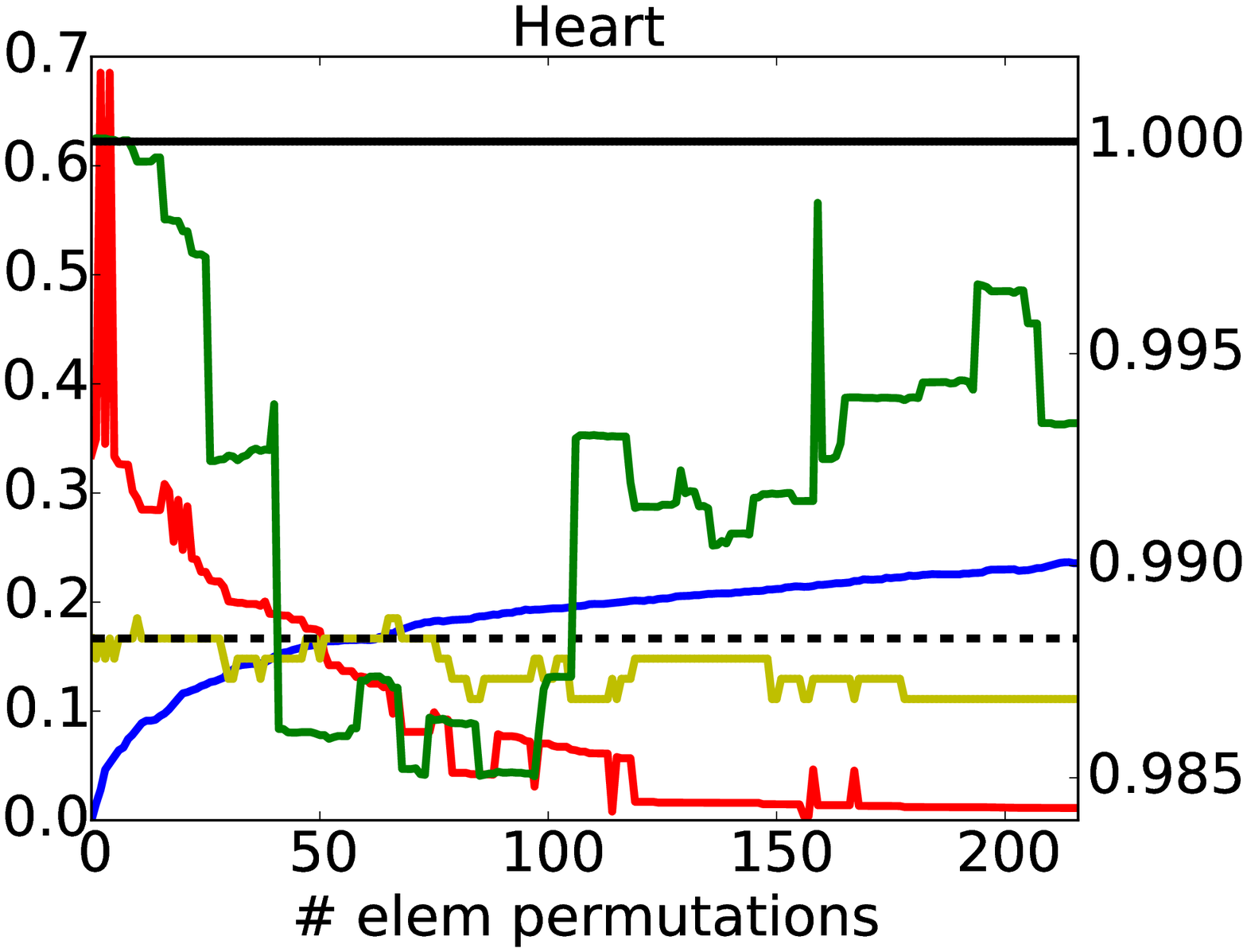}
& \includegraphics[height=4.50cm]{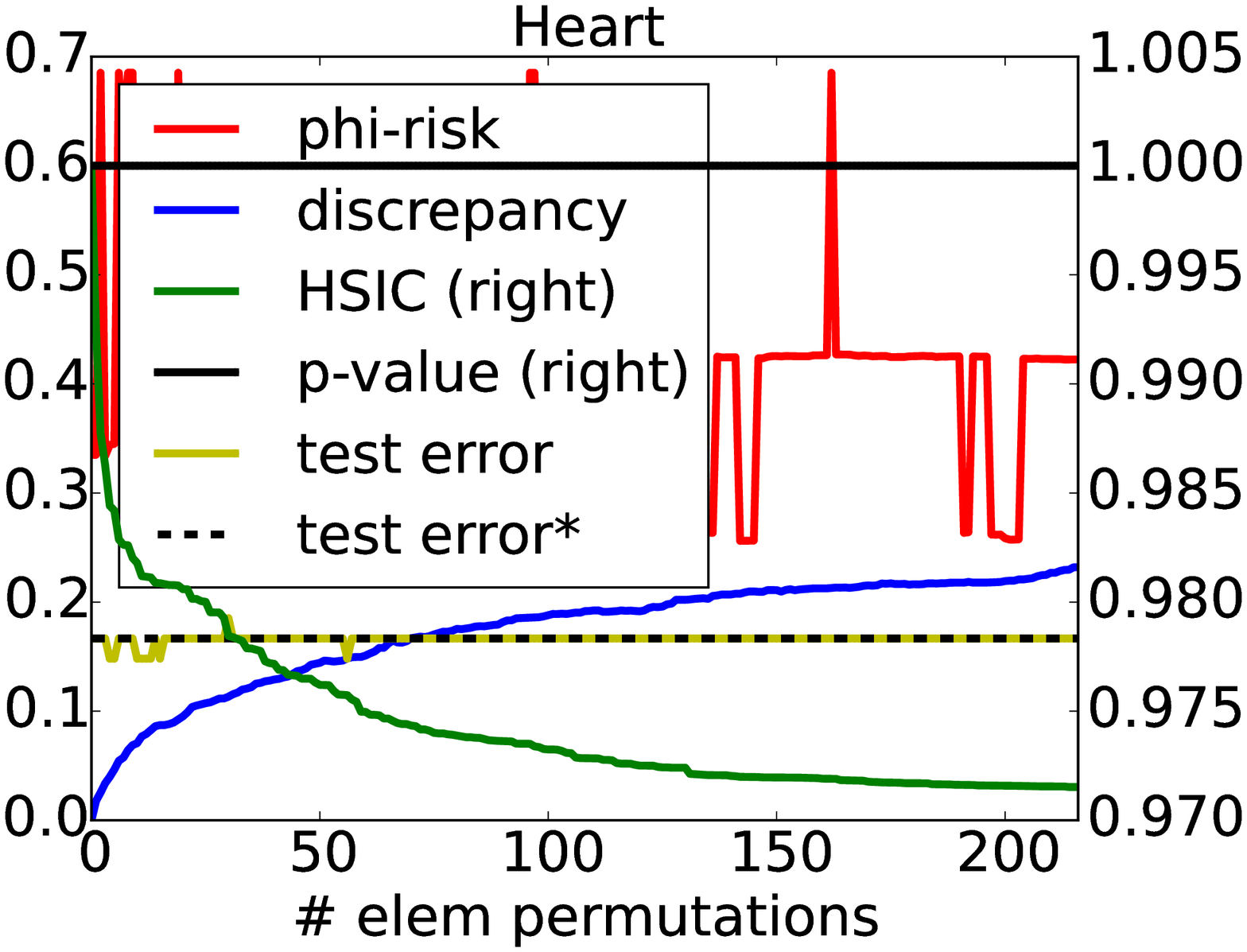}\\
Data optimisation & \hsic~reduction\\ \hline\hline
\end{tabular}
\caption{Results on domain Heart. Left: Data optimisation;
  right: \hsic~reduction. 
Color codes are the same on
  all plots. See text  for details.}\label{res-heart}
\end{center}
\end{table}

\begin{table}[t]
    \centering
\begin{center}
\begin{tabular}{cc}\hline\hline
\includegraphics[height=4.50cm]{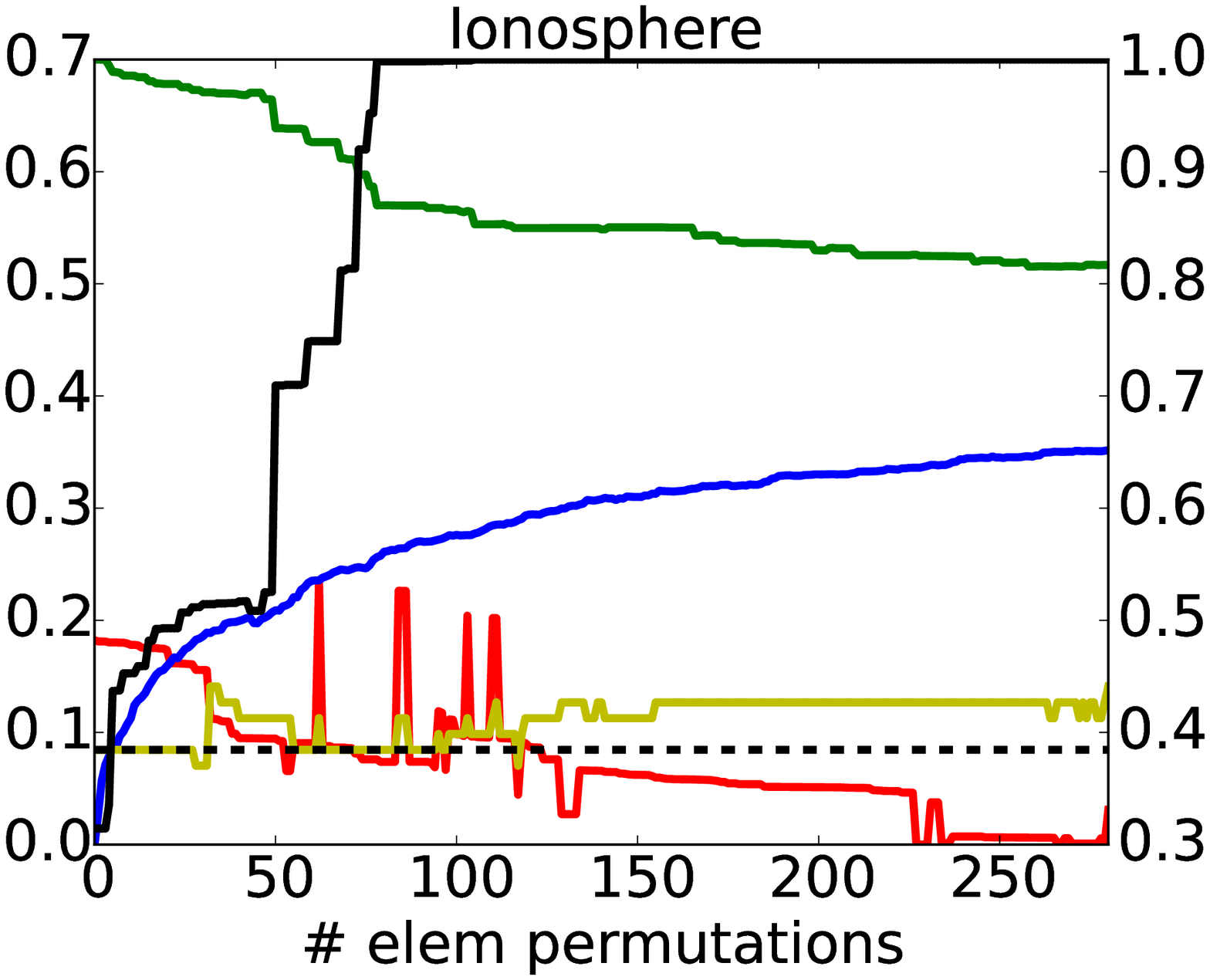} 
& \includegraphics[height=4.50cm]{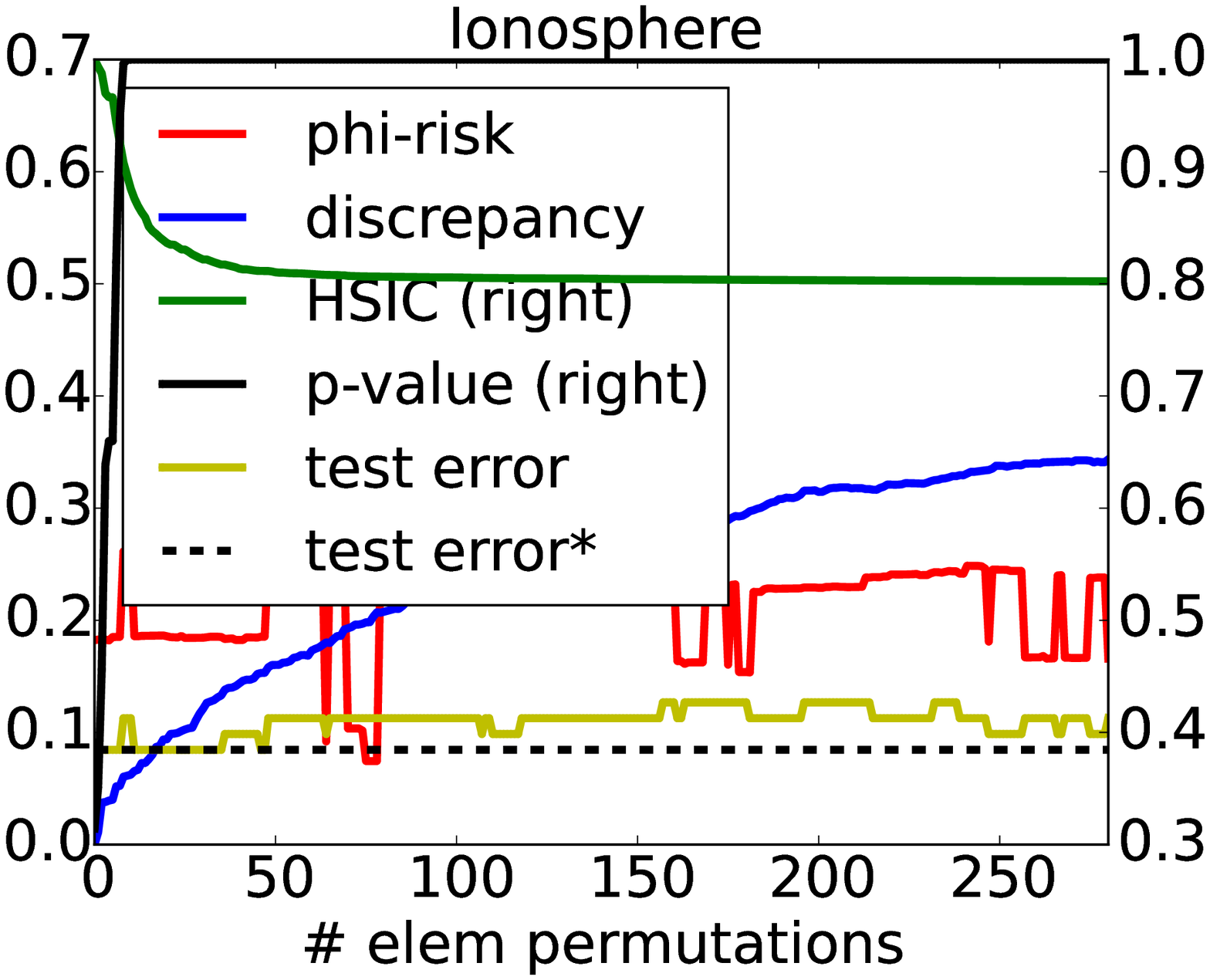}\\
Data optimisation & \hsic~reduction\\ \hline\hline
\end{tabular}
\caption{Results on domain Ionosphere. Left: Data optimisation;
  right: \hsic~reduction. 
Color codes are the same on
  all plots. See text  for details.}\label{res-iono}
\end{center}
\end{table}

\begin{table}[t]
    \centering
\begin{center}
\begin{tabular}{cc}\hline\hline
\includegraphics[height=4.50cm]{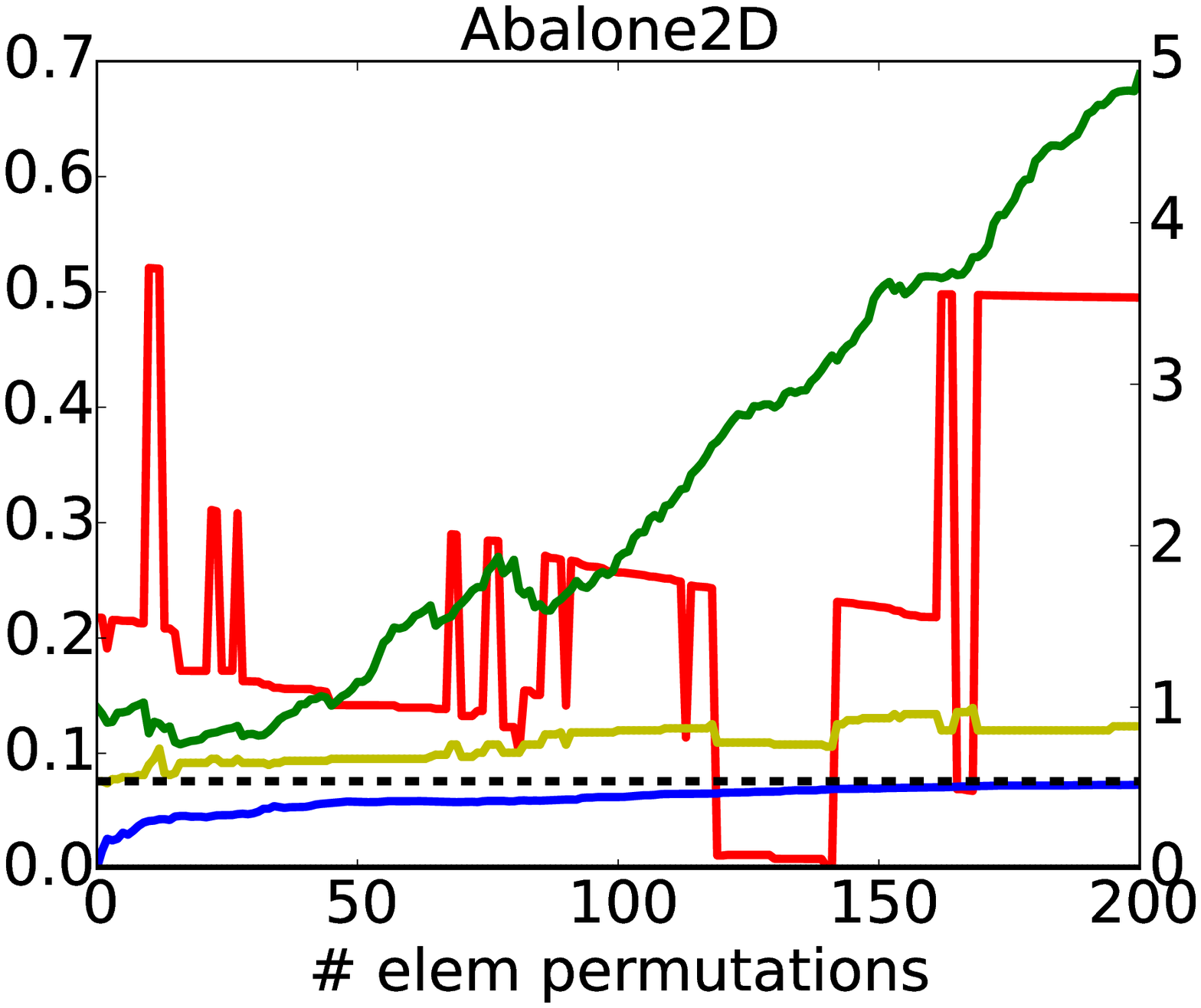}
& \includegraphics[height=4.50cm]{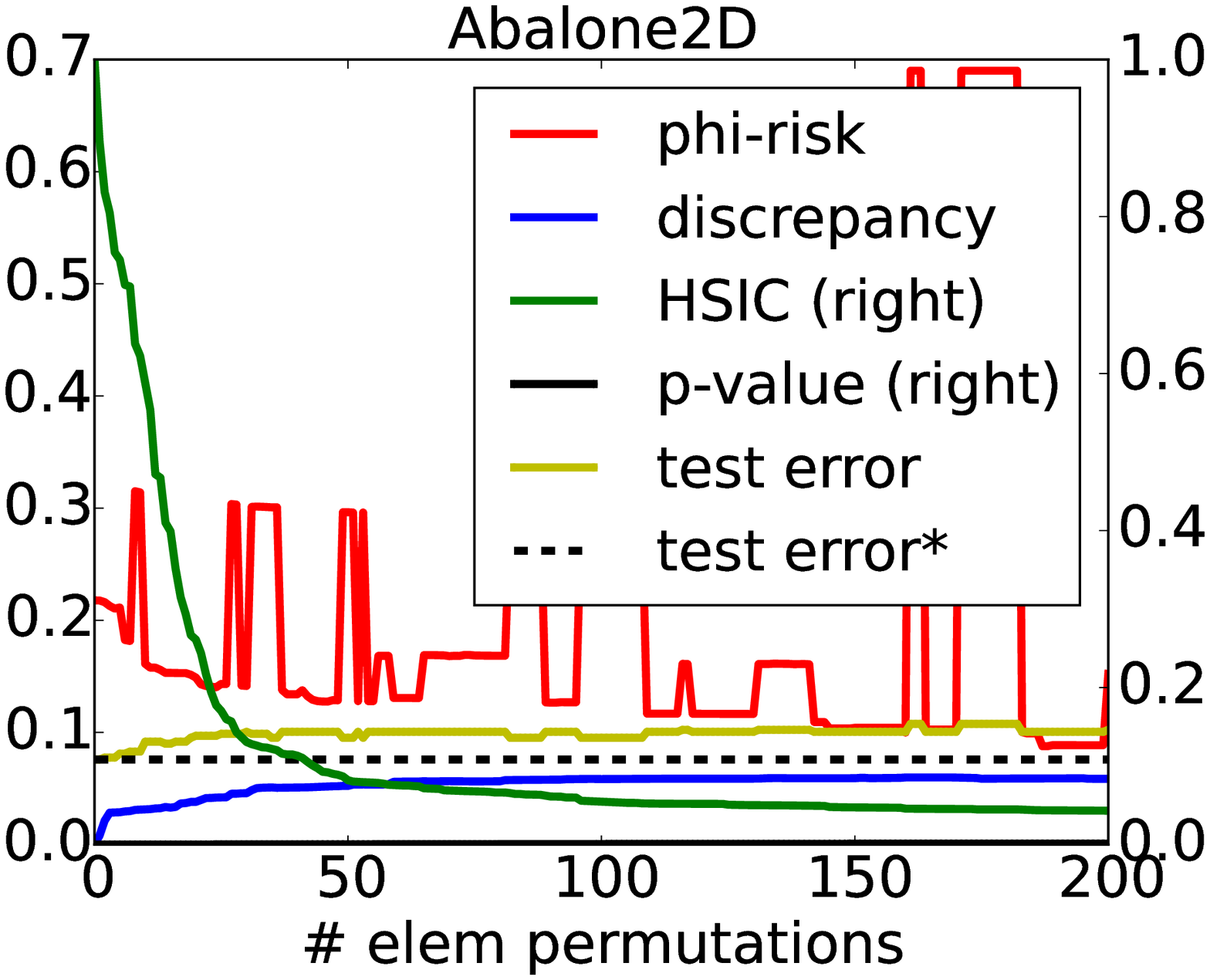}\\
Data optimisation & \hsic~reduction\\ \hline\hline
\end{tabular}
\caption{Results on domain Abalone. Left: Data optimisation;
  right: \hsic~reduction. 
Color codes are the same on
  all plots. The scale of the $p$-value curve is not the same as in
  the main file: here, its scale is the same as for the \hsic~curve,
  which explains why it seems to be flat while the value for the first
  iterations is the zero-machine and the values for the last exceed
  one per thousand. See text  for details.}\label{res-aba}
\end{center}
\end{table}

\begin{table}[t]
    \centering
\begin{center}
\begin{tabular}{cc}\hline\hline
\includegraphics[height=4.50cm]{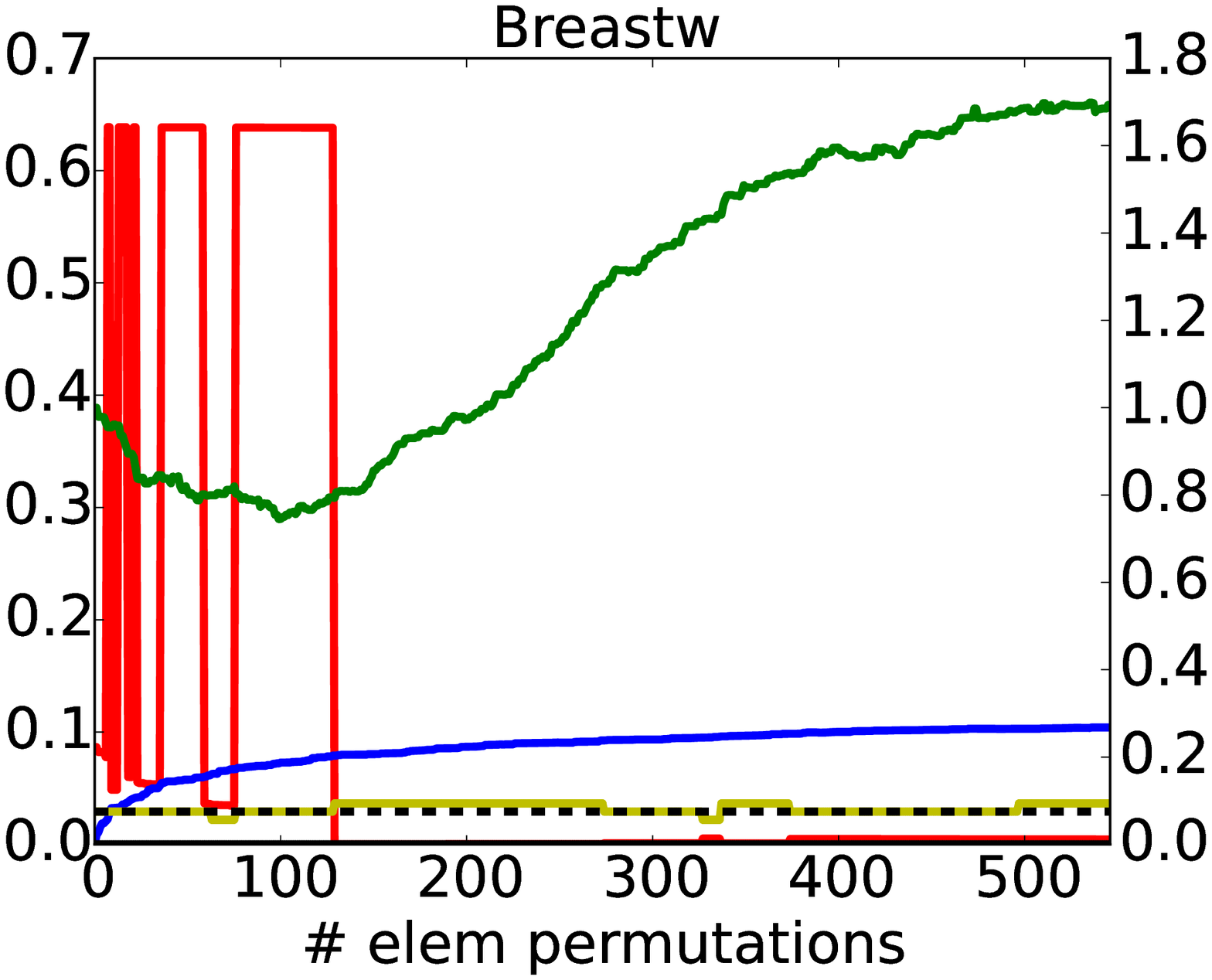}
& \includegraphics[height=4.50cm]{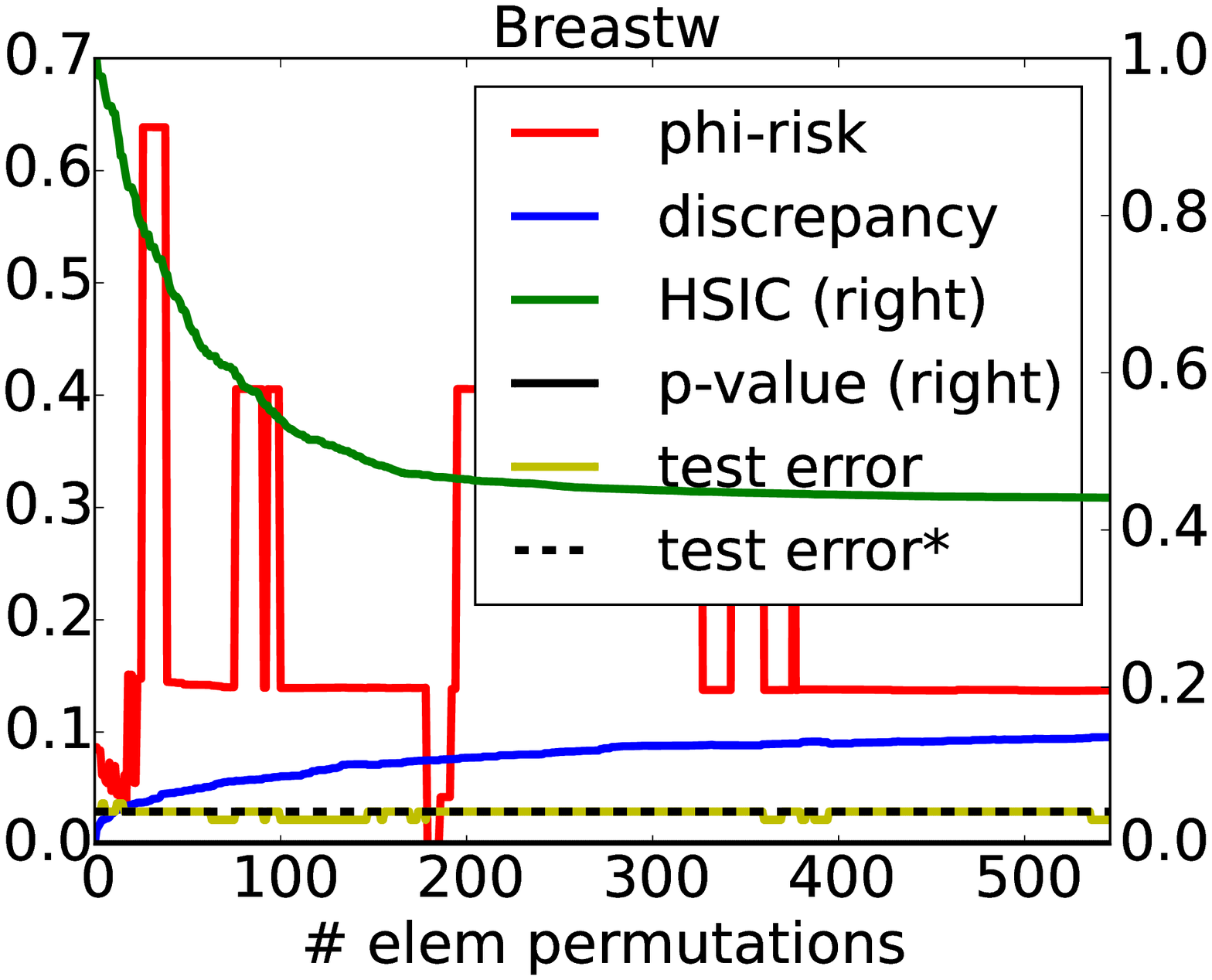}\\
Data optimisation & \hsic~reduction\\ \hline\hline
\end{tabular}
\caption{Results on domain BreastWisc. Left: Data optimisation;
  right: \hsic~reduction. 
Color codes are the same on
  all plots. See text  for details.}\label{res-breast}
\end{center}
\end{table}

\begin{table}[t]
    \centering
\begin{center}
\begin{tabular}{cc}\hline\hline
\includegraphics[height=4.50cm]{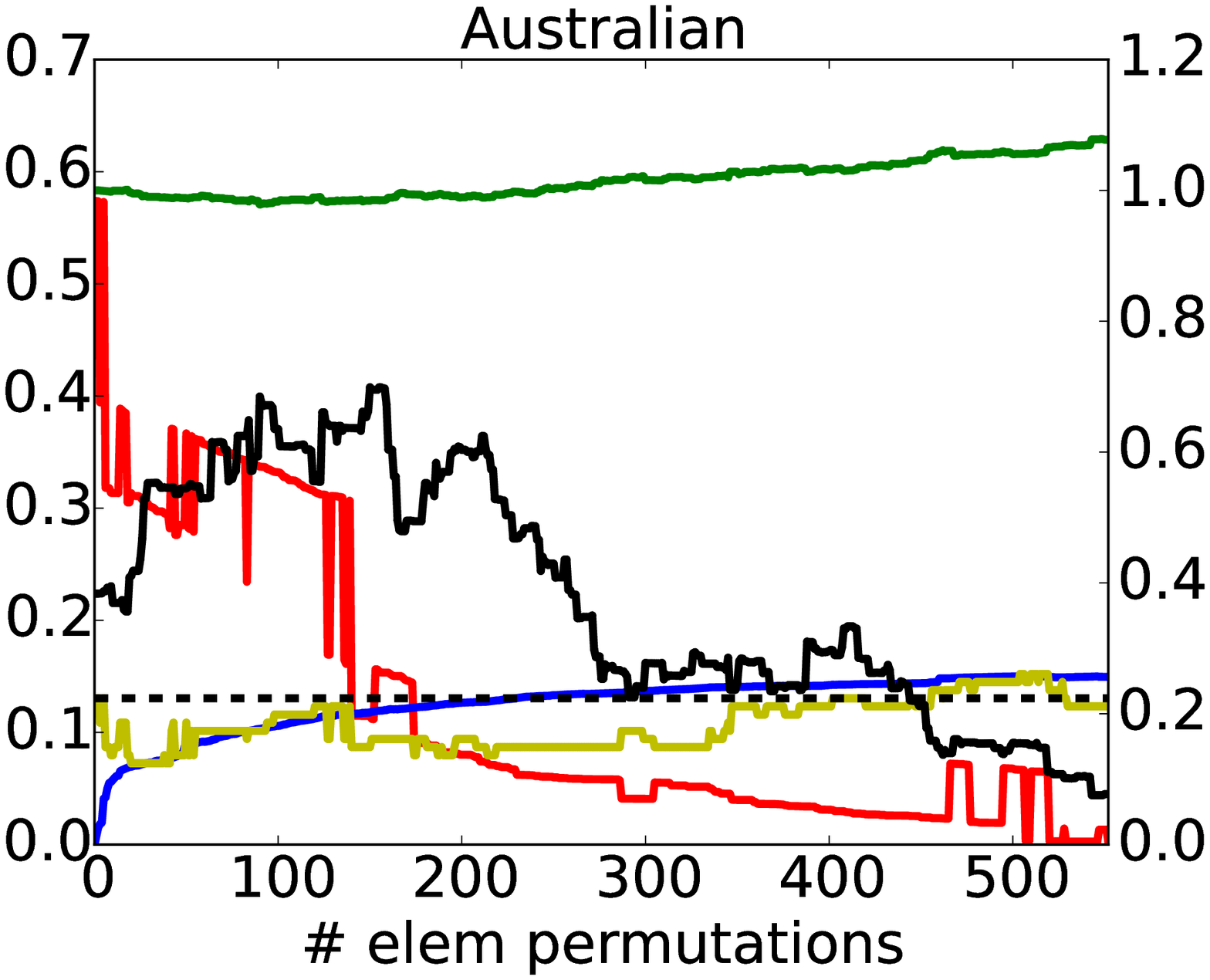}
& \includegraphics[height=4.50cm]{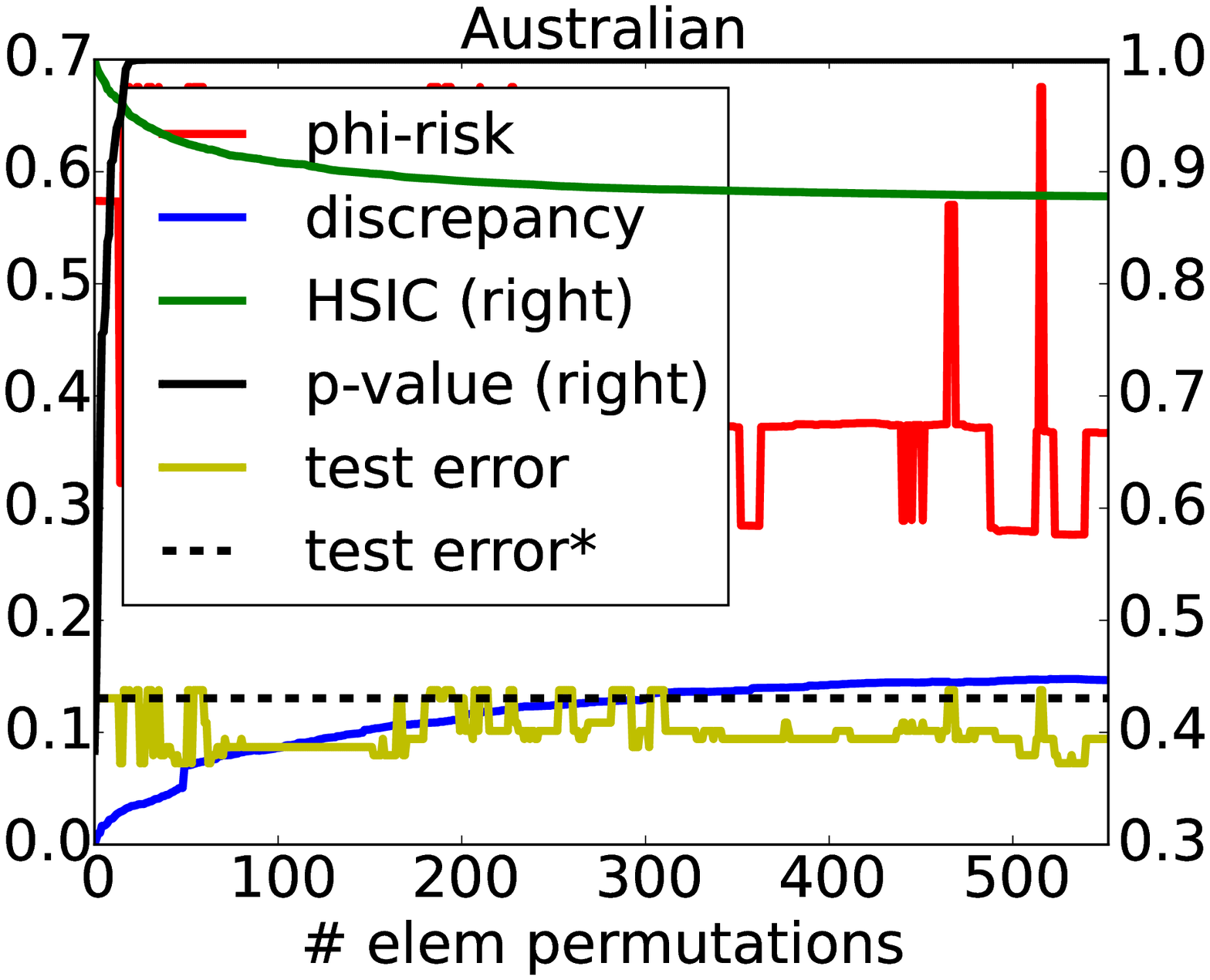}\\
Data optimisation & \hsic~reduction\\ \hline\hline
\end{tabular}
\caption{Results on domain Australian. Left: Data optimisation;
  right: \hsic~reduction. 
Color codes are the same on
  all plots. See text  for details.}\label{res-australian}
\end{center}
\end{table}

\begin{table}[t]
    \centering
\begin{center}
\begin{tabular}{cc}\hline\hline
\includegraphics[height=4.50cm]{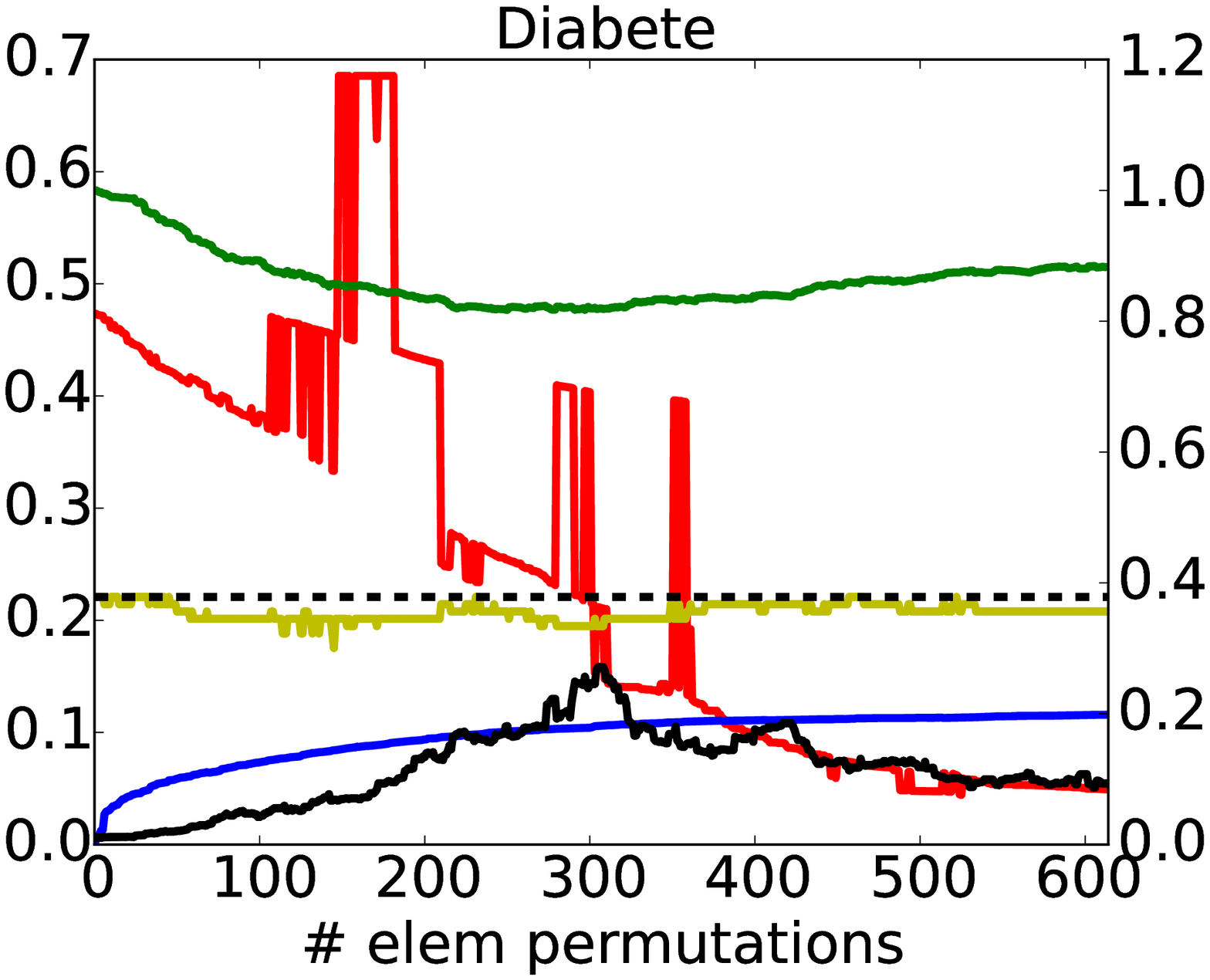}
& \includegraphics[height=4.50cm]{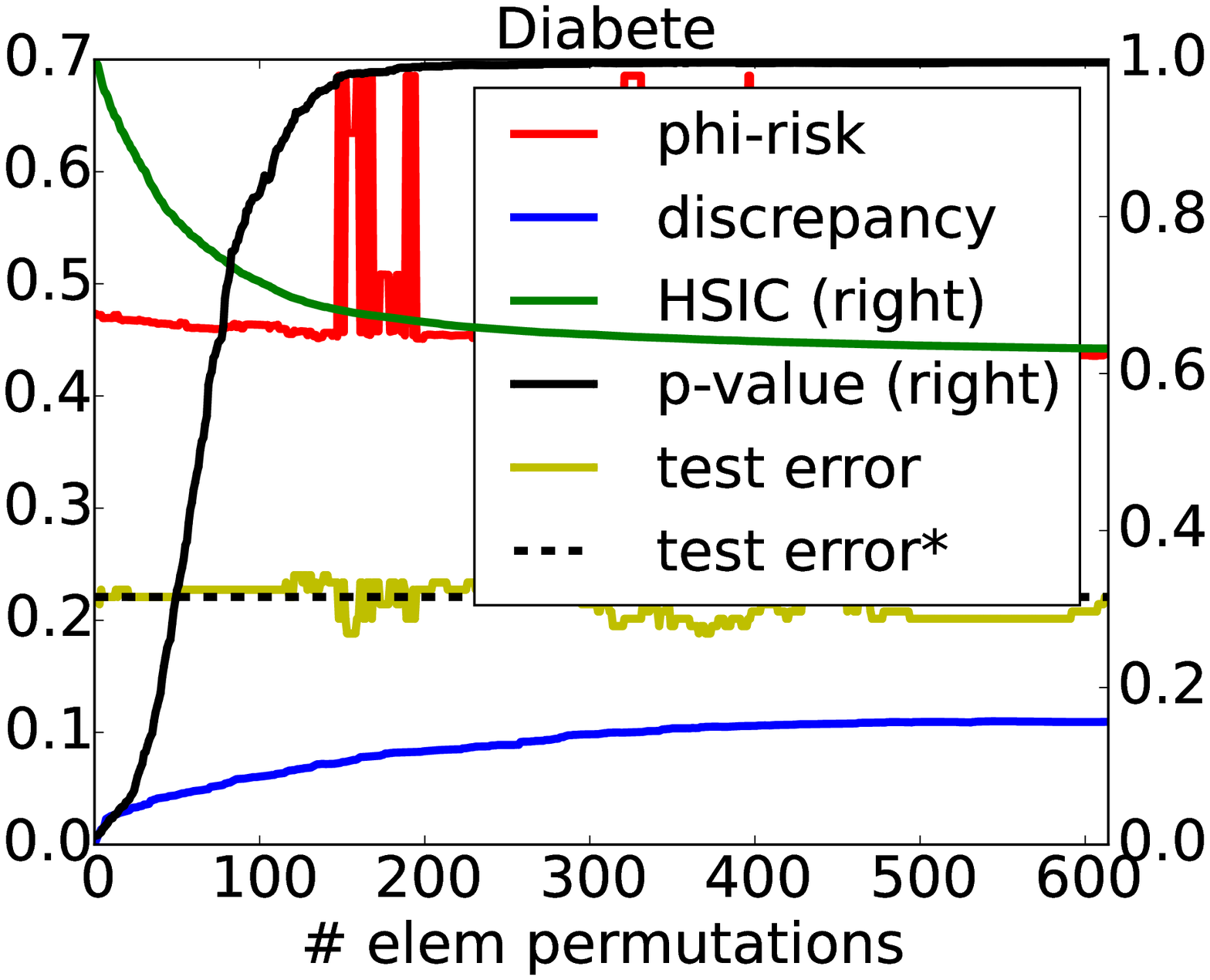}\\
Data optimisation & \hsic~reduction\\ \hline\hline
\end{tabular}
\caption{Results on domain Diabete$\_$scale. Left: Data optimisation;
  right: \hsic~reduction. 
Color codes are the same on
  all plots. See text  for details.}\label{res-diabetes}
\end{center}
\end{table}

\begin{table}[t]
    \centering
\begin{center}
{\scriptsize
\begin{tabular}{c}
\multicolumn{1}{c}{\hspace{-0.25cm}\begin{tabular}{>{\centering\arraybackslash} m{0.1cm}ccc||c}\hline\hline
\vspace{-2cm}\begin{sideways}{\footnotesize \hspace{-0.11cm}  data optimisation}\end{sideways} & \hspace{-0.11cm} \includegraphics[height=2.83cm]{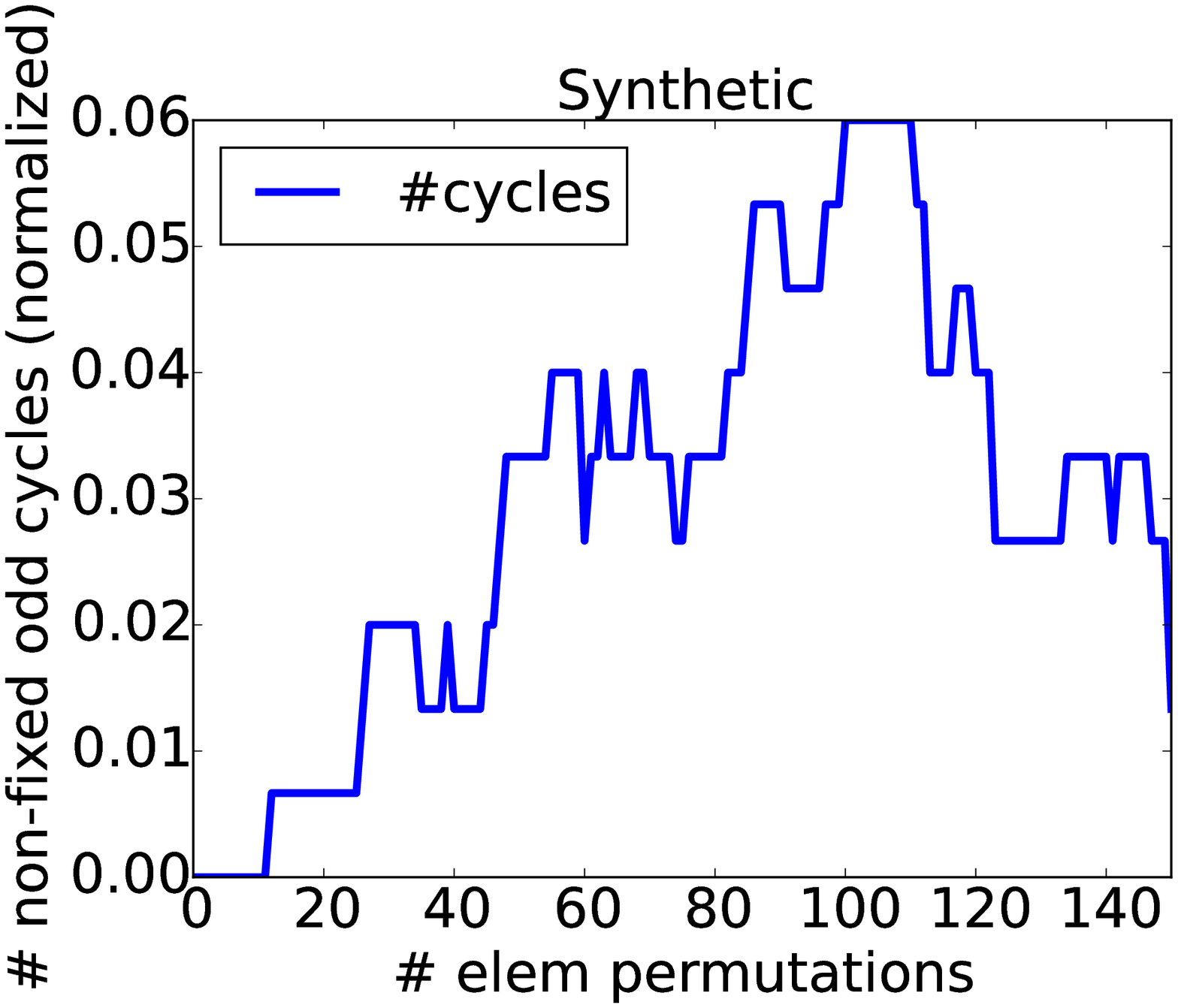}\hspace{-0.11cm} 
& \hspace{-0.11cm} \includegraphics[height=2.83cm]{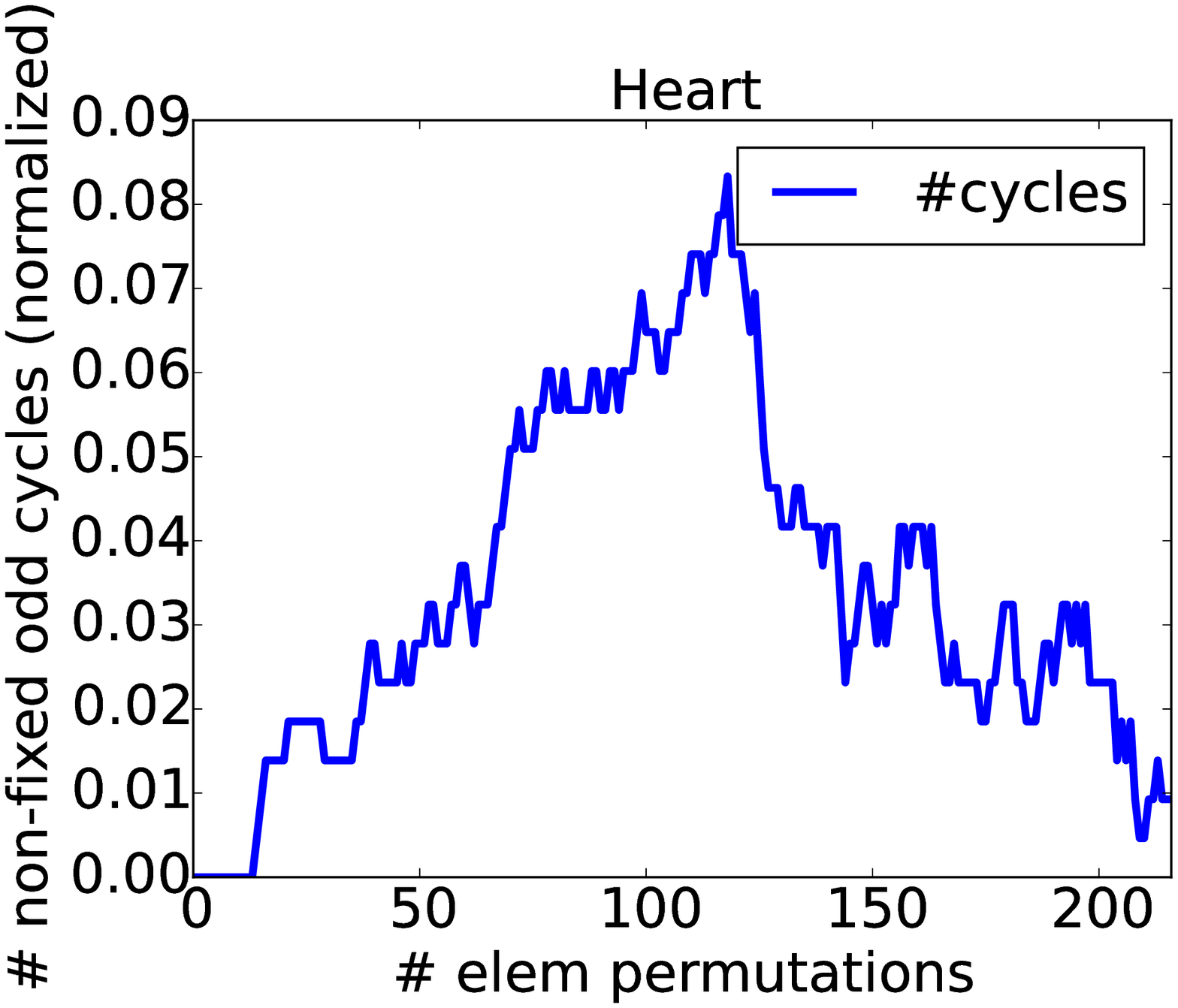}\hspace{-0.11cm} 
&
\hspace{-0.11cm} \includegraphics[height=2.83cm]{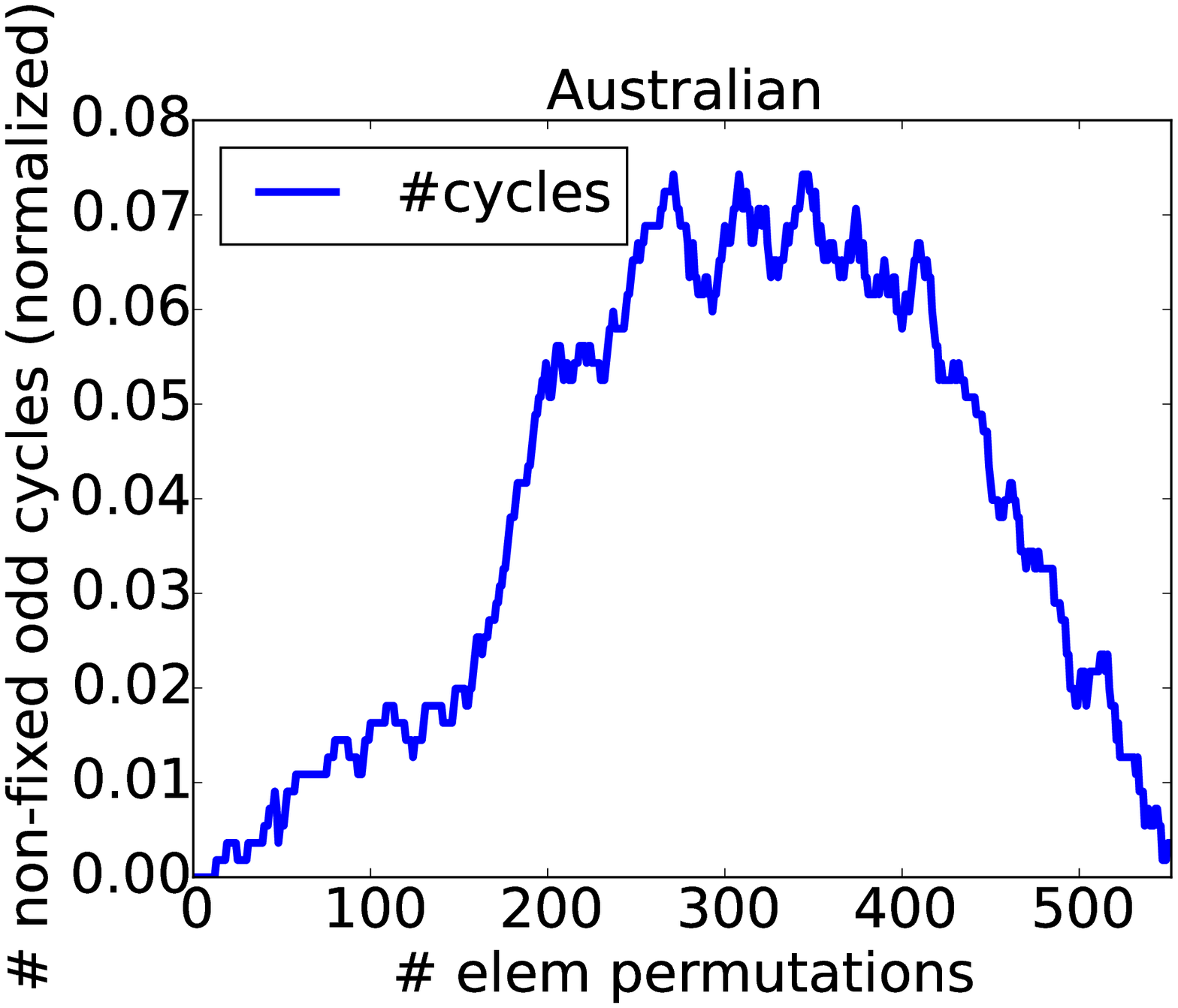}\hspace{-0.11cm} 
&
\hspace{-0.11cm} \includegraphics[height=2.83cm]{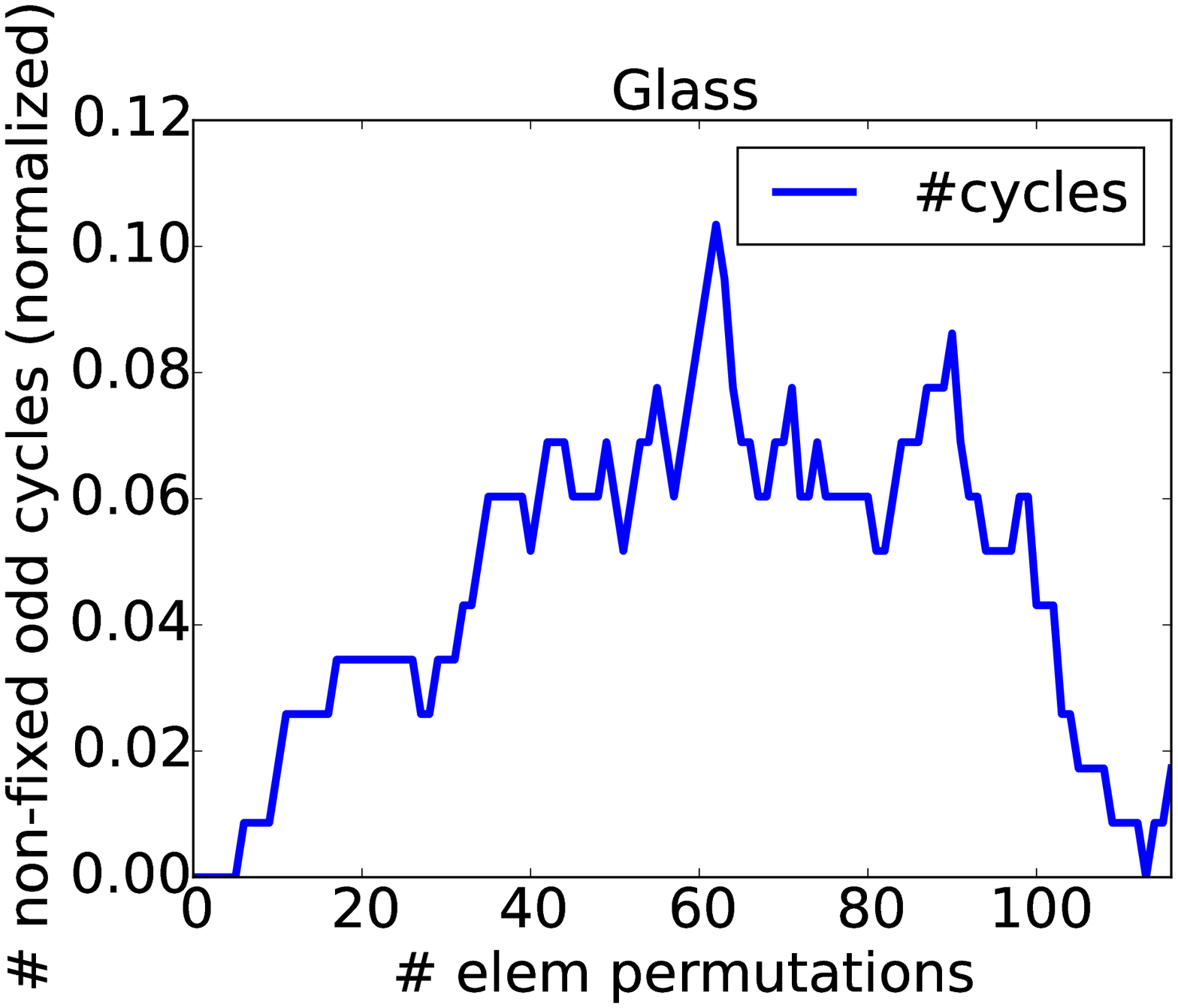}\hspace{-0.11cm} \\ \hline\hline
\end{tabular}
}
\end{tabular}
}

\caption{Number of odd cycles (excluding fixed points, normalized by $m$) for the data
  optimization experiments in Table \ref{t-applis}.}\label{t-applis-oc}
\end{center}
\end{table}

\clearpage
\newpage
\subsection{Comparisons of block-class vs arbitrary permutations}\label{res-comp-c}

We now compare \perm~as in Algorithm
 \ref{algoMETA} to the one where we relax the constraint that
 permutations must be block-class (implying the invariance of the
 mean operator). See Tables \ref{res-comp}, \ref{res-comp2}. The results are a clear
 advocacy for the constraint, as relaxing it brings
 poor results, from both the $\varphi$-risk and test error standpoints. 

\begin{table}[h]
    \centering
\begin{center}
\begin{tabular}{c|c}\hline\hline
\includegraphics[height=4.10cm]{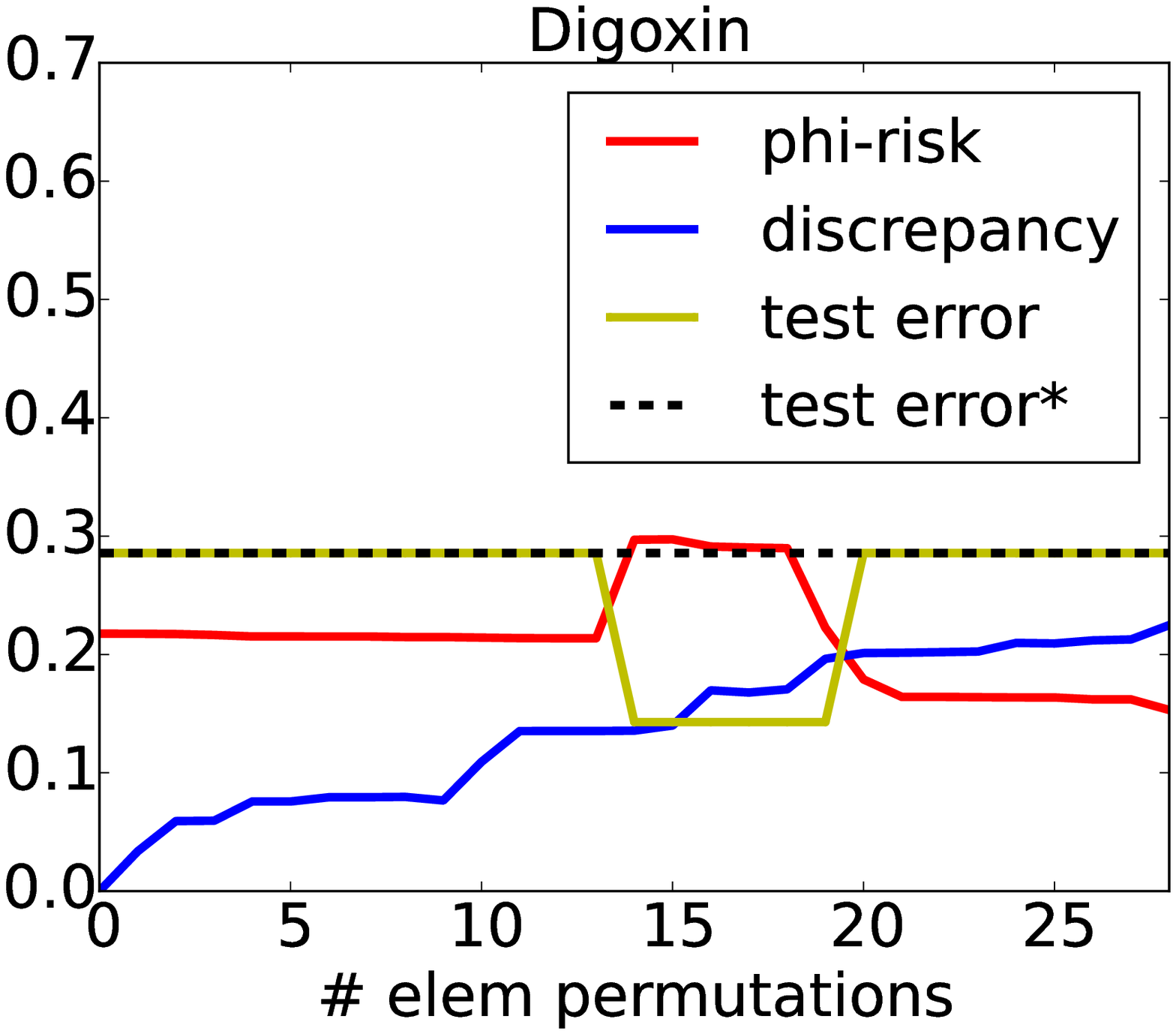}
& \includegraphics[height=4.10cm]{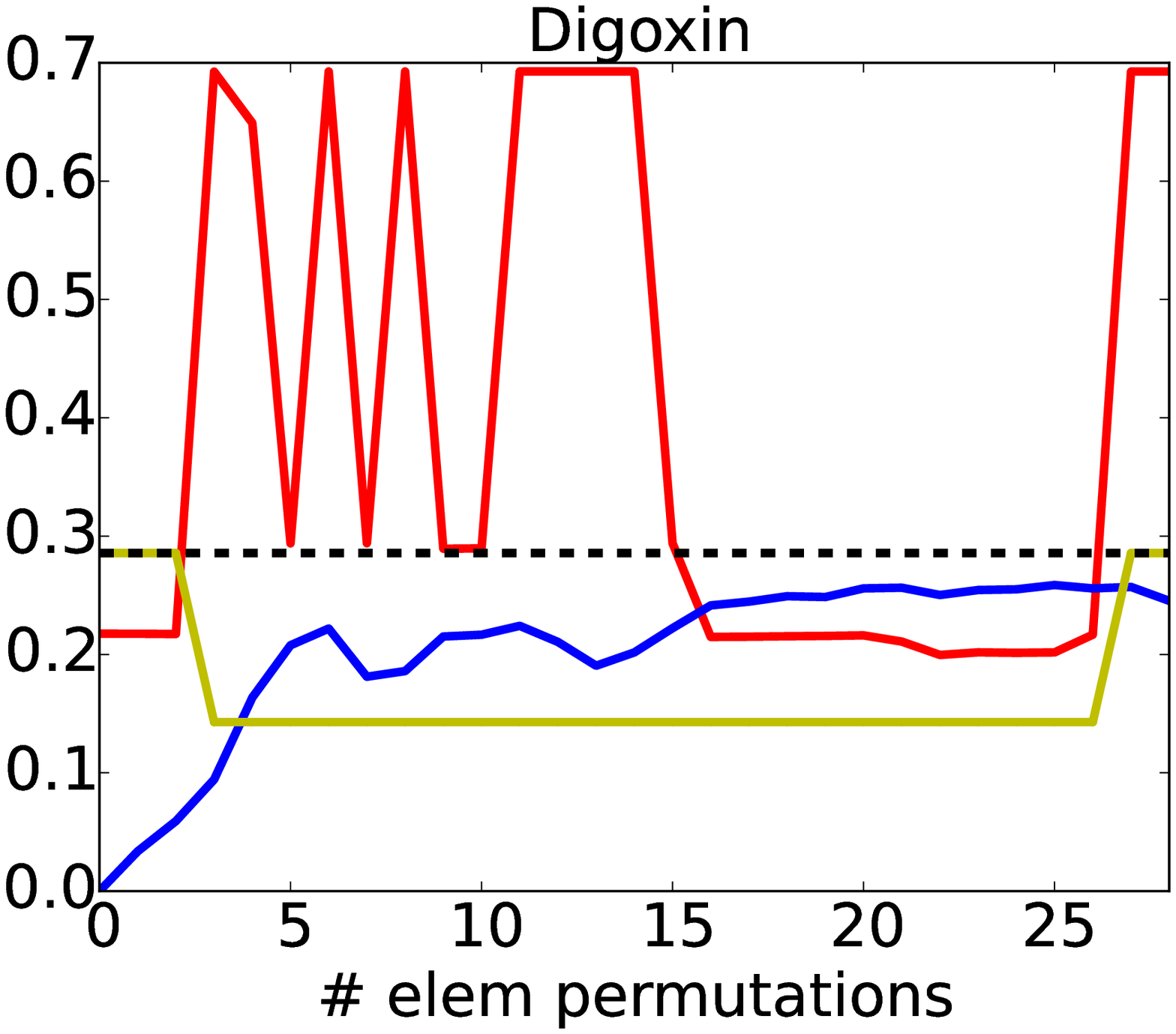}\\ 
$\matrice{m} \in S^*_m$ & $\matrice{m} \in S_m$ \\ \hline
\includegraphics[height=4.10cm]{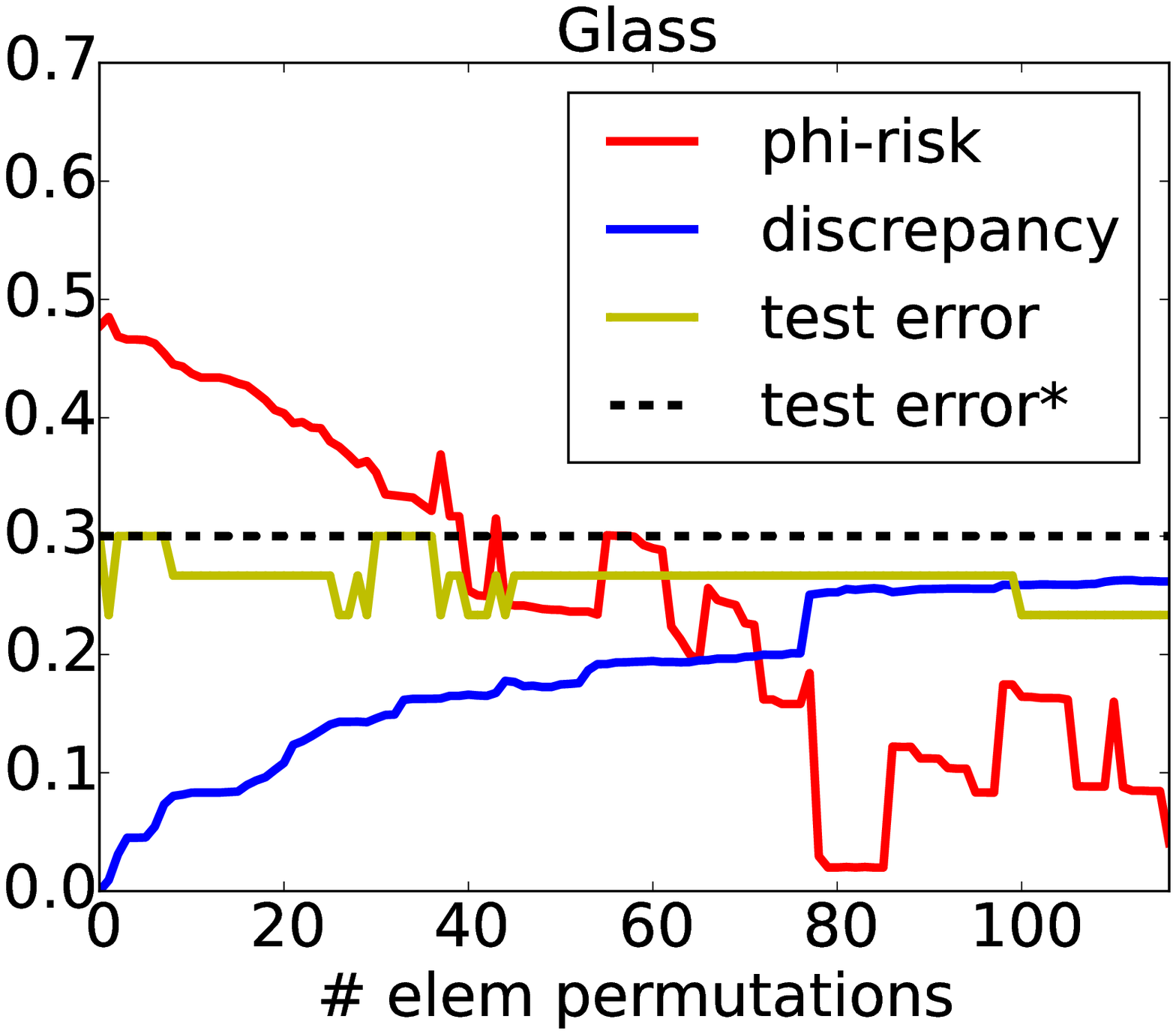}
& \includegraphics[height=4.10cm]{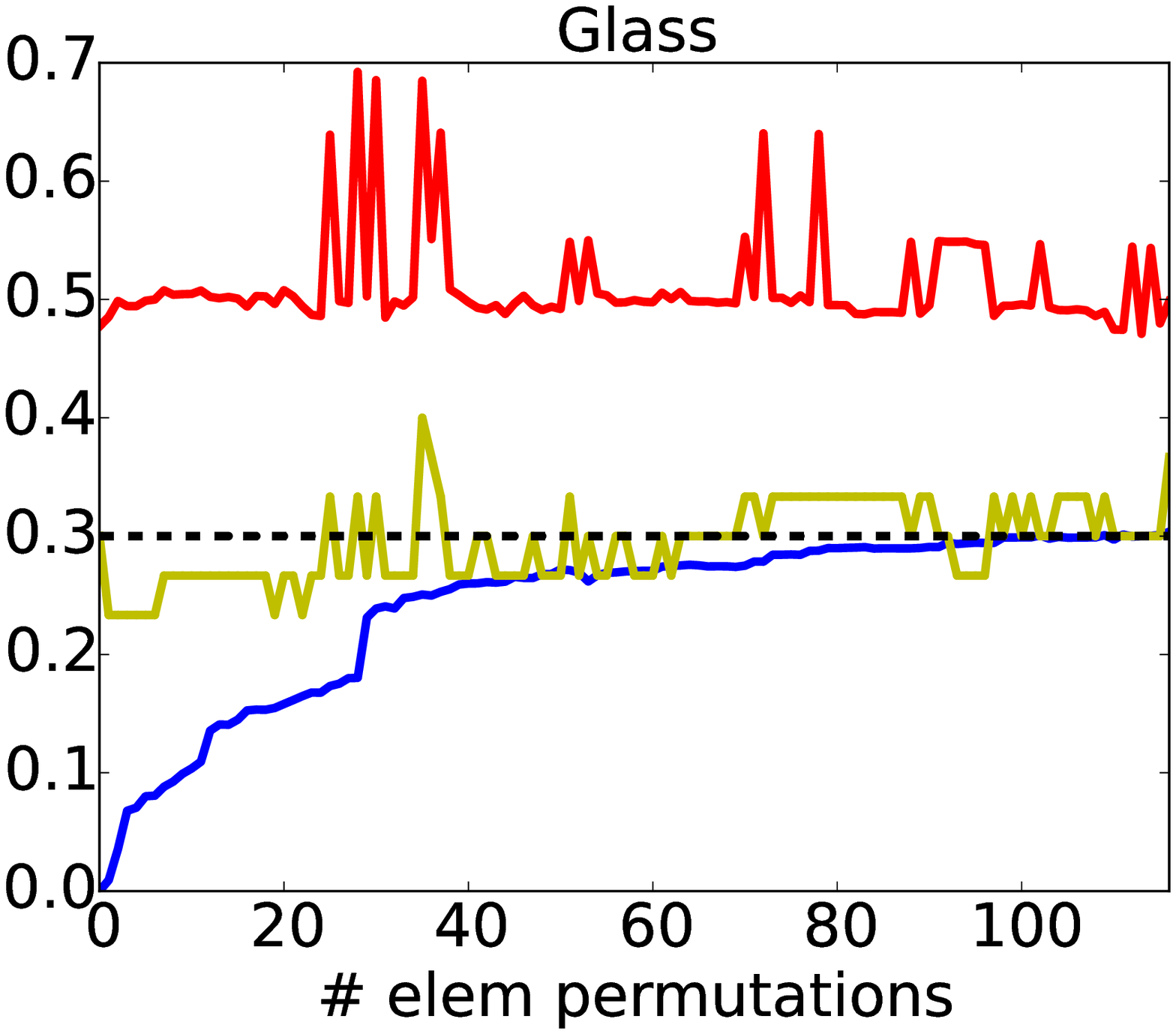}\\ 
$\matrice{m} \in S^*_m$ & $\matrice{m} \in S_m$ \\ \hline
\includegraphics[height=4.10cm]{Archive/do/do_synthetic.eps}
& \includegraphics[height=4.10cm]{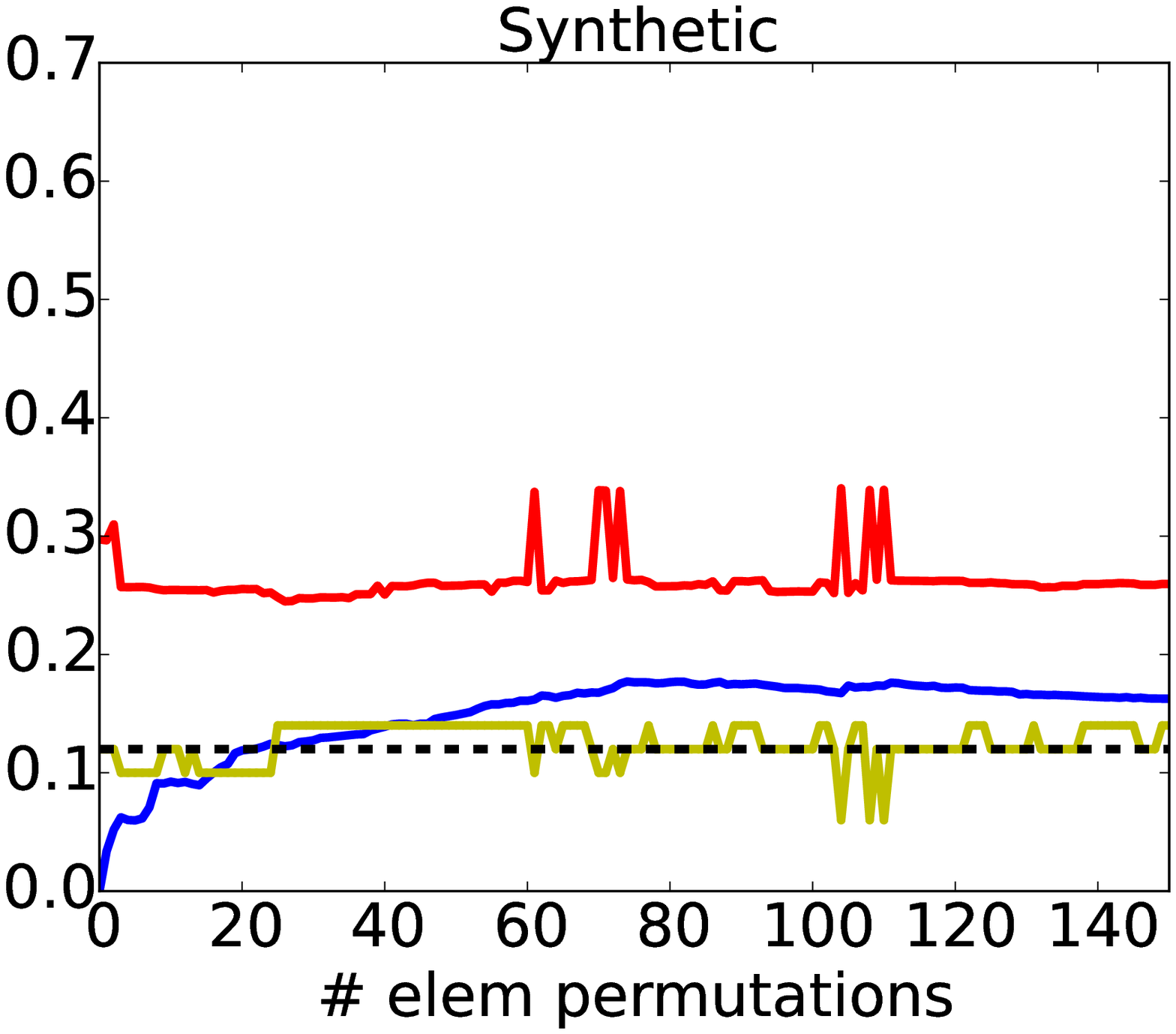}\\ 
$\matrice{m} \in S^*_m$ & $\matrice{m} \in S_m$ \\ \hline
\includegraphics[height=4.10cm]{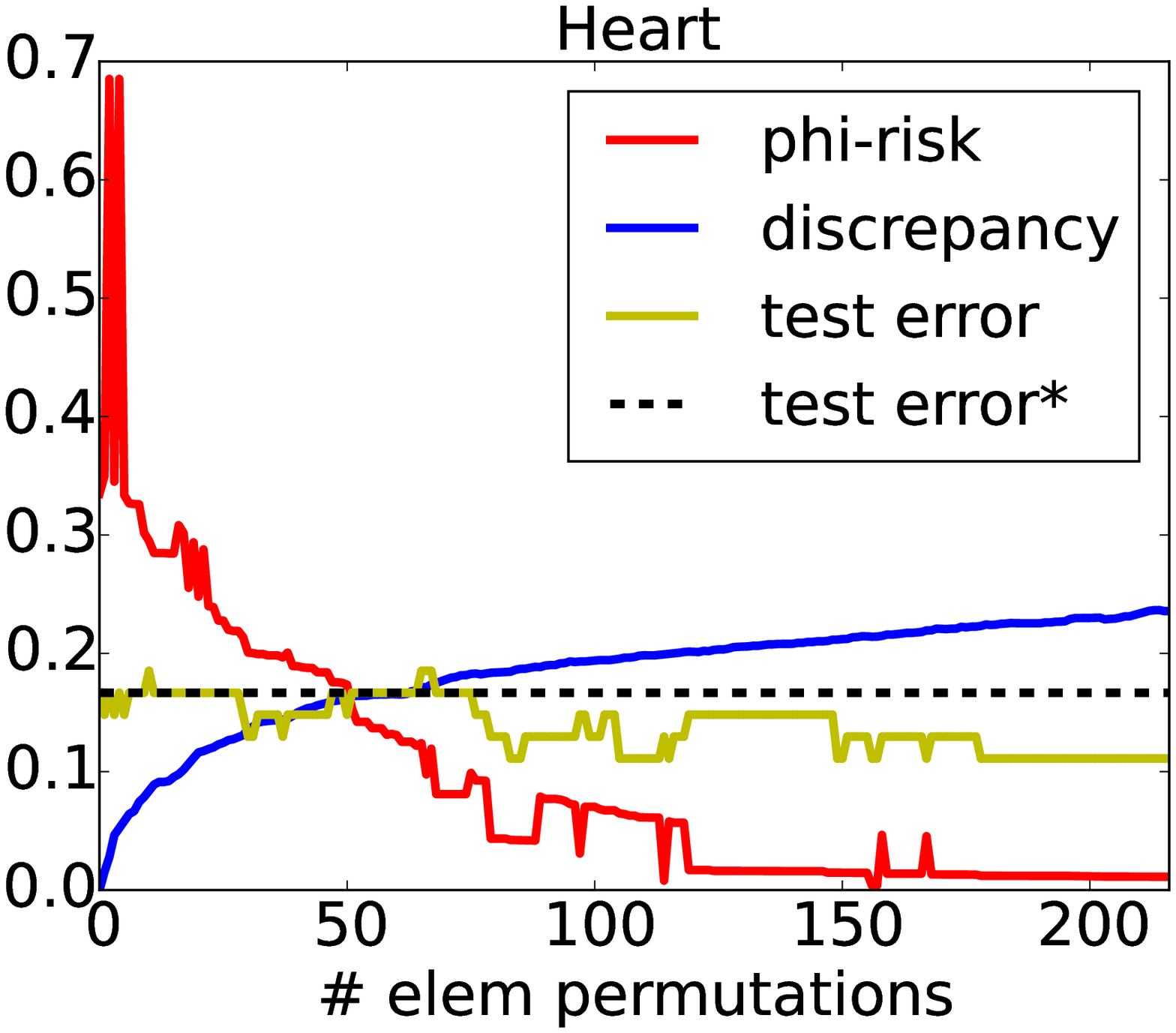}
& \includegraphics[height=4.10cm]{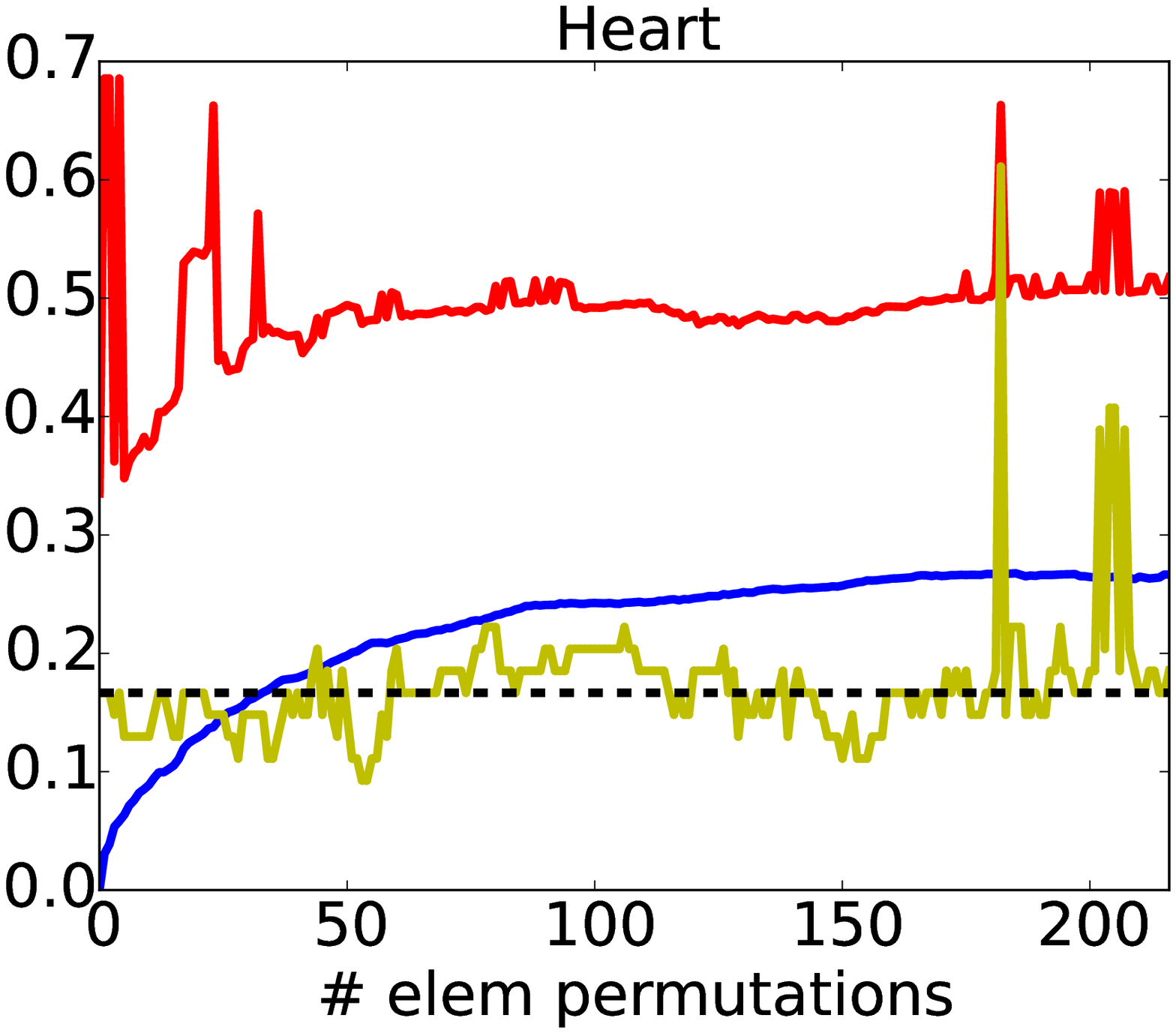}\\ 
$\matrice{m} \in S^*_m$ & $\matrice{m} \in S_m$ \\ \hline
 & \\ \hline\hline
\end{tabular}
\caption{Comparison, for data optimisation, of algorithm \perm~in
  which elementary permutation matrices are constrained to be block-class (left), and \textit{not} constrained to be block-class
  (right).}\label{res-comp}
\end{center}
\end{table}

\begin{table}[h]
    \centering
\begin{center}
\begin{tabular}{c|c}\hline\hline
\includegraphics[height=4.10cm]{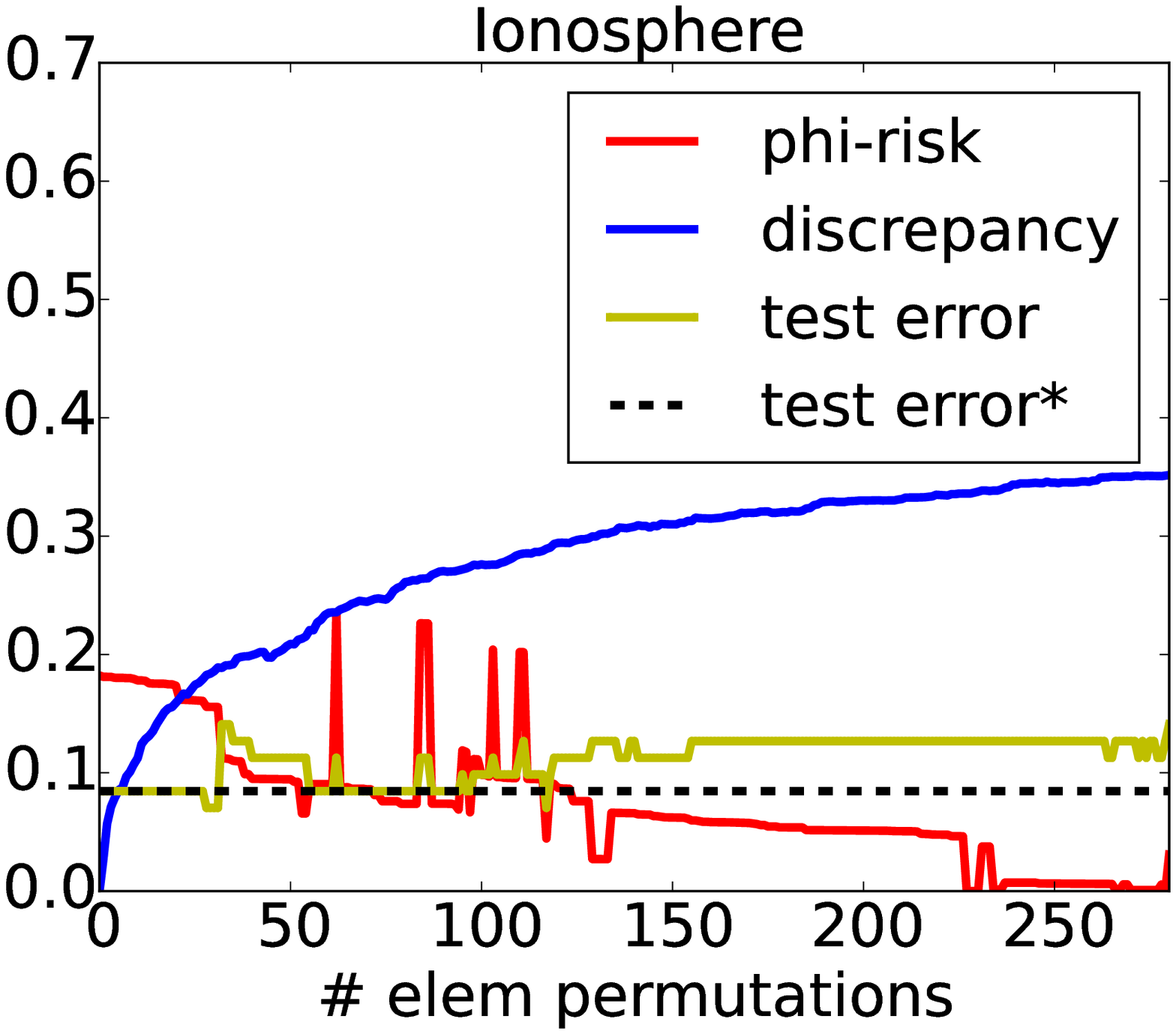}
& \includegraphics[height=4.10cm]{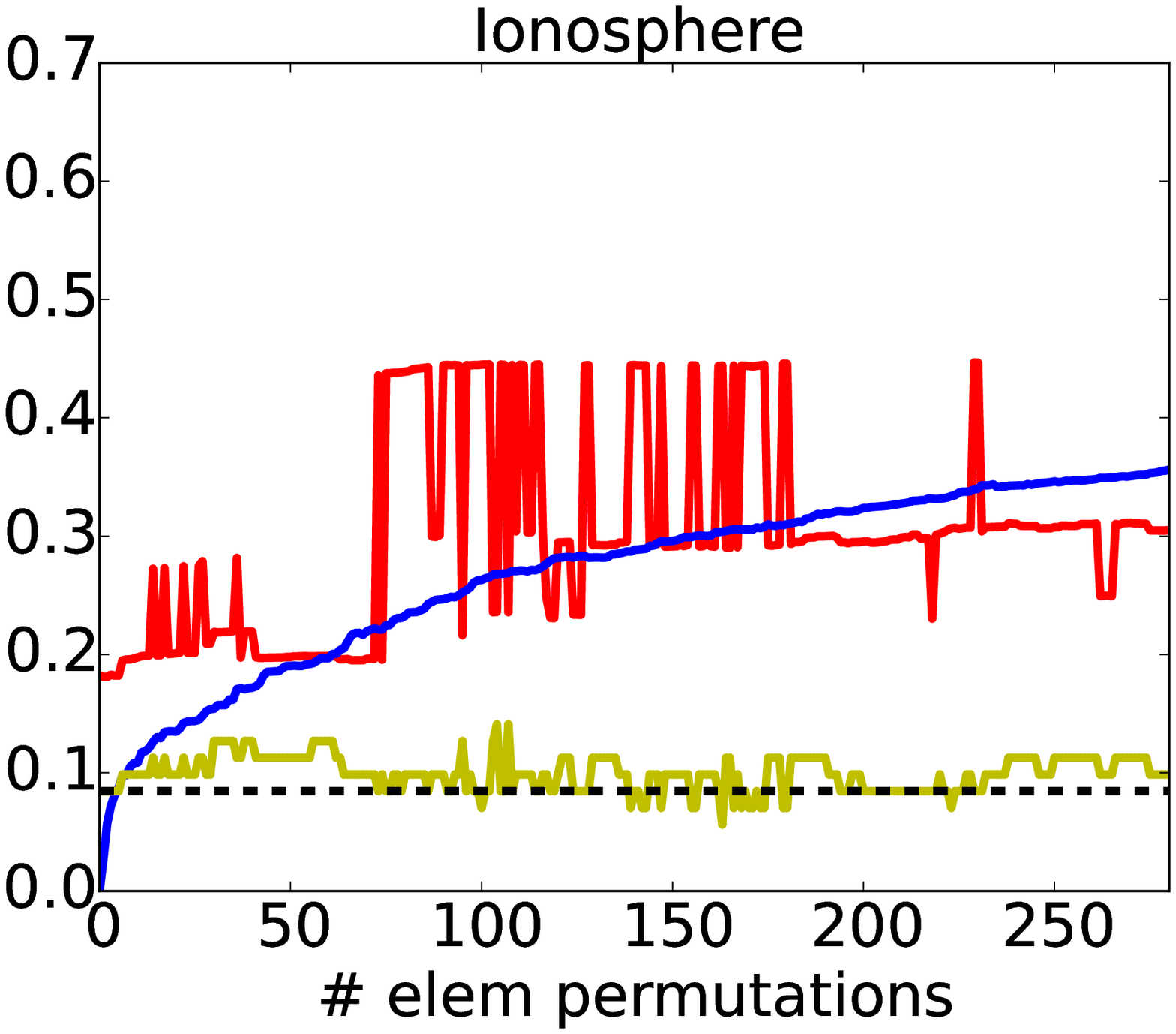}\\ 
$\matrice{m} \in S^*_m$ & $\matrice{m} \in S_m$ \\ \hline
\includegraphics[height=4.10cm]{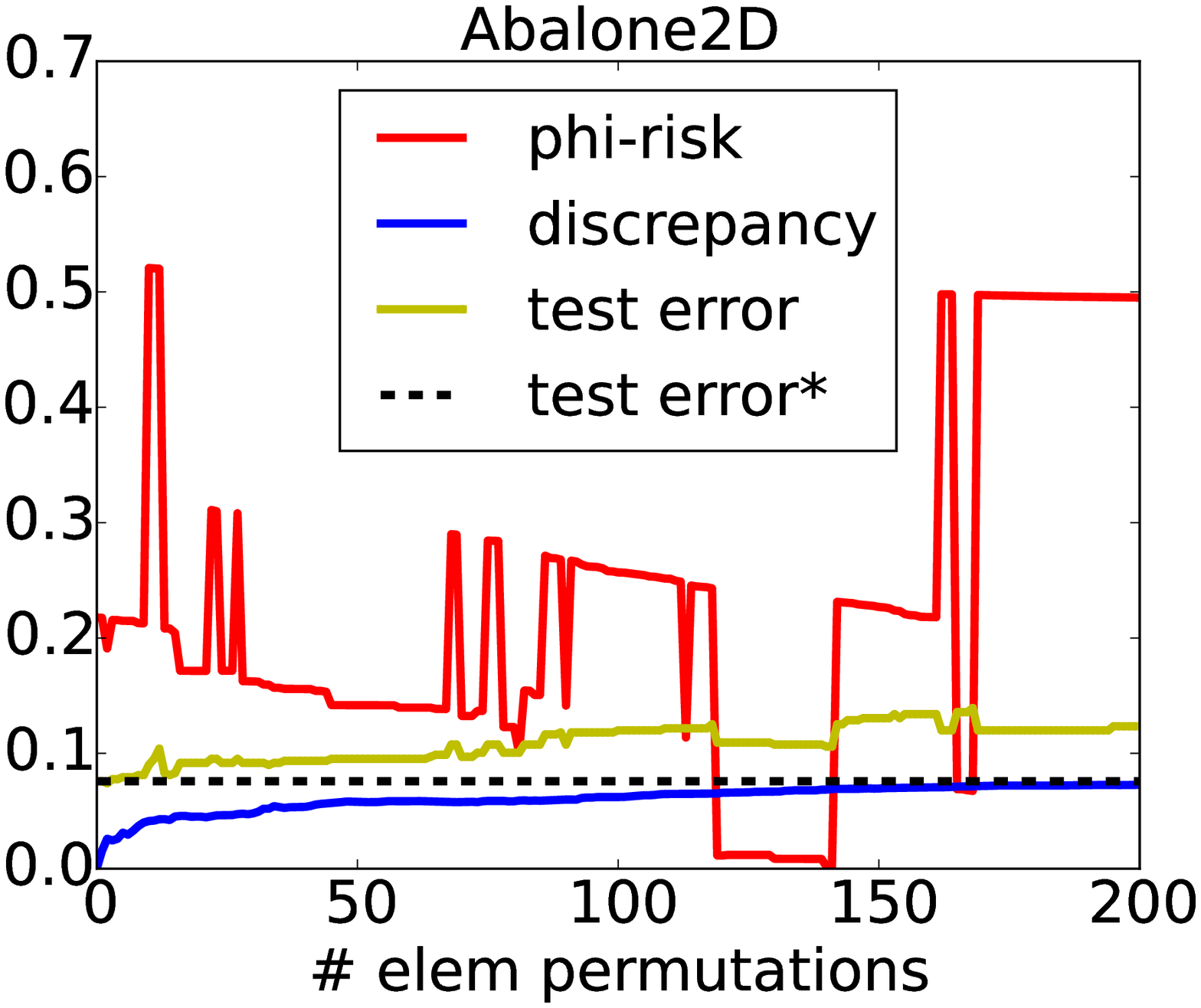}
& \includegraphics[height=4.10cm]{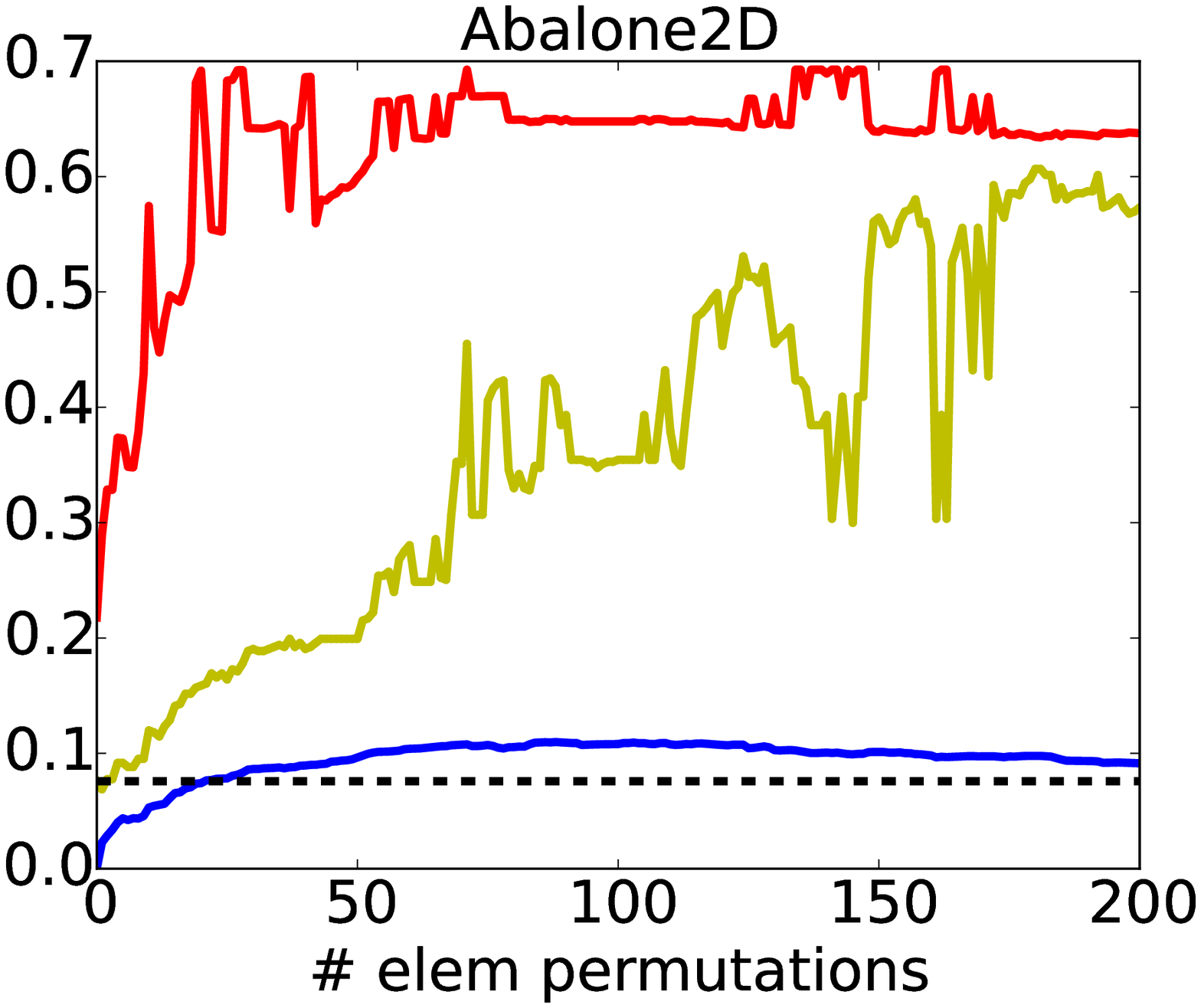}\\ 
$\matrice{m} \in S^*_m$ & $\matrice{m} \in S_m$ \\ \hline
\includegraphics[height=4.10cm]{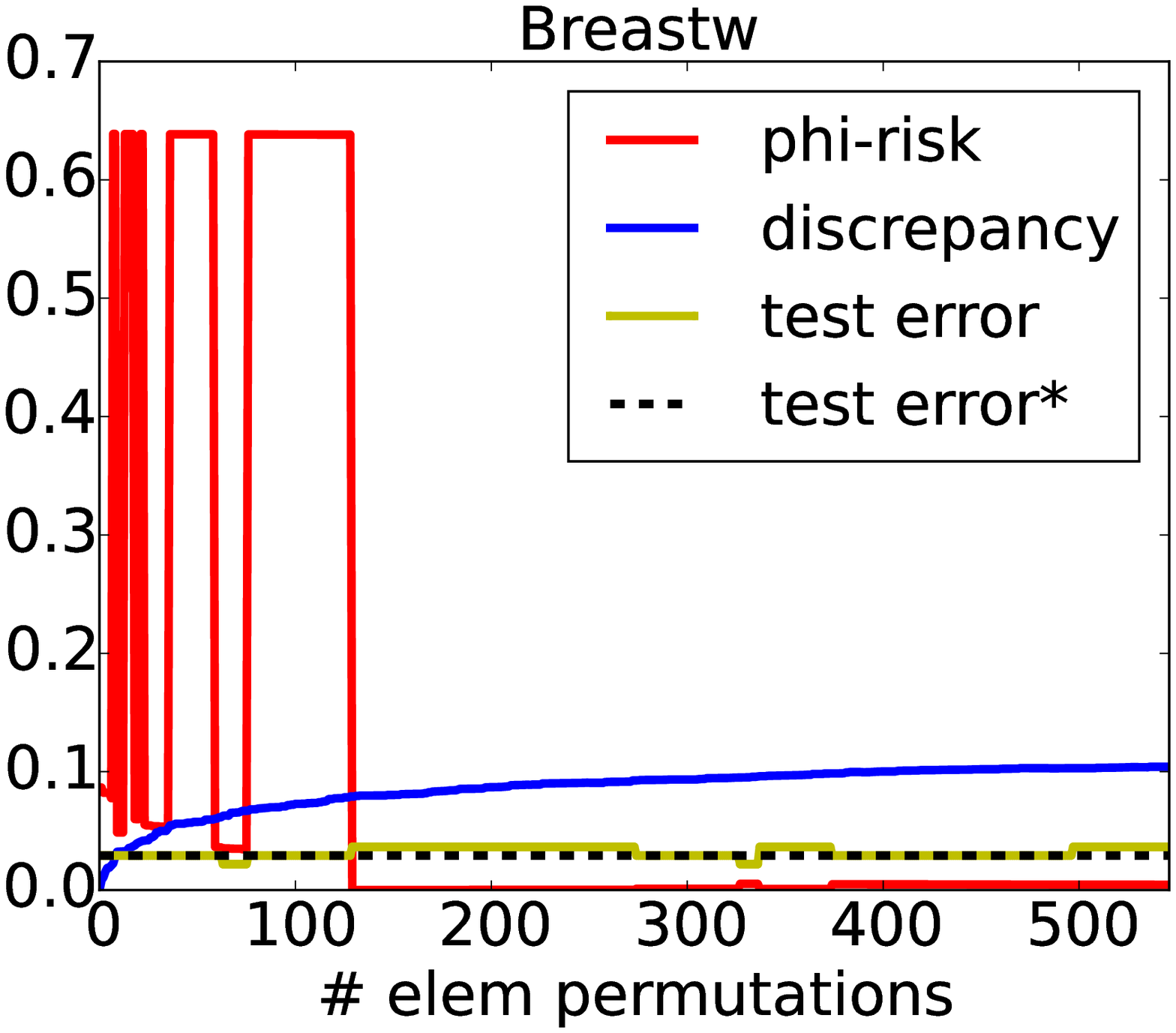}
& \includegraphics[height=4.10cm]{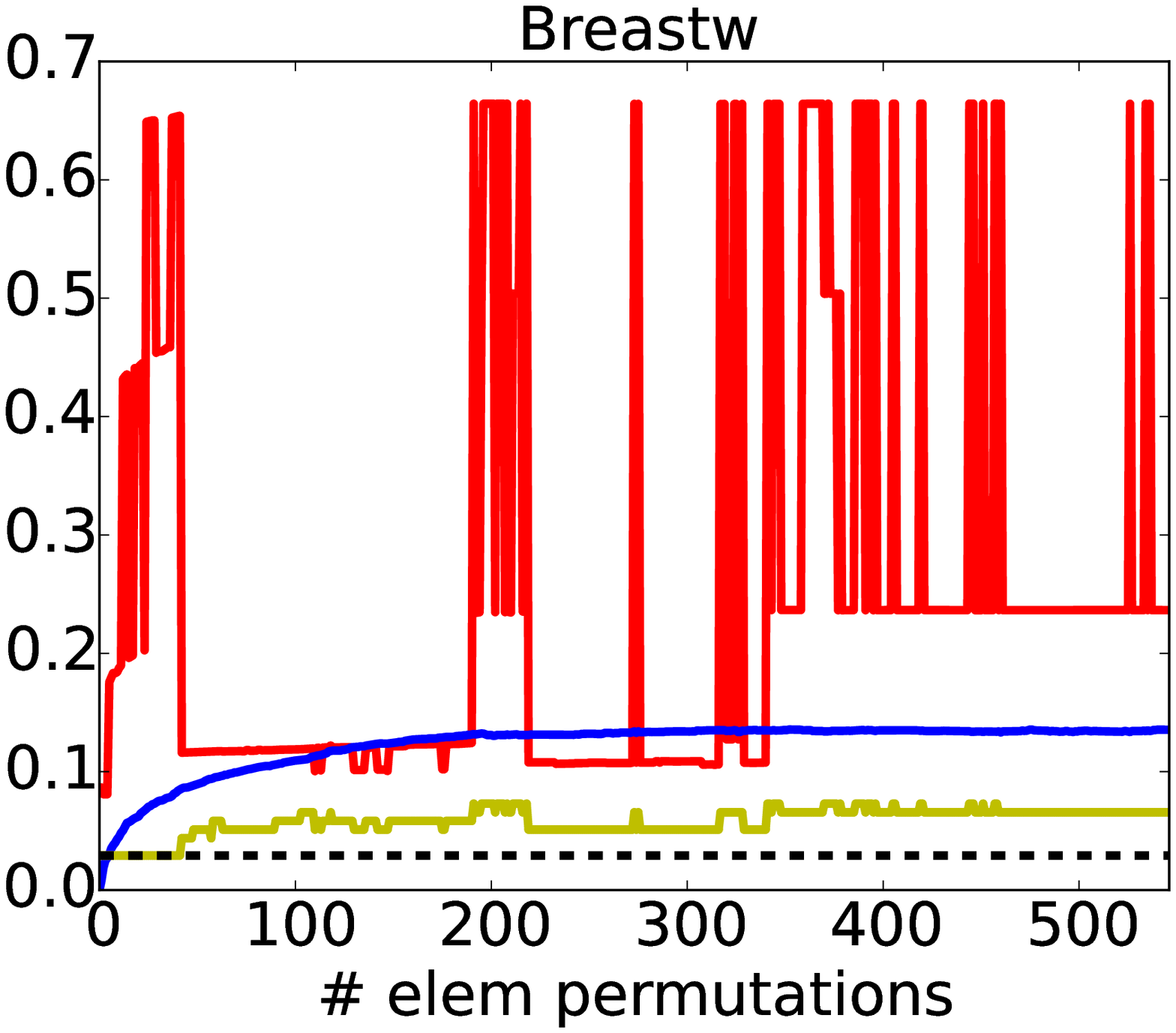}\\ 
$\matrice{m} \in S^*_m$ & $\matrice{m} \in S_m$ \\ \hline
\includegraphics[height=4.10cm]{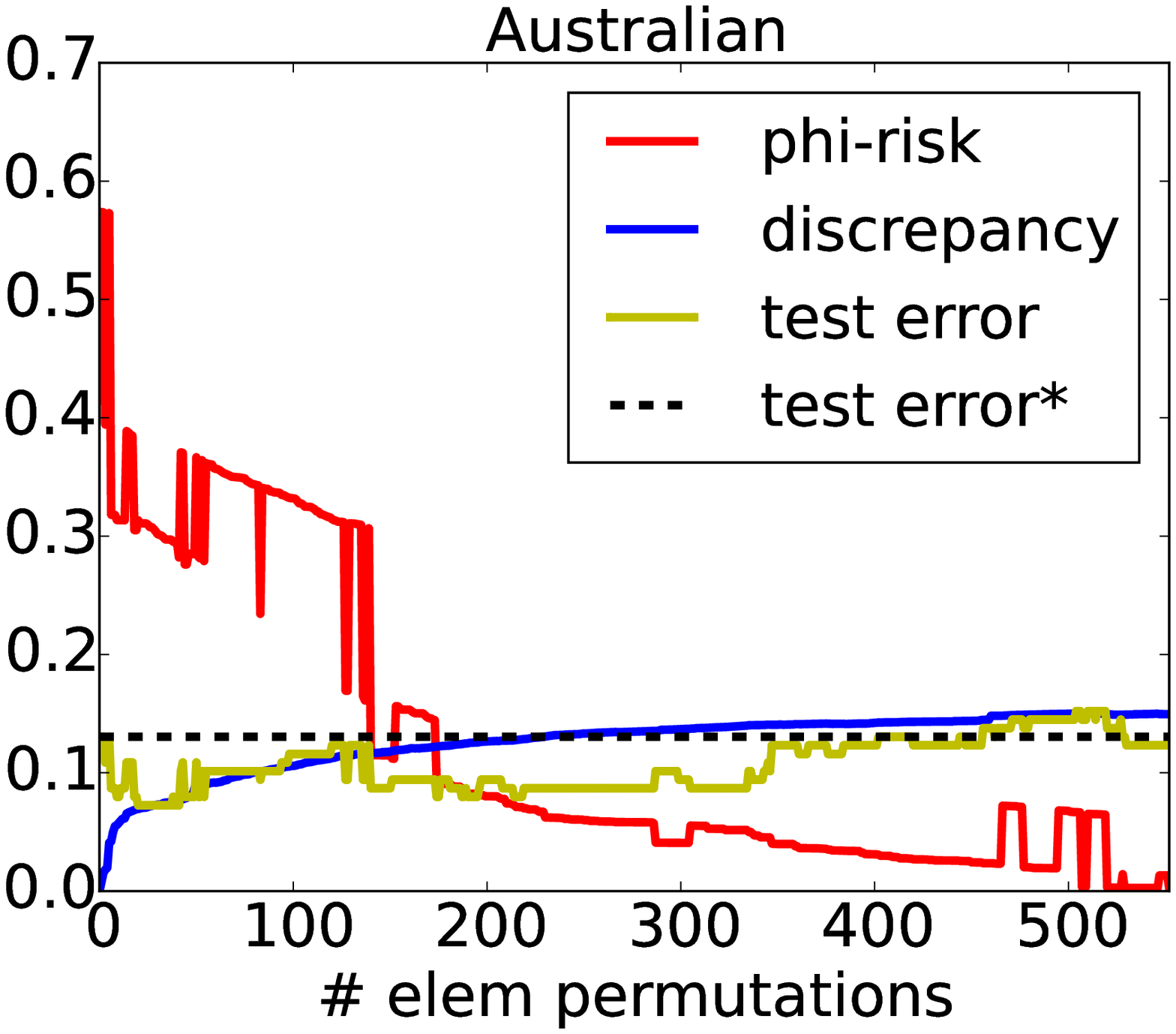}
& \includegraphics[height=4.10cm]{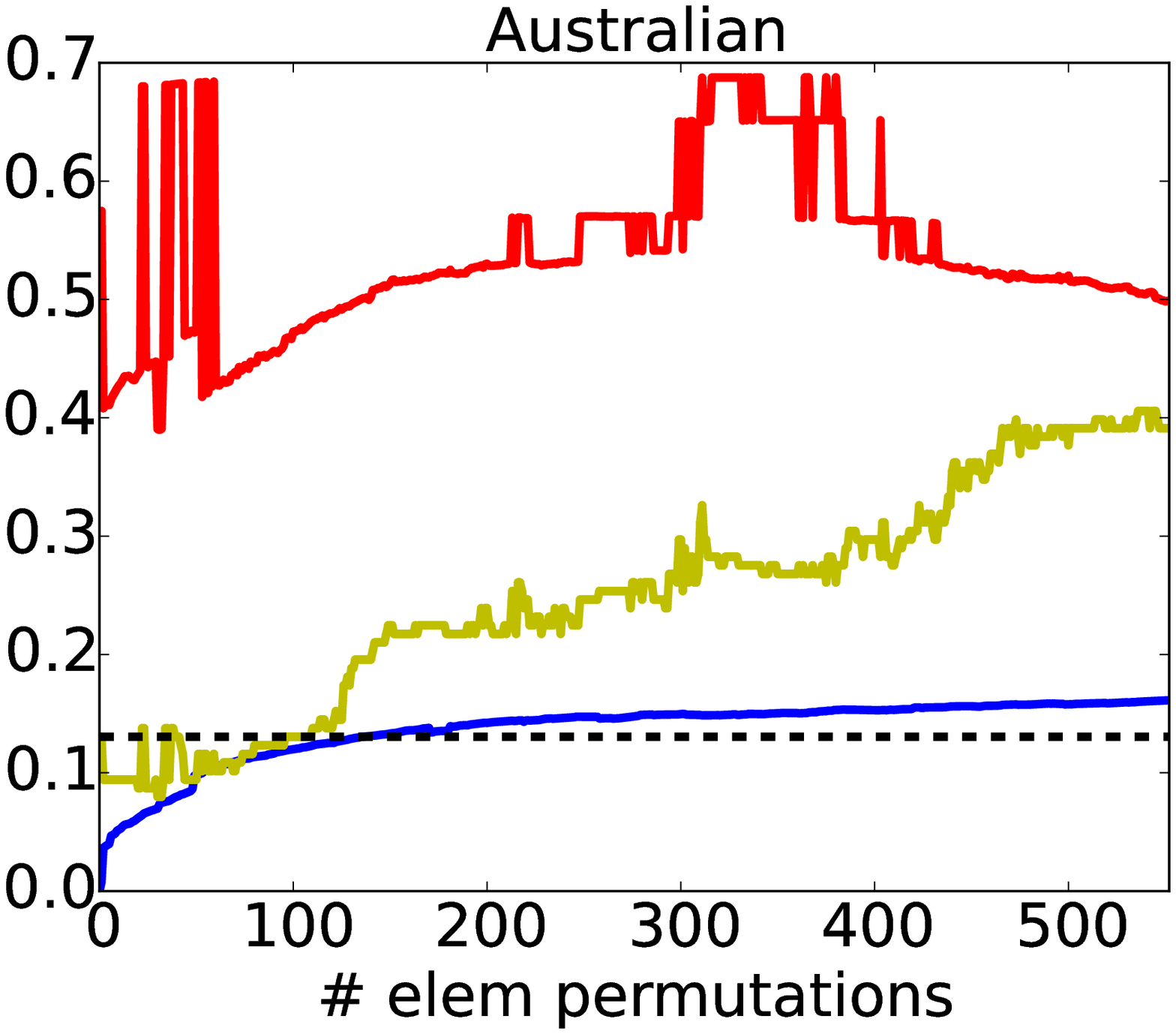}\\ 
$\matrice{m} \in S^*_m$ & $\matrice{m} \in S_m$ \\ \hline
 & \\ \hline\hline
\end{tabular}
\caption{Comparison (cont'd), for data optimisation, of algorithm \perm~in
  which elementary permutation matrices are constrained to be block-class (left), and \textit{not} constrained to be block-class
  (right).}\label{res-comp2}
\end{center}
\end{table}

\end{document}